\documentclass{article}

\usepackage{microtype}
\usepackage{graphicx}
\usepackage{subcaption}
\usepackage{booktabs} 
\usepackage{float}

\usepackage[table]{xcolor}
\definecolor{oursblue}{RGB}{235,242,252}

\usepackage{hyperref}

\usepackage[accepted]{icml2026}

\usepackage{amsmath}
\usepackage{amssymb}
\usepackage{mathtools}
\usepackage{amsthm}

\usepackage[capitalize,noabbrev]{cleveref}

\theoremstyle{plain}
\newtheorem{theorem}{Theorem}[section]
\newtheorem{proposition}[theorem]{Proposition}
\newtheorem{lemma}[theorem]{Lemma}

\theoremstyle{definition}
\newtheorem{definition}[theorem]{Definition}
\newtheorem{assumption}[theorem]{Assumption}
\theoremstyle{remark}
\newtheorem{remark}[theorem]{Remark}

\usepackage[textsize=tiny]{todonotes}

\icmltitlerunning{Singular Bayesian Neural Networks: Measure-Theoretic Singularity via Low-Rank Weight Factorization}

\begin{document}

\twocolumn[
  \icmltitle{Singular Bayesian Neural Networks}

  \icmlsetsymbol{equal}{*}

  \begin{icmlauthorlist}
  \icmlauthor{Mame Diarra Toure}{mcgill}
\icmlauthor{David A. Stephens}{mcgill}

  \end{icmlauthorlist}

\icmlaffiliation{mcgill}{Department of Mathematics and Statistics, McGill University, Montreal, Canada}

\icmlcorrespondingauthor{Mame Diarra Toure}{mame.toure@mail.mcgill.ca}

  \icmlkeywords{Bayesian neural networks, Variational inference, Low-rank factorization, Singular Posteriors, Out-of-distribution detection, PAC-Bayes bounds, Uncertainty quantification, Gaussian complexity, Scalability, Machine Learning, ICML}

  \vskip 0.3in
]

\printAffiliationsAndNotice{}  

\begin{abstract}
Bayesian neural networks promise calibrated uncertainty but require $O(mn)$ parameters for standard mean-field Gaussian posteriors. We argue this cost is often unnecessary, particularly when weight matrices exhibit fast singular value decay.
By parameterizing weights as $W = AB^{\top}$ with $A \in \mathbb{R}^{m \times r}$, 
$B \in \mathbb{R}^{n \times r}$, we induce a posterior that is \emph{singular} 
with respect to the Lebesgue measure, concentrating on the rank-$r$ manifold. This 
singularity captures structured weight correlations through shared latent 
factors, geometrically distinct from mean-field's independence assumption. We derive PAC-Bayes generalization bounds whose complexity term scales as $\sqrt{r(m+n)}$ instead of $\sqrt{m n}$, and prove loss bounds that decompose the error into optimization and rank-induced bias using the Eckart-Young-Mirsky theorem. We further adapt recent Gaussian complexity bounds for low-rank deterministic networks to Bayesian predictive means. 
Empirically, across MLPs, LSTMs, and Transformers on standard benchmarks, our method achieves competitive predictive performance while using up to $33\times$ fewer parameters than 5-member Deep Ensembles. It substantially improves OOD detection and often improves calibration relative to mean-field and perturbation baselines, while Deep Ensembles can still be stronger on in-distribution likelihood-based metrics.

\end{abstract}

\section{Introduction}
Bayesian neural networks (BNNs) provide principled uncertainty quantification by maintaining distributions over weights rather than point estimates \cite{mackay1992,neal1996bayesian}. This capability is increasingly critical for high-stakes applications~\cite{amodei2016concrete} across healthcare~\cite{ruhe2019bayesian, dusenberry2020efficient, dusenberry2020analyzing}, 
autonomous systems~\cite{kendall2017uncertainties}, and domains requiring 
robust generalization under distribution shift~\cite{hendrycks2016baseline,louizos2017multiplicative, ovadia2019can,abdar2021review}. Recent calls for Bayesian approaches in large-scale AI \cite{papamarkou2024position} underscore the growing recognition that uncertainty-aware models are essential for trustworthy deployment.

However, scaling BNNs to modern architectures remains challenging. Exact Bayesian inference is intractable for neural networks, necessitating approximate methods such as variational inference \cite{peterson1987mean, hinton1993keeping, jordan1999introduction} 
or MCMC sampling \cite{neal1996bayesian}.
While MCMC methods and their modern stochastic variants like SGLD \cite{welling2011bayesian} and stochastic HMC \cite{chen2014stochastic} provide theoretical guarantees, they require costly iterative sampling that is prohibitive for large-scale models.
On the other hand, traditional Mean-field variational inference (MFVI)  implementations 
\cite{graves2011practical,kingma2014auto, blundell2015weight}  parameterize each weight with a location-scale family (e.g., Gaussian $\mathcal{N}(\mu, \sigma^2)$), requiring two variational parameters per weight thus doubling the parameter count relative to deterministic networks. Additionally, they impose a fully
factorized posterior  discarding the structured correlations that
may be important for expressiveness, performance and theoretical guarantees. Recent work has further revealed that weight-space inference in modern architectures such as 
transformers suffers from fundamental pathologies related to prior specification 
and the difficulty of mapping weight-space to function-space distributions 
\cite{cinquin2021pathologies}. 
 These limitations have largely confined BNNs to small-scale problems, while modern deep neural networks, from convolutional architectures \cite{he2016deep} and LSTMs \cite{hochreiter1997long} to transformers \cite{vaswani2017attention}, routinely scale to billions of parameters.
Recent work has begun addressing scalability through low-rank parameterizations.
Indeed, neural networks have been observed to exhibit low intrinsic dimensionality \cite{hinton1993keeping, li2018measuring, izmailov2020subspace,aghajanyan2021intrinsic}, suggesting that full-rank parameterizations may be unnecessarily expensive. Current approaches to exploiting this structure fall into three camps: \textbf{(i)} post-hoc low-rank perturbations \cite{dusenberry2020efficient, doan2025bayesian} that add stochastic noise to deterministic backbones, \textbf{(ii)} low-rank posterior covariance approximations \cite{tomczak2020efficient,swiatkowski2020ktied} that enrich mean-field VI by replacing diagonal covariances with low-rank-plus-diagonal structures while maintaining full-rank weight parameterizations, and \textbf{(iii)} parameter-efficient low-rank fine-tuning methods like LoRA \cite{hu2022lora} for fine-tuning pretrained models which inspired bayesian variants \cite{yang2024bayesian_lora,meo2024bayesian,wang2024blob}. However, all three approaches either require pretrained backbones, sacrificing the end-to-end learning of uncertainty, or, in the case of covariance methods, still parameterize $\mathcal{O}(mn)$ weight means rather than directly exploiting low-rank matrix structure. 

At a broader level, our formulation is also distinct from several adjacent literatures. Classical reduced-rank regression \citep{izenman1975reduced,reinsel1998multivariate}, probabilistic or Bayesian matrix factorization \citep{mnih2007probabilistic,salakhutdinov2008bayesian}, and deterministic low-rank compression \citep{denton2014exploiting,alvarez2017compression} all share the use of low-rank structure, but they study linear multivariate regression, directly observed data matrices, or deterministic compressed models rather than end-to-end variational uncertainty over neural-network weights. It is also complementary to function-space or post-training uncertainty methods such as linearized Laplace approximations, SNGP, and fixed-mean GP approaches, which attach uncertainty to a trained deterministic predictor or its output layer rather than learning a low-rank posterior over weights from initialization \citep{immer2021improving,liu2020simple,ortega2024fixed}. An in-depth comparison with these adjacent approaches is provided in \textbf{Appendix~\ref{sec:relatedwork}}.

To the best of our knowledge, no prior work trains low-rank BNNs end-to-end across diverse architectures with rigorous theoretical guarantees.

We introduce a \textbf{fully end-to-end low-rank variational inference framework} for BNNs that jointly learns factorized weight uncertainties from initialization. By parameterizing each weight matrix $W\in \mathbb{R}^{m \times n}$ as $W = AB^\top$ where $A \in \mathbb{R}^{m \times r}$ and $B \in \mathbb{R}^{n \times r}$, we reduce variational parameters from $\mathcal{O}(mn)$ to $\mathcal{O}(r(m+n))$ while \emph{capturing structured correlations} between weights. We provide both theoretical and empirical contributions:

\noindent\textbf{Theory.} We prove that the induced posterior $q(W)$ on weights is \emph{singular}\footnote{Throughout this paper, ``singular`` refers to measure-theoretic singularity of the induced posterior on weight space with respect to Lebesgue measure, arising because it is supported on the rank-$r$ manifold, rather than to the broader asymptotic notion of singular models studied in Watanabe-style singular learning theory \cite{watanabe2009algebraic,watanabe2010asymptotic,watanabe2013widely}.}
 with respect to Lebesgue measure, concentrating entirely on the measure-zero manifold of rank-$r$ matrices (See Figure \ref{fig:geometric}). This geometric constraint distinguishes our approach from full-rank mean-field methods and captures non-trivial weight correlations: $\text{Cov}(W_{ij}, W_{i'j'}) \neq 0$ for weights sharing latent factors. Building on this structure, we establish three 
theoretical guarantees: \textbf{(i)} deterministic loss approximation bounds via the 
Eckart-Young-Mirsky (EYM) theorem \cite{eckart1936approximation}, with excess risk controlled by tail singular values 
$\sum_{i>r} \sigma_i^2$ of the target weight matrix, \textbf{(ii)} probabilistic risk 
decomposition for learned low-rank factors separating learning error from rank bias, 
and \textbf{(iii)} Tighter generalization bounds due to the complexity reduction. These results establish 
that low-rank structure provides not just efficiency but provably tighter theoretical 
guarantees when weight matrices exhibit fast singular value decay, a property we verify 
empirically holds for modern architectures (see \cref{sec:experiments} and Appendix \ref{app:results:extended}). 

\noindent\textbf{Implementation.} We implement our method from scratch for three architecture families: multilayer perceptrons (MLPs) \cite{rumelhart1986learning}, transformers \cite{vaswani2017attention}, and LSTMs \cite{hochreiter1997long}, avoiding black-box libraries to ensure full control over the variational parameterization. Our implementation handles the unique challenges of each architecture: position-wise factorization for transformers, tied weight matrices for LSTMs \cite{fortunato2017bayesian}, and standard feedforward layers for MLPs. \textbf{Our variational layers serve as drop-in replacements for standard Keras layers, enabling seamless integration into arbitrary architectures.}

\noindent\textbf{Experiments.}
We evaluate our approach on tabular, text, time-series, and image benchmarks (See section \ref{sec:experiments} and Appendix \ref{app:additional_results}). On classification and regression tasks, rank-\(r\) factorization often outperforms full-rank mean-field variational inference \cite{blundell2015weight} across Negative log-likelihood, accuracy, and AUROC. 
When compared to a 5-member Deep Ensemble \cite{lakshminarayanan2017simple}, a single rank-\(r\) model achieves competitive predictive performance with significantly reduced computational cost. We further demonstrate improved out-of-distribution detection and selective prediction capabilities than MFVI and deterministic-backbone with low-rank perturbations\cite{dusenberry2020efficient} (see \cref{sec:experiments}).

Our work establishes that low-rank variational inference is not only a computational convenience but a principled approach with provable benefits, enabling practical Bayesian deep learning across modern architectures.

\section{Methods}
\label{sec:method}
\subsection{Low-Rank Variational Inference}
\label{sec:method:lowrank_vi}

We parameterize each weight matrix $W \in \mathbb{R}^{m \times n}$ via 
a low-rank factorization $W = AB^\top$ where $A \in \mathbb{R}^{m \times r}$ 
and $B \in \mathbb{R}^{n \times r}$. We place 
independent priors on the factors, $p(A,B) = p_A(A) \, p_B(B)$, and 
employ a mean-field variational posterior over the factors, 
$q(A,B) = q_A(A) \, q_B(B)$. The weight distribution is induced through 
the transformation $(A,B) \mapsto AB^\top$. 
This concentrates the posterior support on  the rank-$r$ matrices manifold, 
whose geometric and generalization properties we analyze in \cref{sec:theory}. We employ a scale mixture prior following \citet{blundell2015weight}: $
p_A(A) = \prod_{j}\left[\pi \mathcal{N}(a_j \mid 0, \sigma_1^2) + (1-\pi) \mathcal{N}(a_j \mid 0, \sigma_2^2)\right],
$
and similarly for $p_B(B)$, where $\mathcal{N}(x \mid \mu, \sigma^2)$ denotes the Gaussian density with mean $\mu$ and variance $\sigma^2$, $j$ indexes the $mr$ elements of $A$ (and $nr$ elements of $B$), and $\sigma_1^2 \gg \sigma_2^2$ creates a heavy-tailed prior that encourages sparse structure while allowing occasional large weights.
The variational posteriors are mean-field Gaussians: $q_A(A; \theta_A) = \prod_{j=1}^{mr} \mathcal{N}(a_j \mid \mu_{A,j}, \sigma_{A,j}^2)$ and $q_B(B; \theta_B) = \prod_{j=1}^{nr} \mathcal{N}(b_j \mid \mu_{B,j}, \sigma_{B,j}^2)$, where $\theta_A = \{\mu_A, \sigma_A\}$ and $\theta_B = \{\mu_B, \sigma_B\}$ are the variational parameters learned through backpropagation.
\subsection{Evidence Lower Bound 
}
\label{sec:method:elbo}

We derive the ELBO for our factorized parameterization. Starting from the 
marginal log-likelihood and applying Jensen's inequality: $\log p(\mathcal{D}) \geq \mathbb{E}_{q(A,B)}\left[\log \frac{p(\mathcal{D}, A, B)}{q(A,B)}\right] =: \mathcal{L}(q).$ Using {\small$p(\mathcal{D}, A, B) = p(\mathcal{D} \mid AB^\top) p_A(A) p_B(B)$ 
and $q(A,B) = q_A(A) q_B(B)$:}
{\small
\begin{equation}
\begin{aligned}
\mathcal{L}(q) &= \underbrace{\mathbb{E}_{q_A q_B}[\log p(\mathcal{D} \mid AB^\top)]}_{\text{data fit}} 
 - \underbrace{\beta(\text{KL}(q_A \| p_A) + \text{KL}(q_B \| p_B))}_{\text{regularization}}.
\end{aligned}
\label{eq:elbo}
\end{equation} 
}
where $\beta$ is the KL scale (or temperature parameter) that controls the strength of the regularization penalty.
The key insight is that the KL divergence decomposes into two independent terms enabling efficient parallel computation. All three terms in Eq.~\eqref{eq:elbo} are estimated via Monte Carlo. For the expected log-likelihood, we sample $A \sim q_A$ and $B \sim q_B$, form $W = AB^\top$, and evaluate $\log p(\mathcal{D} \mid W)$. The KL terms are computed similarly via sampling, as the scale mixture prior precludes closed-form computation.

\subsection{Gradient Estimation and Optimization}
\label{sec:method:training}

We optimize the ELBO \eqref{eq:elbo} using gradient-based methods with the 
reparameterization trick \citep{kingma2014auto}. Following the Bayes by Back-propagation algorithm from \citet{blundell2015weight}, 
we parameterize the standard deviations as $\sigma = \log(1 + \exp(\rho))$ to 
ensure positivity. 
To sample from the posteriors, we draw $\epsilon_A, \epsilon_B \sim \mathcal{N}(0, I)$ 
and compute: $A = \mu_A + \log(1 + \exp(\rho_A)) \circ \epsilon_A, \quad 
B = \mu_B + \log(1 + \exp(\rho_B)) \circ \epsilon_B$
where $\circ$ denotes elementwise multiplication. This makes sampling differentiable 
with respect to the variational parameters, allowing gradients to flow through 
Monte Carlo estimates of the ELBO. We use Adam optimizer \citep{kingma2015adam} for all models.
\subsection{Architecture-Specific Implementations}
\label{sec:method:architectures}
We apply our factorization to MLPs, LSTMs, and Transformers.

\noindent \textbf{Multilayer Perceptrons.}
For fully connected layers $W_\ell \in \mathbb{R}^{d_{\ell} \times d_{\ell+1}}$, we factorize as $W_\ell = A_\ell B_\ell^\top$ where $A_\ell \in \mathbb{R}^{d_{\ell} \times r_\ell}$ and $B_\ell \in \mathbb{R}^{d_{\ell+1} \times r_\ell}$. Each layer maintains its own rank $r_\ell$, tunable independently or uniformly. This requires no architecture-specific modifications.

\noindent \textbf{Transformers.}
We factorize query, key, value projections ($\mathbf{W}_Q, \mathbf{W}_K, \mathbf{W}_V \in \mathbb{R}^{d_{\text{model}} \times d_{\text{model}}}$) and feed-forward weights ($\mathbf{W}_1 \in \mathbb{R}^{d_{\text{model}} \times d_{\text{ff}}}$, $\mathbf{W}_2 \in \mathbb{R}^{d_{\text{ff}} \times d_{\text{model}}}$). Since these are linear operations, we use the same variational layers as MLPs. For embeddings $W_{emb} \in \mathbb{R}^{V \times d}$, we exploit batch sparsity: only rows of $A$ for tokens in the current batch are sampled, reducing cost from $\mathcal{O}(Vd)$ to $\mathcal{O}(|U|r + dr)$ where $|U|$ is unique tokens. Independent factors per head preserve multi-head structure.

\noindent \textbf{LSTMs.}
We factorize input-to-hidden ($\mathbf{W}_{ih} \in \mathbb{R}^{d_{\text{in}} \times 4h}$) and hidden-to-hidden ($\mathbf{W}_{hh} \in \mathbb{R}^{h \times 4h}$) matrices, where the factor of 4 accounts for input, forget, output, and cell gates. Following~\citet{fortunato2017bayesian}, factors $A$ and $B$ are sampled once per batch, $W = AB^\top$ is cached across time steps, then resampled for the next batch. This ensures KL divergence is computed once per sequence rather than per time step, maintaining the correct variational objective. 

\section{Theoretical Analysis}
\label{sec:theory}
We now establish the theoretical foundations of our approach. We show that the factorization $W = AB^\top$ induces a different posterior 
geometry than mean-field methods (Section~\ref{sec:theory:geometry}),
captures 
structured 
weight correlations (Section~\ref{sec:theory:correlations}), 
provides 
bounded approximation error under optimal rank selection 
(Section~\ref{sec:theory:loss_bounds}), and yields provably tighter 
generalization bounds (Section~\ref{sec:theory:pacbayes}).

\subsection{The Induced Posterior: Geometry and Singularity}
\label{sec:theory:geometry}

When we parameterize the posterior over factors $A$ and $B$ rather than directly 
over $W$, the resulting distribution $q_W$ on weight matrices has a geometric 
structure determined by the factorization constraint. We formalize this through 
the induced posterior and characterize its support.

\begin{definition}[Induced Posterior]
\label{def:induced_posterior}
Let $A$ and $B$ be drawn from the variational posterior $q(A,B)$. The 
\textbf{induced posterior} (pushforward measure \cite{bogachev2007measure}) $q_W$ is the distribution of the random variable 
$W = AB^\top$.
\end{definition}
\begin{lemma}[Constrained Support ]
\label{lem:constrained_support}
    Let $\mathcal{R}_r \subset \mathbb{R}^{m \times n}$ be the set of matrices with rank at most $r$.
$
\mathcal{R}_r = \left\{ W \in \mathbb{R}^{m \times n} \mid \text{rank}(W) \leq r \right\}
$ and
    let $W = AB^\top$ where $A \in \mathbb{R}^{m \times r}$ and $B \in \mathbb{R}^{n \times r}$. Then: \textbf{(i) }$ \text{rank}(W) \leq r$ and \textbf{(ii)} For any variational posterior $q(A, B)$, $q_W$, satisfies: $q_W(\mathcal{R}_r) = 1$ (see Appendix~\ref{app:constrained_support} for complete proof)
\end{lemma}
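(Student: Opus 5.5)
The plan is to prove part (i) by a pointwise linear-algebra argument and then get part (ii) from the definition of the pushforward measure in Definition~\ref{def:induced_posterior}.

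\textbf{Part (i).} I will use the column-space inclusion. Every column of $W = AB^\top$ is $A$ times a column of $B^\top$, so $\operatorname{col}(W) \subseteq \operatorname{col}(A)$. Hence
\[
\operatorname{rank}(W) \le \operatorname{rank}(A) \le \min(m,r) \le r .
\]
The same bound also follows from the submultiplicativity $\operatorname{rank}(XY)\le\min(\operatorname{rank}X,\operatorname{rank}Y)$. This argument is purely deterministic: it holds for every realization $(A,B)$, with no assumption on $q$.

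\textbf{Part (ii).} Let $\phi:(A,B)\mapsto AB^\top$. It is a polynomial map from $\mathbb{R}^{m\times r}\times\mathbb{R}^{n\times r}$ to $\mathbb{R}^{m\times n}$, hence continuous and Borel measurable, so $q_W = q\circ\phi^{-1}$ is a well-defined Borel probability measure. Two facts are needed.

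First, $\mathcal{R}_r$ is Borel; in fact it is closed. A matrix has rank $\le r$ if and only if all of its $(r+1)\times(r+1)$ minors vanish. That is a finite system of polynomial equations, so $\mathcal{R}_r$ is the common zero set of continuous functions. If $r \ge \min(m,n)$, there are no such minors and $\mathcal{R}_r$ is the whole space.

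Second, part (i) gives $\phi^{-1}(\mathcal{R}_r)$ equal to the entire factor space. Therefore
\[
q_W(\mathcal{R}_r) = q\bigl(\phi^{-1}(\mathcal{R}_r)\bigr) = q(\mathbb{R}^{m\times r}\times\mathbb{R}^{n\times r}) = 1
\]
for any probability measure $q$, mean-field Gaussian or otherwise.

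\textbf{Main obstacle.} There is no real difficulty here. The only step needing care is the measurability of $\mathcal{R}_r$, which is what makes $q_W(\mathcal{R}_r)$ meaningful, and the minors characterization settles it. I would also note that $\phi$ actually maps onto $\mathcal{R}_r$ when $r\le\min(m,n)$, via the SVD. This is not needed for the lemma, but it explains why $\mathcal{R}_r$ is exactly the support and not merely a superset of it.
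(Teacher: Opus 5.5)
Your proof is correct. For part (ii) it is the same pushforward argument the paper uses: every realization $AB^\top$ lies in $\mathcal{R}_r$, so the preimage of $\mathcal{R}_r$ under $(A,B)\mapsto AB^\top$ is the whole factor space, and hence $q_W(\mathcal{R}_r)=1$.

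The real difference is in part (i). The paper justifies $\mathrm{rank}(AB^\top)\le r$ by citing its Theorem~\ref{thm:lowrank_existence}. That theorem, however, proves the converse: every matrix of rank at most $r$ admits a factorization $AB^\top$ with $A\in\mathbb{R}^{m\times r}$ and $B\in\mathbb{R}^{n\times r}$. So the citation does not by itself give the bound. Your inclusion $\mathrm{col}(AB^\top)\subseteq\mathrm{col}(A)$ is the argument that actually establishes (i). The paper's cited result is exactly the surjectivity you mention in your closing remark.

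You also check that $\mathcal{R}_r$ is closed, hence Borel, so that $q_W(\mathcal{R}_r)$ is meaningful. The paper's proof of this lemma takes that for granted and only uses the minors characterization later, in Lemma~\ref{lem:measure_zero}.

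One caution about that closing remark. Surjectivity of $\phi$ onto $\mathcal{R}_r$ makes $\mathcal{R}_r$ the exact support of $q_W$ only when $q$ has full support on the factor space, as nondegenerate mean-field Gaussians do. For an arbitrary $q$, such as a point mass, the support of $q_W$ is a much smaller subset of $\mathcal{R}_r$.
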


\begin{lemma}[Measure Zero of $\mathcal{R}_r$]
\label{lem:measure_zero}
The set $\mathcal{R}_r$ has Lebesgue measure zero in $\mathbb{R}^{m \times n}$ 
when $r < \min(m,n)$. (see Appendix \ref{app:measure_zero} for complete proof)
\end{lemma}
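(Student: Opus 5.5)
The plan is to show that $\mathcal{R}_r$ is the zero set of a nontrivial polynomial, and then use the fact that such zero sets are Lebesgue-null. Since $r < \min(m,n)$, every matrix $W \in \mathcal{R}_r$ has all $(r+1)\times(r+1)$ minors equal to zero. In particular, fix the leading minor
\[
f(W) = \det\bigl(W_{[1:r+1],[1:r+1]}\bigr),
\]
which is well defined because $r+1 \le \min(m,n)$. Then $\mathcal{R}_r \subseteq Z(f) := \{W : f(W)=0\}$. Here $f$ is a polynomial in the $mn$ entries of $W$. It is not identically zero, since $f$ equals $1$ at the matrix with ones on the first $r+1$ diagonal positions and zeros elsewhere. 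By monotonicity of Lebesgue measure, it then suffices to show that $Z(f)$ has measure zero.

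The main step is the standard lemma that the zero set of a nonzero polynomial $p$ on $\mathbb{R}^N$ has Lebesgue measure zero. I would prove it by induction on $N$.

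For $N=1$, a nonzero polynomial has finitely many roots, so its zero set is null.

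For the inductive step, write
\[
p(x', x_N) = \sum_{k=0}^{d} c_k(x')\, x_N^k ,
\]
where $c_d \not\equiv 0$. Split $Z(p)$ into two parts:
\begin{itemize}
\item the points with $c_d(x')=0$, which lie inside $Z(c_d)\times\mathbb{R}$;
\item the points with $c_d(x')\neq 0$.
\end{itemize}
By the inductive hypothesis $Z(c_d)\subset\mathbb{R}^{N-1}$ is null, so $Z(c_d)\times\mathbb{R}$ is null in $\mathbb{R}^N$. For each fixed $x'$ with $c_d(x')\neq 0$, the slice $\{x_N : p(x',x_N)=0\}$ is finite, hence has one-dimensional measure zero. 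Tonelli's theorem applied to the indicator of $Z(p)$ then shows this second part is null as well. Measurability is automatic, since $Z(p)$ is closed.

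Combining the two steps, $\lambda(\mathcal{R}_r) \le \lambda(Z(f)) = 0$.

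Getting the Fubini/induction argument right is the only real obstacle; everything else is elementary.

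As a sanity check, an alternative route would realise $\mathcal{R}_r$ as the image of the smooth map $(A,B)\mapsto AB^\top$ from $\mathbb{R}^{r(m+n)}$. When $r(m+n) < mn$, Sard-type or Lipschitz-image arguments give measure zero directly. However, $r(m+n) < mn$ can fail for $r$ close to $\min(m,n)$; for example, $m=n=3$, $r=2$ gives $12 > 9$. The polynomial argument avoids this issue, so I would use it as the main proof.
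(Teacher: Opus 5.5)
Your proof is correct and follows the same route as the paper: fix one $(r+1)\times(r+1)$ minor, show it is a nonzero polynomial by exhibiting a matrix where it equals $1$, and use the fact that zero sets of nonzero polynomials are Lebesgue-null. The differences are minor: you prove that fact by induction with Tonelli where the paper cites it, and you conclude by monotonicity from a single minor rather than writing $\mathcal{R}_r$ as the finite intersection of all minor zero sets.
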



\begin{theorem}[\textbf{Singularity of the Induced Posterior}]
\label{thm:singularity}
Let $r < \min(m,n)$ and let $q(A,B)$ be any variational distribution over 
factors $A \in \mathbb{R}^{m \times r}$ and $B \in \mathbb{R}^{n \times r}$. 
Then the induced posterior $q_W$ is \textbf{singular} with respect to Lebesgue 
measure on $\mathbb{R}^{m \times n}$.
\end{theorem}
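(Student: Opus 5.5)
The plan is to exhibit a single Borel set that separates $q_W$ from Lebesgue measure $\lambda$ on $\mathbb{R}^{m\times n}\cong\mathbb{R}^{mn}$. Recall that $q_W \perp \lambda$ means there is a measurable $S$ with $q_W(S)=1$ and $\lambda(S)=0$. The natural candidate is $S=\mathcal{R}_r$ itself.

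By Lemma~\ref{lem:constrained_support}(ii), $q_W(\mathcal{R}_r)=1$ for every variational distribution $q(A,B)$. This holds without any assumption on $q$, since $AB^\top\in\mathcal{R}_r$ pointwise. By Lemma~\ref{lem:measure_zero}, $\lambda(\mathcal{R}_r)=0$ because $r<\min(m,n)$. Taking $S=\mathcal{R}_r$ then gives singularity directly. Equivalently, the complement $\mathcal{R}_r^c$ carries full Lebesgue measure and zero $q_W$-mass.

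Two routine points still need to be checked. The first is measurability of $\mathcal{R}_r$. It is closed, being the common zero set of all $(r+1)\times(r+1)$ minors, which are polynomial and hence continuous functions of $W$. So it is Borel, and $q_W(\mathcal{R}_r)$ is well defined as a pushforward under the continuous, hence measurable, map $(A,B)\mapsto AB^\top$. The second is that $q_W$ is nonzero, which holds because it is a probability measure. This rules out the degenerate case where the zero measure would trivially be singular.

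As a corollary worth noting, $q_W$ admits no Lebesgue density. If $q_W=f\,\lambda$, then $1=q_W(\mathcal{R}_r)=\int_{\mathcal{R}_r} f\,d\lambda=0$, a contradiction.

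There is essentially no obstacle once the two lemmas are granted. The substantive content lies in Lemma~\ref{lem:measure_zero}, which can be proved by viewing $\mathcal{R}_r$ as a proper algebraic variety: a nonzero polynomial, for instance an $(r+1)$-minor, vanishes on it, and the zero set of a nonzero polynomial is Lebesgue-null. The only care needed in the theorem itself is the measurability remark above.
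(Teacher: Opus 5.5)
Your proposal is correct and matches the paper's proof: you take $\mathcal{R}_r$ as the separating set, use $q_W(\mathcal{R}_r)=1$ from Lemma~\ref{lem:constrained_support} and $\lambda(\mathcal{R}_r)=0$ from Lemma~\ref{lem:measure_zero}, and your no-density corollary is the paper's ``Intuition'' paragraph. The Borel measurability of $\mathcal{R}_r$ (a closed common zero set of minors) and of the pushforward under the continuous map $(A,B)\mapsto AB^\top$, which you check, are details the paper leaves implicit.
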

\begin{proof}
We use the standard measure-theoretic criterion for singularity: a measure $\mu$
is singular with respect to a measure $\nu$, written $\mu \perp \nu$, if there exists a measurable set
$A$ such that $\mu(A)=1$ and $\nu(A)=0$.

\textbf{(i)} From Lemma~\ref{lem:constrained_support}, we have already established that:
$
q_W(\mathcal{R}_r) = 1,
$
which means the induced posterior $q_W$ is entirely supported on the 
low-rank manifold $\mathcal{R}_r$.

\textbf{(ii)} From Lemma~\ref{lem:measure_zero}, we have that the Lebesgue measure 
of this manifold is zero:
$
\lambda(\mathcal{R}_r) = 0.
$

Taking $A=\mathcal{R}_r$, conditions (i) and (ii) imply directly that
$q_W \perp \lambda$. Therefore, the induced posterior $q_W$ is singular with
respect to Lebesgue measure on $\mathbb{R}^{m \times n}$.

\textbf{Intuition.} The singularity statement immediately implies that $q_W$
cannot admit a density with respect to Lebesgue measure. Suppose for the sake of contradiction that $q_W$ admits a probability density function $f : \mathbb{R}^{m \times n} \rightarrow [0, \infty)$ with respect to the Lebesgue measure $\lambda$. By the definition of a density, the probability of the set $\mathcal{R}_r$ 
would be given by the integral of the density over that set:
$
q_W(\mathcal{R}_r) = \int_{\mathcal{R}_r} f(W) \, d\lambda(W).
$ Since the domain of integration has measure zero ($\lambda(\mathcal{R}_r) = 0$), 
standard integration theory tells us this integral must be zero:
$
\int_{\mathcal{R}_r} f(W) \, d\lambda(W) = 0.
$ However, this contradicts our finding in step (i) that $q_W(\mathcal{R}_r) = 1$.
 Therefore, no such density function $f$ exists with respect to $\lambda$ on $\mathbb{R}^{m \times n}$.
\end{proof}
\textbf{Interpretation.}
This result reveals an important geometric distinction: standard mean-field 
posteriors $q(W) = \prod_{ij} \mathcal{N}(w_{ij}|\mu_{ij}, \sigma_{ij}^2)$ 
have positive density everywhere in $\mathbb{R}^{m \times n}$, while our induced 
posterior $q_W$ concentrates on the rank-$r$ manifold $\mathcal{R}_r$, which 
has zero volume in the full weight space (See Figure \ref{fig:geometric}). This geometric constraint impacts inductive bias. As argued by 
\citet{wilson2020bayesian}, generalization in probabilistic models depends 
on two properties: \textit{the support of the posterior and the 
inductive biases} it encodes. Mean-field imposes a \emph{bias of independence}, 
treating weights as freely adjustable parameters. Our factorization imposes a 
\emph{bias of correlation} by restricting support to $\mathcal{R}_r$, acting 
as an implicit regularizer: updating $W_{ij} = \sum_{k=1}^r A_{ik}B_{jk}$ 
requires modifying shared factors affecting entire rows and columns. This prevents \emph{local memorization} and enables \emph{coherent uncertainty 
propagation}.

\subsection{Structured Weight Correlations}
\label{sec:theory:correlations}

We now characterize the covariance structure of $W$ and 
show how the rank parameter $r$ controls the expressiveness of these correlations.
\begin{lemma}[Covariance of Weight Entries]
\label{lem:covariance}

For weight entries $W_{ij} = \sum_{k=1}^r A_{ik} B_{jk}$ and 
$W_{i'j'} = \sum_{\ell=1}^r A_{i'\ell} B_{j'\ell}$, under mean-field 
factorization $q(A,B) = q_A(A) q_B(B)$ with independent entries, we have:
{\small
\begin{equation}
\mathrm{Cov}(W_{ij}, W_{i'j'}) = \sum_{k=1}^r \mathrm{Cov}(A_{ik}B_{jk}, A_{i'k}B_{j'k}).
\end{equation}
}In general, this is nonzero when weights share common latent factors. See Appendix \ref{app:corrproof} for complete proof.
\end{lemma}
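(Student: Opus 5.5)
We expand the covariance by bilinearity and then use the independence structure of the mean-field posterior to discard every cross term. Writing $W_{ij}=\sum_{k=1}^r A_{ik}B_{jk}$ and $W_{i'j'}=\sum_{\ell=1}^r A_{i'\ell}B_{j'\ell}$, bilinearity of covariance gives
\begin{equation*}
\mathrm{Cov}(W_{ij},W_{i'j'})=\sum_{k=1}^r\sum_{\ell=1}^r \mathrm{Cov}\bigl(A_{ik}B_{jk},\,A_{i'\ell}B_{j'\ell}\bigr).
\end{equation*}
We assume the factor entries have finite second moments, which holds for the Gaussian $q_A,q_B$ of Section~\ref{sec:method:lowrank_vi}. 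Then each product $A_{ik}B_{jk}$ has finite variance, since $\mathbb{E}[A_{ik}^2B_{jk}^2]=\mathbb{E}[A_{ik}^2]\,\mathbb{E}[B_{jk}^2]$ by independence. So the double sum is well defined.

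The key step is to show that every off-diagonal term with $k\neq \ell$ vanishes. Under $q(A,B)=q_A(A)q_B(B)$ with mutually independent entries, the pair $(A_{ik},B_{jk})$ involves only column $k$ of $A$ and of $B$. The pair $(A_{i'\ell},B_{j'\ell})$ involves only column $\ell$. When $k\neq\ell$ these are disjoint collections of independent random variables, so the two random vectors are independent.

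Measurable functions of independent random vectors are independent. Hence $A_{ik}B_{jk}$ and $A_{i'\ell}B_{j'\ell}$ are independent, and their covariance is zero. This leaves only the diagonal terms $k=\ell$, which is exactly the claimed identity.

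The main point needing care is that the disjointness argument depends on the column index alone, not on whether $i=i'$ or $j=j'$. If $k\neq\ell$, then even with $i=i'$ the entries $A_{ik}$ and $A_{i\ell}$ are distinct variables, so no shared variable survives. We state this explicitly to cover all index coincidences.

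For the qualitative remark that the covariance is generally nonzero, we evaluate one diagonal term in an illustrative case. Take $i=i'$ and $j\neq j'$. Then
\begin{equation*}
\mathrm{Cov}(A_{ik}B_{jk},A_{ik}B_{j'k})=\mathbb{E}[A_{ik}^2]\,\mu_{B,jk}\mu_{B,j'k}-\mu_{A,ik}^2\,\mu_{B,jk}\mu_{B,j'k}=\sigma_{A,ik}^2\,\mu_{B,jk}\mu_{B,j'k}.
\end{equation*}
This is nonzero whenever the shared factor $A_{ik}$ has positive variance and the corresponding $B$ means are nonzero. The case $j=j'$, $i\neq i'$ is symmetric. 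It follows that weights in the same row or column are correlated through the shared latent factors.

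By contrast, if $i\neq i'$ and $j\neq j'$, each diagonal term is the covariance of two independent products and therefore vanishes. This pins down precisely when correlation arises, namely when the two weights share a common latent factor.
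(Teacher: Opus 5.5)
Your proof is correct and follows essentially the paper's route: expand by bilinearity, then kill every $k\neq\ell$ cross term by independence. The only cosmetic difference is that the paper goes through the auxiliary identity $\mathrm{Cov}(XY,UV)=\mathbb{E}[XU]\mathbb{E}[YV]-\mathbb{E}[X]\mathbb{E}[U]\mathbb{E}[Y]\mathbb{E}[V]$ and factors the second moments of distinct entries, whereas you use independence of functions of disjoint blocks of entries directly. Your explicit diagonal term $\sigma_{A,ik}^2\mu_{B,jk}\mu_{B,j'k}$ and your observation that the covariance vanishes when $i\neq i'$ and $j\neq j'$ sharpen the paper's qualitative remark. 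One caveat on "precisely": sharing a row or column is necessary for correlation but not sufficient, since the sum over $k$ can cancel.
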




\textbf{Implications.} While $A, B$ are under mean-field factorization, enabling us to leverage existing mean field algorithms, entries of W are not independent. 
Weights $W_{ij}$ and $W_{i'j'}$ that share latent factors (indexed by $k$) exhibit 
correlated uncertainties, capturing \emph{global structure} in weight space (see Figure \ref{fig:correlation}). 
 This filters high-frequency noise that does not align 
with the dominant low-rank structure, providing generalization benefits that independent 
weight posteriors can not capture. The rank $r$ controls this expressiveness: higher 
rank enables richer correlation patterns while maintaining $O(r(m+n))$ parameters.
\begin{figure}[!ht]
\centering
\includegraphics[width=0.5\textwidth]{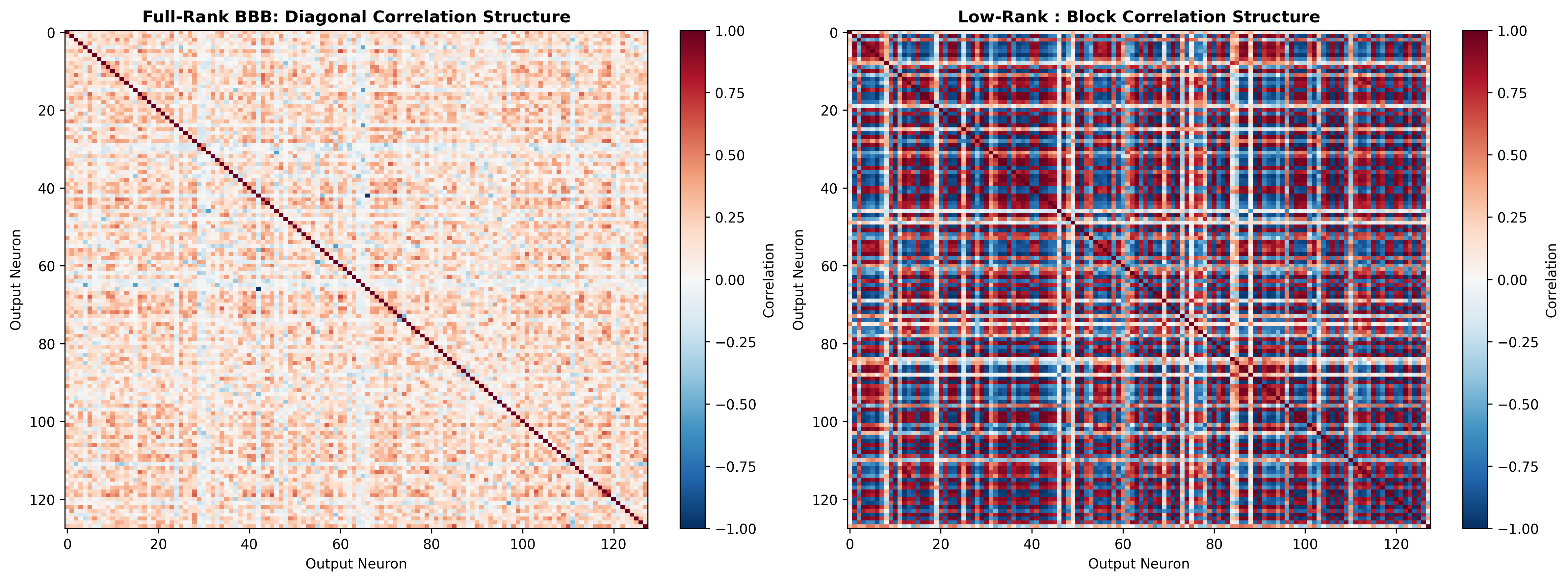}
\caption{\textbf{Weight correlation structure.} Comparison of Full-Rank BBB (left) and Low-Rank Gaussian r=15 (right). Full-Rank exhibits diagonal correlations; Low-Rank captures block structure.}
\label{fig:correlation}
\end{figure}
\subsection{Loss Approximation Guarantees}
\label{sec:theory:loss_bounds}


We now establish conditions under which the low-rank model closely approximates 
the full-rank optimum.

\textbf{Assumptions.} We assume the loss function $\ell(\cdot, y)$ satisfies:
    \textbf{(i)$L$-Lipschitz}: $|\ell(z,y) - \ell(z',y)| \leq L\|z-z'\|_2$ and
    \textbf{(ii) Bounded inputs}: $\|x\|_2 \leq R$ almost surely

 These conditions hold for common regression and classification 
losses (see Appendix~\ref{app:loss_verification} for extended analysis)

Let $W^*$ be the optimal full-rank weight matrix  (learned via back-propagation)  and $W^*_r$ be the optimal rank-$r$ matrix of $W^*$ obtained via truncated SVD \cite{eckart1936approximation})
\begin{theorem}[Bounded Loss Approximation 
] \label{thm:loss_bound}Under the assumptions above, the expected loss difference is bounded: {\small \begin{equation} |\mathbb{E}[\ell(W^* x, y)] - \mathbb{E}[\ell(W^*_r x, y)]| \leq LR \sqrt{\sum_{i=r+1}^{\rho} \sigma_i^2(W^*)}, \end{equation}} \end{theorem}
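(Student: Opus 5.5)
The plan is to reduce the loss difference to a pointwise bound on prediction differences, and then control the operator-norm error of the truncation. Throughout, $\rho = \operatorname{rank}(W^*)$, and $\sigma_1(W^*) \geq \dots \geq \sigma_\rho(W^*)$ are its singular values.

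\textbf{Step 1: reduce to a pointwise bound.} We move the absolute value inside the expectation via Jensen's inequality (equivalently, linearity of expectation plus $|\mathbb{E}Z| \le \mathbb{E}|Z|$). This gives
\[
|\mathbb{E}[\ell(W^*x,y)] - \mathbb{E}[\ell(W^*_r x,y)]| \le \mathbb{E}\,|\ell(W^*x,y) - \ell(W^*_r x,y)|.
\]
The $L$-Lipschitz assumption (i), applied with $z = W^*x$ and $z' = W^*_r x$, bounds the integrand by $L\|(W^* - W^*_r)x\|_2$.

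\textbf{Step 2: bound the matrix-vector product.} Next we use $\|(W^*-W^*_r)x\|_2 \le \|W^*-W^*_r\|_2\,\|x\|_2$, where $\|\cdot\|_2$ is the spectral norm. The bounded-input assumption (ii) gives $\|x\|_2 \le R$ almost surely, so the integrand is almost surely at most $LR\,\|W^*-W^*_r\|_2$. This quantity is deterministic, so taking expectations leaves it unchanged.

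\textbf{Step 3: apply Eckart--Young--Mirsky.} This is the only substantive step. Write the SVD $W^* = \sum_{i=1}^{\rho}\sigma_i u_i v_i^\top$ and take $W^*_r = \sum_{i\le r}\sigma_i u_i v_i^\top$. The residual $W^*-W^*_r = \sum_{i>r}\sigma_i u_i v_i^\top$ is again in SVD form, because the $u_i$ are orthonormal and so are the $v_i$. Hence
\[
\|W^*-W^*_r\|_2 = \sigma_{r+1}(W^*) \le \Big(\sum_{i=r+1}^{\rho}\sigma_i^2(W^*)\Big)^{1/2} = \|W^*-W^*_r\|_F .
\]
The inequality between the spectral and Frobenius norms holds in general as well, since the spectral norm is the largest singular value.

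I do not expect a real obstacle. The stated bound is looser than necessary: the tighter bound $LR\,\sigma_{r+1}(W^*)$ also holds. The only care needed is to fix the orientation so that $W^*x$ is well defined, and to note that if $r \ge \rho$ both sides are zero. If the network is multilayer, the statement should be read as applying to a single linear map.
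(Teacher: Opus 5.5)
Your proof is correct, and its skeleton (Lipschitz bound on the integrand, a matrix--vector norm inequality, the Eckart--Young--Mirsky tail identity, then $\|x\|_2\le R$ under the expectation) matches the paper's. Two details differ.

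\textbf{Removing the absolute value.} The paper uses the global optimality of $W^*$, i.e.\ $\mathcal{L}(W^*_r)-\mathcal{L}(W^*)\ge 0$, and then bounds only the one-sided difference. You instead use $|\mathbb{E}Z|\le\mathbb{E}|Z|$ together with the two-sided Lipschitz condition. Your argument therefore holds for any matrix and its SVD truncation, and never needs $W^*$ to be a global minimizer. This is a real gain in generality, since the main text describes $W^*$ as learned by back-propagation. It also avoids the paper's side remark that the \emph{pointwise} difference $\ell(W^*_r x,y)-\ell(W^*x,y)$ is nonnegative. That remark does not follow from optimality of the risk, though the paper never actually uses it.

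\textbf{Choice of norm.} The paper bounds $\|(W^*-W^*_r)x\|_2$ by $\|W^*-W^*_r\|_F\|x\|_2$ and uses the Frobenius form of Eckart--Young--Mirsky, which lands exactly on the stated right-hand side. You pass through the operator norm and the spectral form instead. This gives the sharper bound $LR\,\sigma_{r+1}(W^*)$, which you then relax to the stated one.
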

 where $\sigma_i(W^*)$ are the singular values of $W^*$ and $\rho=$rank($W^*$).
\textbf{In practice, our learned matrix $W = AB^\top$ may not achieve the optimal rank-$r$ 
approximation $W^*_r$ of $W^*$}. Thus, we decompose the total approximation error into two components.

\begin{theorem}[Decomposition of Approximation Error]
\label{thm:learned_lowrank}
Let $W = AB^\top$ be optimal learned rank-$r$ matrix and let $\sigma_{>r} := \sqrt{\sum_{i=r+1}^{\rho}\sigma_i^2(W^*)}$
.
 Then for any $(x,y)$,
{\small
\begin{equation}
|\ell(Wx, y) - \ell(W^*x, y)| \leq LR\bigl(\|W - W^*_r\|_F + \sigma_{>r}\bigr).
\end{equation}

}

\end{theorem}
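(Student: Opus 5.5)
The plan is to chain the Lipschitz assumption, the triangle inequality, and the Eckart--Young--Mirsky theorem, as in Theorem~\ref{thm:loss_bound}, but now pointwise in $(x,y)$ and with an intermediate matrix $W^*_r$. Fix $(x,y)$ with $\|x\|_2\le R$. By the $L$-Lipschitz assumption on $\ell(\cdot,y)$,
\begin{equation*}
|\ell(Wx,y)-\ell(W^*x,y)| \le L\,\|(W-W^*)x\|_2 .
\end{equation*}
I will bound the operator action by the Frobenius norm, using $\|Mx\|_2 \le \|M\|_2\|x\|_2 \le \|M\|_F\, R$. This gives
\begin{equation*}
|\ell(Wx,y)-\ell(W^*x,y)| \le LR\,\|W-W^*\|_F .
\end{equation*}

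Next I will insert the truncated SVD $W^*_r$ and apply the triangle inequality for $\|\cdot\|_F$:
\begin{equation*}
\|W-W^*\|_F \le \|W-W^*_r\|_F + \|W^*_r-W^*\|_F .
\end{equation*}
The first term is kept as the ``learning error'' and appears verbatim in the statement. For the second term, Eckart--Young--Mirsky identifies $W^*_r$ as the best rank-$r$ Frobenius approximation of $W^*$. Writing $W^*=\sum_{i=1}^{\rho}\sigma_i u_i v_i^\top$, the residual $W^*-W^*_r=\sum_{i>r}\sigma_i u_i v_i^\top$ has Frobenius norm $\sqrt{\sum_{i=r+1}^{\rho}\sigma_i^2(W^*)}=\sigma_{>r}$. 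The same computation already underlies Theorem~\ref{thm:loss_bound}, and the quantity here is exactly that residual. Combining the two displays yields the claim. The statement holds for every $(x,y)$ in the support, i.e.\ almost surely under the boundedness assumption.

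There is no real obstacle; the step needing the most care is the norm bookkeeping. The spectral-norm bound $\|M\|_2\le\|M\|_F$ is what allows the Frobenius quantities to appear, and it costs nothing because we only need an upper bound. I will also note two remarks. First, the hypothesis that $W$ is ``optimal learned'' is not used: the inequality holds for any rank-$r$ (indeed any) $W$. Second, if $r\ge\rho$ then $\sigma_{>r}=0$ and the bound reduces to $LR\|W-W^*\|_F$.
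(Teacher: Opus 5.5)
Your proof is correct and follows essentially the paper's argument: Lipschitz continuity, $\|Mx\|_2\le\|M\|_F\|x\|_2$ with $\|x\|_2\le R$, and Eckart--Young--Mirsky for the tail term. The only difference is where $W^*_r$ is inserted. The paper adds and subtracts $\ell(W^*_r x,y)$ and applies the Lipschitz bound twice, whereas you apply it once and use the triangle inequality in Frobenius norm, which passes through the slightly sharper intermediate bound $LR\|W-W^*\|_F$.
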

See Appendix~\ref{app:loss_bounds} for complete proofs and \textbf{the Bayesian extension 
with $\beta$-smooth losses.}

\textbf{Implications.} Theorem~\ref{thm:loss_bound} establishes that 
the optimal rank-$r$ approximation incurs minimal loss when tail singular values 
$\sigma_{>r} $ decay rapidly. Theorem~\ref{thm:learned_lowrank} 
reveals that the total error for our learned factorization $W = AB^\top$ decomposes 
into two components: (i)~\textbf{learning error} $\|W - W_r^*\|_F$, which measures 
how well training finds the optimal rank-$r$ solution and can be reduced through 
better optimization, and (ii)~\textbf{rank bias} $\sigma_{>r} $, 
the unavoidable approximation error from rank restriction. In practice, rank $r$ of each layer can be tuned via ablation studies with reduced computational budget (fewer epochs, fewer MC samples). When pretrained deterministic weights are available, singular value decay analysis can optionally narrow the search range or validate ablation results (See Figures \ref{fig:transformer_params_svd}, \ref{fig:singular_value_decay}, and \ref{fig:lstm_svd}). 
This connects our theoretical result directly to implementation decisions.
 \begin{figure}[!ht]
\centering
\includegraphics[width=0.48\textwidth]{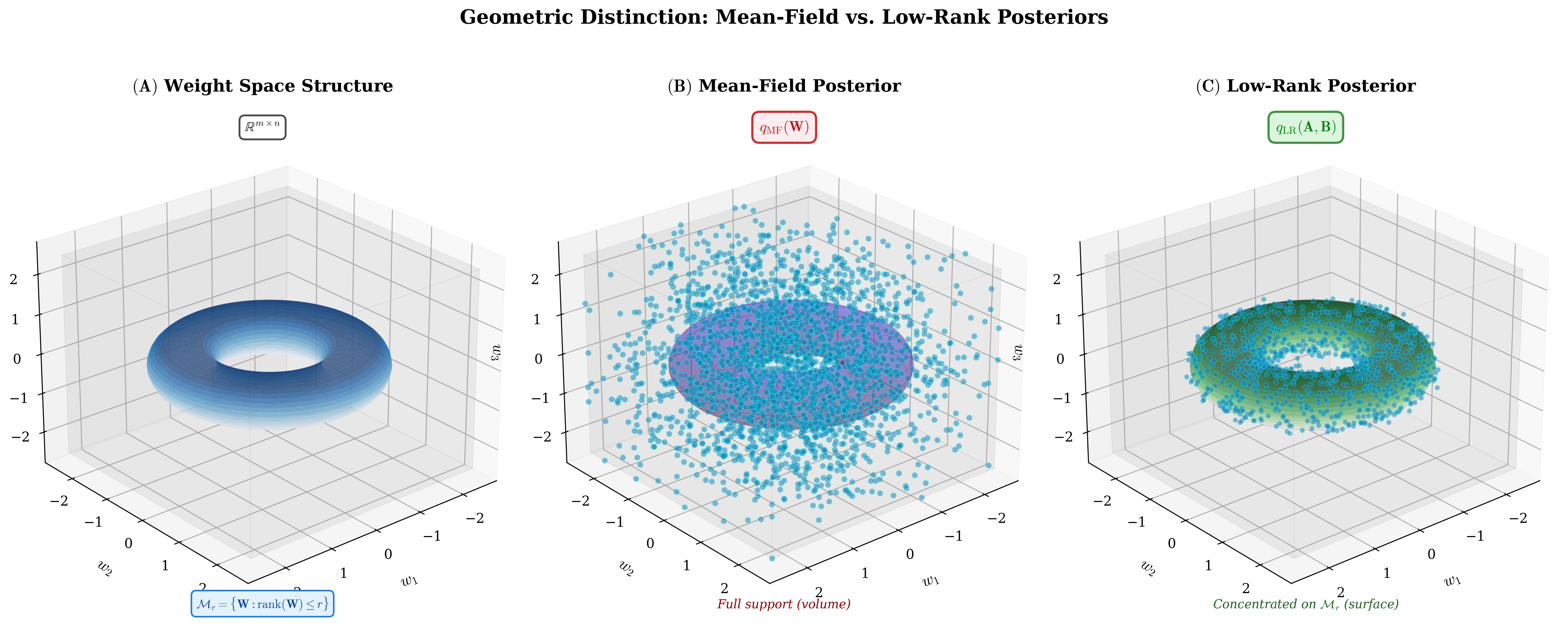}
\caption{\textbf{Geometric distinction between mean-field and low-rank posteriors.} 3D projection of weight space $\mathbb{R}^{m \times n}$. Left: Rank-$r$ manifold $\mathcal{M}_r$ (blue surface, dimension $r(m+n-r)$). Middle: Mean-field posterior $q_{\text{MF}}(W)$ has full-dimensional support (volume). Right: Low-rank posterior $q_{\text{LR}}(A,B)$ concentrates on the manifold surface.}
\label{fig:geometric}
\end{figure}
\subsection{PAC-Bayes Generalization Bounds}
\label{sec:theory:pacbayes}


\begin{theorem}[Tighter Bounds for Low-Rank Posteriors]
\label{thm:lr_pacbayes}
Consider full-rank mean-field with $D_{\text{full}} = mn$ and low-rank factorization with $D_{\text{LR}} = r(m+n)$ parameters. The ratio of complexity terms in the 
PAC-Bayes bound \cite{mcallester1998some} satisfies:
{\small 
\begin{equation}
\frac{\text{Complexity}(Q_{\text{LR}})}{\text{Complexity}(Q_{\text{full}})} 
\simeq\sqrt{\frac{r(m+n)}{mn}} \simeq\sqrt{r\left(\frac{1}{m} + \frac{1}{n}\right)} \ll 1
\end{equation}}
when $r \ll \min(m,n)$. Figure \ref{fig:bounds_lstm} shows the empirical difference.  See Appendix~\ref{app:pacbayes} for complete  proof.
\end{theorem}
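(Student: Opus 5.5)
The plan is to start from McAllester's PAC-Bayes bound in the form
\[
L(Q) \le \hat L(Q) + \sqrt{\frac{\mathrm{KL}(Q\|P) + \log(2\sqrt{N}/\delta)}{2N}},
\]
and to call $\text{Complexity}(Q) := \sqrt{(\mathrm{KL}(Q\|P)+\log(2\sqrt N/\delta))/(2N)}$. The statement is asymptotic ($\simeq$), so the goal is to show that the KL term scales linearly in the number of independent variational coordinates, with the same per-coordinate constant in both models. The ratio of square roots then gives $\sqrt{D_{\text{LR}}/D_{\text{full}}}$.

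The first step is to apply the bound to the correct object. For the low-rank model we cannot work with $q_W$ directly: by Theorem~\ref{thm:singularity} it is singular with respect to Lebesgue measure, so its KL to any absolutely continuous prior on $\mathbb{R}^{m\times n}$ is infinite. Instead we regard the hypothesis as indexed by $(A,B)$. The predictor depends on $(A,B)$ only through $AB^\top$, and the PAC-Bayes bound holds for any measurable parameterization of hypotheses. So we apply it with posterior $q_Aq_B$ and prior $p_Ap_B$, exactly as in the ELBO \eqref{eq:elbo}. Because both distributions are products, the chain rule for KL gives
\[
\mathrm{KL}(q_Aq_B\|p_Ap_B)=\sum_{j=1}^{mr}\mathrm{KL}(q_{a_j}\|p)+\sum_{j=1}^{nr}\mathrm{KL}(q_{b_j}\|p),
\]
a sum of $D_{\text{LR}}=r(m+n)$ scalar terms. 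In the same way, the full-rank mean-field KL is a sum of $D_{\text{full}}=mn$ scalar terms.

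The second step, and the main obstacle, is to justify that the per-coordinate KL is comparable across the two models. I would impose a uniformity assumption: each scalar KL lies in $[\kappa_{\min},\kappa_{\max}]$, with these constants independent of the dimensions. This holds, for example, when the posterior means and variances range over a fixed compact set and the prior is the common scale mixture. A cleaner alternative is to take the same per-coordinate $\kappa$ in both models. Under this assumption $\mathrm{KL}_{\text{LR}}\asymp \kappa\, r(m+n)$ and $\mathrm{KL}_{\text{full}}\asymp \kappa\, mn$. Once these KL terms dominate the $\log(2\sqrt N/\delta)$ term, which happens for large layers, the $2N$ and the logarithmic term cancel asymptotically in the ratio, leaving $\sqrt{r(m+n)/(mn)}$. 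If I could not justify uniformity rigorously, I would state the result as two-sided bounds with constant $\sqrt{\kappa_{\max}/\kappa_{\min}}$.

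Finally, the algebraic identity $r(m+n)/(mn)=r(1/m+1/n)$ gives the second form. When $r\ll\min(m,n)$, we have $r(1/m+1/n)\le 2r/\min(m,n)\ll 1$, which gives the claimed $\ll 1$. For a multi-layer network, the KLs add over layers, and the same argument applies to the summed counts.
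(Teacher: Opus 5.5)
Your proposal is correct and follows essentially the same route as the paper's proof in Appendix~\ref{app:pacbayes}. Both apply McAllester's bound, write the KL of the factorized posterior as a sum of $D_{\text{LR}}=r(m+n)$ (resp.\ $D_{\text{full}}=mn$) scalar terms, assume a common per-coordinate constant, and take the ratio after setting aside the $\ln(2\sqrt{N}/\delta)$ term. Your two additions make explicit what the paper leaves implicit: you justify applying the bound on the factor space $(A,B)$ because $q_W$ is singular, and your two-sided range $[\kappa_{\min},\kappa_{\max}]$, compared with the paper's one-sided bound $C_{\max}\cdot D$ under $C_{\max}^{\text{full}}=C_{\max}^{\text{LR}}$, controls the ratio of the actual complexity terms rather than only the ratio of their upper bounds.
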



\subsection{Gaussian Complexity Generalization Bounds}
\label{sec:theory:gaussian_complexity}
While PAC-Bayes bounds (\cref{sec:theory:pacbayes}) capture generalization through KL divergence, 
we provide a complementary perspective using \emph{Gaussian complexity}, a data-dependent measure 
exploiting the geometric structure of rank-constrained weight matrices. Leveraging recent work 
on vector-valued Gaussian complexity for deterministic low-rank networks \citep{pinto2025on}, we 
establish guarantees that explicitly depend on spectral norm bounds and rank constraints. 
\begin{theorem}[Gaussian-complexity bound for low-rank BNN predictors]
\label{thm:bnn_main_icml}
Consider a depth-$D$ network with weights $W_i=A_iB_i^\top$ where
$A_i\in\mathbb{R}^{h_i\times r_i}$, $B_i\in\mathbb{R}^{h_{i-1}\times r_i}$ and
$r_i\le \min(h_i,h_{i-1})$. Let the variational posterior factorize as
$q(A,B\mid\theta)=\prod_{i=1}^D q_A(A_i\mid\theta_i^A)q_B(B_i\mid\theta_i^B)$ and assume
$\|A_i\|_2\le C_i^A$, $\|B_i\|_2\le C_i^B$ almost surely. Define $C_i:=C_i^AC_i^B$ and the
posterior-mean predictor $f_{\mathrm{BNN}}(x;\theta)=\mathbb{E}_{(A,B)\sim q}[f_{A,B}(x)]$. Let $\phi$ be $1$-Lipschitz and let $\ell(\cdot,y)\in[0,1]$ be $\mathcal{L}$-Lipschitz in its
first argument. Then for $S\sim\mathcal{D}^N$, with probability at least $1-\delta$,
{\small
\begin{equation}
\begin{aligned}
\mathbb{E}\big[\ell(f_{\mathrm{BNN}}(x;\theta),y)\big]
\le \frac{1}{N}\sum_{j=1}^N \ell(f_{\mathrm{BNN}}(x_j;\theta),y_j)
+ \\\sqrt{\pi\mathcal{L}}\;\hat{\mathcal{G}}_S\!\big(\mathcal{F}^{\mathrm{Pinto}}_D(C,r)\big)
+ 3\sqrt{\frac{\log(2/\delta)}{2N}},
\end{aligned}
\end{equation}
}
where $\mathcal{F}^{\mathrm{Pinto}}_D(C,r)$ is the deterministic class of depth-$D$ networks whose
layers satisfy $\|W_i\|_2\le C_i$ and $\mathrm{rank}(W_i)\le r_i$. Moreover,
$\hat{\mathcal{G}}_S(\mathcal{F}^{\mathrm{Pinto}}_D(C,r))$ admits the explicit bound of
\cite{pinto2025on}(Thm.7). See Appendix~\ref{app:gaussian_complexity} for complete proof .
\end{theorem}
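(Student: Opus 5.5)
The plan is to reduce the statement to a standard Gaussian-complexity uniform convergence bound for a deterministic class, and then show that the posterior-mean predictor lies in (the closed convex hull of) $\mathcal{F}^{\mathrm{Pinto}}_D(C,r)$. Gaussian complexity is invariant under taking convex hulls, so the Bayesian averaging costs nothing in complexity.

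\textbf{Step 1: the classical bound.} For a class $\mathcal{H}$ of vector-valued predictors and a loss $\ell\in[0,1]$, the standard Rademacher bound (Bartlett--Mendelson) states that, with probability at least $1-\delta$, every $h\in\mathcal{H}$ satisfies
\[
\mathbb{E}\,\ell(h(x),y)\le \frac{1}{N}\sum_{j}\ell(h(x_j),y_j)+2\hat{\mathfrak{R}}_S(\ell\circ\mathcal{H})+3\sqrt{\frac{\log(2/\delta)}{2N}}.
\]
We then use two facts to pass to Gaussian complexity. First, $\hat{\mathfrak{R}}\le\sqrt{\pi/2}\,\hat{\mathcal{G}}$. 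Second, a vector-contraction inequality for Gaussian processes (Slepian/Sudakov--Fernique, or Maurer's vector contraction) gives $\hat{\mathcal{G}}_S(\ell\circ\mathcal{H})\lesssim\mathcal{L}\,\hat{\mathcal{G}}_S(\mathcal{H})$. Multiplying out the constants should recover a factor of the form $\sqrt{\pi\mathcal{L}}$, or something that dominates the paper's stated constant. I would follow \cite{pinto2025on}'s own statement of this step and adopt their normalization of constants.

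\textbf{Step 2: membership in the deterministic class.} Take $\mathcal{H}=\overline{\mathrm{conv}}\,\mathcal{F}^{\mathrm{Pinto}}_D(C,r)$. For every realization $(A_i,B_i)$ in the almost-sure support of $q$:
\begin{itemize}
\item $\mathrm{rank}(A_iB_i^\top)\le r_i$ by Lemma~\ref{lem:constrained_support}.
\item $\|A_iB_i^\top\|_2\le\|A_i\|_2\|B_i\|_2\le C_i$ by submultiplicativity.
\end{itemize}
Hence $f_{A,B}\in\mathcal{F}^{\mathrm{Pinto}}_D(C,r)$ almost surely. Since $\phi$ is $1$-Lipschitz and the weights are bounded, $f_{A,B}(x)$ is bounded on the sample, so the expectation exists. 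The map $(A,B)\mapsto f_{A,B}(x_j)$ is measurable, which gives that $(f_{\mathrm{BNN}}(x_j;\theta))_{j}$ is a Bochner average of vectors in the set $\{(f(x_j))_j : f\in\mathcal{F}^{\mathrm{Pinto}}_D\}$. It therefore lies in the closed convex hull of that set in $\mathbb{R}^{N\times \text{out}}$. The same argument applies at every test point $x$, so $f_{\mathrm{BNN}}$ belongs to the pointwise closed convex hull class.

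\textbf{Step 3: convex hulls.} The Gaussian complexity is a supremum of a linear functional of the evaluation vector, so
\[
\hat{\mathcal{G}}_S(\overline{\mathrm{conv}}\,\mathcal{F})=\hat{\mathcal{G}}_S(\mathcal{F}).
\]
Combining this with Step 1 gives the stated inequality. The data-dependence in $\theta$ is harmless because the bound is uniform over $\mathcal{H}$. The final claim, the explicit bound on $\hat{\mathcal{G}}_S(\mathcal{F}^{\mathrm{Pinto}}_D(C,r))$, is then quoted directly from \cite{pinto2025on}, Thm.~7.

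\textbf{Main obstacle.} The delicate step is the vector-valued contraction, i.e., getting the loss-class Gaussian complexity controlled by that of the vector-valued predictor class with the stated constant. Scalar Ledoux--Talagrand contraction does not apply directly. I would first try Maurer's vector contraction inequality, converted to Gaussian form, and check that its $\sqrt2$ factors combine with the Rademacher-to-Gaussian factor $\sqrt{\pi/2}$ into at most the displayed constant. If this fails, I would accept a slightly larger absolute constant.
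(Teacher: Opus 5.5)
Your argument has the same skeleton as the paper's proof. Each realization satisfies $\mathrm{rank}(A_iB_i^\top)\le r_i$ and $\|A_iB_i^\top\|_2\le C_i$, so sampled networks lie in $\mathcal{F}^{\mathrm{Pinto}}_D(C,r)$. Then $f_{\mathrm{BNN}}$ lies in a closed convex hull of such networks, $\hat{\mathcal{G}}_S$ is invariant under convex hulls and closure on the sample, and a Gaussian-complexity uniform convergence bound finishes. The paper simply quotes that last bound from \cite{pinto2025on}.

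Two differences are worth noting, both in your favor.
\begin{itemize}
\item The paper gets membership via the strong law applied to Monte Carlo averages. You use the fact that the mean of a random vector lying a.s.\ in a set $K$ belongs to $\overline{\mathrm{conv}}\,K$, which is cleaner. Apply it to arbitrary finite tuples of points, not one point at a time, to get membership in the pointwise closure.
\item More importantly, the paper applies the uniform bound to $\mathcal{F}^{\mathrm{BNN}}_D(\theta)=\overline{\mathrm{conv}}(\mathcal{F}^{\mathrm{supp}}_D(\theta))$, a class that in general depends on $\theta$ through the supports of $q$. When $\theta$ is learned from $S$ that application is not legitimate; when $\theta$ is fixed, the claim already follows from Hoeffding. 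Your class $\overline{\mathrm{conv}}\,\mathcal{F}^{\mathrm{Pinto}}_D(C,r)$ is data-independent, provided the $C_i^A,C_i^B$ are fixed before seeing $S$. That is the right way to make the result meaningful.
\end{itemize}

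Where your proposal is off is the constant. Carried out, your Step 1 gives $2\cdot\sqrt{\pi/2}\cdot\mathcal{L}=\sqrt{2\pi}\,\mathcal{L}$. The contraction $\hat{\mathcal{G}}_S(\ell\circ\mathcal{H})\le\mathcal{L}\,\hat{\mathcal{G}}_S(\mathcal{H})$ follows from Sudakov--Fernique by comparing increments, so your ``main obstacle'' disappears and Maurer's $\sqrt2$ is not needed. The resulting factor is linear in $\mathcal{L}$, so it implies the displayed inequality only for $\mathcal{L}\le 1/2$. No enlargement of an absolute constant turns it into $\sqrt{\pi\mathcal{L}}$.

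You should not try to reach $\sqrt{\pi\mathcal{L}}$ anyway. With a linear output layer, replace $W_D$ by $cW_D$ (so $C_D\mapsto cC_D$) and $\ell(\cdot,y)$ by $\ell(\cdot/c,y)$. This leaves every loss value unchanged but multiplies $\sqrt{\pi\mathcal{L}}\,\hat{\mathcal{G}}_S(\mathcal{F}^{\mathrm{Pinto}}_D(C,r))$ by $\sqrt{c}$. Letting $c\to 0$ would then give a complexity-free bound for learned predictors, which fails for any class rich enough to fit random labels. The paper reaches $\sqrt{\pi\mathcal{L}}$ only by citation. The version your route actually proves, with $\sqrt{2\pi}\,\mathcal{L}$, is the defensible form of the theorem, and you should state it that way.
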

\textbf{Interpretation.}
Although Gaussian complexity is not the canonical Bayesian tool (PAC-Bayes is), it offers insight into capacity control: low-rank layers impose geometric constraints that reduce complexity beyond parameter-count arguments. As shown in Figure~\ref{fig:bounds_lstm}, the Gaussian complexity bounds are vacuous for both methods and do not provide practical generalization certificates. However, they formalize how rank constraints and spectral norms jointly control model capacity, suggesting that the rank-induced restrictions contribute to the good generalization often observed in low-rank neural networks.

\begin{figure}[!ht]
\centering
\includegraphics[width=1\columnwidth]{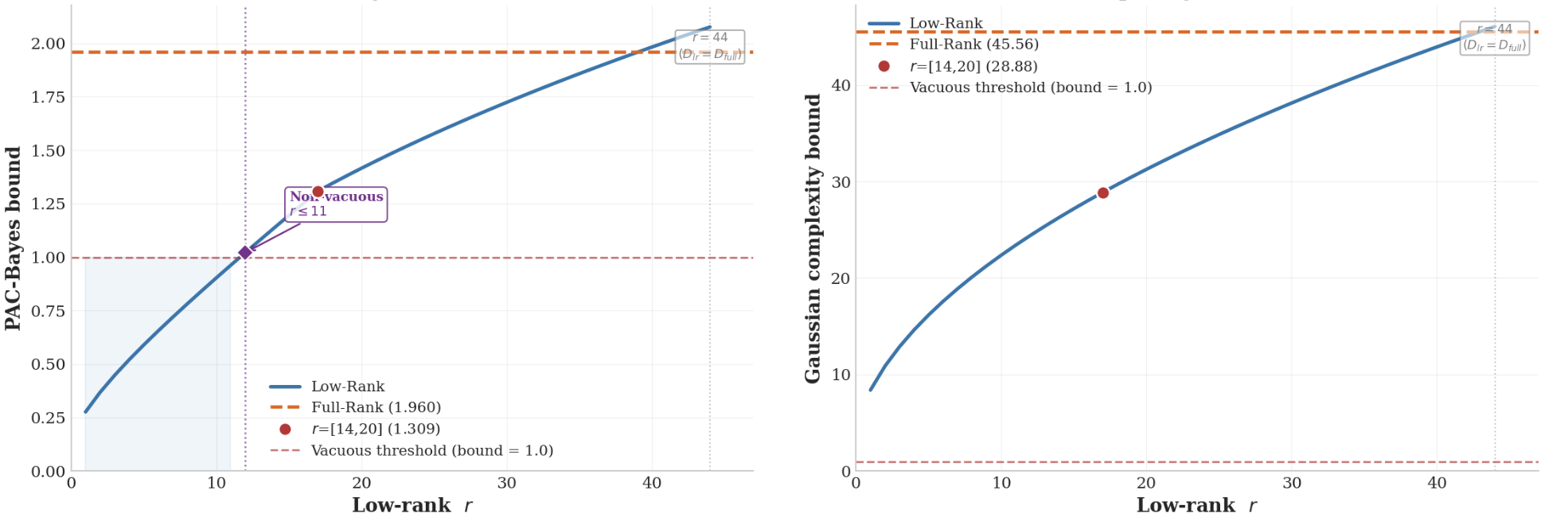}
\caption{\small\textbf{Empirical generalization bounds for low-rank Bayesian neural networks.} 
PAC-Bayes \textbf{(left)} and Gaussian complexity \textbf{(right)} bounds use empirical values from trained LSTM model and training data. 
PAC-Bayes bound exhibits critical rank $r^* \approx 11$ transitioning from non-vacuous to vacuous ($>1$, dashed red line). 
Gaussian complexity decreases from 45.56 (full-rank) to 18.97 with rank reduction. 
Full-rank Bayesian models yield vacuous PAC-Bayes bounds regardless of configuration.
}
\label{fig:bounds_lstm}
\end{figure}


\section{Experiments}

\label{sec:experiments}
We compare six approaches: \textbf{(i)} Deterministic baseline, \textbf{(ii)} Deep Ensemble~\cite{lakshminarayanan2017simple} ($M=5$), \textbf{(iii)} Full-Rank BBB~\cite{blundell2015weight}, \textbf{(iv)} Low-Rank (random init), \textbf{(v)} Low-Rank SVD init (see Appendix~\ref{app:svd init}), and \textbf{(vi)} Rank-1 multiplicative~\cite{dusenberry2020efficient}. Low-Rank methods (iv-v) use factorization $W=AB^\top$ with rank $r$. See Appendix~\ref{app:experimental_details} for all implementation details.
\subsection{Evaluation Metrics}
\label{sec:metrics}
Predicted probabilities are Monte Carlo averages over posterior weight samples. 
For classification, we report Accuracy, AUROC (discrimination), NLL and ECE (calibration).
For regression, we report MAE (accuracy), PICP (calibration). Uncertainty is quantified via mutual information (MI)~\cite{houlsby2011bayesian} or predictive std: MI Ratio (OOD vs in-domain mean MI), AUPR-Err (error detection), and AUPR-Succ (selective prediction).
OOD detection uses AUROC-OOD, AUPR-OOD (OOD as positive), and AUPR-In (in-domain as positive).
Params denotes total trainable parameters. For Bayesian methods, this includes both posterior means and uncertainty parameters; for Deep Ensembles, counts are summed across members.

For each architecture, we select the rank $r$ via ablation studies with reduced computational budget (fewer epochs and Monte Carlo samples), and when deterministic baselines are available, we further validate the chosen ranks using singular value decay analysis of the corresponding weight matrices (See Appendix \ref{app:svd init} and Figures~\ref{fig:transformer_params_svd}, \ref{fig:singular_value_decay}, \ref{fig:lstm_svd}).

\subsection{MIMIC-III: ICU Mortality Prediction}
We first evaluate our method  on the MIMIC-III clinical database~\cite{johnson2016mimic} to predict binary ICU mortality from patient vitals and laboratory values. The dataset comprises $40{,}406$ training and $4{,}490$ test samples with $44$ clinical features extracted from the first 48 hours of ICU stays ($8.4\%$ mortality rate). For out-of-distribution evaluation, we test on $5{,}357$ newborn ICU records ($1.1\%$ mortality), representing significant distributional shift from the adult training population. All methods use a 2-layer MLP with hidden dimension $128$. Low-Rank methods used $r=15$. Complete experimental details are provided in Appendix~\ref{app:exp:mimic}.

\textbf{Results.} 
Table~\ref{tab:mimic} and Figure \ref{fig:mimic_radar} show Low-Rank Gaussian (rank $r=15$) achieves competitive in-domain discrimination (AUROC=$0.895$) while outperforming both Deep Ensemble and Full-Rank BBB on OOD detection: AUC-OOD=$\mathbf{0.802}$ versus $0.738$ (Deep Ens.) and $0.770$ (Full-Rank), AUPR-OOD=$\mathbf{0.788}$ versus $0.754$ and $0.759$, and AUPR-In=$\mathbf{0.824}$ versus $0.721$ and $0.807$ respectively. Figure~\ref{fig:mimic_radar} visualizes this tradeoff: Low-Rank Gaussian performs well on uncertainty quantification metrics which leads to better OOD detection, while Deep Ensemble maintains superior in-domain discrimination (AUROC=$0.929$) and calibration (NLL=$0.300$ versus $0.433$ for Low-Rank). This suggests a calibration-sharpness tradeoff where the low-rank model prioritizes epistemic uncertainty estimation over likelihood-based calibration. Low-Rank Gaussian uses $70\%$ fewer parameters than Full-Rank BBB and $88\%$ fewer than Deep Ensemble while achieving best-in-class OOD detection. See Appendix \ref{app:ext:mimic} for extended results and analysis.

\begin{table}[!ht]
\centering
\caption{MIMIC-III evaluation metrics averaged over 5 independent runs. Best in \textbf{bold}, second \underline{underlined}. Arrows indicate optimization direction: $\uparrow$ (higher better), $\downarrow$ (lower better).}
\label{tab:mimic}
\resizebox{\columnwidth}{!}{%
\begin{tabular}{lcccccccc}
\toprule
\textbf{Model} & \textbf{AUROC}$\uparrow$ & \textbf{AUPR-Err}$\uparrow$ & \textbf{AUC-OOD}$\uparrow$ & \textbf{AUPR-OOD}$\uparrow$ & \textbf{AUPR-In}$\uparrow$ & \textbf{NLL}$\downarrow$ & \textbf{Params}$\downarrow$ \\
\midrule
Deterministic & \underline{.922$_{\pm.003}$} & .145$_{\pm.035}$ & .500$_{\pm.000}$ & .544$_{\pm.000}$ & .456$_{\pm.000}$ & \textbf{.284$_{\pm.017}$} & \underline{22.4k} \\
Deep Ens. & \textbf{.929$_{\pm.002}$} & .237$_{\pm.035}$ & .738$_{\pm.061}$ & .754$_{\pm.048}$ & .721$_{\pm.053}$ & \underline{.300$_{\pm.021}$} & 112k \\
Full-Rank & .895$_{\pm.004}$ & .412$_{\pm.029}$ & \underline{.770$_{\pm.051}$} & \underline{.759$_{\pm.055}$} & \underline{.807$_{\pm.035}$} & .401$_{\pm.033}$ & 44.8k \\
\cellcolor{oursblue} Low-Rank {\bfseries (ours)}
& .895$_{\pm.001}$ & \textbf{.540$_{\pm.027}$} & \textbf{.802$_{\pm.018}$} & \textbf{.788$_{\pm.034}$} & \textbf{.824$_{\pm.013}$} & .433$_{\pm.020}$ & \textbf{13.6k} \\
\cellcolor{oursblue} LR-SVD {\bfseries (ours)}
 & .898$_{\pm.001}$ & \underline{.486$_{\pm.155}$} & .713$_{\pm.069}$ & .719$_{\pm.046}$ & .738$_{\pm.074}$ & .452$_{\pm.035}$ & \textbf{13.6k} \\
Rank-1 & .901$_{\pm.010}$ & .322$_{\pm.019}$ & .705$_{\pm.073}$ & .742$_{\pm.062}$ & .663$_{\pm.080}$ & .317$_{\pm.024}$ & 23.3k \\
\bottomrule
\end{tabular}%
}
\end{table}

\begin{figure}[!ht]
\centering
\includegraphics[width=0.55\columnwidth]{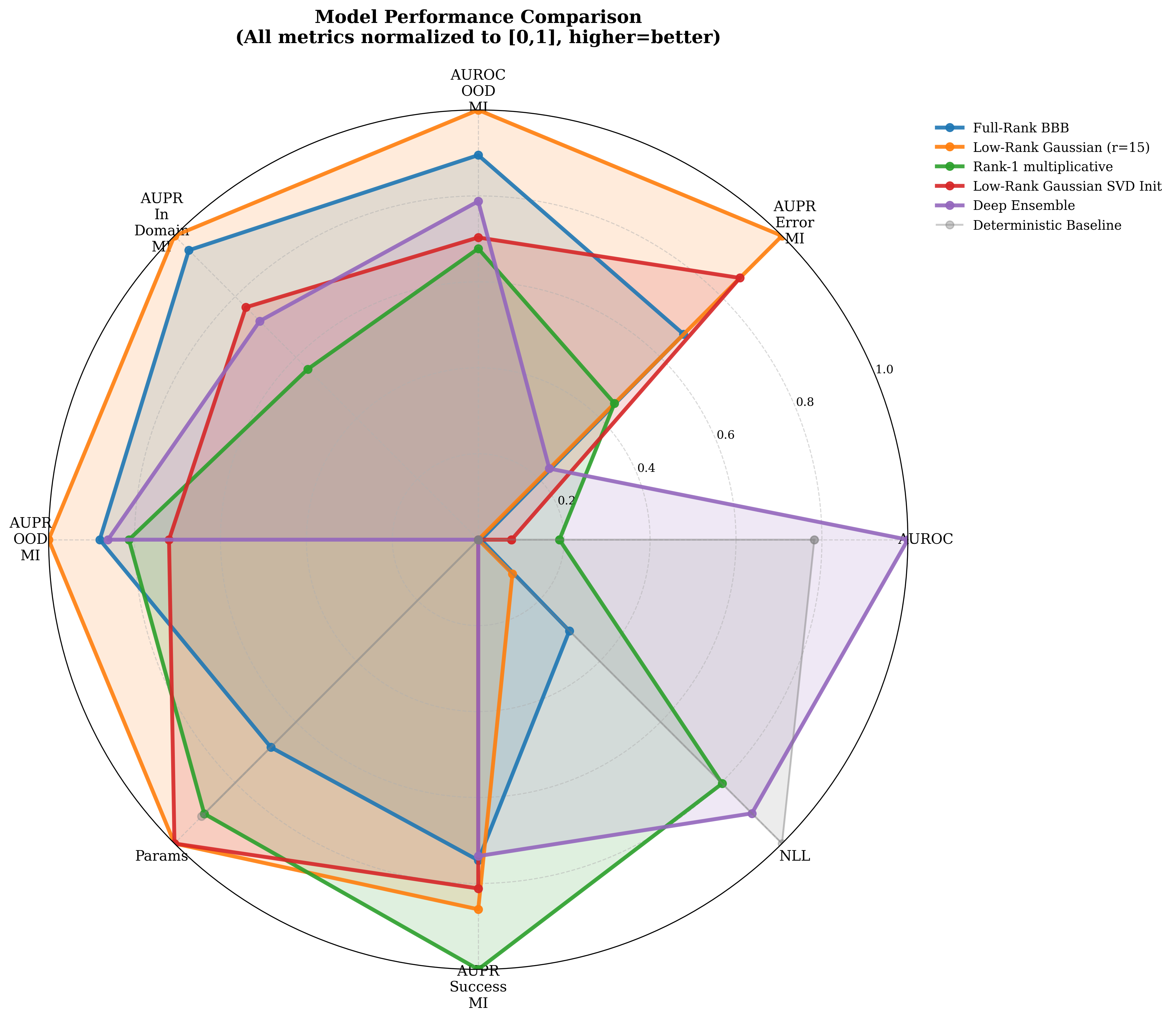}
\caption{\textbf{Model comparison on MIMIC-III (averaged across 5 seeds).} Low-Rank Gaussian r=15 (orange) achieves superior OOD detection. Deep Ensemble maintains better calibration and in-domain discrimination. Rank-1 multiplicative (green) achieves better calibration but weaker OOD detection. Full-Rank BBB (blue) shows balanced but moderate performance across metrics.}
\label{fig:mimic_radar}
\end{figure}

\subsection{Beijing Air Quality: Time Series Forecasting}
\begin{figure}[tbp]
\centering
\includegraphics[width=\columnwidth]{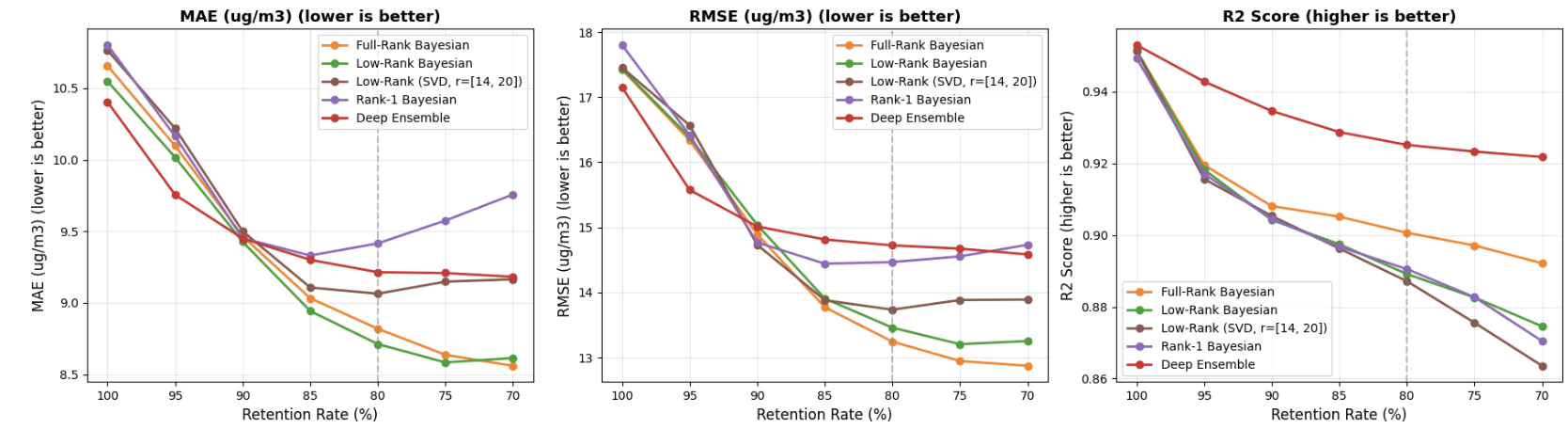}
\caption{\textbf{Selective prediction on Beijing Air Quality LSTM.} While Deep Ensemble (red) achieves best point predictions at 100\% retention, Bayesian methods outperform when discarding the most uncertain samples. Low-Rank (green) achieves largest improvement (17.4\% MAE reduction at 80\% retention), demonstrating superior uncertainty quality for selective prediction.}
\label{fig:selective_pred_lstm}
\end{figure}
Next we evaluate our approach on the Beijing Multi-Site Air Quality dataset~\cite{beijing_pm2.5_381} for PM$_{2.5}$ forecasting from meteorological features. Using 24-hour sliding windows, the dataset has 29,213 training, 6,260 validation, and 6,260 test samples with 15 features. For OOD evaluation, we use 20,050 samples from Guangzhou~\cite{pm2.5_data_of_five_chinese_cities_394}, a city with distinct subtropical climate representing distributional shift from Beijing's continental climate. All methods use 2-layer LSTMs with 64 hidden units. Low-Rank methods decompose input-to-hidden ($\mathbf{W}_{ih} \in \mathbb{R}^{15 \times 256}$) and hidden-to-hidden ($\mathbf{W}_{hh} \in \mathbb{R}^{64 \times 256}$) weight matrices as $\mathbf{W} = \mathbf{A}\mathbf{B}^\top$ with ranks $r=14$ for the first layer and  $20$ for the second. Complete experimental details are provided in Appendix~\ref{app:exp:lstm}.

\textbf{Results.} Table~\ref{tab:lstm_results} and Figure~\ref{fig:lstm_radar} show that Low-Rank Bayesian LSTM achieves the best coverage (PICP=0.790) and near-best ECE-based calibration (ECE=0.114, second only to Full-Rank BBB's 0.111) while using 64\% fewer parameters than Full-Rank BBB. For OOD detection, Low-Rank attains second-best performance (AUROC=0.710, AUPR=0.861) after Deep Ensemble (0.730, 0.883) despite 86\% fewer parameters. Figure~\ref{fig:selective_pred_lstm} shows Bayesian methods outperform Deep Ensemble when filtering uncertain samples, with Low-Rank achieving 17.4\% error reduction at 80\% retention. The calibration gap is notable: Low-Rank achieves the highest coverage among the compared methods (PICP=0.790) while Deep Ensemble (0.310) and Rank-1 (0.449) severely undercover. This confirms that structured weight correlations from low-rank factorization (Section \ref{sec:theory:correlations}) propagate uncertainty more reliably. 
See Appendix \ref{app:ext:bei} for extended results and analysis.
\begin{table}[!ht]
\centering
\caption{Beijing Air Quality results averaged over 4 independent runs. Results are reported as mean $\pm$ standard deviation. Best in \textbf{bold}, second \underline{underlined}. Arrows indicate optimization direction: $\uparrow$ (higher better), $\downarrow$ (lower better).}
\label{tab:lstm_results}
\resizebox{\columnwidth}{!}{%
\begin{tabular}{lcccccc}
\toprule
\textbf{Model} & \textbf{MAE}$\downarrow$ & \textbf{ECE}$\downarrow$ & \textbf{PICP}$\uparrow$ & \textbf{AUROC-OOD}$\uparrow$ & \textbf{AUPR-OOD}$\uparrow$ & \textbf{Params}$\downarrow$ \\
\midrule
Deterministic & 10.79$_{\pm 0.16}$ & -- & -- & 0.500 & 0.500 & \textbf{33K} \\
Full-Rank BBB & \underline{10.55$_{\pm 0.11}$} & \textbf{0.111$_{\pm 0.007}$} & \underline{0.788$_{\pm 0.012}$} & 0.492$_{\pm 0.039}$ & 0.743$_{\pm 0.019}$ & 132K \\
\cellcolor{oursblue} Low-Rank {\bfseries (ours)}
 & 10.63$_{\pm 0.20}$ & \underline{0.114$_{\pm 0.005}$} & \textbf{0.790$_{\pm 0.006}$} & \underline{0.710$_{\pm 0.021}$} & \underline{0.861$_{\pm 0.022}$} & \underline{47K} \\
\cellcolor{oursblue} LR-SVD {\bfseries (ours)}
 & 10.70$_{\pm 0.03}$ & 0.139$_{\pm 0.022}$ & 0.773$_{\pm 0.022}$ & 0.687$_{\pm 0.068}$ & 0.812$_{\pm 0.026}$ & \underline{47K} \\
Rank-1 Mult. & 10.80$_{\pm 0.15}$ & 0.307$_{\pm 0.015}$ & 0.449$_{\pm 0.025}$ & 0.580$_{\pm 0.046}$ & 0.751$_{\pm 0.028}$ & 66K \\
Deep Ensemble (5) & \textbf{10.45$_{\pm 0.03}$} & 0.317$_{\pm 0.012}$ & 0.310$_{\pm 0.021}$ & \textbf{0.730$_{\pm 0.029}$} & \textbf{0.883$_{\pm 0.012}$} & 330K \\
\bottomrule
\end{tabular}%
}
\end{table}

\begin{figure}[!ht]
\centering
\includegraphics[width=0.55\columnwidth]{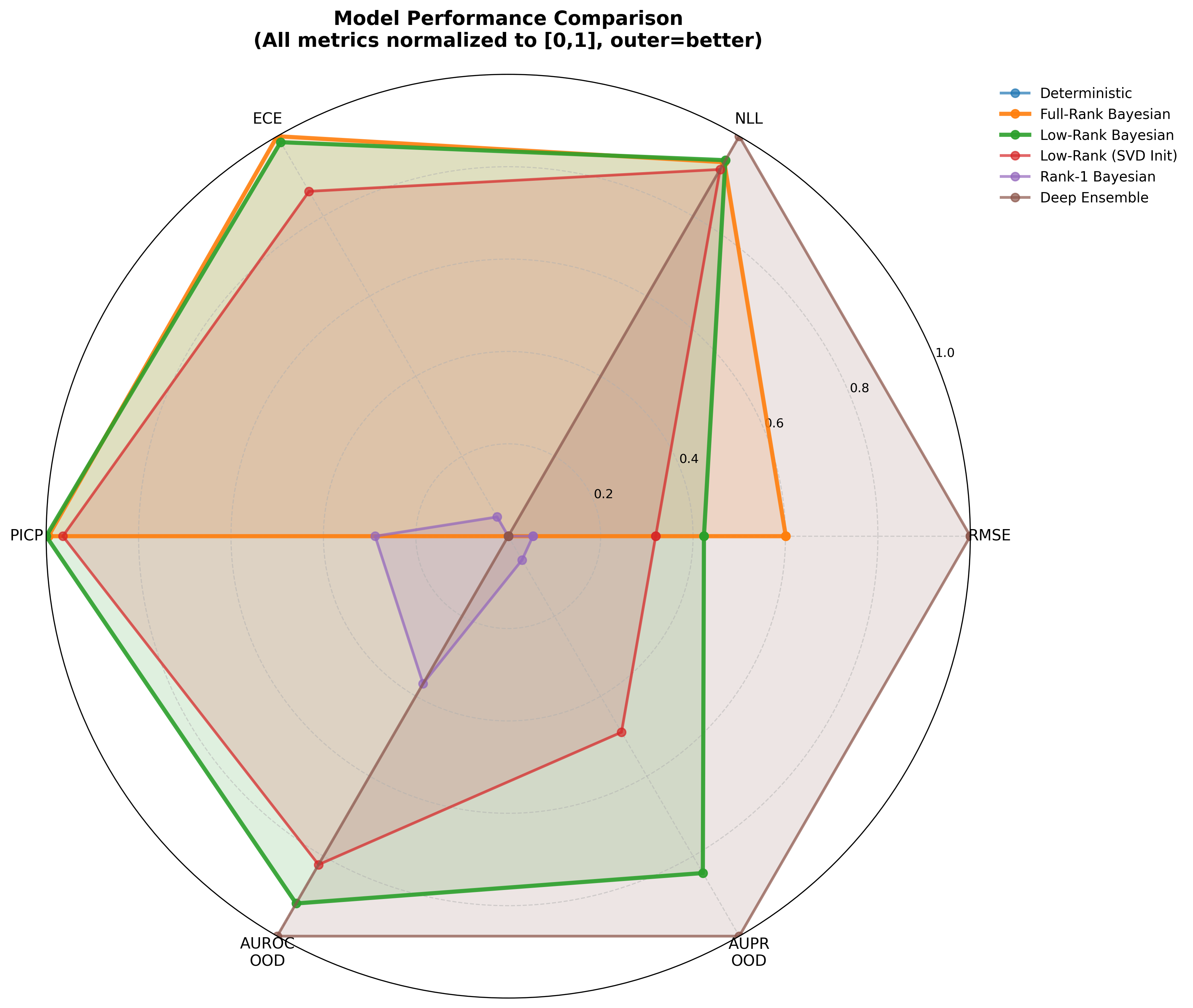}
\caption{\textbf{Model comparison on Beijing Air Quality LSTM (averaged across 4 seeds).} Deep Ensemble (brown) achieves best OOD detection but poor coverage. Low-Rank BBB (green) shows best coverage (PICP) and near-best calibration (ECE) with second-best OOD performance. Low-Rank SVD (red) shows strong OOD detection with good coverage. Full-Rank BBB (orange) achieves best calibration but moderate coverage. Rank-1 BBB (purple) shows poor uncertainty performance.}
\label{fig:lstm_radar}
\end{figure}
\subsection{Text Classification with Transformers}

Finally, we evaluate our approach on binary sentiment classification using the Stanford Sentiment Treebank (SST-2) dataset~\citep{socher2013recursive}, which contains 67,349 training samples and 872 validation samples of movie reviews. For out-of-distribution detection, we use the AG News dataset~\citep{zhang2015character} with 7,600 samples from the news domain, which provides a meaningful distributional shift from the sentiment classification task. We implement a 4-layer Transformer encoder with $d_{\text{model}}=256$, 4 attention heads, and feed-forward dimension $d_{\text{ff}}=512$, using the BERT tokenizer with vocabulary size 30,522. The architecture was inspired by BERT-mini~\citep{turc2019well}. Singular value decay analysis (Figure~\ref{fig:transformer_params_svd}) shows that the embedding layer, which represents 70\% of the parameter counts for all models, has particularly rapid decay. Applying rank $r=16$ for all layers  yields 86\% parameter reduction vs.\ deterministic and 93\% vs.\ full-rank BBB.

\textbf{Results.}
Table~\ref{tab:sst2} and Figure~\ref{fig:transradar} show Low-Rank BBB achieves 0.806 accuracy with best AUPR-In (0.302) and second-best AUROC-OOD (0.640). Deep Ensemble achieves best overall accuracy (0.825) and calibration (NLL=0.434) with higher MI Ratio (1.55 vs 1.35), while Full-Rank BBB underperforms across metrics (0.752 accuracy, 0.552 NLL), consistent with \cite{cinquin2021pathologies} findings. Low-Rank BBB uses $13\times$ fewer parameters than Full-Rank (1.5M vs 19.8M) and $33\times$ fewer parameters than Deep Ensemble (49.6M). Training times reflect this efficiency: Low-Rank trains in 8.2 minutes versus 23.1 for Full-Rank BBB and 64.7 for Deep Ensemble. See Appendix \ref{app:ext:sst2} for extended results and analysis. 
\begin{figure}[!ht]
\centering
\includegraphics[width=1\columnwidth]{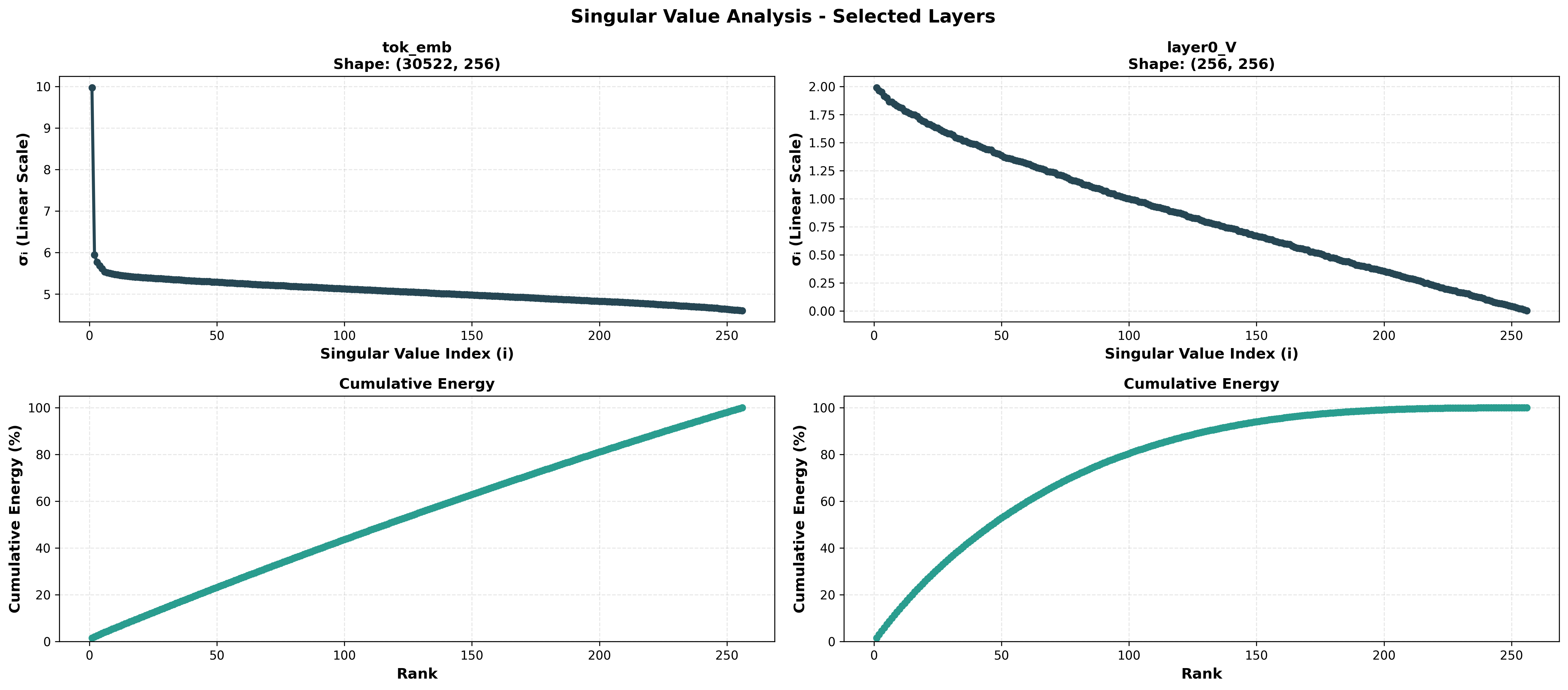}
\caption{Singular value decay for embedding and value projection matrices showing rapid decay.}
\label{fig:transformer_params_svd}
\end{figure}

\begin{table}[!ht]
\centering
\caption{SST-2 evaluation metrics averaged over 4 independent runs. Best in \textbf{bold}, second \underline{underlined}. Arrows indicate optimization direction: $\uparrow$ (higher better), $\downarrow$ (lower better).}
\label{tab:sst2}
\resizebox{\columnwidth}{!}{%
\begin{tabular}{lccccccc}
\toprule
\textbf{Model} & \textbf{Acc}$\uparrow$ & \textbf{NLL}$\downarrow$ & \textbf{AUROC-OOD}$\uparrow$ & \textbf{MI Ratio}$\uparrow$ & \textbf{AUPR-In}$\uparrow$ & \textbf{AUPR-Succ}$\uparrow$ & \textbf{Params}$\downarrow$ \\
\midrule
Deterministic & \underline{.812$_{\pm.006}$} & \underline{.490$_{\pm.003}$} & .500$_{\pm.000}$ & .000$_{\pm.000}$ & .102$_{\pm.000}$ & .812$_{\pm.006}$ & \underline{9.9M} \\
Deep Ens. & \textbf{.825$_{\pm.006}$} & \textbf{.434$_{\pm.023}$} & \textbf{.657$_{\pm.009}$} & \textbf{1.55$_{\pm0.05}$} & .267$_{\pm.016}$ & \textbf{.933$_{\pm.003}$} & 49.6M \\
Full-Rank & .752$_{\pm.011}$ & .552$_{\pm.010}$ & .622$_{\pm.019}$ & 1.31$_{\pm0.07}$ & .222$_{\pm.028}$ & .888$_{\pm.006}$ & 19.8M \\
\cellcolor{oursblue} Low-Rank {\bfseries (ours)}
 & .806$_{\pm.003}$ & .527$_{\pm.006}$ & \underline{.640$_{\pm.028}$} & \underline{1.35$_{\pm0.10}$} & \textbf{.302$_{\pm.027}$} & .917$_{\pm.005}$ & \textbf{1.5M} \\
\cellcolor{oursblue} LR-SVD {\bfseries (ours)}
 & .800$_{\pm.005}$ & .516$_{\pm.002}$ & .616$_{\pm.011}$ & 1.22$_{\pm0.10}$ & \underline{.272$_{\pm.012}$} & .923$_{\pm.003}$ & \textbf{1.5M} \\
Rank-1 & .795$_{\pm.013}$ & .507$_{\pm.005}$ & .613$_{\pm.015}$ & 1.11$_{\pm0.09}$ & .271$_{\pm.027}$ & \underline{.931$_{\pm.004}$} & 10.0M \\
\bottomrule
\end{tabular}%
}
\end{table}
\begin{figure}[!ht]
\centering
\includegraphics[width=0.55\columnwidth]{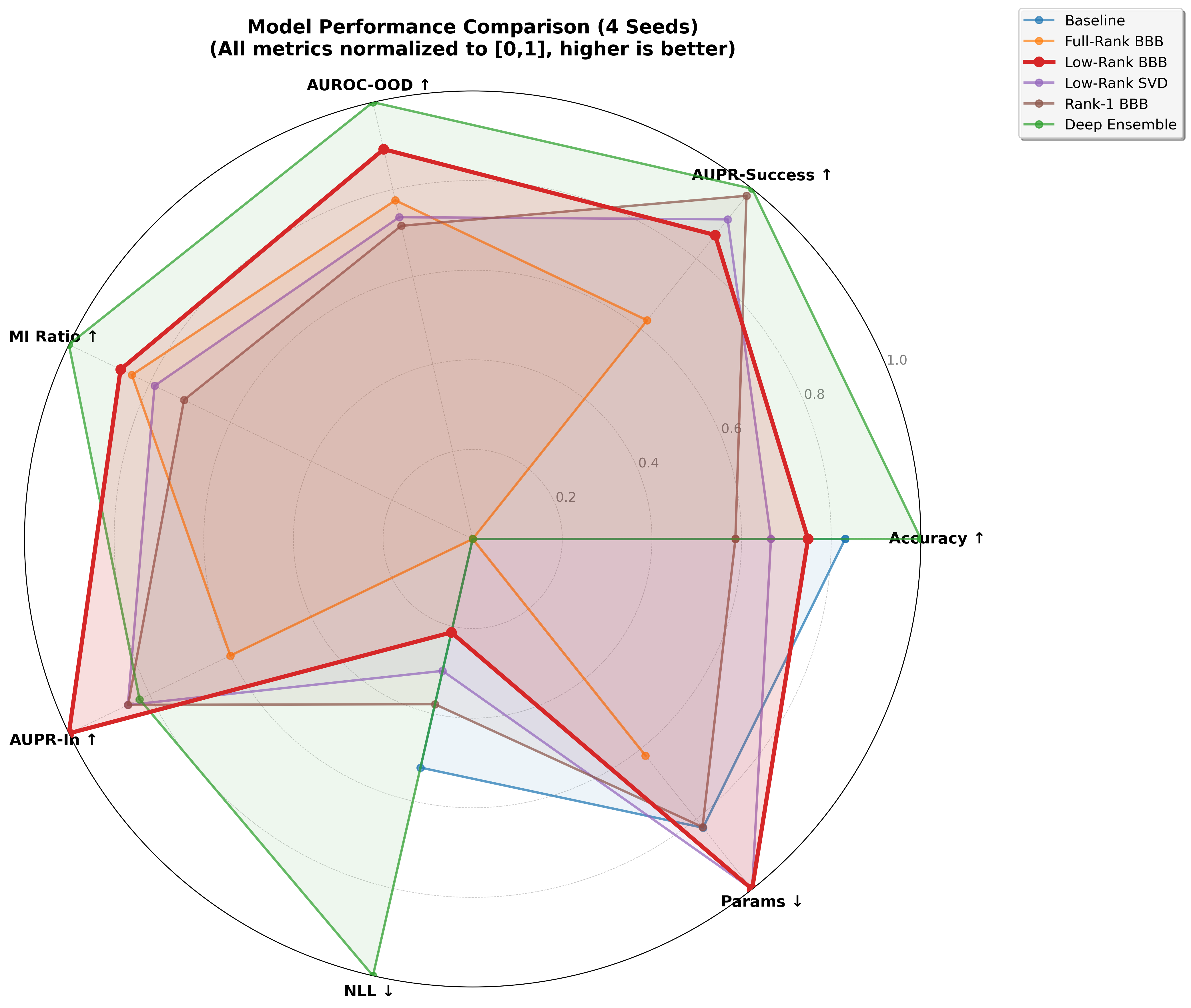}
\caption{\textbf{Model comparison on SST-2 Transformer (averaged across 4 seeds).} Deep Ensemble (green) achieves best overall performance. Low-Rank BBB (red) shows good OOD detection and highest epistemic uncertainty discrimination (MI Ratio) among bayesian models. Full-Rank BBB (orange) demonstrates moderate balanced performance. Rank-1 BBB (brown) maintains good calibration but weaker OOD metrics.} 
\label{fig:transradar}
\end{figure}
\subsection{Comparison to SWAG}
\label{sec:experiments:swag}

We additionally compare against SWAG \cite{maddox2019swag}, a strong scalable posterior baseline, using multi-seed runs on SST-2, MIMIC-III, and Beijing PM$_{2.5}$; full implementation details and dataset-specific results are reported in Appendix~\ref{app:swag_comparisons} (Tables~\ref{tab:swag_sst2}, \ref{tab:swag_mimic}, and \ref{tab:swag_beijing}). Across these settings, SWAG is a relevant baseline, but it does not overturn the central quality-efficiency advantage of our low-rank approach.

On SST-2, SWAG achieves $0.808$ accuracy, $0.556$ NLL, $0.111$ ECE, and $0.613$ AUROC-OOD-MI (Table~\ref{tab:swag_sst2}). Compared with Low-Rank BBB in Table~\ref{tab:sst2}, it is essentially tied on accuracy ($0.808$ vs.\ $0.806$) but is worse on NLL ($0.556$ vs.\ $0.527$) and AUROC-OOD-MI ($0.613$ vs.\ $0.641$), while storing 208.37M parameters compared with 1.47M for the low-rank posterior.

On MIMIC-III, SWAG reaches $0.922$ AUROC, $0.359$ NLL, $0.190$ ECE, and $0.634$ AUROC-OOD-MI (Table~\ref{tab:swag_mimic}). Relative to the low-rank model in Table~\ref{tab:mimic}, SWAG is stronger on in-domain AUROC ($0.922$ vs.\ $0.895$) and NLL ($0.359$ vs.\ $0.433$), but remains weaker on the main MI-based OOD-separation metrics, with AUROC-OOD-MI $0.634$ and AUPR-OOD-MI $0.680$ versus $0.802$ and $0.788$ for the low-rank model. Thus, SWAG is a credible MIMIC comparator, but it does not overturn the stronger OOD uncertainty separation of the low-rank approach.

On Beijing PM$_{2.5}$, SWAG attains $10.542$ MAE, $17.313$ RMSE, $4.323$ NLL, $0.216$ ECE, and $0.973$ PICP (Table~\ref{tab:swag_beijing}). Compared with the low-rank model in Table~\ref{tab:lstm_results}, it is competitive on point prediction and achieves substantially higher interval coverage ($0.973$ vs.\ $0.790$), but this comes with much wider intervals (MPIW $87.0$) and weaker OOD detection (AUROC-OOD $0.585$ vs.\ $0.710$; AUPR-OOD $0.783$ vs.\ $0.861$), while using 1.18M parameters compared with 47K for the low-rank model. 

\subsection{Experimental Insights}
\label{exp:insight}
\textbf{Calibration–OOD Detection Tradeoff.}
We observe a more nuanced tradeoff than a single uniform pattern: on MIMIC-III, low-rank improves OOD detection despite weaker NLL than Deep Ensembles; on SST-2, Deep Ensembles remain stronger on both NLL and AUROC-OOD, while low-rank stays competitive at much lower parameter and training cost; and on Beijing, the main low-rank advantage is better calibration, coverage, and selective prediction rather than better OOD AUROC. We hypothesize that the rank constraint maintains broader uncertainty distributions that can improve some reliability metrics at the cost of predictive sharpness, but the empirical manifestation of this tradeoff depends on the task. This pattern favors low-rank in settings where useful uncertainty for abstention, coverage, or OOD awareness matters more than marginal likelihood gains.

\textbf{Computational Efficiency.}
Parameter reduction on small or medium scales architecture provides substantial memory savings, enabling deployment in resource-constrained settings, but minimal training speedups (Table~\ref{tab:mimic_time}). However, efficiency gains emerge at scale on the larger Transformer architecture: our low-rank approaches train in 7.95 minutes (SVD init) and 8.22 minutes (random init) versus 23.1 for Full-Rank BBB and 64.7 for Deep Ensembles (Table~\ref{tab:training_times}). A controlled matched profiling benchmark on SST-2 confirms that these gains are not only artifacts of full training pipelines (Table~\ref{tab:sst2_profile_main}; Appendix~\ref{app:exp:sst2:profiling}, Appendix~\ref{app:ext:sst2:profiling}). Under this benchmark, Low-Rank BBB retains large advantages in parameter count and peak GPU memory, is substantially faster per epoch than a 5-member Deep Ensemble, and remains slightly faster than Full-Rank BBB. A recovered low-rank rank-sweep further shows that runtime and memory improve more gradually than parameter count, but still move consistently in the favorable direction as rank decreases (Appendix~\ref{app:ext:sst2:ranksweep}). Additional appendix comparisons with multi-seed SWAG \cite{maddox2019swag} support the same conclusion: while SWAG is a relevant baseline, across SST-2, MIMIC-III, and Beijing it does not overturn the central quality-efficiency advantage of our low-rank approach (Appendix~\ref{app:swag_comparisons}).

\begin{table}[!ht]
\centering
\caption{Controlled matched profiling benchmark on SST-2. Peak memory and epoch time are measured under a fixed-step \texttt{train\_on\_batch} profiling loop; full protocol is given in Appendix~\ref{app:exp:sst2:profiling}.}
\label{tab:sst2_profile_main}
\small
\begin{tabular}{lccc}
\toprule
\textbf{Model} & \textbf{Params} & \textbf{Peak Mem.} & \textbf{Epoch Time} \\
\midrule
Low-Rank BBB & 1.47M & 357.5MB & 5.88s \\
Full-Rank BBB & 19.84M & 721.1MB & 6.45s \\
Deep Ensemble & 49.61M & 670.1MB & 18.99s \\
\bottomrule
\end{tabular}
\end{table}

\textbf{Hyperparameter Sensitivity.}
The KL weight $\beta$ and rank $r$ require tuning. We recommend starting with $\beta = 1/N_{\text{train}}$ or $\beta = 1/N_{\text{batches}}$ depending on layer implementation, then adjusting based on validation performance. Higher $\beta$ may improve OOD detection but degrade in-distribution metrics; we use KL annealing for stability. Rank $r$ impacts the calibration-OOD tradeoff (see Figures~\ref{fig:rank_pareto_tradeoff}, \ref{fig:rank_pareto_tradeoff_lstm_calib}, \ref{fig:rank_pareto_tradeoff_lstm_perf}) and can be tuned via ablation and singular value analysis when pretrained weights are available.

\textbf{SVD Initialization.}
SVD warm-starting yields modest gains on specific metrics (MIMIC-III AUROC: 0.898 vs 0.895; SST-2 AUPR-Succ: 0.923) but shows no consistent benefit across tasks. On Beijing LSTMs, random initialization performs better overall. SVD initialization is only worthwhile when pretrained weights already exist, as the improvements do not warrant training an additional deterministic model. See Appendix \ref{app:insight} for detailed analysis.

\section{Discussion and Conclusion}
We presented a low-rank variational inference framework for Bayesian neural networks that parameterizes weights as $W = AB^\top$ with learned posteriors on the factors. This induces a posterior singular with respect to Lebesgue measure, concentrating on the rank-$r$ manifold, a geometric property distinguishing our approach from mean-field VI, low-rank covariance approximations, and perturbation-based methods. The factorization reduces parameters from $O(mn)$ to $O(r(m+n))$ while capturing structured weight correlations through shared latent factors.

Our theoretical analysis established tighter PAC-Bayes bounds scaling with rank, loss decomposition via Eckart-Young-Mirsky, and complexity transfer from deterministic to Bayesian predictors. Empirically, the method achieved competitive predictive performance across MLPs, LSTMs, and Transformers. 

On MIMIC-III, low-rank attained superior OOD detection; on SST-2, it achieved second-best OOD performance after Deep Ensembles while substantially outperforming Full-Rank BBB and Rank-1 baselines. On Beijing Air Quality (LSTM), low-rank achieved best prediction interval coverage and second-best OOD detection after Deep Ensembles. We observed a recurring but task-dependent calibration--OOD tradeoff: the rank constraint can maintain broader epistemic uncertainty at the cost of predictive sharpness, favoring applications where reliable uncertainty matters more than marginal likelihood gains.

Several limitations exist. The calibration gap relative to ensembles may limit applicability in
likelihood-focused settings. However, a supplementary SST-2 low-rank ensembling
study (Appendix~\ref{app:sst2_lowrank_ensemble}) shows that a 5-member low-rank
Bayesian ensemble substantially improves calibration and OOD separation relative
to a single low-rank model, supporting ensembling as a practical mitigation path
at transformer scale.

Architecture choices reflect our computational constraints, not method limitations. In fact, computational benefits emerge primarily at larger scales where GPU parallelism can be exploited.

Rank and KL weight require tuning, and while improvements are directionally consistent, some confidence intervals overlap across seeds. Appendix~\ref{app:swag_comparisons} adds multi-seed SWAG comparisons across our three main benchmarks; broader comparisons with low-rank covariance methods would further strengthen the positioning of this work.

Our method is agnostic to the distributional family of the posterior. While we employ Gaussian posteriors for tractability, the factorization $W = AB^\top$ naturally extends to other location-scale distributions such as the Laplace distribution, which provides heavier tails that may improve uncertainty quantification. Additional experiments using a Laplace posterior are presented in Appendix \ref{app:additional_results}.

Future work includes adaptive rank selection and scaling to larger language models. We are exploring spike-and-slab priors that enforce sparsity while permitting large weights where needed; however, an in-depth analysis of prior choice and its effect on performance across different data regimes and tasks would provide valuable guidance for practitioners. The framework also extends naturally to computer vision, including medical imaging and mineral prospectivity mapping. The singular posterior geometry provides both theoretical foundations and practical benefits for uncertainty quantification, establishing low-rank factorization as a principled path toward scalable Bayesian deep learning.


\clearpage
\section*{Impact Statement}
This paper presents work whose goal is to advance the field of Machine Learning. There are many potential societal consequences of our work, none of which we feel must be specifically highlighted here.


\bibliography{references}

@article{mackay1992,
  title   = {A Practical {B}ayesian Framework for Backpropagation Networks},
  author  = {MacKay, David J. C.},
  journal = {Neural Computation},
  volume  = {4},
  number  = {3},
  pages   = {448--472},
  year    = {1992}
}

@article{izenman1975reduced,
  title={Reduced-rank regression for the multivariate linear model},
  author={Izenman, Alan Julian},
  journal={Journal of multivariate analysis},
  volume={5},
  number={2},
  pages={248--264},
  year={1975},
  publisher={Elsevier}
}

@article{alvarez2017compression,
  title={Compression-aware training of deep networks},
  author={Alvarez, Jose M and Salzmann, Mathieu},
  journal={Advances in neural information processing systems},
  volume={30},
  year={2017}
}

@inproceedings{immer2021improving,
  title={Improving predictions of Bayesian neural nets via local linearization},
  author={Immer, Alexander and Korzepa, Maciej and Bauer, Matthias},
  booktitle={International conference on artificial intelligence and statistics},
  pages={703--711},
  year={2021},
  organization={PMLR}
}

@article{liu2020simple,
  title={Simple and principled uncertainty estimation with deterministic deep learning via distance awareness},
  author={Liu, Jeremiah and Lin, Zi and Padhy, Shreyas and Tran, Dustin and Bedrax Weiss, Tania and Lakshminarayanan, Balaji},
  journal={Advances in neural information processing systems},
  volume={33},
  pages={7498--7512},
  year={2020}
}

@article{ortega2024fixed,
  title={Fixed-Mean Gaussian Processes for Post-hoc Bayesian Deep Learning},
  author={Ortega, Luis A and Rodr{\'\i}guez-Santana, Sim{\'o}n and Hern{\'a}ndez-Lobato, Daniel},
  journal={arXiv preprint arXiv:2412.04177},
  year={2024}
}

@article{denton2014exploiting,
  title={Exploiting linear structure within convolutional networks for efficient evaluation},
  author={Denton, Emily L and Zaremba, Wojciech and Bruna, Joan and LeCun, Yann and Fergus, Rob},
  journal={Advances in neural information processing systems},
  volume={27},
  year={2014}
}

@inproceedings{salakhutdinov2008bayesian,
  title={Bayesian probabilistic matrix factorization using Markov chain Monte Carlo},
  author={Salakhutdinov, Ruslan and Mnih, Andriy},
  booktitle={Proceedings of the 25th international conference on Machine learning},
  pages={880--887},
  year={2008}
}

@article{mnih2007probabilistic,
  title={Probabilistic matrix factorization},
  author={Mnih, Andriy and Salakhutdinov, Russ R},
  journal={Advances in neural information processing systems},
  volume={20},
  year={2007}
}

@book{reinsel1998multivariate,
  title={Multivariate reduced-rank regression},
  author={Reinsel, Gregory C and Velu, Raja P},
  year={1998},
  publisher={Springer}
}

@book{watanabe2009algebraic,
  title={Algebraic geometry and statistical learning theory},
  author={Watanabe, Sumio},
  volume={25},
  year={2009},
  publisher={Cambridge university press}
}

@article{watanabe2013widely,
  title={A widely applicable Bayesian information criterion},
  author={Watanabe, Sumio},
  journal={The Journal of Machine Learning Research},
  volume={14},
  number={1},
  pages={867--897},
  year={2013},
  publisher={JMLR. org}
}

@article{watanabe2010asymptotic,
  title={Asymptotic equivalence of Bayes cross validation and widely applicable information criterion in singular learning theory.},
  author={Watanabe, Sumio and Opper, Manfred},
  journal={Journal of machine learning research},
  volume={11},
  number={12},
  year={2010}
}

@book{folland1999real,
  title     = {Real Analysis: Modern Techniques and Their Applications},
  author    = {Folland, Gerald B.},
  publisher = {Wiley},
  edition   = {2},
  year      = {1999}
}

@book{bogachev2007measure,
  title     = {Measure Theory},
  author    = {Bogachev, Vladimir I.},
  publisher = {Springer},
  year      = {2007}
}

@inproceedings{ruhe2019bayesian,
  title     = {{B}ayesian Modelling in Practice: Using Uncertainty to Improve Trustworthiness in Medical Applications},
  author    = {Ruhe, David and Cina, Giovanni and Tonutti, Michele and de Bruin, Daan and Elbers, Paul},
  booktitle = {ICML Workshop on AI for Social Good},
  year      = {2019},
  note      = {Also arXiv:1906.08619}
}

@article{abdar2021review,
  title   = {A review of uncertainty quantification in deep learning: Techniques, applications and challenges},
  author  = {Abdar, Moloud and Pourpanah, Farhad and Hussain, Sadiq and Rezazadegan, Dana and Liu, Li and Ghavamzadeh, Mohammad and Fieguth, Paul and others},
  journal = {Information Fusion},
  volume  = {76},
  pages   = {243--297},
  year    = {2021}
}

@article{johnson2016mimic,
  title   = {{MIMIC-III}, a freely accessible critical care database},
  author  = {Johnson, Alistair E. W. and Pollard, Tom J. and Shen, Lu and Lehman, Li-wei H. and Feng, Mengling and Ghassemi, Mohammad and Moody, Benjamin and Szolovits, Peter and Celi, Leo Anthony and Mark, Roger G.},
  journal = {Scientific Data},
  volume  = {3},
  number  = {1},
  pages   = {160035},
  year    = {2016}
}

@inproceedings{papamarkou2024position,
  title     = {Position: {B}ayesian Deep Learning is Needed in the Age of Large-Scale {AI}},
  author    = {Papamarkou, Theodore and Skoularidou, Maria and Palla, Konstantina and Aitchison, Laurence and Arbel, Julyan and others},
  booktitle = {International Conference on Machine Learning (ICML)},
  year      = {2024}
}

@inproceedings{blundell2015weight,
  title     = {Weight Uncertainty in Neural Networks},
  author    = {Blundell, Charles and Cornebise, Julien and Kavukcuoglu, Koray and Wierstra, Daan},
  booktitle = {International Conference on Machine Learning (ICML)},
  pages     = {1613--1622},
  year      = {2015}
}

@inproceedings{wilson2020bayesian,
  title     = {{B}ayesian Deep Learning and a Probabilistic Perspective of Generalization},
  author    = {Wilson, Andrew Gordon and Izmailov, Pavel},
  booktitle = {Advances in Neural Information Processing Systems (NeurIPS)},
  volume    = {33},
  pages     = {4697--4708},
  year      = {2020}
}

@inproceedings{dusenberry2020efficient,
  title     = {Efficient and scalable {B}ayesian neural nets with rank-1 factors},
  author    = {Dusenberry, Michael and Jerfel, Ghassen and Wen, Yeming and Ma, Yian and Snoek, Jasper and Heller, Katherine and Lakshminarayanan, Balaji and Tran, Dustin},
  booktitle = {International Conference on Machine Learning (ICML)},
  pages     = {2782--2792},
  year      = {2020}
}

@article{tomczak2020efficient,
  title={Efficient low rank gaussian variational inference for neural networks},
  author={Tomczak, Marcin and Swaroop, Siddharth and Turner, Richard},
  journal={Advances in Neural Information Processing Systems},
  volume={33},
  pages={4610--4622},
  year={2020}
}

@inproceedings{doan2025bayesian,
  title={Bayesian low-rank learning (bella): A practical approach to bayesian neural networks},
  author={Doan, Bao Gia and Shamsi, Afshar and Guo, Xiao-Yu and Mohammadi, Arash and Alinejad-Rokny, Hamid and Sejdinovic, Dino and Teney, Damien and Ranasinghe, Damith C and Abbasnejad, Ehsan},
  booktitle={Proceedings of the AAAI Conference on Artificial Intelligence},
  volume={39},
  pages={16298--16307},
  year={2025}
}

@inproceedings{swiatkowski2020ktied,
  title     = {The k-tied {N}ormal Distribution: A Compact Parameterization of {G}aussian Mean Field Posteriors in {B}ayesian Neural Networks},
  author    = {Swiatkowski, Jakub and Roth, Kevin and Veeling, Bastiaan S. and Tran, Linh and Dillon, Joshua V. and Snoek, Jasper and Mandt, Stephan and Salimans, Tim and Jenatton, Rodolphe and Nowozin, Sebastian},
  booktitle = {International Conference on Machine Learning (ICML)},
  pages     = {9263--9274},
  year      = {2020}
}

@misc{beijing_pm2.5_381,
  author       = {Chen, Song},
  title        = {{Beijing PM2.5}},
  year         = {2015},
  howpublished = {UCI Machine Learning Repository},
  note         = {{DOI}: https://doi.org/10.24432/C5JS49}
}

@article{van2017spectral,
  title={On the spectral norm of Gaussian random matrices},
  author={Van Handel, Ramon},
  journal={Transactions of the American Mathematical Society},
  volume={369},
  number={11},
  pages={8161--8178},
  year={2017}
}

@article{meo2024bayesian,
  title={Bayesian-lora: Lora based parameter efficient fine-tuning using optimal quantization levels and rank values trough differentiable bayesian gates},
  author={Meo, Cristian and Sycheva, Ksenia and Goyal, Anirudh and Dauwels, Justin},
  journal={arXiv preprint arXiv:2406.13046},
  year={2024}
}

@inproceedings{welling2011bayesian,
  title     = {{B}ayesian learning via stochastic gradient {L}angevin dynamics},
  author    = {Welling, Max and Teh, Yee Whye},
  booktitle = {International Conference on Machine Learning (ICML)},
  pages     = {681--688},
  year      = {2011}
}

@inproceedings{chen2014stochastic,
  title     = {Stochastic gradient {H}amiltonian {M}onte {C}arlo},
  author    = {Chen, Tianqi and Fox, Emily and Guestrin, Carlos},
  booktitle = {International Conference on Machine Learning (ICML)},
  pages     = {1683--1691},
  year      = {2014}
}

@inproceedings{vaswani2017attention,
  title     = {Attention is All you Need},
  author    = {Vaswani, Ashish and Shazeer, Noam and Parmar, Niki and Uszkoreit, Jakob and Jones, Llion and Gomez, Aidan N. and Kaiser, {\L}ukasz and Polosukhin, Illia},
  booktitle = {Advances in Neural Information Processing Systems (NeurIPS)},
  volume    = {30},
  pages     = {5998--6008},
  year      = {2017}
}

@inproceedings{yang2024bayesian_lora,
  title     = {{B}ayesian Low-rank Adaptation for Large Language Models},
  author    = {Yang, Adam X. and Robeyns, Maxime and Wang, Xi and Aitchison, Laurence},
  booktitle = {International Conference on Learning Representations (ICLR)},
  year      = {2024}
}

@inproceedings{izmailov2020subspace,
  title     = {Subspace inference for {B}ayesian deep learning},
  author    = {Izmailov, Pavel and Maddox, Wesley J. and Kirichenko, Polina and Garipov, Timur and Vetrov, Dmitry and Wilson, Andrew Gordon},
  booktitle = {Uncertainty in Artificial Intelligence (UAI)},
  pages     = {1169--1179},
  year      = {2020}
}

@inproceedings{aghajanyan2021intrinsic,
  title={Intrinsic dimensionality explains the effectiveness of language model fine-tuning},
  author={Aghajanyan, Armen and Gupta, Sonal and Zettlemoyer, Luke},
  booktitle={Proceedings of the 59th annual meeting of the association for computational linguistics and the 11th international joint conference on natural language processing (volume 1: long papers)},
  pages={7319--7328},
  year={2021}
}

@book{neal1996bayesian,
  title     = {{B}ayesian Learning for Neural Networks},
  author    = {Neal, Radford M.},
  series    = {Lecture Notes in Statistics},
  volume    = {118},
  publisher = {Springer},
  year      = {1996}
}

@article{hendrycks2016baseline,
  title   = {A baseline for detecting misclassified and out-of-distribution examples in neural networks},
  author  = {Hendrycks, Dan and Gimpel, Kevin},
  journal = {arXiv preprint arXiv:1610.02136},
  year    = {2016}
}

@inproceedings{wang2024blob,
  title     = {{BLoB}: {B}ayesian Low-Rank Adaptation by Backpropagation for Large Language Models},
  author    = {Wang, Yibin and Shi, Haizhou and Han, Ligong and Metaxas, Dimitris N. and Wang, Hao},
  booktitle = {Advances in Neural Information Processing Systems (NeurIPS)},
  year      = {2024}
}

@article{cinquin2021pathologies,
  title   = {Pathologies in Priors and Inference for {B}ayesian Transformers},
  author  = {Cinquin, Tristan and Immer, Alexander and Horn, Max and Fortuin, Vincent},
  journal = {arXiv preprint arXiv:2110.04020},
  year    = {2021}
}

@article{lakshminarayanan2017simple,
  title   = {Simple and scalable predictive uncertainty estimation using deep ensembles},
  author  = {Lakshminarayanan, Balaji and Pritzel, Alexander and Blundell, Charles},
  journal = {Advances in Neural Information Processing Systems},
  volume  = {30},
  year    = {2017}
}

@article{eckart1936approximation,
  title   = {The approximation of one matrix by another of lower rank},
  author  = {Eckart, Carl and Young, Gale},
  journal = {Psychometrika},
  volume  = {1},
  number  = {3},
  pages   = {211--218},
  year    = {1936}
}

@inproceedings{ovadia2019can,
  title     = {Can You Trust Your Model's Uncertainty? Evaluating Predictive Uncertainty Under Dataset Shift},
  author    = {Ovadia, Yaniv and Fertig, Emily and Ren, Jie and Nado, Zachary and Sculley, D. and Nowozin, Sebastian and Dillon, Joshua and Lakshminarayanan, Balaji and Snoek, Jasper},
  booktitle = {Advances in Neural Information Processing Systems (NeurIPS)},
  volume    = {32},
  year      = {2019}
}

@article{li2018measuring,
  title={Measuring the intrinsic dimension of objective landscapes},
  author={Li, Chunyuan and Farkhoor, Heerad and Liu, Rosanne and Yosinski, Jason},
  journal={arXiv preprint arXiv:1804.08838},
  year={2018}
}

@inproceedings{hu2022lora,
  title     = {{LoRA}: Low-Rank Adaptation of Large Language Models},
  author    = {Hu, Edward J. and Shen, Yelong and Wallis, Phillip and Allen-Zhu, Zeyuan and Li, Yuanzhi and Wang, Shean and Wang, Lu and Chen, Weizhu},
  booktitle = {International Conference on Learning Representations (ICLR)},
  year      = {2022}
}

@inproceedings{
pinto2025on,
title={On Generalization Bounds for Neural Networks with Low Rank Layers},
author={Andrea Pinto and Akshay Rangamani and Tomaso A Poggio},
booktitle={36th International Conference on Algorithmic Learning Theory},
year={2025},
url={https://openreview.net/forum?id=TAvypH5yl5}
}

@inproceedings{pmlr-v9-glorot10a,
  title = {Understanding the difficulty of training deep feedforward neural networks},
  author = {Glorot, Xavier and Bengio, Yoshua},
  booktitle = {Proceedings of the Thirteenth International Conference on Artificial Intelligence and Statistics},
  pages = {249--256},
  year = {2010},
  editor = {Teh, Yee Whye and Titterington, Mike},
  volume = {9},
  series = {Proceedings of Machine Learning Research},
  publisher = {PMLR}
}

@misc{udell2015generalizedlowrankmodels,
      title={Generalized Low Rank Models}, 
      author={Madeleine Udell and Corinne Horn and Reza Zadeh and Stephen Boyd},
      year={2015},
      eprint={1410.0342},
      archivePrefix={arXiv},
      primaryClass={stat.ML},
      url={https://arxiv.org/abs/1410.0342}, 
}

@inproceedings{jozefowicz2015empirical,
  title={An empirical exploration of recurrent network architectures},
  author={Jozefowicz, Rafal and Zaremba, Wojciech and Sutskever, Ilya},
  booktitle={International conference on machine learning},
  pages={2342--2350},
  year={2015},
  organization={PMLR}
}

@inproceedings{he2015delving,
  title={Delving deep into rectifiers: Surpassing human-level performance on imagenet classification},
  author={He, Kaiming and Zhang, Xiangyu and Ren, Shaoqing and Sun, Jian},
  booktitle={Proceedings of the IEEE International Conference on Computer Vision},
  pages={1026--1034},
  year={2015}
}

@book{casella2002statistical,
  title={Statistical Inference},
  author={Casella, George and Berger, Roger L.},
  year={2002},
  publisher={Duxbury Press},
  address={Pacific Grove, CA},
  edition={2nd}
}

@misc{pm2.5_data_of_five_chinese_cities_394,
  author       = {Chen, Song},
  title        = {{PM2.5 Data of Five Chinese Cities}},
  year         = {2016},
  howpublished = {UCI Machine Learning Repository},
  note         = {{DOI}: https://doi.org/10.24432/C52K58}
}

@article{xiao2017fashion,
  title={Fashion-mnist: a novel image dataset for benchmarking machine learning algorithms},
  author={Xiao, Han and Rasul, Kashif and Vollgraf, Roland},
  journal={arXiv preprint arXiv:1708.07747},
  year={2017}
}

@article{houlsby2011bayesian,
  title={Bayesian active learning for classification and preference learning},
  author={Houlsby, Neil and Husz{\'a}r, Ferenc and Ghahramani, Zoubin and Lengyel, M{\'a}t{\'e}},
  journal={arXiv preprint arXiv:1112.5745},
  year={2011}
}

@article{zhang2015character,
  title={Character-level convolutional networks for text classification},
  author={Zhang, Xiang and Zhao, Junbo and LeCun, Yann},
  journal={Advances in neural information processing systems},
  volume={28},
  year={2015}
}

@inproceedings{socher2013recursive,
  title={Recursive deep models for semantic compositionality over a sentiment treebank},
  author={Socher, Richard and Perelygin, Alex and Wu, Jean and Chuang, Jason and Manning, Christopher D and Ng, Andrew Y and Potts, Christopher},
  booktitle={Proceedings of the 2013 conference on empirical methods in natural language processing},
  pages={1631--1642},
  year={2013}
}

@article{lecun1998gradient,
  title={Gradient-based learning applied to document recognition},
  author={LeCun, Yann and Bottou, L{\'e}on and Bengio, Yoshua and Haffner, Patrick},
  journal={Proceedings of the IEEE},
  volume={86},
  number={11},
  pages={2278--2324},
  year={1998},
  publisher={IEEE}
}

@inproceedings{maddox2019swag,
  title     = {A Simple Baseline for {B}ayesian Uncertainty in Deep Learning},
  author    = {Maddox, Wesley J. and Garipov, Timur and Izmailov, Pavel and Vetrov, Dmitry and Wilson, Andrew Gordon},
  booktitle = {Advances in Neural Information Processing Systems (NeurIPS)},
  volume    = {32},
  year      = {2019}
}

@inproceedings{kingma2015adam,
  title     = {{A}dam: A Method for Stochastic Optimization},
  author    = {Kingma, Diederik P. and Ba, Jimmy},
  booktitle = {International Conference on Learning Representations (ICLR)},
  year      = {2015}
}

@article{rumelhart1986learning,
  title   = {Learning representations by back-propagating errors},
  author  = {Rumelhart, David E. and Hinton, Geoffrey E. and Williams, Ronald J.},
  journal = {Nature},
  volume  = {323},
  number  = {6088},
  pages   = {533--536},
  year    = {1986}
}

@inproceedings{mcallester1998some,
  title     = {Some {PAC}-{B}ayesian theorems},
  author    = {McAllester, David A.},
  booktitle = {Proceedings of the Eleventh Annual Conference on Computational Learning Theory (COLT)},
  pages     = {230--234},
  year      = {1998}
}

@inproceedings{hinton1993keeping,
  title     = {Keeping the neural networks simple by minimizing the description length of the weights},
  author    = {Hinton, Geoffrey E. and Van Camp, Drew},
  booktitle = {Proceedings of the Sixth Annual Conference on Computational Learning Theory (COLT)},
  pages     = {5--13},
  year      = {1993}
}

@inproceedings{kingma2014auto,
  title     = {Auto-Encoding Variational {B}ayes},
  author    = {Kingma, Diederik P. and Welling, Max},
  booktitle = {International Conference on Learning Representations (ICLR)},
  year      = {2014}
}

@book{stewart1990matrix,
  title     = {Matrix Perturbation Theory},
  author    = {Stewart, G. W. and Sun, Ji-guang},
  publisher = {Academic Press},
  year      = {1990}
}

@book{nesterov2018lectures,
  title     = {Lectures on Convex Optimization},
  author    = {Nesterov, Yurii},
  edition   = {2},
  publisher = {Springer},
  series    = {Springer Optimization and Its Applications},
  year      = {2018}
}

@inproceedings{graves2011practical,
  title     = {Practical variational inference for neural networks},
  author    = {Graves, Alex},
  booktitle = {Advances in Neural Information Processing Systems (NeurIPS)},
  volume    = {24},
  year      = {2011}
}

@book{ford2014numerical,
  title     = {Numerical Linear Algebra with Applications: Using {MATLAB}},
  author    = {Ford, William},
  publisher = {Academic Press},
  year      = {2014}
}

@article{jordan1999introduction,
  title   = {An introduction to variational methods for graphical models},
  author  = {Jordan, Michael I. and Ghahramani, Zoubin and Jaakkola, Tommi S. and Saul, Lawrence K.},
  journal = {Machine Learning},
  volume  = {37},
  pages   = {183--233},
  year    = {1999}
}

@article{turc2019well,
  title   = {Well-Read Students Learn Better: On the Importance of Pre-training Compact Models},
  author  = {Turc, Iulia and Chang, Ming-Wei and Lee, Kenton and Toutanova, Kristina},
  journal = {arXiv preprint arXiv:1908.08962},
  year    = {2019}
}

@article{fortunato2017bayesian,
  title   = {{B}ayesian recurrent neural networks},
  author  = {Fortunato, Meire and Blundell, Charles and Vinyals, Oriol},
  journal = {arXiv preprint arXiv:1704.02798},
  year    = {2017}
}

@inproceedings{kendall2017uncertainties,
  title     = {What uncertainties do we need in {B}ayesian deep learning for computer vision?},
  author    = {Kendall, Alex and Gal, Yarin},
  booktitle = {Advances in Neural Information Processing Systems (NeurIPS)},
  volume    = {30},
  year      = {2017}
}

@inproceedings{he2016deep,
  title     = {Deep residual learning for image recognition},
  author    = {He, Kaiming and Zhang, Xiangyu and Ren, Shaoqing and Sun, Jian},
  booktitle = {IEEE Conference on Computer Vision and Pattern Recognition (CVPR)},
  pages     = {770--778},
  year      = {2016}
}

@inproceedings{louizos2017multiplicative,
  title     = {Multiplicative Normalizing Flows for Variational {B}ayesian Neural Networks},
  author    = {Louizos, Christos and Welling, Max},
  booktitle = {International Conference on Machine Learning (ICML)},
  pages     = {2218--2227},
  year      = {2017}
}

@inproceedings{dusenberry2020analyzing,
  title={Analyzing the role of model uncertainty for electronic health records},
  author={Dusenberry, Michael W and Tran, Dustin and Choi, Edward and Kemp, Jonas and Nixon, Jeremy and Jerfel, Ghassen and Heller, Katherine and Dai, Andrew M},
  booktitle={Proceedings of the ACM Conference on Health, Inference, and Learning},
  pages={204--213},
  year={2020}
}

@article{amodei2016concrete,
  title   = {Concrete Problems in {AI} Safety},
  author  = {Amodei, Dario and Olah, Chris and Steinhardt, Jacob and Christiano, Paul F. and Schulman, John and Man\'{e}, Dan},
  journal = {arXiv preprint arXiv:1606.06565},
  year    = {2016}
}

@article{peterson1987mean,
  title   = {A mean field theory learning algorithm for neural networks},
  author  = {Peterson, Carsten and Anderson, James R.},
  journal = {Complex Systems},
  volume  = {1},
  pages   = {995--1019},
  year    = {1987}
}

@article{hochreiter1997long,
  title   = {Long short-term memory},
  author  = {Hochreiter, Sepp and Schmidhuber, J{\"u}rgen},
  journal = {Neural Computation},
  volume  = {9},
  number  = {8},
  pages   = {1735--1780},
  year    = {1997}
}
\bibliographystyle{icml2026}


\appendix
\onecolumn
\section*{Guide to Appendix Proofs}
\addcontentsline{toc}{section}{Guide to Appendix Proofs}

This appendix provides complete proofs for all theoretical results. The table below maps main paper statements to their appendix locations.

\begin{table}[h]
\centering
\small
\begin{tabular}{lll}
\toprule
\textbf{Main Paper} & \textbf{Statement} & \textbf{Appendix Location} \\
\midrule
\multicolumn{3}{l}{\textit{Related Work and background}} \\
-- & Comparison with related low-rank approaches & Section~\ref{sec:relatedwork} \\
\midrule
\multicolumn{3}{l}{\textit{Section~\ref{sec:theory:geometry}: Geometry and Singularity}} \\
Definition~\ref{def:induced_posterior} & Induced posterior & Defined in main text \\
Lemma~\ref{lem:constrained_support} & Constrained support & Section~\ref{app:constrained_support} \\
Lemma~\ref{lem:measure_zero} & Measure zero of $\mathcal{R}_r$ & Section~\ref{app:measure_zero} \\
Theorem~\ref{thm:singularity} & Posterior singularity & Proved in main text \\
\midrule
\multicolumn{3}{l}{\textit{Section~\ref{sec:theory:correlations}: Structured Correlations}} \\
Lemma~\ref{lem:covariance} & Covariance structure & Section~\ref{app:corrproof} \\
\midrule
\multicolumn{3}{l}{\textit{Section~\ref{sec:theory:loss_bounds}: Loss Approximation}} \\
Theorem~\ref{thm:loss_bound} & Optimal rank-$r$ approximation & Section~\ref{app:optimal_loss_proof} \\
Theorem~\ref{thm:learned_lowrank} & Learned approximation & Section~\ref{app:learned_loss_proof} \\
-- & Eckart-Young-Mirsky theorem & Section~\ref{app:approx:eym} \\
-- & Bayesian extension ($\beta$-smooth) & Section~\ref{app:stochastic_loss} \\
-- & Loss function verification & Section~\ref{app:loss_verification} \\
\midrule
\multicolumn{3}{l}{\textit{Section~\ref{sec:theory:pacbayes}: PAC-Bayes Bounds}} \\
Theorem~\ref{thm:lr_pacbayes} & Low-rank PAC-Bayes bounds & Section~\ref{app:pacbayes} \\
\midrule
\multicolumn{3}{l}{\textit{Section~\ref{sec:theory:gaussian_complexity}: Gaussian Complexity}} \\
Theorem~\ref{thm:bnn_main_icml} & Gaussian complexity transfer & Section~\ref{app:gaussian_complexity} \\
-- & High-probability alternatives & Section~\ref{appendix:high_prob_alternatives} \\
\midrule
\multicolumn{3}{l}{\textit{Initialization Theory}} \\
-- & Variance-preserving initialization & Section~\ref{sec:low_rank_initialization} \\
-- & SVD initialization from pretrained weights & Section~\ref{app:svd init} \\
\midrule
\multicolumn{3}{l}{\textit{Section~\ref{sec:experiments}: Experimental Details}} \\
-- & Hardware, software, and metrics & Section~\ref{app:experimental_details} \\
-- & MIMIC-III mortality prediction & Section~\ref{app:exp:mimic} \\
-- & Beijing PM2.5 time-series forecasting & Section~\ref{app:exp:lstm} \\
-- & SST-2 sentiment classification & Section~\ref{app:exp:sst2} \\
-- & SST-2 controlled profiling protocol & Section~\ref{app:exp:sst2:profiling} \\

\midrule
\multicolumn{3}{l}{\textit{Extended Results from Main Experiments}} \\
-- & Experimental insights and practical considerations & Section~\ref{app:insight} \\
-- & MIMIC-III additional results & Section~\ref{app:ext:mimic} \\
-- & Beijing PM2.5 additional results & Section~\ref{app:ext:bei} \\
-- & SST-2 additional results & Section~\ref{app:ext:sst2} \\
-- & SST-2 controlled profiling benchmark & Section~\ref{app:ext:sst2:profiling} \\
-- & SST-2 low-rank rank-sweep profiling & Section~\ref{app:ext:sst2:ranksweep} \\
-- & Supplementary SST-2 low-rank ensembling study & Section~\ref{app:sst2_lowrank_ensemble} \\
-- & Supplementary SWAG comparisons & Section~\ref{app:swag_comparisons} \\

\midrule
\multicolumn{3}{l}{\textit{Additional Experiments}} \\
-- & MNIST classification & Section~\ref{app:results:mnist} \\
-- & Fashion-MNIST classification & Section~\ref{app:results:fmnist} \\
-- & Toy regression & Section~\ref{app:toy_regression} \\
\bottomrule
\end{tabular}
\end{table}
\paragraph{Organization.} Sections are ordered to minimize forward references: comparison with related low-rank approaches (Section~\ref{sec:relatedwork}), background and proofs for geometric properties (Section~\ref{app:background_proofs}), loss approximation guarantees (Section~\ref{app:theory_proofs}), PAC-Bayes generalization bounds (Section~\ref{app:pacbayes}), Gaussian complexity transfer (Section~\ref{app:gaussian_complexity}), initialization theory (Section~\ref{sec:low_rank_initialization}), experimental details (Section~\ref{app:experimental_details}), extended results from main experiments (Section~\ref{app:results:extended}), and additional experiments including MNIST, Fashion-MNIST, and toy regression (Section~\ref{app:additional_results}).

\clearpage

\section{Comparison with Related Low-Rank Approaches}
\label{sec:relatedwork}
\begin{table}[!ht]
\centering
\caption{Comparison with related low-rank approaches for BNNs. Our method is the only approach that (i) learns both backbone and uncertainty end-to-end without pretrained models, (ii) reduces both mean and variance parameterization, and (iii) induces a geometrically singular posterior.}
\label{tab:related_work}
\resizebox{\linewidth}{!}{
\begin{tabular}{lccccc}
\toprule
Method & Backbone & Parameter count & Posterior & Weight \\
 & training & (mean + var) & support & correlations \\
\midrule
Mean-field VI~\citep{blundell2015weight} & Learned Bayes. & $2mn$ & Full $\mathbb{R}^{m \times n}$ & Independent \\
Low-rank cov.~\citep{swiatkowski2020ktied,tomczak2020efficient}  & Learned Bayes. & $mn + r(m+n)$ & Full $\mathbb{R}^{m \times n}$ & Via covariance \\
Rank-1 mult.~\citep{dusenberry2020efficient} & Learned det. + pert. & $mn+2(m+n)$ & Full $\mathbb{R}^{m \times n}$ & Rank-1 structure \\
Bella~\citep{doan2025bayesian} & Frozen + adapt & $n_{particles}\times r(m+n)$ & Affine subspace & Via $\Delta\theta_i$ \\
LoRA + Bayes~\citep{yang2024bayesian_lora} & Frozen + adapt & $r(m+n)$ & Affine subspace & Via $\Delta W$ \\
\midrule
\textbf{Ours} & Learned Bayes. & $2r(m+n)$ & Rank-$r$ manifold $\mathcal{R}_r$ & Via shared factors \\
\bottomrule
\end{tabular}
}
\end{table}

\subsection{Mean-Field Variational Inference \cite{blundell2015weight}}

Mean-field variational inference places independent Gaussian posteriors over each weight: $q(W) = \prod_{ij} \mathcal{N}(w_{ij}|\mu_{ij}, \sigma_{ij}^2)$. This requires $\mathcal{O}(2mn)$ variational parameters per layer (mean and variance for each of $mn$ weights) and imposes an independence assumption: weight uncertainties are decoupled.

Our method differs in several respects. We place distributions directly over the factor matrices $A$ and $B$, optimizing $\mathcal{L}(q) = \mathbb{E}_{q_A q_B}[\log p(\mathcal{D}|AB^\top)] - \mathrm{KL}(q_A \| p_A) - \mathrm{KL}(q_B \| p_B)$ with only $\mathcal{O}(r(m+n))$ variational parameters. The induced posterior $q_W$ concentrates on the rank-$r$ manifold $\mathcal{R}_r$ (Theorem~\ref{thm:singularity}, Lemma~\ref{lem:constrained_support}), which has measure zero in $\mathbb{R}^{m \times n}$. In contrast, mean-field has full-dimensional support.

Additionally, our factorization induces non-trivial covariances between weights sharing latent factors (Lemma~\ref{lem:covariance}), capturing structured dependencies. Mean-field assumes complete independence. The rank constraint prevents overfitting to individual examples by forcing shared factor modifications rather than allowing arbitrary independent weight adjustments (Section~\ref{sec:theory:geometry}).

\subsection{Low-Rank Covariance Approximations \cite{swiatkowski2020ktied}, \cite{tomczak2020efficient}}

Low-rank covariance methods maintain full mean parameters $\mu_W \in \mathbb{R}^{m \times n}$ but approximate the posterior covariance via low-rank factors: $\mathrm{Cov}(W) \approx L L^\top$ where $L \in \mathbb{R}^{mn \times r}$. This requires $\mathcal{O}(mn + r(m+n))$ parameters.

Our approach differs in parameterization and support. These methods learn the full mean $\mu_W$ independently and use low-rank factors solely to approximate off-diagonal covariances. We jointly parameterize both mean and uncertainty through the same factorization $W = AB^\top$. Their posterior retains full-dimensional support in $\mathbb{R}^{m \times n}$ with only the covariance structure constrained, whereas our support is restricted to $\mathcal{R}_r$, a lower-dimensional manifold.

The parameter count difference is substantial: they scale as $mn + r(m+n)$ due to storing the full mean vector, while we achieve $r(m+n)$ by encoding both mean and covariance structure in the factorization. This translates to reduced memory requirements and more efficient gradient computation.

\subsection{Rank-1 Multiplicative Perturbations \cite{dusenberry2020efficient}}

Rank-1 Bayesian neural networks learn a deterministic weight matrix $W_0$ and scale it by multiplicative rank-1 factors: $W = (1 + rs^\top) \odot W_0$, where $r \in \mathbb{R}^m$ and $s \in \mathbb{R}^n$ are learned stochastically. The posterior is placed over these perturbation vectors, requiring $\mathcal{O}(2(m+n))$ additional variational parameters for means and variances on $r, s$.

Several distinctions emerge. First, rank-1 methods learn end-to-end on a deterministic backbone, whereas we place uncertainty on both factors from initialization without a fixed mean weight matrix. Second, they parameterize $q(r, s)$ with $W$ emerging through element-wise multiplicative perturbations, while we directly parameterize $q_A(A) q_B(B)$ with $W = AB^\top$ emerging through matrix multiplication.

The induced posteriors exhibit different uncertainty structures: theirs has multiplicative structure (perturbations scale deviations from $W_0$), while ours has additive low-rank structure (sums of rank-1 components). This affects uncertainty propagation. Furthermore, rank-1 methods are restricted to scaling-based perturbations, whereas our factorization permits arbitrary rank-$r$ matrices when $r > 1$, enabling richer expressiveness (Lemma~\ref{lem:covariance}).

\subsection{Bayesian LoRA via SVGD \cite{doan2025bayesian}}

Bella adapts a frozen pretrained backbone via low-rank LoRA factors and applies Stein Variational Gradient Descent (SVGD) over an ensemble of rank-$r$ particles. Each particle $i$ has weights $\theta_i = \theta_0 + B_i A_i^\top$, where $\theta_0$ is frozen and only $(A_i, B_i)$ are updated. This requires $\mathcal{O}(r(m+n))$ parameters per particle, totaling $n_{\text{particles}} \times r(m+n)$.

This approach differs from ours in training scheme and posterior representation. Bella is post-hoc, requiring a pretrained backbone, whereas ours learns end-to-end from random initialization. Bella uses SVGD (kernel-based repulsion over discrete particles) to approximate the posterior over adapters; we employ mean-field variational inference over factors. SVGD maintains $n_{\text{particles}}$ distinct solutions; variational inference maintains a continuous distribution.

Regarding parameterization, Bella freezes $\theta_0$ and optimizes $(A_i, B_i)$ for each particle independently. We learn variational distributions $q_A(A), q_B(B)$ over shared factors, with individual samples drawn implicitly. Bella's posterior is an empirical distribution over $n_{\text{particles}}$ affine subspaces around $\theta_0$, while ours concentrates on a single rank-$r$ manifold $\mathcal{R}_r$ in the full weight space.

Computationally, Bella requires $n_{\text{particles}}$ forward and backward passes; we require one per minibatch, leveraging reparameterization for efficient gradient estimation. Additionally, Bella depends on a high-quality pretrained model, whereas we work from random initialization, enabling application to novel architectures without pretrained checkpoints.

\subsection{Post-hoc Laplace Approximation on LoRA \cite{yang2024bayesian_lora}}

This approach freezes a pretrained backbone, finetunes LoRA adapters to a maximum a posteriori (MAP) estimate, then applies the Laplace approximation (K-FAC-based Hessian curvature) around that point in the LoRA subspace. Computing and storing the Hessian approximation requires $\mathcal{O}(r(m+n))$ parameters with additional structural overhead.

The training schemes differ substantially: Laplace-LoRA is post-hoc (requiring pretrained backbone, finetuning, then Hessian computation), whereas ours is end-to-end from initialization. The posterior types also differ: Laplace assumes a local quadratic approximation around the MAP point, while our mean-field variational posterior does not rely on local curvature assumptions.

In terms of expressiveness, the Laplace approximation is accurate near the MAP but can be poor in regions of high curvature. Our factorization captures global structure via the low-rank constraint (Lemma~\ref{lem:covariance}). Regarding scalability, Hessian computation in high dimensions is computationally expensive even with K-FAC approximations; we avoid this entirely through reparameterization.
\subsection{Adjacent low-rank literatures}

Our framework is also related to several adjacent low-rank literatures, but the
connection is only partial.

\paragraph{Reduced-rank regression.}
The closest link is the shared use of low-rank coefficient structure
\citep{izenman1975reduced,reinsel1998multivariate}. However,
reduced-rank regression studies low-rank coefficient matrices in multivariate
linear models, whereas our setting is a Bayesian neural predictor with nonlinear
feature learning and uncertainty-aware prediction and generalization.

\paragraph{Probabilistic and Bayesian matrix factorization.}
These methods also use bilinear latent-factor parameterizations
\citep{mnih2007probabilistic,salakhutdinov2008bayesian}. The key difference is that they model
the entries of an observed data matrix directly, whereas our factorization
parameterizes neural-network weight matrices inside a predictive model. Thus the
induced posterior in our setting is over model parameters and predictors rather
than over observed matrix entries.

\paragraph{Deterministic low-rank compression.}
Low-rank compression methods exploit redundancy in neural networks to reduce
memory or computation, either by approximating trained models or by encouraging
low-rank structure during deterministic training
\citep{denton2014exploiting,alvarez2017compression}. By contrast, our method
learns a stochastic low-rank parameterization end-to-end from initialization, so
the low-rank structure shapes both training and inference rather than only
producing a compressed deterministic model.
\paragraph{Post-training and function-space uncertainty methods.}
Our approach is also distinct from post-training or function-space uncertainty
methods such as linearized Laplace approximations \citep{immer2021improving},
SNGP \citep{liu2020simple}, and fixed-mean GP approaches
\citep{ortega2024fixed}. These methods start from a trained deterministic or
pretrained predictor and then add uncertainty through local linearization,
GP-style output layers, or a GP whose mean is fixed to the original network
output. They are therefore complementary rather than direct substitutes: they do
not learn a posterior over low-rank weight factors from initialization, and they
typically preserve the underlying predictive backbone. By contrast, our method
places a variational posterior directly on matrix-factorized network weights, so
the uncertainty model shapes both optimization and prediction end-to-end.

\subsection{Summary}

Our method is distinguished by combining \textbf{(i) end-to-end variational learning} with \textbf{(ii) parameter-efficient factorization} and \textbf{(iii) geometrically-constrained posterior support} on a rank-$r$ manifold. The rank-$r$ manifold constraint (Theorem~\ref{thm:singularity}) provides theoretical advantages in generalization bounds (Section~\ref{sec:theory:pacbayes}) and empirical benefits in out-of-distribution detection and calibration, as demonstrated in our experiments. Table \ref{tab:related_work} summarizes the key difference across the approaches we discussed. 

\section{Background and Proofs}\label{app:background_proofs}

\subsection{Variational Inference for Bayesian Neural Networks} \label{sec:background:vi}  Given a dataset $\mathcal{D} = \{(x_n, y_n)\}_{n=1}^N$ and a neural network  with weights $W$, Bayesian inference seeks the posterior distribution  $p(W|\mathcal{D}) \propto p(\mathcal{D}|W)p(W)$, which is intractable for  modern architectures. Variational inference  approximates the posterior with a tractable family $q(W;\theta)$ by maximizing  the evidence lower bound (ELBO):$\mathcal{L}(q) = \mathbb{E}_{q(W)}[\log p(\mathcal{D}|W)] - \mathrm{KL}(q(W) \| p(W)).$ The first term encourages data fit while the second regularizes by penalizing  deviation from the prior. The standard approach, mean-field variational inference (MFVI), assumes fully factorized Gaussians  $q(W) = \prod_{ij} \mathcal{N}(w_{ij}|\mu_{ij}, \sigma_{ij}^2)$. While this  enables tractable optimization via the reparameterization trick  \cite{kingma2014auto}, it suffers from two key limitations: \textbf{(i)} the doubling of the  number of parameters relative to a deterministic network, requiring storage of  both $\mu_{ij}$ and $\sigma_{ij}$ for each weight, and \textbf{(ii)} the independence  assumption prevents capturing correlations between weights. These limitations  motivate our low-rank approach. \subsection{Low-Rank Matrix Factorization} \label{sec:background:factorization}  \begin{theorem}[Existence of Low-Rank Factorization] \label{thm:lowrank_existence} Let $W \in \mathbb{R}^{m \times n}$  be a matrix of rank $r \leq \min(m,n)$ Then there exist matrices $A \in \mathbb{R}^{m \times r}$ and $B \in \mathbb{R}^{n \times r}$ such that $W = A B^T $\cite{ford2014numerical}.  The EYM theorem further establishes  that for any full-rank matrix with singular value decomposition  $W_{\text{full}} = \sum_{i=1}^{\ell} \sigma_i u_i v_i^\top$, where  $\sigma_i$ are the singular values (in decreasing order) $u_i \in \mathbb{R}^m$ are the left singular vectors, and $v_i \in \mathbb{R}^n$ are the right singular vectors,  the truncated SVD $W_r =\sum_{i=1}^{r} \sigma_i u_i v_i^\top$ provides the optimal rank-$r$ approximation in Frobenius norm. (See ~\ref{app:factorization_proof} for complete proof.)  with approximation error $\|W_{\text{full}} - W_r\|_F^2 = \sum_{i=r+1}^{\ell} \sigma_i^2$. 
\end{theorem}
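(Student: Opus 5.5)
The plan is to prove the two claims in Theorem~\ref{thm:lowrank_existence} separately: first the existence of the factorization, then the Eckart--Young--Mirsky (EYM) optimality together with the error formula.

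\textbf{Existence.} Since $\operatorname{rank}(W)=r$, the column space of $W$ has a basis $a_1,\dots,a_r\in\mathbb{R}^m$. Let $A=[a_1,\dots,a_r]\in\mathbb{R}^{m\times r}$. Each column $w_j$ of $W$ can then be written uniquely as $w_j=\sum_k b_{jk}a_k$. Collecting the coefficients into $B=(b_{jk})\in\mathbb{R}^{n\times r}$ gives $W=AB^\top$. An equivalent route uses the SVD, $W=U_r\Sigma_rV_r^\top$, and sets $A=U_r\Sigma_r$, $B=V_r$. This construction also fixes the notation for the next part.

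\textbf{Error of the truncated SVD.} Write $W_{\text{full}}=\sum_{i=1}^{\ell}\sigma_iu_iv_i^\top$. Then
\[
W_{\text{full}}-W_r=\sum_{i>r}\sigma_iu_iv_i^\top .
\]
Because $\{u_i\}$ and $\{v_i\}$ are orthonormal families, the rank-one matrices $u_iv_i^\top$ are orthonormal in the Frobenius inner product: $\langle u_iv_i^\top,u_jv_j^\top\rangle=(u_i^\top u_j)(v_i^\top v_j)=\delta_{ij}$. Hence $\|W_{\text{full}}-W_r\|_F^2=\sum_{i>r}\sigma_i^2$.

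\textbf{Optimality.} This is the main step. Let $X$ be any matrix with $\operatorname{rank}(X)\le r$; I will show $\|W_{\text{full}}-X\|_F^2\ge\sum_{i>r}\sigma_i^2$. I would use Weyl's inequality for singular values,
\[
\sigma_{i+j-1}(Y+Z)\le\sigma_i(Y)+\sigma_j(Z).
\]
Apply it with $Y=W_{\text{full}}-X$, $Z=X$ and $j=r+1$. Since $\sigma_{r+1}(X)=0$, this gives
\[
\sigma_{i+r}(W_{\text{full}})\le\sigma_i(W_{\text{full}}-X).
\]
Squaring and summing over $i\ge1$ yields
\[
\|W_{\text{full}}-X\|_F^2=\sum_i\sigma_i^2(W_{\text{full}}-X)\ge\sum_{i>r}\sigma_i^2(W_{\text{full}}),
\]
and the truncated SVD $W_r$ attains this value by the previous step.

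The main obstacle is justifying Weyl's inequality. I would derive it from the Courant--Fischer min-max characterization, $\sigma_k(M)=\min_{\dim S=n-k+1}\max_{v\in S,\|v\|=1}\|Mv\|$. Take the subspaces witnessing the minima for $\sigma_i(Y)$ and $\sigma_j(Z)$, which have codimensions $i-1$ and $j-1$. Their intersection has codimension at most $i+j-2$, and for any unit $v$ in it the triangle inequality gives $\|(Y+Z)v\|\le\|Yv\|+\|Zv\|\le\sigma_i(Y)+\sigma_j(Z)$. This bounds $\sigma_{i+j-1}(Y+Z)$.

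A simpler alternative for the special case $j=r+1$ avoids the general inequality. The kernel of $X$ has dimension at least $n-r$, so it meets $\operatorname{span}(v_1,\dots,v_{r+i})$ nontrivially. For a unit vector $v$ in that intersection, $\|(W_{\text{full}}-X)v\|=\|W_{\text{full}}v\|\ge\sigma_{r+i}$. The codimension bookkeeping in this dimension-counting step is where care is needed.
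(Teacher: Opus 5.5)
Your existence argument is essentially the paper's Case~1 (a basis of the column space for $A$, the coordinates of the columns for $B$). The paper also handles $\mathrm{rank}(W)=k<r$ by padding $A_k,B_k$ with $r-k$ zero columns. It therefore proves the slightly stronger ``rank at most $r$'' form, which you could add in one line.

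The real difference is the Eckart--Young--Mirsky half. The paper's appendix proof of this theorem does not prove it and instead defers to Theorem~\ref{thm:eym}. That sketch uses the same Frobenius-orthonormality computation as you do for $\|W_{\text{full}}-W_r\|_F^2=\sum_{i>r}\sigma_i^2$. For optimality, however, it uses a single subspace intersection, which only gives $\|W-M\|_F\ge\sigma_{r+1}$, and then cites the literature for a complete proof. Your Weyl route is Mirsky's argument: show $\sigma_{i+r}(W_{\text{full}})\le\sigma_i(W_{\text{full}}-X)$ for every $i$, then square and sum. It is self-contained, so on this half your proposal is more complete than what the paper supplies. The cost is establishing Courant--Fischer and Weyl, which your sketch does correctly; set $\sigma_k:=0$ for $k>\min(m,n)$ so the index range causes no trouble.

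One caution about your ``simpler alternative''. As written, a single unit vector $v\in\ker X\cap\mathrm{span}(v_1,\dots,v_{r+i})$ only shows $\|W_{\text{full}}-X\|_2\ge\sigma_{r+i}$. That bounds $\sigma_1(W_{\text{full}}-X)$ rather than $\sigma_i(W_{\text{full}}-X)$, and summing such bounds does not give the Frobenius inequality. This is the same weakness as in the paper's own sketch. To repair it, note that the intersection has dimension at least $(n-r)+(r+i)-n=i$, and choose an $i$-dimensional subspace $T$ inside it. Then the max--min form $\sigma_i(M)=\max_{\dim T=i}\min_{v\in T,\|v\|=1}\|Mv\|$ gives $\sigma_i(W_{\text{full}}-X)\ge\min_{v\in T,\|v\|=1}\|W_{\text{full}}v\|\ge\sigma_{r+i}$, which is what you need.
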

\subsection{Proof of Theorem~\ref{thm:lowrank_existence}: Existence of Low-Rank Factorization}\label{app:factorization_proof}

\begin{proof}
We prove that for any matrix $W \in \mathbb{R}^{m \times n}$ with $\mathrm{rank}(W) \leq r$, there exist matrices $A \in \mathbb{R}^{m \times r}$ and $B \in \mathbb{R}^{n \times r}$ such that $W = AB^\top$. The proof proceeds by considering two cases: (i) when $\mathrm{rank}(W) = r$ exactly, and (ii) when $\mathrm{rank}(W) = k < r$ is strictly less than $r$.

\textbf{Case 1: $\mathrm{rank}(W) = r$.}

Assume $\mathrm{rank}(W) = r$. Then the columns of $W$ span an $r$-dimensional subspace of $\mathbb{R}^m$. Let $A = [a_1, \ldots, a_r]$ be the matrix whose columns form a basis for this column space, so $A \in \mathbb{R}^{m \times r}$.

For every column $w_j$ of $W$ (for $j = 1, \ldots, n$), there exists a set of coefficients $(b_{j,i})_{i=1}^r$ such that $w_j$ is a linear combination of the basis vectors $a_1, \ldots, a_r$:
\[
w_j = \sum_{i=1}^r b_{j,i} a_i, \quad \text{where} \quad a_i =
\begin{bmatrix}
a_{1i}\\
a_{2i}\\
\vdots\\
a_{mi}
\end{bmatrix}.
\]

Now we define $B \in \mathbb{R}^{n \times r}$ with rows $B_j = [b_{j,1}, \ldots, b_{j,r}]$:
\[
B = \begin{bmatrix}
b_{1,1} & \cdots & b_{1,r} \\
\vdots & \ddots & \vdots \\
b_{n,1} & \cdots & b_{n,r}
\end{bmatrix}.
\]

For each column $w_j$, we have $w_j = A B_j^\top$:
\[
w_j = \begin{bmatrix}
a_{1,1} & \cdots & a_{1,r} \\
\vdots & \ddots & \vdots \\
a_{m,1} & \cdots & a_{m,r}
\end{bmatrix} \begin{bmatrix}
b_{j,1} \\
\vdots \\
b_{j,r}
\end{bmatrix}
= \begin{bmatrix}
\sum_{i=1}^r a_{1,i} b_{j,i} \\
\vdots \\
\sum_{i=1}^r a_{m,i} b_{j,i}
\end{bmatrix}.
\]

Thus $w_j = A B_j^\top$ for each column $j = 1, \ldots, n$, which yields $W = AB^\top$.

\textbf{Case 2: $\mathrm{rank}(W) = k < r$.}

Now assume $\mathrm{rank}(W) = k < r$. Then by Case 1, there exist matrices $A_k \in \mathbb{R}^{m \times k}$ and $B_k \in \mathbb{R}^{n \times k}$ such that $W = A_k B_k^\top$.

Define $A_r = [A_k \mid O_{m \times (r-k)}]$ by augmenting $A_k$ with $r-k$ zero columns, and similarly define $B_r = [B_k \mid O_{n \times (r-k)}]$ by augmenting $B_k$ with $r-k$ zero columns. Then:
\[
\begin{aligned}
A_r B_r^\top
&= \bigl[\,A_k \mid O_{m \times (r-k)}\,\bigr]
\begin{bmatrix}
B_k^\top \\
O_{(r-k)\times n}
\end{bmatrix} \\
&= A_k B_k^\top + O_{m\times n} \\
&= A_k B_k^\top \\
&= W.
\end{aligned}
\]

Therefore $W = A_r B_r^\top$ where $A_r \in \mathbb{R}^{m \times r}$ and $B_r \in \mathbb{R}^{n \times r}$.

\textbf{Conclusion.}

We have shown that for any $W$ with $\mathrm{rank}(W) \leq r$, there exist $A \in \mathbb{R}^{m \times r}$ and $B \in \mathbb{R}^{n \times r}$ such that $W = AB^\top$, completing the proof.
\end{proof}

\subsection{Proof of Lemma~\ref{lem:constrained_support}: Constrained Support}\label{app:constrained_support}

\begin{proof}
Let $q(A,B)$ be any variational distribution over $A \in \mathbb{R}^{m \times r}$ and $B \in \mathbb{R}^{n \times r}$. We show that the induced posterior $q_W$ places all its probability mass on the set of rank-at-most-$r$ matrices.

By Theorem~\ref{thm:lowrank_existence}, any matrix of the form $W = AB^\top$ with $A \in \mathbb{R}^{m \times r}$ and $B \in \mathbb{R}^{n \times r}$ satisfies $\mathrm{rank}(W) \leq r$.

The distribution of the induced posterior, $q_W$, is the image measure (or pushforward measure) of $q(A,B)$ under the deterministic function:
\[
f(A,B) = A B^\top
\]
defined as:
\[
f : \mathbb{R}^{m \times r} \times \mathbb{R}^{n \times r} \longrightarrow \mathbb{R}^{m \times n}, \qquad
(A, B) \longmapsto A B^\top.
\]

Thus every sample $(A, B) \sim q(A,B)$ is mapped to $W = AB^\top$, which by construction satisfies $\mathrm{rank}(W) \leq r$.

Therefore, $q_W$ places all its probability mass on the set 
\[
\mathcal{R}_r = \{W \in \mathbb{R}^{m \times n} : \mathrm{rank}(W) \leq r\},
\]
that is, $q_W(\mathcal{R}_r) = 1$.
\end{proof}

\subsection{Proof of Lemma~\ref{lem:measure_zero}: Measure Zero of Low-Rank Manifold}\label{app:measure_zero}

\begin{proof}
We prove that the set of matrices with rank at most $r$ has Lebesgue measure zero in $\mathbb{R}^{m \times n}$.

Recall from the characterization of matrix rank (see, e.g., \cite{ford2014numerical}) that for any $W \in \mathcal{R}_r$ (i.e., $\mathrm{rank}(W) \leq r$), we have:
\[
\det(W_{I, J}) = 0 \quad \forall \ I \subset \{1, \ldots, m\} \ \text{and} \ 
J \subset \{1, \ldots, n\}
\]
such that $|I| = |J| = r+1$. That is, all $(r+1) \times (r+1)$ minors must vanish.

Let us take a given pair of indices $(I_0, J_0)$ of size $r+1$. Define the function $g : \mathbb{R}^{m \times n} \rightarrow \mathbb{R}$ as
\[
g(W) = \det(W_{I_0, J_0}).
\]

Since the determinant is a sum of products of entries, $g(W)$ is a \textbf{polynomial} in the entries of $W$.

\textbf{Claim:} The polynomial $g(W)$ is \emph{not identically zero}.

\textit{Proof of claim:} There exists at least one matrix $W \in \mathbb{R}^{m \times n}$ such that $g(W) \neq 0$. To see this, construct $W^*$ such that the submatrix $W^*_{I_0, J_0} = I_{r+1}$ (the identity matrix of size $r+1$) and all other entries are zero. Then:
\[
g(W^*) = \det(I_{r+1}) = 1 \neq 0.
\]
Thus $g$ is not identically zero. 

We now apply the following classical result from real analysis (see \cite{folland1999real}):

\begin{theorem}[Zero Sets of Polynomials]
If $p : \mathbb{R}^d \rightarrow \mathbb{R}$ is a polynomial that is not identically zero, then the set of its zeros $\{x \in \mathbb{R}^d : p(x) = 0\}$ has Lebesgue measure zero.
\end{theorem}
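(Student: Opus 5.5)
The plan is induction on the dimension $d$, using Fubini–Tonelli to reduce to the one-variable fact that a nonzero real polynomial of degree $k$ has at most $k$ roots.

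\textbf{Base case $d=1$.} A nonzero polynomial $p:\mathbb{R}\to\mathbb{R}$ has finitely many zeros, and a finite set has Lebesgue measure zero.

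\textbf{Inductive step.} Assume the statement in dimension $d-1$. Write $x=(x',x_d)$ with $x'\in\mathbb{R}^{d-1}$ and expand $p$ in powers of the last variable:
\[
p(x',x_d)=\sum_{k=0}^{K} c_k(x')\,x_d^{k},
\]
where each $c_k$ is a polynomial on $\mathbb{R}^{d-1}$. Since $p\not\equiv 0$, we may choose $K$ so that $c_K\not\equiv 0$. Let $Z=\{p=0\}$, and note that $Z$ is closed, hence Borel. Split $\mathbb{R}^{d-1}$ into
\[
E=\{x' : c_K(x')=0\}
\]
and its complement.
\begin{itemize}
\item By the inductive hypothesis applied to $c_K$, $E$ has $(d-1)$-dimensional measure zero.
\item For $x'\notin E$, the map $x_d\mapsto p(x',x_d)$ is a nonzero one-variable polynomial, so its zero set (the slice $Z_{x'}$) is finite and has one-dimensional measure zero.
\end{itemize}

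\textbf{Conclusion via Tonelli.} Applying Tonelli to the indicator $\mathbf{1}_Z$,
\[
\lambda_d(Z)=\int_{\mathbb{R}^{d-1}}\lambda_1(Z_{x'})\,dx' = \int_{E}\lambda_1(Z_{x'})\,dx' + \int_{E^c} 0\,dx' = 0.
\]
The first integral vanishes because $E$ is null; the integrand there may be infinite, but that is harmless for a nonnegative integrand over a null set.

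\textbf{Main obstacle.} The only delicate point is justifying Fubini–Tonelli: it needs the measurability of $Z$, which follows from the continuity of $p$, and a careful choice of the leading coefficient $c_K\not\equiv 0$. The rest is routine. In the paper's application, this result applied to $g(W)=\det(W_{I_0,J_0})$ shows that $\{g=0\}$ is null, and $\mathcal{R}_r\subseteq\{g=0\}$ (since every $(r+1)$-minor of a rank-$\le r$ matrix vanishes), which gives Lemma~\ref{lem:measure_zero}.
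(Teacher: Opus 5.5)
Your proof is correct and complete. The induction on $d$ goes through: off the null set $\{c_K=0\}$ the slice $x_d\mapsto p(x',x_d)$ is a nonzero univariate polynomial, so it has finitely many roots, and Tonelli applies to the closed (hence Borel) set $Z$. This is the standard argument. The paper gives no proof of this statement at all: it quotes the result as a classical fact from Folland inside the proof of Lemma~\ref{lem:measure_zero}. So your route is not an alternative to an argument in the paper but a self-contained replacement for a citation. What it buys is that Lemma~\ref{lem:measure_zero} then rests only on Tonelli and the finite root count for one-variable polynomials; the citation buys brevity.

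Your closing remark is slightly more direct than the paper's use of the result. The paper writes $\mathcal{R}_r$ as the intersection of all the sets $Z_{I,J}$ and appeals to ``a finite intersection of measure-zero sets has measure zero''. The containment $\mathcal{R}_r\subseteq Z_{I_0,J_0}$ for a single $(r+1)\times(r+1)$ minor already suffices, and such a minor exists precisely because $r<\min(m,n)$.
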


Applying this theorem to our function $g$, the set
\[
Z_{I_0, J_0} = \{W \in \mathbb{R}^{m \times n} \mid g(W) = 0\}
\]
has Lebesgue measure zero:
\[
\lambda(Z_{I_0, J_0}) = 0.
\]

Now, since a matrix has rank at most $r$ if and only if \emph{all} its $(r+1) \times (r+1)$ minors vanish, we can write:
\[
\mathcal{R}_r = \bigcap_{\substack{I, J : \\ |I|=|J|=r+1}} Z_{I,J}.
\]

There are finitely many such index pairs (at most $\binom{m}{r+1} \cdot \binom{n}{r+1}$). Since each $Z_{I,J}$ has measure zero and a finite (hence countable) intersection of measure-zero sets has measure zero, we conclude:
\[
\lambda(\mathcal{R}_r) = 0.
\]

This completes the proof.
\end{proof}

\subsection{Additional Remarks on Theorem~\ref{thm:singularity}: Singularity}\label{app:singularity}

The singularity of $q_W$ with respect to Lebesgue measure has several important implications beyond what we discussed in the main text.

\textbf{Relationship to Measure-Theoretic Concepts:} The induced posterior $q_W$ 
is mutually singular with respect to Lebesgue measure, meaning it places all its 
mass on a set ($\mathcal{R}_r$) that has zero Lebesgue measure. When $q(A,B)$ 
is chosen to be continuous (e.g., factorized Gaussians), $q_W$ is an example of 
a \emph{singular continuous measure}  i.e it has no atoms (point masses) yet cannot 
be represented by a density with respect to Lebesgue measure. This is distinct 
from absolutely continuous measures (which have densities) and discrete measures 
(which have atoms).

\textbf{Comparison to Mean-Field.} Mean-field Gaussian posteriors are absolutely continuous with respect to Lebesgue measure, meaning they admit a density function $f : \mathbb{R}^{m \times n} \rightarrow [0,\infty)$. This difference in measure-theoretic character explains why the two approaches have such different inductive biases: mean-field spreads mass across the entire space with positive density everywhere, while our induced posterior concentrates entirely on the low-rank manifold.

\textbf{Practical Implications.} Despite being singular with respect to Lebesgue measure, $q_W$ is perfectly well-defined and tractable. Samples can be drawn efficiently by sampling $(A,B) \sim q(A,B)$ and computing $W = AB^\top$. Expectations with respect to $q_W$ can be computed via Monte Carlo sampling. The singularity is a geometric property that constrains the support to $\mathcal{R}_r$, not a computational obstacle. In fact, this concentration of support is precisely what provides the beneficial inductive bias discussed in Section~\ref{sec:theory:geometry}.

\subsection{Proof of Lemma~\ref{lem:covariance}: Covariance Structure}\label{app:corrproof}

We derive the covariance structure between weight entries induced by the low-rank factorization $W = AB^\top$.

\subsubsection{Setup}

Consider two weight entries $W_{ij}$ and $W_{i'j'}$ where
\[
W_{ij} = \sum_{k=1}^{r} A_{ik} B_{jk}, 
\qquad 
W_{i'j'} = \sum_{\ell=1}^{r} A_{i'\ell} B_{j'\ell}.
\]

We seek to compute $\mathrm{Cov}(W_{ij}, W_{i'j'})$ under a factorized variational posterior:
\[
q(A, B) = \left(\prod_{i,k} q(A_{ik})\right) \left(\prod_{j,k} q(B_{jk})\right),
\]
which implies $A \perp B$ and independence between distinct entries within each factor.

\subsubsection{Auxiliary Lemma: Covariance of Products}

\begin{lemma}[Covariance of Independent Products]
\label{lem:product_covariance}
Let $A$ and $B$ be independent random variables, and let $X, U$ be measurable functions of $A$ and $Y, V$ be measurable functions of $B$. Then
\[
\mathrm{Cov}(XY, UV) = \mathbb{E}[XU]\mathbb{E}[YV] 
- \mathbb{E}[X]\mathbb{E}[U]\mathbb{E}[Y]\mathbb{E}[V].
\]
\end{lemma}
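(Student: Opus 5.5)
The plan is to expand the covariance using its definition, $\mathrm{Cov}(XY,UV)=\mathbb{E}[XYUV]-\mathbb{E}[XY]\,\mathbb{E}[UV]$, and then use independence to factor each expectation into an $A$-part and a $B$-part. Throughout, I assume the relevant second moments are finite, i.e.\ $X,U\in L^2$ as functions of $A$ and $Y,V\in L^2$ as functions of $B$. This is automatic for the Gaussian factors used in Lemma~\ref{lem:covariance}. By Cauchy--Schwarz it guarantees that $XU$, $YV$, $XY$, $UV$ and $XYUV$ are all integrable, so every expectation below is well defined.

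\textbf{Step 1 (joint moment).} Since $X$ and $U$ are measurable functions of $A$, the product $XU$ is $\sigma(A)$-measurable. Likewise $YV$ is $\sigma(B)$-measurable. Independence of $A$ and $B$ means $\sigma(A)$ and $\sigma(B)$ are independent, so $XU$ and $YV$ are independent. Writing $XYUV=(XU)(YV)$ and using the product rule for expectations of independent integrable variables gives $\mathbb{E}[XYUV]=\mathbb{E}[XU]\,\mathbb{E}[YV]$.

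\textbf{Step 2 (marginal moments).} The same argument applied to the pair $(X,Y)$ gives $\mathbb{E}[XY]=\mathbb{E}[X]\,\mathbb{E}[Y]$. Applied to $(U,V)$, it gives $\mathbb{E}[UV]=\mathbb{E}[U]\,\mathbb{E}[V]$.

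\textbf{Step 3 (combine).} Substituting both steps into the definition of covariance yields
\[
\mathrm{Cov}(XY,UV)=\mathbb{E}[XU]\,\mathbb{E}[YV]-\mathbb{E}[X]\,\mathbb{E}[U]\,\mathbb{E}[Y]\,\mathbb{E}[V].
\]

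The only delicate point is justifying that the product rule applies, which needs both integrability and independence of the grouped products. Integrability is handled by the $L^2$ assumption together with Cauchy--Schwarz. Independence follows because measurable functions of independent random elements are independent. Everything else is direct algebra.

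Looking ahead to Lemma~\ref{lem:covariance}, I would apply this result termwise to $X=A_{ik}$, $Y=B_{jk}$, $U=A_{i'\ell}$, $V=B_{j'\ell}$. When $k\neq\ell$ the factors are independent of each other, so the formula gives $\mathbb{E}[A_{ik}]\mathbb{E}[A_{i'\ell}]\mathbb{E}[B_{jk}]\mathbb{E}[B_{j'\ell}]$ minus the same product, which is zero. Only the diagonal terms $k=\ell$ survive, and these give exactly the sum in Lemma~\ref{lem:covariance}.
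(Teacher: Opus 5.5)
Your proof is correct and follows the paper's argument exactly: expand $\mathrm{Cov}(XY,UV)=\mathbb{E}[XYUV]-\mathbb{E}[XY]\,\mathbb{E}[UV]$ and factor each expectation using the independence of $\sigma(A)$ and $\sigma(B)$. Your explicit $L^2$ hypothesis supplies an integrability condition the paper leaves implicit; strictly, the integrability of $XYUV$ comes from independence, $\mathbb{E}|XYUV|=\mathbb{E}|XU|\,\mathbb{E}|YV|$, combined with Cauchy--Schwarz, rather than from Cauchy--Schwarz alone.
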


\begin{proof}
By definition of covariance:
\[
\mathrm{Cov}(XY, UV) = \mathbb{E}[XYUV] - \mathbb{E}[XY]\mathbb{E}[UV].
\]

Since $X, U$ depend only on $A$ and $Y, V$ depend only on $B$, and $A \perp B$, the expectations factorize:
\[
\mathbb{E}[XYUV] = \mathbb{E}[XU]\mathbb{E}[YV].
\]

Similarly:
\[
\mathbb{E}[XY] = \mathbb{E}[X]\mathbb{E}[Y], 
\qquad 
\mathbb{E}[UV] = \mathbb{E}[U]\mathbb{E}[V].
\]

Substituting:
\[
\mathrm{Cov}(XY, UV) = \mathbb{E}[XU]\mathbb{E}[YV] 
- \mathbb{E}[X]\mathbb{E}[Y]\mathbb{E}[U]\mathbb{E}[V].
\]
\end{proof}

\subsubsection{Derivation of Weight Covariance}

Define $Z_k = A_{ik}B_{jk}$ and $Z'_\ell = A_{i'\ell}B_{j'\ell}$. Then:
\[
\mathrm{Cov}(W_{ij}, W_{i'j'}) 
= \mathrm{Cov}\left(\sum_{k=1}^{r} Z_k, \sum_{\ell=1}^{r} Z'_\ell\right).
\]

By bilinearity of covariance:
\[
\mathrm{Cov}(W_{ij}, W_{i'j'}) = \sum_{k=1}^{r} \sum_{\ell=1}^{r} \mathrm{Cov}(Z_k, Z'_\ell).
\]

\textbf{Case 1: $k \neq \ell$.}

By independence of distinct entries and Lemma~\ref{lem:product_covariance}:
\begin{align*}
\mathrm{Cov}(A_{ik}B_{jk}, A_{i'\ell}B_{j'\ell})
&= \mathbb{E}[A_{ik}A_{i'\ell}]\mathbb{E}[B_{jk}B_{j'\ell}] 
   - \mathbb{E}[A_{ik}]\mathbb{E}[A_{i'\ell}]\mathbb{E}[B_{jk}]\mathbb{E}[B_{j'\ell}] \\
&= \mathbb{E}[A_{ik}]\mathbb{E}[A_{i'\ell}]\mathbb{E}[B_{jk}]\mathbb{E}[B_{j'\ell}]
   - \mathbb{E}[A_{ik}]\mathbb{E}[A_{i'\ell}]\mathbb{E}[B_{jk}]\mathbb{E}[B_{j'\ell}] \\
&= 0.
\end{align*}

\textbf{Case 2: $k = \ell$.}

For the diagonal terms:
\begin{align*}
\mathrm{Cov}(A_{ik}B_{jk}, A_{i'k}B_{j'k})
&= \mathbb{E}[A_{ik}A_{i'k}]\mathbb{E}[B_{jk}B_{j'k}] 
   - \mathbb{E}[A_{ik}]\mathbb{E}[A_{i'k}]\mathbb{E}[B_{jk}]\mathbb{E}[B_{j'k}].
\end{align*}

Therefore:
\[
\mathrm{Cov}(W_{ij}, W_{i'j'}) 
= \sum_{k=1}^{r} \left(\mathbb{E}[A_{ik}A_{i'k}]\mathbb{E}[B_{jk}B_{j'k}] 
- \mathbb{E}[A_{ik}]\mathbb{E}[A_{i'k}]\mathbb{E}[B_{jk}]\mathbb{E}[B_{j'k}]\right).
\]

\subsubsection{Interpretation}

This result establishes that weight entries $W_{ij}$ and $W_{i'j'}$ are \textbf{correlated} whenever the corresponding factor entries share statistical dependencies. Specifically:

\begin{itemize}
    \item \textbf{Row correlation}: If $i = i'$ (same row), weights in that row are correlated through shared $A$ factors.
    
    \item \textbf{Column correlation}: If $j = j'$ (same column), weights in that column are correlated through shared $B$ factors.
    
    \item \textbf{Rank-modulated coupling}: The correlation strength depends on all $r$ latent dimensions, allowing the model to capture complex interdependencies.
\end{itemize}

\textbf{Contrast with Mean-Field.} A full-rank mean-field posterior imposes $\mathrm{Cov}(W_{ij}, W_{i'j'}) = 0$ for $(i,j) \neq (i',j')$, forcing all weights to be independent. The low-rank factorization naturally induces structured correlations, enabling the posterior to capture coherent variations across many weights simultaneously. This structured dependency is crucial when the true posterior exhibits correlated directions, as it provides a richer variational family than fully factorized approximations (see Figure \ref{fig:correlation}).

\section{Proofs from Theoretical Analysis}\label{app:theory_proofs}

\subsection{Loss Approximation Preliminaries}\label{app:loss_bounds}

\subsubsection{Preliminaries}\label{app:loss_bounds:prelim}

\begin{definition}[Expected Loss]
\label{def:expected_loss}
The expected loss (risk) over the data distribution $\mathcal{D}$ is given by
\[
\mathcal{L}(W) = \mathbb{E}_{(x, y) \sim \mathcal{D}} \big[ \ell(Wx, y) \big].
\]
\end{definition}

\begin{definition}[$\beta$-Smoothness]
\label{def:smoothness}
A function $\ell(z,y)$ is $\beta$-smooth (with respect to $z$) if, for all $z,z'\in\mathbb{R}^d$,
\[
\bigl\|\nabla_z \ell(z,y)-\nabla_z \ell(z',y)\bigr\|_2 \leq \beta \|z-z'\|_2.
\]
This implies the standard quadratic upper bound (descent lemma): for all $z,z'\in\mathbb{R}^d$,
\[
\ell(z',y)\le \ell(z,y)+\langle \nabla_z \ell(z,y),\,z'-z\rangle + \frac{\beta}{2}\|z'-z\|_2^2.
\]
\end{definition}

\subsubsection{Eckart-Young-Mirsky Theorem}\label{app:approx:eym}

\begin{theorem}[Eckart-Young-Mirsky Theorem \cite{eckart1936approximation}]
\label{thm:eym}
Let $W \in \mathbb{R}^{m \times n}$ have singular value decomposition
\[
W = \sum_{i=1}^{\rho} \sigma_i u_i v_i^\top,
\]
where $\rho = \mathrm{rank}(W)$, $\sigma_1 \geq \sigma_2 \geq \cdots \geq \sigma_{\rho} > 0$ are the singular values, and $\{u_i\}$, $\{v_i\}$ are the corresponding left and right singular vectors. For any $r < \rho$, define the rank-$r$ truncation
\[
W_r = \sum_{i=1}^{r} \sigma_i u_i v_i^\top.
\]
Then $W_r$ is the best rank-$r$ approximation to $W$ in both the Frobenius norm and the spectral (operator) norm:
\begin{align}
W_r &= \arg\min_{\substack{M \in \mathbb{R}^{m \times n} \\ \mathrm{rank}(M) \leq r}} \|W - M\|_F, \\
W_r &= \arg\min_{\substack{M \in \mathbb{R}^{m \times n} \\ \mathrm{rank}(M) \leq r}} \|W - M\|_2.
\end{align}
Moreover, the approximation errors are given exactly by the tail singular values:
\begin{align}
\|W - W_r\|_F &= \sqrt{\sum_{i=r+1}^{\rho} \sigma_i^2}, \label{eq:eym_frob}\\
\|W - W_r\|_2 &= \sigma_{r+1}. \label{eq:eym_spec}
\end{align}
\end{theorem}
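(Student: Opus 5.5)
The plan is to prove Theorem~\ref{thm:eym} through the singular value inequalities that follow from the Courant--Fischer min-max characterization, and then compute the errors of $W_r$ directly.

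\textbf{Step 1: errors of $W_r$.} By construction $W - W_r = \sum_{i=r+1}^{\rho}\sigma_i u_i v_i^\top$, which is itself an SVD because the $u_i$ and the $v_i$ are orthonormal. Hence its singular values are $\sigma_{r+1},\dots,\sigma_\rho$. This gives $\|W-W_r\|_2=\sigma_{r+1}$ and $\|W-W_r\|_F^2=\operatorname{tr}\bigl((W-W_r)^\top(W-W_r)\bigr)=\sum_{i>r}\sigma_i^2$, which are \eqref{eq:eym_spec} and \eqref{eq:eym_frob}. Also $\mathrm{rank}(W_r)=r$, so $W_r$ is feasible.

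\textbf{Step 2: spectral optimality.} Take any $M$ with $\mathrm{rank}(M)\le r$. Then $\ker M$ has dimension at least $n-r$, and $V_{r+1}=\mathrm{span}(v_1,\dots,v_{r+1})$ has dimension $r+1$. The two subspaces therefore meet nontrivially, so we can pick a unit vector $z$ in their intersection. For this $z$,
\[
\|(W-M)z\|_2=\|Wz\|_2=\Bigl(\sum_{i\le r+1}\sigma_i^2\langle v_i,z\rangle^2\Bigr)^{1/2}\ge\sigma_{r+1}.
\]
It follows that $\|W-M\|_2\ge\sigma_{r+1}=\|W-W_r\|_2$.

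\textbf{Step 3: Frobenius optimality.} This is the main step. I would use Weyl's inequality for singular values, $\sigma_{i+j-1}(X+Y)\le\sigma_i(X)+\sigma_j(Y)$. I will prove it by the same kernel-intersection argument as in Step 2: Courant--Fischer gives $\sigma_k(Z)=\min_{\dim S = n-k+1}\max_{z\in S,\|z\|=1}\|Zz\|$, and one intersects the subspaces attaining the bounds for $X$ and $Y$. Apply it with $X=W-M$, $Y=M$ and $j=r+1$. Since $\sigma_{r+1}(M)=0$, this yields $\sigma_{i+r}(W)\le\sigma_i(W-M)$ for every $i\ge1$. Squaring and summing over $i=1,\dots,\rho-r$ gives
\[
\|W-M\|_F^2\ge\sum_{i\ge1}\sigma_i(W-M)^2\ge\sum_{i=r+1}^{\rho}\sigma_i^2=\|W-W_r\|_F^2.
\]

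The technical obstacle is establishing Weyl's inequality cleanly, specifically getting the dimension count right when intersecting the two subspaces. Once that inequality is in hand, everything else is bookkeeping. Regarding uniqueness of the argmin: the statement's ``$\arg\min$'' should be read as ``a minimizer,'' since the minimizer is unique only when $\sigma_r>\sigma_{r+1}$ (and in the spectral norm it is generally not unique).
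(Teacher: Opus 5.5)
Your proof is correct, and it takes a more complete route than the paper. The paper's argument is explicitly a sketch that defers to Eckart--Young and Stewart--Sun.

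Your Step 1 coincides with the paper's, which also uses orthonormality of the pieces $u_iv_i^\top$.

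For optimality the two routes differ. The paper uses a single left-side intersection argument, pairing the complement of $\mathrm{range}(M)$ with $\mathrm{span}\{u_{r+1},\dots,u_\rho\}$, and concludes only $\|W-M\|_F\ge\sigma_{r+1}$. That argument has two problems:
\begin{itemize}
\item As written, the intersection is not guaranteed to be nontrivial. The span has dimension $\rho-r$, not $n-r$. The working choice is $\mathrm{span}\{u_1,\dots,u_{r+1}\}$, which is the left-hand mirror of your Step 2.
\item Even once repaired, the bound $\sigma_{r+1}$ is weaker than the tail sum $\bigl(\sum_{i>r}\sigma_i^2\bigr)^{1/2}$ that Frobenius optimality requires. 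So the paper's ``equality achieved by $W_r$'' holds only when $\rho=r+1$.
\end{itemize}
The paper's spectral paragraph likewise only evaluates $\|W-W_r\|_2$; it never shows $\|W-M\|_2\ge\sigma_{r+1}$.

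Your Step 2 supplies exactly that spectral lower bound, with a correct dimension count. Your Step 3 supplies what no single intersection can give. Weyl's inequality with $j=r+1$ yields $\sigma_{i+r}(W)\le\sigma_i(W-M)$ for every $i$, and summing the squares gives the full tail bound.

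The price is proving Weyl's inequality from Courant--Fischer. Your outline is right: the two minimizing subspaces meet in dimension at least $(n-i+1)+(n-j+1)-n=n-(i+j-1)+1$. The case $i+j-1>\min(m,n)$ is trivial under the convention $\sigma_k=0$, and in your application $i+r\le\rho$, so it never arises. Your caveat that $\arg\min$ must be read as ``a minimizer'' is also correct.
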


\begin{proof}[Proof sketch]
The proof relies on the orthogonal invariance of both norms and the fact that singular vectors form orthonormal bases. 

For the Frobenius norm: Since $\{u_i v_i^\top\}$ are orthonormal under the Frobenius inner product $\langle A, B \rangle_F = \mathrm{tr}(A^\top B)$, we have
\[
\|W - W_r\|_F^2 = \left\|\sum_{i=r+1}^{\rho} \sigma_i u_i v_i^\top\right\|_F^2 
= \sum_{i=r+1}^{\rho} \sigma_i^2,
\]
by orthonormality. To show optimality, suppose $M$ has rank at most $r$. The dimension of the space orthogonal to $M$'s range has dimension at least $m - r$, which must intersect the $(n - r)$-dimensional space spanned by $\{u_{r+1}, \ldots, u_{\rho}\}$. This intersection implies $\|W - M\|_F \geq \sigma_{r+1}$, with equality achieved by $W_r$.

For the spectral norm: Since $\|W - W_r\|_2 = \max_{\|x\|=1} \|(W - W_r)x\|$, and $W - W_r = \sum_{i=r+1}^{\rho} \sigma_i u_i v_i^\top$, the maximum is achieved when $x = v_{r+1}$, giving $\|W - W_r\|_2 = \sigma_{r+1}$.

For complete proofs, see \cite{eckart1936approximation, stewart1990matrix}.
\end{proof}

\subsubsection{Frobenius-Euclidean Consistency}\label{app:frobenius_consistency_subsec}

\begin{lemma}[Consistency of Frobenius and Euclidean Norms]
\label{lem:frobenius_consistency}
For any matrix $A \in \mathbb{R}^{m \times n}$ and vector $x \in \mathbb{R}^n$,
\[
\|Ax\|_2 \leq \|A\|_F \|x\|_2.
\]
\end{lemma}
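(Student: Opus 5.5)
The plan is to bound $\|Ax\|_2^2$ row by row and apply the Cauchy--Schwarz inequality to each row. Write $a_i^\top \in \mathbb{R}^{1\times n}$ for the $i$-th row of $A$, $i=1,\dots,m$. The $i$-th entry of $Ax$ is then the inner product $(Ax)_i = \langle a_i, x\rangle$, so
\[
\|Ax\|_2^2 = \sum_{i=1}^m \langle a_i, x\rangle^2 .
\]

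Cauchy--Schwarz in $\mathbb{R}^n$ gives $\langle a_i, x\rangle^2 \le \|a_i\|_2^2\,\|x\|_2^2$ for each $i$. Summing over $i$ and factoring out $\|x\|_2^2$, we get
\[
\|Ax\|_2^2 \le \Bigl(\sum_{i=1}^m \|a_i\|_2^2\Bigr)\|x\|_2^2 .
\]
The bracketed sum equals $\sum_{i,j} A_{ij}^2 = \|A\|_F^2$ by the definition of the Frobenius norm. Taking nonnegative square roots then yields $\|Ax\|_2 \le \|A\|_F\|x\|_2$.

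The only step that needs any care is identifying $\sum_i \|a_i\|_2^2$ with $\|A\|_F^2$, and that is immediate from the definition, so no step presents a real obstacle.

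An alternative route would be to bound $\|Ax\|_2 \le \|A\|_2\|x\|_2$ and then use $\|A\|_2 = \sigma_1(A) \le \bigl(\sum_i \sigma_i^2\bigr)^{1/2} = \|A\|_F$. This relies on the singular value facts already invoked around Theorem~\ref{thm:eym}. I would still present the elementary Cauchy--Schwarz argument, since it is self-contained. This inequality is exactly what will be used afterwards with $A = W^* - W^*_r$ and $\|x\|_2 \le R$, where it converts the Frobenius error of Theorem~\ref{thm:eym} into a bound on output perturbations.
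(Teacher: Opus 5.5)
Your proof is correct and essentially matches the paper's: both reduce the claim to Cauchy--Schwarz plus the entrywise definition of $\|A\|_F$. The only difference is that the paper expands $Ax=\sum_j x_j a_j$ over the columns of $A$, applying the triangle inequality and then Cauchy--Schwarz to the sequences $(|x_j|)$ and $(\|a_j\|_2)$, whereas you work row by row from the exact identity $\|Ax\|_2^2=\sum_i \langle a_i,x\rangle^2$, which avoids the triangle-inequality step.
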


\begin{proof}
Let $A = [a_1 \mid a_2 \mid \cdots \mid a_n]$ where $a_j \in \mathbb{R}^m$ are the columns of $A$. Then
\[
Ax = \sum_{j=1}^{n} x_j a_j.
\]

By the triangle inequality and Cauchy-Schwarz inequality:
\begin{align*}
\|Ax\|_2 
&= \left\|\sum_{j=1}^{n} x_j a_j\right\|_2 \\
&\leq \sum_{j=1}^{n} |x_j| \|a_j\|_2 
\quad \text{(triangle inequality)} \\
&\leq \left(\sum_{j=1}^{n} |x_j|^2\right)^{1/2} \left(\sum_{j=1}^{n} \|a_j\|_2^2\right)^{1/2} 
\quad \text{(Cauchy-Schwarz)} \\
&= \|x\|_2 \sqrt{\sum_{j=1}^{n} \sum_{i=1}^{m} a_{ij}^2} \\
&= \|x\|_2 \|A\|_F.
\end{align*}

Here we used the definition of the Frobenius norm:
\[
\|A\|_F = \sqrt{\sum_{i=1}^{m} \sum_{j=1}^{n} a_{ij}^2} 
= \sqrt{\sum_{j=1}^{n} \|a_j\|_2^2}.
\]

This establishes the desired inequality. See Chapters 4 and 5 of \cite{ford2014numerical} for further discussion of matrix norm relationships.
\end{proof}

\subsection{Proof of Theorem~\ref{thm:loss_bound}: Optimal Rank-$r$ Approximation}\label{app:optimal_loss_proof}

\begin{theorem}[Restating Theorem~\ref{thm:loss_bound}]
Let $W^*$ be the optimal full-rank weight matrix and $W^*_r$ be the optimal rank-$r$ matrix (via truncated SVD). Under $L$-Lipschitz loss and bounded inputs $\|x\|_2 \leq R$ a.s.,
\[
|\mathbb{E}[\ell(W^* x, y)] - \mathbb{E}[\ell(W^*_r x, y)]| 
\leq LR\sqrt{\sum_{i=r+1}^{\rho} \sigma_i^2(W^*)}.
\]
\end{theorem}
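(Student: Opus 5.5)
The plan is to reduce the statement to a pointwise comparison of losses and then integrate. First I pass the absolute value inside the expectation via Jensen's inequality (equivalently, the triangle inequality for integrals):
\[
\bigl|\mathbb{E}[\ell(W^*x,y)] - \mathbb{E}[\ell(W^*_r x,y)]\bigr| \le \mathbb{E}\bigl[\,|\ell(W^*x,y) - \ell(W^*_r x,y)|\,\bigr].
\]
This requires both expectations to be finite. I take this as implicit in the statement; it follows, for instance, if $\mathbb{E}|\ell(0,y)|<\infty$, since Lipschitzness together with $\|x\|_2\le R$ bounds $|\ell(Wx,y)|$ by $|\ell(0,y)| + L\|W\|_2R$.

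Next, for each fixed $(x,y)$ I apply assumption (i), $L$-Lipschitzness of $\ell(\cdot,y)$, with $z=W^*x$ and $z'=W^*_r x$. This gives
\[
|\ell(W^*x,y) - \ell(W^*_r x,y)| \le L\|(W^* - W^*_r)x\|_2.
\]
By Lemma~\ref{lem:frobenius_consistency},
\[
\|(W^*-W^*_r)x\|_2 \le \|W^*-W^*_r\|_F\,\|x\|_2,
\]
and assumption (ii) then bounds the right-hand side by $R\|W^*-W^*_r\|_F$ almost surely. The resulting bound no longer depends on $(x,y)$, so taking expectations leaves it unchanged.

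Finally, I invoke the Eckart--Young--Mirsky theorem (Theorem~\ref{thm:eym}), specifically the identity \eqref{eq:eym_frob}. It states that $\|W^*-W^*_r\|_F = \sqrt{\sum_{i=r+1}^{\rho}\sigma_i^2(W^*)}$, where $W^*_r$ is the truncated SVD. Substituting this yields exactly the claimed bound $LR\sqrt{\sum_{i>r}\sigma_i^2(W^*)}$. If $r\ge\rho$, then $W^*_r=W^*$ and both sides are zero, so I will state this degenerate case separately.

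I expect no real obstacle; the only delicate step is the first one, which needs integrability of the losses. One could sharpen the bound by using the spectral norm, which gives $LR\,\sigma_{r+1}$ via \eqref{eq:eym_spec}. I will note this only as a remark, because the Frobenius version is what the statement asks for.
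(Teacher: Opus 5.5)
Your proof is correct, and its core chain (the Lipschitz bound, Lemma~\ref{lem:frobenius_consistency}, $\|x\|_2\le R$, then the Frobenius identity \eqref{eq:eym_frob}) is the same as the paper's. The difference is in how you handle the absolute value.

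The paper proves only the one-sided bound $\mathcal{L}(W^*_r)-\mathcal{L}(W^*)\le LR\,\sigma_{>r}$. It then drops the absolute value by invoking global optimality of $W^*$, namely $\mathcal{L}(W^*_r)-\mathcal{L}(W^*)\ge 0$. It also asserts the pointwise inequality $\ell(W^*_rx,y)\ge\ell(W^*x,y)$, which is false in general, though it is never actually used.

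You instead move the absolute value inside the expectation and apply the two-sided Lipschitz condition. So your argument never uses that $W^*$ is a minimizer, and it holds for an arbitrary matrix and its truncated SVD. This matters because the main text describes $W^*$ only as ``learned via back-propagation,'' which does not guarantee a global risk minimizer.

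In exchange, the paper's route gives the sign information $0\le\mathcal{L}(W^*_r)-\mathcal{L}(W^*)\le LR\,\sigma_{>r}$, but only when $W^*$ really is a global minimizer. The integrability you flag is needed in both versions: the paper also splits the expectation of a difference into a difference of risks without comment.

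Your separate treatment of $r\ge\rho$ and your spectral-norm sharpening to $LR\,\sigma_{r+1}$ via \eqref{eq:eym_spec} are both valid.
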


\begin{proof}
First, since $W^*$ is the global minimizer of the loss $\mathcal{L}$, we know that for any other matrix, including $W^*_r$, the excess risk is non-negative:
\[
\mathcal{L}(W^*_r) - \mathcal{L}(W^*) \geq 0.
\]
The same holds for the pointwise loss:
\[
\ell(W^*_r x, y) - \ell(W^* x, y) \geq 0.
\]

By definition:
\[
W^* = \underset{W \in \mathbb{R}^{m \times n}}{\mathrm{argmin}} \ \mathcal{L}(W),
\quad
W^*_r = \sum_{i=1}^{r} \sigma_i u_i^* {v_i^*}^\top.
\]

From the Lipschitz property of $\ell$:
\[
\ell(W^*_r x, y) - \ell(W^* x, y) \leq L \|(W^*_r - W^*)x\|_2.
\]

Using the consistency of the Frobenius norm with the Euclidean norm (Lemma~\ref{lem:frobenius_consistency}):
\[
\|(W^*_r - W^*)x\|_2 \leq \|x\|_2 \|W^*_r - W^*\|_F.
\]

Thus:
\[
\ell(W^*_r x, y) - \ell(W^* x, y) \leq L\|x\|_2 \underbrace{\|W^*_r - W^*\|_F}_{\sqrt{\sum_{i=r+1}^{\rho} \sigma_i^2}}.
\]

By the Eckart-Young-Mirsky theorem (Theorem~\ref{thm:eym}), since $W^*_r$ is the SVD truncation of $W^*$, the Frobenius norm of the difference is exactly determined by the tail singular values:
\[
\|W^*_r - W^*\|_F = \sqrt{\sum_{i=r+1}^{\rho} \sigma_i^2(W^*)}.
\]

Taking the expectation over the data distribution $\mathcal{D}$ to move from pointwise loss $\ell$ to risk $\mathcal{L}$:
\[
\begin{aligned}
\mathbb{E}_{(x,y) \sim \mathcal{D}}
\Bigl[\ell(W^*_r x, y) - \ell(W^* x, y)\Bigr]
&\leq
\mathbb{E}_{(x,y) \sim \mathcal{D}}
\Biggl[
L\|x\|_2\,
\sqrt{\sum_{i=r+1}^{\rho} \sigma_i^2(W^*)}
\Biggr].
\end{aligned}
\]

Since $\|x\|_2 \leq R$ almost surely by assumption, and the singular values are constants with respect to the expectation:
\[
\mathbb{E}_{(x,y) \sim \mathcal{D}}\left[\ell(W^*_r x, y) - \ell(W^* x, y)\right] 
\leq LR\sqrt{\sum_{i=r+1}^{\rho} \sigma_i^2(W^*)}.
\]

By definition of the risk $\mathcal{L}(W) = \mathbb{E}_{(x,y) \sim \mathcal{D}}[\ell(Wx, y)]$:
\[
\mathcal{L}(W^*_r) - \mathcal{L}(W^*) \leq LR\sqrt{\sum_{i=r+1}^{\rho} \sigma_i^2(W^*)}.
\]

Since $\mathcal{L}(W^*_r) - \mathcal{L}(W^*) \geq 0$ (as $W^*$ is the minimizer), we have established the bound. This completes the proof.
\end{proof}

\subsection{Proof of Theorem~\ref{thm:learned_lowrank}: Learned Approximation}\label{app:learned_loss_proof}

\begin{theorem}[Restating Theorem~\ref{thm:learned_lowrank}]
Let $W^*$ be the optimal full-rank weight matrix and $W^*_r$ be the optimal rank-$r$ matrix (via truncated SVD) and let $W = AB^\top$ be the optimal learned rank-$r$ matrix. Then for any $(x,y)$,
\[
\bigl|\ell(Wx, y) - \ell(W^*x, y)\bigr| 
\leq LR\left(\|W - W^*_r\|_F + \sqrt{\sum_{i=r+1}^{\rho}\sigma_i^2(W^*)}\right).
\]
\end{theorem}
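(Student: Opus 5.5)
The plan is to reduce everything to a single Frobenius-norm perturbation bound and then split that perturbation with the triangle inequality.

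First, for a fixed pair $(x,y)$ with $\|x\|_2 \le R$, the $L$-Lipschitz assumption on $\ell(\cdot,y)$ gives
\[
\bigl|\ell(Wx,y)-\ell(W^*x,y)\bigr| \le L\,\|(W-W^*)x\|_2 .
\]
Then Lemma~\ref{lem:frobenius_consistency} bounds the right-hand side by $L\|W-W^*\|_F\|x\|_2 \le LR\,\|W-W^*\|_F$. Note that, unlike the proof of Theorem~\ref{thm:loss_bound}, no sign information from optimality of $W^*$ is needed here: the Lipschitz bound is already two-sided, so the absolute value is handled directly.

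Second, insert the truncated SVD $W^*_r$ as an intermediate point:
\[
\|W-W^*\|_F \le \|W-W^*_r\|_F + \|W^*_r-W^*\|_F .
\]
The first term is the learning error and is left as is. For the second term, invoke the Eckart--Young--Mirsky theorem (Theorem~\ref{thm:eym}, equation~\eqref{eq:eym_frob}), which gives exactly $\|W^*-W^*_r\|_F=\sqrt{\sum_{i>r}\sigma_i^2(W^*)}=\sigma_{>r}$. Combining the two steps yields the claimed bound.

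There is no genuine obstacle here. The only points to address are the following.
\begin{itemize}
\item The statement says ``for any $(x,y)$'', but the factor $R$ requires $\|x\|_2\le R$. I will read it as holding for all $(x,y)$ in the support, i.e.\ almost surely.
\item The optimality of the learned $W=AB^\top$ plays no role, so the bound holds for any rank-$r$ matrix $W$.
\item In the edge case $r\ge\rho$, we have $W^*_r=W^*$ and the tail sum is empty, so the bound reduces to the first step alone.
\end{itemize}
Taking expectations then gives the corresponding risk-level statement immediately.
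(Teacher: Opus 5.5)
Your proof is correct and uses the same ingredients as the paper's (Lipschitz continuity of $\ell$, Lemma~\ref{lem:frobenius_consistency}, the triangle inequality, and the Eckart--Young--Mirsky identity $\|W^*-W^*_r\|_F=\sigma_{>r}$), so it is essentially the paper's argument. The only difference is cosmetic: the paper adds and subtracts $\ell(W^*_r x,y)$ and applies the Lipschitz bound to each of the two loss differences, whereas you apply it once to $\|(W-W^*)x\|_2$ and then split in Frobenius norm, which passes through the slightly sharper intermediate bound $LR\|W-W^*\|_F$.
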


\begin{proof}
Add and subtract $\ell(W^*_r x,y)$ and apply the triangle inequality:
\begin{align*}
\bigl|\ell(Wx,y) - \ell(W^*x,y)\bigr|
&= \Bigl|\ell(Wx,y) - \ell(W^*_r x,y) + \ell(W^*_r x,y) - \ell(W^*x,y)\Bigr| \\
&\leq \bigl|\ell(Wx,y) - \ell(W^*_r x,y)\bigr|
   + \bigl|\ell(W^*_r x,y) - \ell(W^*x,y)\bigr|.
\end{align*}

\textbf{First term (learning error):}

By $L$-Lipschitz continuity:
\[
\bigl|\ell(Wx,y) - \ell(W^*_r x,y)\bigr|
\leq L\|(W - W^*_r)x\|_2
\leq L\|x\|_2 \|W - W^*_r\|_F
\leq LR\|W - W^*_r\|_F,
\]
where we used Lemma~\ref{lem:frobenius_consistency} and $\|x\|_2 \leq R$ a.s.

\textbf{Second term (rank bias):}

From Theorem~\ref{thm:loss_bound} (or equivalently, by Lipschitz continuity and the Eckart-Young-Mirsky theorem):
\[
\bigl|\ell(W^*_r x,y) - \ell(W^*x,y)\bigr|
\leq L\|x\|_2 \|W^*_r - W^*\|_F
= L\|x\|_2 \sqrt{\sum_{i=r+1}^{\rho} \sigma_i^2(W^*)}
\leq LR\sqrt{\sum_{i=r+1}^{\rho} \sigma_i^2(W^*)}.
\]

Combining both terms:
\[
\bigl|\ell(Wx, y) - \ell(W^*x, y)\bigr| 
\leq LR\left(\|W - W^*_r\|_F + \sqrt{\sum_{i=r+1}^{\rho}\sigma_i^2(W^*)}\right).
\]

This completes the proof.
\end{proof}

\textbf{Interpretation.} This decomposition reveals two sources of error:
\begin{enumerate}
    \item \textbf{Learning error}: $\|W - W^*_r\|_F$ measures how far the learned $W = AB^\top$ is from the optimal rank-$r$ approximation. This depends on optimization quality and can be reduced with better training procedures.
    
    \item \textbf{Rank bias (approximation error)}: $\sqrt{\sum_{i=r+1}^{\rho} \sigma_i^2(W^*)}$ is the unavoidable error from restricting to rank $r$. If $W^*$ has rapidly decaying singular values, this term is small.
\end{enumerate}

\subsection{Extension to Stochastic Weights (Bayesian Setting)}\label{app:stochastic_loss}

\begin{lemma}[Variance and Loss, Lemma 1.2.3 in \cite{nesterov2018lectures}]
\label{lem:variance_loss}
Assume $\ell(\cdot,y)$ is differentiable and $\beta$-smooth in its first argument. Let $Z$ be an $\mathbb{R}^d$-valued random variable with $\mathbb{E}\|Z\|_2^2<\infty$ and set $\mu:=\mathbb{E}[Z]$. Then
\[
\Bigl|\mathbb{E}\bigl[\ell(Z,y)\bigr]-\ell(\mathbb{E}[Z],y)\Bigr|
\le \frac{\beta}{2}\,\mathbb{E}\bigl[\|Z-\mathbb{E}[Z]\|_2^2\bigr].
\]
\end{lemma}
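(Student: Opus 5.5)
The plan is to expand $\ell(\cdot,y)$ around the fixed point $\mu=\mathbb{E}[Z]$ with a two-sided quadratic remainder, then take expectations so that the first-order term drops out. Concretely, I would first show that $\beta$-smoothness (Definition~\ref{def:smoothness}) gives, for all $z,z'\in\mathbb{R}^d$,
\[
\Bigl|\ell(z',y)-\ell(z,y)-\langle \nabla_z\ell(z,y),\,z'-z\rangle\Bigr|\le \frac{\beta}{2}\|z'-z\|_2^2 .
\]
The paper only records the upper half (the descent lemma), so I would prove both halves together. Set $\varphi(t)=\ell(z+t(z'-z),y)$ on $[0,1]$. The fundamental theorem of calculus gives
\[
\ell(z',y)-\ell(z,y)-\langle\nabla_z\ell(z,y),z'-z\rangle=\int_0^1\bigl\langle \nabla_z\ell(z+t(z'-z),y)-\nabla_z\ell(z,y),\,z'-z\bigr\rangle\,dt .
\]
Cauchy--Schwarz and the Lipschitz bound on the gradient bound the integrand in absolute value by $\beta t\|z'-z\|_2^2$. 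Integrating $t$ over $[0,1]$ gives the factor $\tfrac12$. No convexity of $\ell$ is needed.

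Next I would apply this with $z=\mu$ and $z'=Z$, which holds pointwise for every realization:
\[
\ell(\mu,y)+\langle g,Z-\mu\rangle-\tfrac{\beta}{2}\|Z-\mu\|_2^2\;\le\;\ell(Z,y)\;\le\;\ell(\mu,y)+\langle g,Z-\mu\rangle+\tfrac{\beta}{2}\|Z-\mu\|_2^2,
\]
where $g:=\nabla_z\ell(\mu,y)$ is a deterministic vector. Both outer bounds are integrable: $\mathbb{E}\|Z\|_2^2<\infty$ gives $\mathbb{E}\|Z-\mu\|_2^2<\infty$, and the linear term is integrable because $\mathbb{E}\|Z\|_2<\infty$. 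Hence $\ell(Z,y)$ is sandwiched between integrable variables, so it is itself integrable and $\mathbb{E}[\ell(Z,y)]$ is well defined and finite. Taking expectations, linearity gives $\mathbb{E}\langle g,Z-\mu\rangle=\langle g,\mathbb{E}[Z]-\mu\rangle=0$. What remains is
\[
-\tfrac{\beta}{2}\mathbb{E}\|Z-\mu\|_2^2\le \mathbb{E}[\ell(Z,y)]-\ell(\mu,y)\le \tfrac{\beta}{2}\mathbb{E}\|Z-\mu\|_2^2,
\]
which is the claimed inequality.

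The main obstacle is the lower half of the quadratic sandwich. The descent lemma stated in the paper only bounds $\ell$ from above, and without convexity the lower bound must come from the gradient-Lipschitz argument in the first step, so that step is where I would be careful. The secondary point is the integrability of $\ell(Z,y)$, which the sandwich settles directly. In the intended application $Z=Wx$ with $W\sim q_W$, so $\mathbb{E}\|Z-\mathbb{E}Z\|_2^2$ becomes the predictive variance induced by the factor posteriors. That quantity can then be combined with Theorem~\ref{thm:learned_lowrank} applied at the posterior mean.
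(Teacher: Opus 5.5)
Your proof is correct and follows the same route as the paper: apply a two-sided quadratic bound around $\mu$ pointwise in $Z$, then take expectations so that the linear term $\langle \nabla_z\ell(\mu,y), Z-\mu\rangle$ vanishes. The only difference is one of rigor: you derive the lower half of that bound from the gradient-Lipschitz property via the integral form of Taylor's theorem, and you check that $\ell(Z,y)$ is integrable, whereas the paper simply invokes the two-sided inequality as ``the descent lemma,'' even though its Definition~\ref{def:smoothness} records only the upper half.
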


\begin{proof}
Since $\ell$ is $\beta$-smooth, by the descent lemma (Definition~\ref{def:smoothness}):
\[
\Bigl|
\ell(Z,y)-\ell(\mathbb{E}[Z],y)
-\bigl\langle \nabla_z \ell\bigl(\mathbb{E}[Z],y\bigr),\, Z-\mathbb{E}[Z]\bigr\rangle
\Bigr|
\le \frac{\beta}{2}\,\|Z-\mathbb{E}[Z]\|_2^{2}.
\]

This implies:
\[
-\frac{\beta}{2}\,\|Z-\mathbb{E}[Z]\|_2^{2}
\le
\ell(Z,y)-\ell(\mathbb{E}[Z],y)
-\bigl\langle \nabla_z \ell\bigl(\mathbb{E}[Z],y\bigr),\, Z-\mathbb{E}[Z]\bigr\rangle
\le
\frac{\beta}{2}\,\|Z-\mathbb{E}[Z]\|_2^{2}.
\]

Taking expectations (using linearity) and noting that $\mathbb{E}[Z-\mathbb{E}[Z]]=0$:
\[
-\frac{\beta}{2}\,\mathbb{E}[\|Z-\mathbb{E}[Z]\|_2^{2}]
\le
\mathbb{E}\bigl[\ell(Z,y)\bigr]-\ell(\mathbb{E}[Z],y)
\le
\frac{\beta}{2}\,\mathbb{E}[\|Z-\mathbb{E}[Z]\|_2^{2}].
\]

Therefore:
\[
\Bigl|\mathbb{E}\bigl[\ell(Z,y)\bigr]-\ell(\mathbb{E}[Z],y)\Bigr|
\le
\frac{\beta}{2}\,\mathbb{E}[\|Z-\mathbb{E}[Z]\|_2^{2}].
\]
\end{proof}

\begin{theorem}[Expected Loss Difference with Random Weights]
\label{thm:stochastic_loss}
Let $W_{\mathrm{full}} \sim q_{\mathrm{full}}$ and $AB^\top \sim q_{\mathrm{lr}}$ be random weight matrices with means $\mu_f = \mathbb{E}[W_{\mathrm{full}}]$ and $\mu_r = \mathbb{E}[AB^\top]$. Assume $\ell$ is $L$-Lipschitz and $\beta$-smooth. Define
\[
\Delta := \Bigl|\mathbb{E}\bigl[\ell(W_{\mathrm{full}}x, y)\bigr] 
- \mathbb{E}\bigl[\ell(AB^\top x, y)\bigr]\Bigr|.
\]
Then
\[
\Delta \leq L\|x\|_2 \|\mu_f - \mu_r\|_F 
+ \frac{\beta}{2}\left(\mathrm{Var}(W_{\mathrm{full}}x) + \mathrm{Var}(AB^\top x)\right).
\]
\end{theorem}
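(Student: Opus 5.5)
The plan is a three-term telescoping around the two deterministic predictions $\mu_f x$ and $\mu_r x$, with $x$ and $y$ held fixed throughout. Set $Z_f := W_{\mathrm{full}}x$ and $Z_r := AB^\top x$. By linearity of expectation, $\mathbb{E}[Z_f] = \mu_f x$ and $\mathbb{E}[Z_r] = \mu_r x$. Adding and subtracting $\ell(\mu_f x,y)$ and $\ell(\mu_r x,y)$ and applying the triangle inequality gives
\[
\Delta \le \bigl|\mathbb{E}[\ell(Z_f,y)] - \ell(\mu_f x,y)\bigr| + \bigl|\ell(\mu_f x,y) - \ell(\mu_r x,y)\bigr| + \bigl|\ell(\mu_r x,y) - \mathbb{E}[\ell(Z_r,y)]\bigr|.
\]
The first and third terms are "Jensen gaps" from randomness. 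The middle term is a deterministic mean-mismatch term.

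For the middle term I will reuse the argument from the proofs of Theorem~\ref{thm:loss_bound} and Theorem~\ref{thm:learned_lowrank}. The $L$-Lipschitz property gives $|\ell(\mu_f x,y)-\ell(\mu_r x,y)| \le L\|(\mu_f-\mu_r)x\|_2$. Lemma~\ref{lem:frobenius_consistency} then bounds this by $L\|x\|_2\|\mu_f-\mu_r\|_F$, which is exactly the first summand of the claim.

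For the two gap terms I will apply Lemma~\ref{lem:variance_loss} with $Z = Z_f$ and then $Z = Z_r$, both in $\mathbb{R}^m$. This yields bounds $\frac{\beta}{2}\mathbb{E}\|Z_f - \mu_f x\|_2^2$ and $\frac{\beta}{2}\mathbb{E}\|Z_r-\mu_r x\|_2^2$. I will then interpret $\mathrm{Var}(Wx)$ in the statement as the total variance $\mathbb{E}\|Wx-\mathbb{E}[Wx]\|_2^2 = \mathrm{tr}\,\mathrm{Cov}(Wx)$, and state this convention explicitly. Summing the three pieces gives the bound.

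The main obstacle is verifying the hypotheses of Lemma~\ref{lem:variance_loss}, namely $\mathbb{E}\|Z\|_2^2<\infty$. For $Z_f$ this is immediate if $q_{\mathrm{full}}$ is Gaussian. For $Z_r$, each entry of $AB^\top x$ is a sum of products $A_{ik}B_{jk}x_j$ of independent Gaussians. Independence of $A$ and $B$ lets the second moment factor as $\mathbb{E}[A_{ik}^2]\,\mathbb{E}[B_{jk}^2]<\infty$, and cross terms are handled by Cauchy--Schwarz, so the second moment is finite. More generally, I will simply assume that both variational families have finite second moments, and add this as a standing assumption. I also need to note that $\mu_r = \mathbb{E}[A]\,\mathbb{E}[B]^\top$ under independence, although this is not needed for the bound itself. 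Differentiability of $\ell$ is implicit in the $\beta$-smoothness assumption.
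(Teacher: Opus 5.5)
Your proposal is correct and follows essentially the same route as the paper: the same three-term telescoping around $\ell(\mathbb{E}[Z_f],y)$ and $\ell(\mathbb{E}[Z_r],y)$, with Lemma~\ref{lem:variance_loss} bounding the two Jensen-gap terms, and Lipschitz continuity plus the Frobenius--Euclidean consistency bound handling the mean-mismatch term. The paper also reads $\mathrm{Var}(Wx)$ as $\mathbb{E}\|Wx-\mathbb{E}[Wx]\|_2^2$ and needs finite second moments, but leaves both implicit, so stating them explicitly as you do only makes the argument more careful.
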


\begin{proof}
Let $z_f = W_{\mathrm{full}}x$ and $z_r = AB^\top x$. We decompose:
\begin{align*}
\Delta
&= \Bigl|\mathbb{E}\bigl[\ell(z_f,y)\bigr]-\mathbb{E}\bigl[\ell(z_r,y)\bigr]\Bigr| \\
&= \Bigl|\mathbb{E}\bigl[\ell(z_f,y)\bigr]-\ell(\mathbb{E}[z_f],y)
      +\ell(\mathbb{E}[z_f],y)-\ell(\mathbb{E}[z_r],y)
      +\ell(\mathbb{E}[z_r],y)-\mathbb{E}\bigl[\ell(z_r,y)\bigr]\Bigr| \\
&\le \Bigl|\mathbb{E}\bigl[\ell(z_f,y)\bigr]-\ell(\mathbb{E}[z_f],y)\Bigr|
   + \Bigl|\ell(\mathbb{E}[z_f],y)-\ell(\mathbb{E}[z_r],y)\Bigr|
   + \Bigl|\ell(\mathbb{E}[z_r],y)-\mathbb{E}\bigl[\ell(z_r,y)\bigr]\Bigr|.
\end{align*}

\textbf{Term 1 and Term 3:} By Lemma~\ref{lem:variance_loss}:
\begin{align*}
\Bigl|\mathbb{E}\bigl[\ell(z_f,y)\bigr]-\ell(\mathbb{E}[z_f],y)\Bigr|
&\le \frac{\beta}{2}\,\mathbb{E}[\|z_f - \mathbb{E}[z_f]\|_2^2]
= \frac{\beta}{2}\,\mathrm{Var}(W_{\mathrm{full}}x), \\
\Bigl|\ell(\mathbb{E}[z_r],y)-\mathbb{E}\bigl[\ell(z_r,y)\bigr]\Bigr|
&\le \frac{\beta}{2}\,\mathrm{Var}(AB^\top x).
\end{align*}

\textbf{Term 2:} By $L$-Lipschitz continuity and submultiplicativity:
\[
\bigl|\ell(\mathbb{E}[z_f],y)-\ell(\mathbb{E}[z_r],y)\bigr|
\le L\,\|\mathbb{E}[z_f] - \mathbb{E}[z_r]\|_2
\le L\,\|x\|_2\,\|\mu_f-\mu_r\|_F.
\]

Combining all three terms:
\[
\Delta
\le
L\|x\|_2\,\|\mu_f-\mu_r\|_F
+\frac{\beta}{2}\Bigl(\mathrm{Var}(W_{\mathrm{full}}x)+\mathrm{Var}(AB^\top x)\Bigr).
\]
\end{proof}

\textbf{Remark.} This variance-based bound shows that posterior uncertainty contributes an additional $O(\beta \cdot \mathrm{Var})$ term beyond the deterministic approximation error. However, for practical rank selection, Theorems~\ref{thm:loss_bound} and~\ref{thm:learned_lowrank} provide more directly actionable guidance.
\subsection{Analysis of Common Loss Functions}\label{app:loss_verification}

We verify that standard loss functions used in practice satisfy the assumptions of our loss approximation theorems (Theorems~\ref{thm:loss_bound} and \ref{thm:learned_lowrank}). The key requirement is that $\ell(\cdot, y)$ is $L$-Lipschitz in its first argument.

\subsubsection{Classification Losses}

The following standard classification losses are globally Lipschitz:

\paragraph{Cross-Entropy with Softmax.}

For multi-class classification with $C$ classes, let $z \in \mathbb{R}^C$ be the logits and $y \in \{1, \ldots, C\}$ be the true class. The cross-entropy loss is
\[
\ell(z, y) = -z_y + \log\left(\sum_{k=1}^{C} e^{z_k}\right).
\]

\begin{proposition}
The cross-entropy loss with softmax is $\sqrt{2}$-Lipschitz.
\end{proposition}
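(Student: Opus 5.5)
The plan is to bound the gradient of $\ell(\cdot,y)$ uniformly in $z$ and then apply the mean value theorem. Since $\ell(z,y) = -z_y + \log\sum_k e^{z_k}$ is smooth in $z$, its gradient is
\[
\nabla_z \ell(z,y) = p(z) - e_y,
\]
where $p(z)\in\mathbb{R}^C$ is the softmax vector, $p_k(z) = e^{z_k}/\sum_{j} e^{z_j}$, and $e_y$ is the $y$-th standard basis vector. For any $z,z'$, the mean value theorem along the segment $z_t = z' + t(z-z')$ gives
\[
|\ell(z,y)-\ell(z',y)| = \bigl|\langle \nabla_z\ell(z_t,y), z-z'\rangle\bigr| \le \sup_{t}\|\nabla_z \ell(z_t,y)\|_2\,\|z-z'\|_2
\]
for some $t\in[0,1]$. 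Hence it suffices to show $\sup_z \|p(z)-e_y\|_2 \le \sqrt{2}$.

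For this supremum, expand the squared norm:
\[
\|p-e_y\|_2^2 = (1-p_y)^2 + \sum_{k\ne y} p_k^2.
\]
Every $p_k$ lies in $[0,1]$ and the entries sum to one, so $\sum_{k\ne y} p_k^2 \le \bigl(\sum_{k\ne y} p_k\bigr)^2 = (1-p_y)^2$. This yields
\[
\|p-e_y\|_2^2 \le 2(1-p_y)^2 \le 2,
\]
so the gradient norm is at most $\sqrt{2}$. The constant is therefore a uniform bound over all logits and all labels, and the Lipschitz claim follows with $L=\sqrt{2}$. The bound is only approached in the limit where $p_y\to 0$ and the remaining mass concentrates on a single class, which explains why $\sqrt2$ rather than $1$ appears.

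The main obstacle, mild as it is, lies in the norm-bounding step. One must use the simplex constraint to control $\sum_{k\ne y}p_k^2$ by $(1-p_y)^2$ instead of crudely bounding each coordinate. Otherwise a $C$-dependent constant would appear.
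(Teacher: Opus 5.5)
Your proposal is correct and follows the same argument as the paper. You compute $\nabla_z \ell(z,y) = p(z) - e_y$ and bound its Euclidean norm by $\sqrt{2}$ as the distance between two points of the probability simplex; your expansion $\|p-e_y\|_2^2 \le 2(1-p_y)^2$ and your explicit mean value step make rigorous what the paper states in one line.
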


\begin{proof}
The gradient is $\nabla_z \ell(z, y) = \sigma(z) - e_y$, where $\sigma(z)$ is the softmax and $e_y$ is the one-hot encoding. Since $\|\sigma(z) - e_y\|_2 \leq \sqrt{2}$ (both are probability vectors, so the maximum distance is $\sqrt{2}$), we have $\|\nabla_z \ell\|_2 \leq \sqrt{2}$, establishing the Lipschitz constant.
\end{proof}

\paragraph{Binary Cross-Entropy.}

For binary classification with sigmoid activation $\sigma(z) = 1/(1 + e^{-z})$:
\[
\ell(z, y) = -y\log(\sigma(z)) - (1-y)\log(1-\sigma(z)).
\]

The gradient is $\nabla_z \ell = \sigma(z) - y \in [-1, 1]$, so the loss is $1$-Lipschitz.

\paragraph{Hinge Loss.}

For binary classification with $y \in \{-1, +1\}$:
\[
\ell(z, y) = \max(0, 1 - yz).
\]

The gradient is $\nabla_z \ell \in \{0, -y\}$, giving $L = 1$.

\subsubsection{Regression Losses}

\paragraph{Mean Absolute Error (MAE).}

For scalar-output regression,
\[
\ell(z, y) = |z - y|.
\]

This loss is $1$-Lipschitz with respect to the Euclidean norm:
\[
|\ell(z, y) - \ell(z', y)| \leq |z - z'| = \|z - z'\|_2.
\]

\paragraph{Mean Squared Error (MSE).}

\[
\ell(z, y) = \frac{1}{2}\|z - y\|_2^2.
\]

\textbf{Important:} MSE is \emph{not} globally Lipschitz. The gradient is $\nabla_z \ell = z - y$, which is unbounded as $\|z - y\| \to \infty$. However, under the assumption that outputs are bounded (i.e., $\|z\|_2, \|y\|_2 \leq B$ for some constant $B$), MSE becomes $(2B)$-Lipschitz on the restricted domain.

\begin{remark}[Bounded Outputs for MSE]
In practice, neural network outputs are often implicitly bounded:
\begin{itemize}
    \item Activation functions like tanh, sigmoid impose explicit bounds
    \item Numerical stability requires finite outputs
    \item Training with gradient clipping effectively bounds the domain
\end{itemize}
Under such conditions, MSE satisfies our Lipschitz assumption with $L = 2B$ where $B$ is the output bound. For unbounded regression, MAE or Huber loss should be used instead.
\end{remark}

\subsubsection{Smoothness Properties}

For the Bayesian extension (Theorem~\ref{thm:stochastic_loss}), we also require $\beta$-smoothness.

\paragraph{MSE is 1-smooth.}

For
\[
\ell(z, y) = \frac{1}{2}\|z - y\|_2^2,
\]
the Hessian is
\[
\nabla_z^2 \ell(z,y) = I,
\]
so
\[
\|\nabla_z^2 \ell(z,y)\|_2 = 1.
\]
Therefore, under Definition~\ref{def:smoothness}, MSE is $1$-smooth.

\paragraph{Cross-Entropy is smooth.}

The Hessian is the covariance matrix of the softmax distribution, which has eigenvalues in $[0, 1/4]$, giving $\beta = O(1)$.

\subsubsection{Interpretation of Lipschitz and Smoothness}

\begin{remark}[Lipschitz Continuity]
A loss $\ell(\hat{y}, y)$ being $L$-Lipschitz (with respect to $\hat{y}$) means that small changes in predictions lead to small changes in loss:
\[
|\ell(\hat{y}, y) - \ell(\hat{y}', y)| \leq L\|\hat{y} - \hat{y}'\|.
\]
When differentiable, this is equivalent to bounded gradients:
\[
\|\nabla_{\hat{y}}\ell(\hat{y}, y)\| \leq L \quad \text{for all } \hat{y}.
\]

\textbf{Intuition:} Lipschitz continuity prevents the loss from having arbitrarily steep slopes. Non-Lipschitz losses (like unbounded MSE) can produce very large gradients when predictions are far from targets, potentially causing optimization instability.
\end{remark}

\begin{remark}[$\beta$-Smoothness]
A loss being $\beta$-smooth means the gradient itself cannot change too rapidly:
\[
\|\nabla_{\hat{y}}\ell(\hat{y}, y) - \nabla_{\hat{y}}\ell(\hat{y}', y)\| 
\leq \beta\|\hat{y} - \hat{y}'\|.
\]
This is equivalent to bounded curvature: $\|\nabla^2_{\hat{y}}\ell\|_2 \leq \beta$.

\textbf{Consequence:} Smoothness ensures the loss can be upper-bounded by a quadratic approximation:
\[
\ell(\hat{y}', y) \leq \ell(\hat{y}, y) 
+ \langle\nabla_{\hat{y}}\ell(\hat{y}, y), \hat{y}' - \hat{y}\rangle 
+ \frac{\beta}{2}\|\hat{y}' - \hat{y}\|^2.
\]
This is the \emph{descent lemma}, which is essential for optimization: taking gradient steps with learning rate $\eta \leq 1/\beta$ guarantees sufficient descent without overshooting.
\end{remark}

\begin{remark}[Lipschitz vs. Smoothness]
These are complementary properties:
\begin{itemize}
    \item \textbf{Lipschitz}: Gradient magnitude is bounded $\|\nabla \ell\| \leq L$
    \item \textbf{Smooth}: Gradient variation is bounded $\|\nabla \ell - \nabla \ell'\| \leq \beta\|x - x'\|$
\end{itemize}

A function can be Lipschitz but not smooth (e.g., $|x|$ has bounded gradient but discontinuous derivative), or smooth but not Lipschitz (e.g., $x^2$ has unbounded gradient but bounded curvature on compact sets).
\end{remark}

\subsubsection{Summary Table}

\begin{table}[H]
\centering
\caption{Lipschitz and smoothness constants for common loss functions.}
\label{tab:loss_constants}
\begin{tabular}{lccc}
\toprule
\textbf{Loss Function} & \textbf{$L$ (Lipschitz)} & \textbf{$\beta$ (Smooth)} & \textbf{Notes} \\
\midrule
MAE & 1 & N/A & Not differentiable \\
MSE & $2B$ & 1 & Requires bounded outputs $\|z\|, \|y\| \leq B$ \\

Cross-Entropy & $\sqrt{2}$ & $O(1)$ & Global for softmax \\
Binary Cross-Entropy & 1 & $O(1)$ & Global for sigmoid \\
Hinge Loss & 1 & N/A & Not differentiable everywhere \\
\bottomrule
\end{tabular}
\end{table}

\section{PAC-Bayes Generalization Bounds}\label{app:pacbayes}

\subsection{Background: PAC-Bayes Theory}\label{app:pacbayes:background}

The PAC-Bayes framework provides generalization bounds that depend on the complexity of the posterior distribution, measured by KL divergence from the prior.

\begin{definition}[Risk and Empirical Risk]
\label{def:risk}
Let $\mathcal{D}$ be a distribution over $\mathcal{X} \times \mathcal{Y}$, and let $\ell(\theta, x, y) \in [0, 1]$ be a bounded loss function. For a posterior $Q$ over parameters $\Theta$:

\textbf{True risk (generalization error):}
\[
L(Q) = \mathbb{E}_{\theta \sim Q}\left[\mathbb{E}_{(x,y) \sim \mathcal{D}}[\ell(\theta, x, y)]\right].
\]

\textbf{Empirical risk (training error):}
\[
\hat{L}(Q) = \mathbb{E}_{\theta \sim Q}\left[\frac{1}{N}\sum_{i=1}^{N} \ell(\theta, x_i, y_i)\right].
\]
\end{definition}

The PAC-Bayes bound relates true risk to empirical risk plus a complexity penalty.

\begin{theorem}[McAllester's Inequality \cite{mcallester1998some}]
\label{thm:pacbayes_standard}
Let $P$ be a fixed prior distribution over $\Theta$. For any $\delta \in (0,1)$, with probability at least $1-\delta$ over the random draw of training set $S \sim \mathcal{D}^N$,
\[
L(Q) \leq \hat{L}(Q) + \sqrt{\frac{\mathrm{KL}(Q \| P) + \ln\left(\frac{2\sqrt{N}}{\delta}\right)}{2N}}.
\]
\end{theorem}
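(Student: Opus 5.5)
The plan is the standard three-ingredient argument: a moment bound for a single parameter, Markov's inequality over the sample, and a change of measure from the prior $P$ to an arbitrary posterior $Q$. Throughout, write $L(\theta)=\mathbb{E}_{(x,y)\sim\mathcal{D}}[\ell(\theta,x,y)]$ and $\hat L(\theta)=\frac1N\sum_i \ell(\theta,x_i,y_i)$, so that $L(Q)=\mathbb{E}_{\theta\sim Q}L(\theta)$ and $\hat L(Q)=\mathbb{E}_{\theta\sim Q}\hat L(\theta)$ by Definition~\ref{def:risk}. If $Q\not\ll P$ then $\mathrm{KL}(Q\|P)=\infty$ and the bound is trivial, so we may assume $Q\ll P$.

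\textbf{Step 1 (single $\theta$).} Fix $\theta$ and let $\mathrm{kl}(\hat L\|L)$ denote the binary relative entropy. The target is
\[
\mathbb{E}_{S}\bigl[\exp\bigl(N\,\mathrm{kl}(\hat L(\theta)\|L(\theta))\bigr)\bigr]\le 2\sqrt N .
\]
The losses lie in $[0,1]$ and $u\mapsto e^{N\mathrm{kl}(u\|p)}$ is convex. By Maurer's convexity argument, the expectation is therefore maximized when the losses are Bernoulli$(L(\theta))$. The Bernoulli case is an explicit binomial sum, which Stirling-type estimates bound by $2\sqrt N$. Pinsker's inequality, $\mathrm{kl}(\hat L\|L)\ge 2(\hat L-L)^2$, then gives
\[
\mathbb{E}_S\bigl[e^{2N\Delta(\theta)^2}\bigr]\le 2\sqrt N,\qquad \Delta(\theta):=L(\theta)-\hat L(\theta).
\]

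\textbf{Step 2 (Markov).} Since $P$ does not depend on $S$, Fubini gives $\mathbb{E}_S\mathbb{E}_{\theta\sim P}[e^{2N\Delta(\theta)^2}]\le 2\sqrt N$. Markov's inequality then shows that, with probability at least $1-\delta$ over $S$,
\[
\mathbb{E}_{\theta\sim P}\bigl[e^{2N\Delta(\theta)^2}\bigr]\le \frac{2\sqrt N}{\delta}.
\]

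\textbf{Step 3 (change of measure).} On this event, apply the Donsker--Varadhan variational formula to the function $\theta\mapsto 2N\Delta(\theta)^2$. This holds simultaneously for all $Q$ and gives
\[
\mathbb{E}_{Q}[2N\Delta^2]\le \mathrm{KL}(Q\|P)+\ln \mathbb{E}_P\bigl[e^{2N\Delta^2}\bigr]\le \mathrm{KL}(Q\|P)+\ln\frac{2\sqrt N}{\delta}.
\]

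\textbf{Step 4 (Jensen and rearrangement).} By Jensen's inequality, $(\mathbb{E}_Q\Delta)^2\le \mathbb{E}_Q\Delta^2$. Note that $\mathbb{E}_Q\Delta=L(Q)-\hat L(Q)$. Dividing by $2N$ and taking square roots gives the stated inequality.

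The main obstacle is Step 1, specifically the constant $2\sqrt N$. Hoeffding's sub-Gaussian bound alone does not control $\mathbb{E}\,e^{2N\Delta^2}$ at the critical exponent $2N$. My first attempt is therefore Maurer's reduction to the Bernoulli case followed by the binomial computation. If that becomes unwieldy, a fallback is the slightly weaker exponent $2(N-1)\Delta^2$, whose moment generating function is controlled by integrating the Hoeffding tail. This only changes the constants inside the logarithm and the denominator $2N$.
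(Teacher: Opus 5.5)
The paper gives no proof of this statement: it is quoted from the literature and used as a black box in Step~1 of the proof of Theorem~\ref{thm:lr_pacbayes}. Your argument is a correct, self-contained derivation. It is also essentially the argument that produces this particular form of the bound. The constants $\ln(2\sqrt N/\delta)$ and $2N$ come from Maurer's refinement, namely the moment bound $\mathbb{E}_S\, e^{N\,\mathrm{kl}(\hat L(\theta)\|L(\theta))}\le \xi(N):=\sum_{k=0}^{N}\binom{N}{k}(k/N)^k(1-k/N)^{N-k}\le 2\sqrt N$, followed by Pinsker. McAllester's original theorem instead carries a full $\ln N$ (rather than $\tfrac12\ln N$) in the numerator.

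Relative to the citation, your proof buys two things.
\begin{itemize}
\item Step~3 makes explicit that the inequality holds simultaneously for all $Q$ on a single event of probability $1-\delta$. This is what licenses applying it to the data-dependent learned posterior $Q_{\mathrm{LR}}$, and the paper's statement leaves this quantifier implicit.
\item You could apply Donsker--Varadhan to $N\,\mathrm{kl}(\hat L(\theta)\|L(\theta))$ and use joint convexity of $\mathrm{kl}$ under $\mathbb{E}_Q$ before invoking Pinsker, rather than Pinsker first and then Jensen on $\Delta^2$. That gives the sharper $\mathrm{kl}(\hat L(Q)\|L(Q))\le(\mathrm{KL}(Q\|P)+\ln(2\sqrt N/\delta))/N$ at no extra cost, with the stated inequality as a corollary.
\end{itemize}

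Two details need tightening.
\begin{itemize}
\item The statement places no restriction on $N$, so $\xi(N)\le 2\sqrt N$ must hold for every $N\ge 1$. Stirling-based estimates are typically carried out only for $N\ge 8$, and there is no slack at the bottom: $\xi(1)=2=2\sqrt{1}$. Check the small cases directly ($\xi(2)=5/2$, $\xi(3)=26/9$, \dots).
\item Your fallback does not prove the statement. Integrating Hoeffding's tail gives $\mathbb{E}_S e^{2(N-1)\Delta^2}\le 2N-1$. That yields $\ln((2N-1)/\delta)$ over $2(N-1)$, which is strictly weaker than the claimed bound for every $N\ge 2$ and vacuous at $N=1$. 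The sharp $2\sqrt N$ moment bound from the Bernoulli reduction is what the stated constants require.
\end{itemize}
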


This bound decomposes generalization error into:
\begin{itemize}
    \item \textbf{Data fit}: $\hat{L}(Q)$ measures training performance
    \item \textbf{Complexity penalty}: $\mathrm{KL}(Q \| P)$ measures deviation from prior
\end{itemize}

\subsection{KL Divergence for Factorized Distributions}\label{app:pacbayes:kl}

\begin{lemma}[KL for Factorized Distributions]
\label{lem:kl_factorized}
Assume fully factorized posterior and prior:
\[
Q(\theta) = \prod_{i=1}^{D} q_i(\theta_i), 
\qquad 
P(\theta) = \prod_{i=1}^{D} p_i(\theta_i).
\]
Define $C_{\max} = \max_i \mathrm{KL}(q_i \| p_i)$. Then
\[
\mathrm{KL}(Q \| P) \leq C_{\max} \cdot D.
\]
\end{lemma}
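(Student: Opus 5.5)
The plan is to combine the chain rule (additivity) of KL divergence for product measures with a termwise bound by the maximum. The only caveat is that each coordinate divergence must be finite for the bound to be meaningful; otherwise both sides are $+\infty$ and the inequality holds trivially.

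\textbf{Step 1 (additivity).} Assume each $q_i$ is absolutely continuous with respect to $p_i$. Then $Q \ll P$, and the Radon--Nikodym derivative factorizes as $\frac{dQ}{dP}(\theta)=\prod_{i=1}^D \frac{dq_i}{dp_i}(\theta_i)$. Taking logarithms gives
\[
\log\frac{dQ}{dP}(\theta)=\sum_{i=1}^D \log\frac{dq_i}{dp_i}(\theta_i).
\]
We integrate against $Q=\prod_i q_i$. Each summand depends only on $\theta_i$, so Fubini/Tonelli reduces its $Q$-expectation to the $q_i$-expectation. This yields $\mathrm{KL}(Q\|P)=\sum_{i=1}^D \mathrm{KL}(q_i\|p_i)$.

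If some $q_i \not\ll p_i$, then $Q\not\ll P$. In that case both $\mathrm{KL}(Q\|P)$ and $C_{\max}$ equal $+\infty$, and the claim holds trivially.

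\textbf{Step 2 (termwise bound).} Each summand satisfies $0\le \mathrm{KL}(q_i\|p_i)\le C_{\max}$ by the definition of $C_{\max}$. Summing the $D$ terms gives $\mathrm{KL}(Q\|P)\le D\,C_{\max}$.

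\textbf{Main obstacle.} The only delicate point is justifying the interchange in Step 1 when the log-density ratios are not integrable. I would handle it by splitting each $\log\frac{dq_i}{dp_i}$ into positive and negative parts. The negative part of each is $q_i$-integrable, because $\int (\log\frac{dq_i}{dp_i})^- \,dq_i \le \int \frac{dp_i}{dq_i}\,dq_i\le 1$ via $x\log x\ge x-1$. This lets Tonelli apply to each part separately, so the sum identity holds in $[0,\infty]$.

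In the intended application, $Q$ is the mean-field posterior over the $D_{\text{LR}}=r(m+n)$ factor entries. Each $q_i$ is Gaussian and each $p_i$ is a scale mixture of Gaussians with full support, so absolute continuity holds and every $\mathrm{KL}(q_i\|p_i)$ is finite. The bound therefore reads $\mathrm{KL}(Q\|P)\le C_{\max}\,r(m+n)$, which is what feeds the ratio in Theorem~\ref{thm:lr_pacbayes}.
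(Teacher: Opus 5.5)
Your proof is correct and follows the same route as the paper. You expand $\log(Q/P)$ as a sum of coordinatewise log-ratios, use the product structure to reduce each term's $Q$-expectation to $\mathrm{KL}(q_i\|p_i)$, and bound each of the $D$ terms by $C_{\max}$; your extra handling of absolute continuity and of the negative parts (so the additivity identity holds in $[0,\infty]$) is a valid refinement of the paper's density-based computation, which tacitly assumes every term is finite.
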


\begin{proof}
By factorization:
\begin{align*}
\mathrm{KL}(Q \| P) 
&= \mathbb{E}_{\theta \sim Q}\left[\ln\frac{Q(\theta)}{P(\theta)}\right] \\
&= \mathbb{E}_{\theta \sim Q}\left[\ln\frac{\prod_{i=1}^{D} q_i(\theta_i)}{\prod_{i=1}^{D} p_i(\theta_i)}\right] \\
&= \mathbb{E}_{\theta \sim Q}\left[\sum_{i=1}^{D} \ln\frac{q_i(\theta_i)}{p_i(\theta_i)}\right] \\
&= \sum_{i=1}^{D} \mathbb{E}_{\theta \sim Q}\left[\ln\frac{q_i(\theta_i)}{p_i(\theta_i)}\right].
\end{align*}

For the $i$-th term, by independence:
\begin{align*}
\mathbb{E}_{\theta \sim Q}\left[\ln\frac{q_i(\theta_i)}{p_i(\theta_i)}\right]
&= \int \cdots \int \ln\frac{q_i(\theta_i)}{p_i(\theta_i)} \prod_{j=1}^{D} q_j(\theta_j) \, d\theta_1 \cdots d\theta_D \\
&= \int \ln\frac{q_i(\theta_i)}{p_i(\theta_i)} q_i(\theta_i) \, d\theta_i 
   \cdot \prod_{j \neq i} \underbrace{\int q_j(\theta_j) \, d\theta_j}_{=1} \\
&= \mathrm{KL}(q_i \| p_i).
\end{align*}

Therefore:
\[
\mathrm{KL}(Q \| P) = \sum_{i=1}^{D} \mathrm{KL}(q_i \| p_i) 
\leq \sum_{i=1}^{D} C_{\max} = C_{\max} \cdot D.
\]
\end{proof}
\subsection{Proof of Theorem~\ref{thm:lr_pacbayes}}\label{app:pacbayes:proof}

\begin{theorem}[Restating Theorem~\ref{thm:lr_pacbayes}]
Assume bounded loss $\ell(\theta, x, y) \in [0,1]$ and factorized priors/posteriors.

\textbf{Full-rank:} $W \in \mathbb{R}^{d_{\text{in}} \times d_{\text{out}}}$, $D_{\text{full}} = d_{\text{in}} d_{\text{out}}$, with $C_{\max}^{\text{full}} = \max_{i,j} \mathrm{KL}(q_{W_{ij}} \| p_{W_{ij}})$.

\textbf{Low-rank:} $W = AB^\top$ with $A \in \mathbb{R}^{d_{\text{in}} \times r}$, $B \in \mathbb{R}^{d_{\text{out}} \times r}$, $D_{\text{LR}} = r(d_{\text{in}} + d_{\text{out}})$, with $C_{\max}^{\text{LR}} = \max\{C_{\max}^A, C_{\max}^B\}$ where $C_{\max}^A = \max_{i,k} \mathrm{KL}(q_{A_{ik}} \| p_{A_{ik}})$ and $C_{\max}^B = \max_{j,k} \mathrm{KL}(q_{B_{jk}} \| p_{B_{jk}})$.

Then with probability at least $1-\delta$ over $S \sim \mathcal{D}^N$:
\begin{align*}
L(Q_{\text{full}}) 
&\leq \hat{L}(Q_{\text{full}}) 
+ \sqrt{\frac{C_{\max}^{\text{full}} \cdot d_{\text{in}}d_{\text{out}} + \ln\left(\frac{2\sqrt{N}}{\delta}\right)}{2N}}, \\
L(Q_{\text{LR}}) 
&\leq \hat{L}(Q_{\text{LR}}) 
+ \sqrt{\frac{C_{\max}^{\text{LR}} \cdot r(d_{\text{in}} + d_{\text{out}}) + \ln\left(\frac{2\sqrt{N}}{\delta}\right)}{2N}}.
\end{align*}

When $C_{\max}^{\text{full}} = C_{\max}^{\text{LR}} = C_{\max}$ (e.g., when using identical prior and posterior families for all parameters), the complexity ratio satisfies:
\[
\frac{\text{Complexity}(Q_{\text{LR}})}{\text{Complexity}(Q_{\text{full}})} 
= \sqrt{r\left(\frac{1}{d_{\text{in}}} + \frac{1}{d_{\text{out}}}\right)} < 1
\quad \text{when } r \ll \min(d_{\text{in}}, d_{\text{out}}).
\]
\end{theorem}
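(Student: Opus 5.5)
The plan is to combine McAllester's inequality (Theorem~\ref{thm:pacbayes_standard}) with the KL bound for factorized distributions (Lemma~\ref{lem:kl_factorized}), applied separately to the full-rank and low-rank parameterizations. The one conceptual point to settle first concerns the parameter space for the low-rank model. By Theorem~\ref{thm:singularity}, $q_W$ has no Lebesgue density, so $\mathrm{KL}(q_W\|p_W)$ cannot be computed in weight space. I will instead run the PAC-Bayes argument on $\Theta_{\text{LR}} = \mathbb{R}^{d_{\text{in}}\times r}\times\mathbb{R}^{d_{\text{out}}\times r}$, with posterior $Q_{\text{LR}} = q_Aq_B$ and prior $P_{\text{LR}}=p_Ap_B$.

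This choice is legitimate because Theorem~\ref{thm:pacbayes_standard} holds for any measurable parameter space and any data-independent prior. The loss $\ell(\theta,x,y)$ with $\theta=(A,B)$ is simply $\ell(AB^\top x,y)\in[0,1]$, which is bounded and measurable since $(A,B)\mapsto AB^\top$ is continuous. Moreover, $L(Q_{\text{LR}})$ and $\hat L(Q_{\text{LR}})$ coincide with the risks under the pushforward $q_W$, so the bound on the factor space is a genuine bound on the induced predictor.

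\textbf{Step 1 (full rank).} Apply Theorem~\ref{thm:pacbayes_standard} with $\Theta=\mathbb{R}^{d_{\text{in}}\times d_{\text{out}}}$. Then apply Lemma~\ref{lem:kl_factorized} with $D=d_{\text{in}}d_{\text{out}}$ to get $\mathrm{KL}(Q_{\text{full}}\|P_{\text{full}})\le C_{\max}^{\text{full}}d_{\text{in}}d_{\text{out}}$. Since the square root is monotone, this yields the first inequality.

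\textbf{Step 2 (low rank).} Both $q_Aq_B$ and $p_Ap_B$ fully factorize over the $D_{\text{LR}}=r(d_{\text{in}}+d_{\text{out}})$ scalar entries. Lemma~\ref{lem:kl_factorized} therefore gives $\mathrm{KL}=\mathrm{KL}(q_A\|p_A)+\mathrm{KL}(q_B\|p_B)\le \max\{C^A_{\max},C^B_{\max}\}\,r(d_{\text{in}}+d_{\text{out}})$. Substituting into McAllester's inequality gives the second bound.

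\textbf{Step 3 (ratio).} Define $\text{Complexity}(Q)$ as the dominant term $\sqrt{C_{\max}D/(2N)}$. Setting $C^{\text{full}}_{\max}=C^{\text{LR}}_{\max}$, the factors $C_{\max}$ and $2N$ cancel, leaving $\sqrt{r(d_{\text{in}}+d_{\text{out}})/(d_{\text{in}}d_{\text{out}})}=\sqrt{r(1/d_{\text{in}}+1/d_{\text{out}})}$. To show this is below $1$, suppose $r<\min(d_{\text{in}},d_{\text{out}})/2$. Then $r/d_{\text{in}}+r/d_{\text{out}}<1/2+1/2=1$, and the ratio is much smaller than $1$ when $r\ll\min(d_{\text{in}},d_{\text{out}})$.

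\textbf{Main obstacle.} The ratio in Step 3 ignores the shared additive term $\ln(2\sqrt N/\delta)$, which does not cancel. The equality is therefore exact only for the KL-driven parts. Equivalently, it holds in the regime $C_{\max}D\gg\ln(2\sqrt N/\delta)$, which is why the main-text statement uses $\simeq$. I will state this explicitly. For the full bounds I will note the elementary fact that $(a+c)/(b+c)\le a/b$ whenever $a\le b$ and $c\ge 0$. Consequently the complete complexity ratio is still at most $1$, and is bounded by $\sqrt{r(1/d_{\text{in}}+1/d_{\text{out}})}$ asymptotically.
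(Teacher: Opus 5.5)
Your Steps 1--3 are essentially the paper's proof. You apply McAllester's inequality separately to each model, bound each KL with Lemma~\ref{lem:kl_factorized}, and cancel $C_{\max}$ and $2N$ after dropping the shared logarithmic term. For the low-rank model you use exactly the factor-space KL $\mathrm{KL}(q_A\|p_A)+\mathrm{KL}(q_B\|p_B)$ that the paper uses. Your explicit justification for working on the factor space is a clarification the paper leaves implicit: the singularity of $q_W$, and the fact that factor-space risks equal the risks of the pushforward. Your sufficient condition $r<\min(d_{\text{in}},d_{\text{out}})/2$ also makes the paper's informal ``$r\ll\min$'' precise.

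The one thing to fix is the elementary fact in your last paragraph, which is reversed. Set $a=C_{\max}r(d_{\text{in}}+d_{\text{out}})$, $b=C_{\max}d_{\text{in}}d_{\text{out}}$ and $c=\ln(2\sqrt N/\delta)>0$. Then
\[
\frac{a+c}{b+c}-\frac{a}{b}=\frac{c\,(b-a)}{b\,(b+c)}\;\ge\;0 \quad\text{whenever } a\le b,
\]
so the full ratio is at \emph{least} $\sqrt{r(1/d_{\text{in}}+1/d_{\text{out}})}$, not at most.

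What is true is weaker. Since $a\le b$ gives $a+c\le b+c$, the full ratio is still at most $1$. It lies between $\sqrt{r(1/d_{\text{in}}+1/d_{\text{out}})}$ and $1$. It approaches the lower end only when $C_{\max}r(d_{\text{in}}+d_{\text{out}})\gg\ln(2\sqrt N/\delta)$; it is the low-rank KL term, not merely the full-rank one, that must dominate the log term.

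This does not affect the theorem as stated, because the stated ratio concerns only the KL-driven parts. You should, however, drop the suggestion that the log term leaves the KL-only ratio as an upper bound: the log term pulls the ratio toward $1$.

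A minor point you share with the paper: the two inequalities come from two applications of McAllester's bound with different priors, so each holds with probability at least $1-\delta$ separately. Asserting both simultaneously needs $\delta/2$ in each, or a mixture prior at a cost of $\ln 2$ in the KL.
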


\begin{proof}
\textbf{Step 1: Apply McAllester's bound.}

For the full-rank posterior $Q_{\text{full}}$, Theorem~\ref{thm:pacbayes_standard} gives:
\[
L(Q_{\text{full}}) \leq \hat{L}(Q_{\text{full}}) 
+ \sqrt{\frac{\mathrm{KL}(Q_{\text{full}} \| P_{\text{full}}) + \ln\left(\frac{2\sqrt{N}}{\delta}\right)}{2N}}.
\]

Similarly for low-rank:
\[
L(Q_{\text{LR}}) \leq \hat{L}(Q_{\text{LR}}) 
+ \sqrt{\frac{\mathrm{KL}(Q_{\text{LR}} \| P_{\text{LR}}) + \ln\left(\frac{2\sqrt{N}}{\delta}\right)}{2N}}.
\]

\textbf{Step 2: Bound KL divergences using Lemma~\ref{lem:kl_factorized}.}

For full-rank with $D_{\text{full}} = d_{\text{in}}d_{\text{out}}$ parameters:
\[
\mathrm{KL}(Q_{\text{full}} \| P_{\text{full}}) \leq C_{\max}^{\text{full}} \cdot d_{\text{in}}d_{\text{out}}.
\]

For low-rank, we have:
\begin{align*}
\mathrm{KL}(Q_{\text{LR}} \| P_{\text{LR}}) 
&= \mathrm{KL}(q_A(A) \| p_A(A)) + \mathrm{KL}(q_B(B) \| p_B(B)) \\
&\leq C_{\max}^A \cdot d_{\text{in}} \cdot r + C_{\max}^B \cdot d_{\text{out}} \cdot r \\
&\leq C_{\max}^{\text{LR}} \cdot (d_{\text{in}} \cdot r + d_{\text{out}} \cdot r) \\
&= C_{\max}^{\text{LR}} \cdot r(d_{\text{in}} + d_{\text{out}}).
\end{align*}

\textbf{Step 3: Compute complexity ratio (under equal $C_{\max}$ assumption).}

When $C_{\max}^{\text{full}} = C_{\max}^{\text{LR}} = C_{\max}$, ignoring the common $\ln\left(\frac{2\sqrt{N}}{\delta}\right)$ term:
\begin{align*}
\frac{\text{Complexity}(Q_{\text{LR}})}{\text{Complexity}(Q_{\text{full}})} 
&= \frac{\sqrt{C_{\max} \cdot r(d_{\text{in}} + d_{\text{out}})}}{\sqrt{C_{\max} \cdot d_{\text{in}}d_{\text{out}}}} \\
&= \sqrt{\frac{r(d_{\text{in}} + d_{\text{out}})}{d_{\text{in}}d_{\text{out}}}} \\
&= \sqrt{r\left(\frac{1}{d_{\text{in}}} + \frac{1}{d_{\text{out}}}\right)}.
\end{align*}

When $r \ll \min(d_{\text{in}}, d_{\text{out}})$, this ratio is much less than 1.
\end{proof}

\begin{remark}[General Case with Different $C_{\max}$ Values]
In general, when $C_{\max}^{\text{full}} \neq C_{\max}^{\text{LR}}$, the low-rank approach provides a tighter bound if:
\[
C_{\max}^{\text{LR}} \cdot r(d_{\text{in}} + d_{\text{out}}) < C_{\max}^{\text{full}} \cdot d_{\text{in}}d_{\text{out}}.
\]
This condition is typically satisfied when using the same variational family (e.g., Gaussians with comparable variances) for both parameterizations, since the per-parameter KL divergences are then of similar magnitude.
\end{remark}

\begin{remark}[Precise KL for Low-Rank Factors]
For the sake of complete rigor, with $A \in \mathbb{R}^{m\times r}$ and $B \in \mathbb{R}^{n\times r}$:
\begin{align}
    \mathrm{KL}(q_A(A) \| p_A(A)) &\leq C_{\max}^A \cdot m \cdot r, \\
    \mathrm{KL}(q_B(B) \| p_B(B)) &\leq C_{\max}^B \cdot n \cdot r.
\end{align}

Therefore:
\begin{equation}
    \mathrm{KL}(Q_{\text{LR}} \| P_{\text{LR}}) \leq  C_{\max}^A \cdot m \cdot r + C_{\max}^B \cdot n \cdot r.
\end{equation}

When $C_{\max}^A = C_{\max}^B = C_{\max}^{\text{LR}}$:
\begin{equation}
    \mathrm{KL}(Q_{\text{LR}} \| P_{\text{LR}}) \leq C_{\max}^{\text{LR}} \cdot r (n + m).
\end{equation}
\end{remark}

\subsection{Interpretation}\label{app:pacbayes:interpretation}

\begin{remark}[Generalization Benefits of Low-Rank]
Even if the empirical risk $\hat{L}(Q_{\text{LR}})$ is slightly higher than $\hat{L}(Q_{\text{full}})$ (due to rank restriction), the low-rank posterior may still yield a \textbf{tighter generalization bound} because its complexity penalty scales as $r(d_{\text{in}} + d_{\text{out}})$ rather than $d_{\text{in}}d_{\text{out}}$.

This advantage is most pronounced in \textbf{data-limited regimes} where $N \ll d_{\text{in}}d_{\text{out}}$, making the complexity term a dominant contributor to the upper bound on true risk. The parameter reduction from $O(d_{\text{in}}d_{\text{out}})$ to $O(r(d_{\text{in}} + d_{\text{out}}))$ directly translates to better generalization guarantees.
\end{remark}

\section{Gaussian Complexity Transfer: Deterministic to Bayesian}\label{app:gaussian_complexity}

\subsection{Motivation and Positioning}\label{app:gaussian_complexity:motivation}

This section addresses a specific conceptual gap: \emph{Can deterministic rank-sensitive Gaussian complexity bounds be extended to Bayesian predictive means?}

\paragraph{The problem.} Recent work by \citet{pinto2025on} has established sharp generalization bounds for deep networks with low-rank weight constraints using vector-valued Gaussian complexity. These bounds scale with the rank $r$ rather than the ambient dimension, providing dramatically tighter guarantees when $r$ is small. However, these results apply to \emph{deterministic} predictors with fixed rank-$r$ weights.

In Bayesian neural networks, the deployed predictor is the posterior predictive mean $f_{\text{BNN}}(x) = \mathbb{E}_{W \sim q}[f_W(x)]$. Even when every sample $W \sim q$ has rank at most $r$, the posterior mean does not preserve this rank structure at the parameter level. Therefore, deterministic rank-based bounds do not immediately apply.

\paragraph{The solution.} We show that the posterior predictive mean belongs to the \emph{closed convex hull} of the class of rank-$r$ networks, and that vector-valued Gaussian complexity is invariant under convex hull and closure operations. This allows deterministic rank-sensitive bounds to transfer without degradation to Bayesian predictors.

 \paragraph{Relationship to PAC-Bayes.} This result complements PAC-Bayes analysis. PAC-Bayes provides posterior-dependent, information-theoretic bounds through KL divergence and, as we saw in Section~\ref{app:pacbayes}, the PAC-Bayes bound is effectively rank-aware with the complexity scaling as $r(d_{\text{in}} + d_{\text{out}})$ rather than $d_{\text{in}}d_{\text{out}}$. We believe our contribution is architectural: it enables rank-sensitive deterministic bounds to apply to Bayesian predictive averaging, despite rank destruction under expectation.

\subsection{Preliminaries}\label{app:gaussian_complexity:prelim}

\subsubsection{Low-rank Parameterization}

For $i=1,\ldots,D$, each layer weight is parameterized as
\[
W_i = A_i B_i^\top,
\qquad
A_i\in\mathbb{R}^{h_i\times r_i},\;\;
B_i\in\mathbb{R}^{h_{i-1}\times r_i},
\qquad
r_i\le \min(h_i,h_{i-1}).
\]
Hence $\mathrm{rank}(W_i)\le r_i$ by construction.

\subsubsection{Variational Posterior}

We consider a factorized variational posterior over factors:
\[
q(A,B\mid\theta)=\prod_{i=1}^D q_A(A_i\mid\theta_i^A)\,q_B(B_i\mid\theta_i^B),
\qquad
A=(A_1,\ldots,A_D),\;\;B=(B_1,\ldots,B_D).
\]

\subsubsection{BNN Predictor}

Given $(A,B)$ and induced weights $W_i=A_iB_i^\top$, let $f_{A,B}:\mathbb{R}^d\to\mathbb{R}^k$ denote the corresponding network (with fixed activation $\phi$ applied coordinate-wise). The posterior-mean predictor is
\[
f_{\mathrm{BNN}}(x;\theta) = \mathbb{E}_{(A,B)\sim q(\cdot\mid\theta)}\big[f_{A,B}(x)\big].
\]

\subsubsection{Hypothesis Classes}

We define three function classes that will be related through inclusions.

\begin{definition}[Deterministic low-rank class of \citet{pinto2025on}]
\label{def:pinto_class}
Fix rank bounds $r=(r_1,\ldots,r_D)$ and spectral norm bounds $C=(C_1,\ldots,C_D)$. Let $\mathcal{F}^{\mathrm{Pinto}}_D(C,r)$ denote the set of depth-$D$ networks
\[
\mathcal{F}^{\mathrm{Pinto}}_D(C,r)
=
\left\{
f_W:\; \|W_i\|_2\le C_i,\;\mathrm{rank}(W_i)\le r_i,\; i=1,\ldots,D
\right\}.
\]
\end{definition}

\begin{definition}[Posterior support class]
\label{def:supp_class}
Define the \emph{support class} induced by $q(\cdot\mid\theta)$ as
\[
\mathcal{F}^{\mathrm{supp}}_D(\theta)
=
\left\{
f_{A,B}:\; (A_i,B_i)\in \mathrm{supp}(q_A(\cdot\mid\theta_i^A))\times\mathrm{supp}(q_B(\cdot\mid\theta_i^B))
\;\;\forall i
\right\}.
\]
Note that this class is deterministic: $A$ and $B$ are not sampled from $q$, they are deterministic matrices chosen to belong to the support of $q$.
\end{definition}

\begin{definition}[Posterior-mean hypothesis class]
\label{def:bnn_class}
Define the class containing the posterior-mean predictor as the \emph{closed convex hull} of the support class:
\[
\mathcal{F}^{\mathrm{BNN}}_D(\theta)
:=
\overline{\mathrm{conv}\big(\mathcal{F}^{\mathrm{supp}}_D(\theta)\big)}.
\]
The closure is taken in the pointwise topology on $\mathbb{R}^d$. Equivalently, it is sufficient to take closure on the finite set $\{x_1,\ldots,x_m\}$ for generalization analysis (see Lemma~\ref{lem:closure_invariance}).
\end{definition}

Our goal is to control the generalization of the specific predictor $f_{\mathrm{BNN}}(\cdot;\theta)$, but it will be convenient to bound the complexity of $\mathcal{F}^{\mathrm{BNN}}_D(\theta)$ via $\mathcal{F}^{\mathrm{Pinto}}_D(C,r)$.

\subsection{Bounded Factor Support and Induced Weight Constraints}\label{app:gaussian_complexity:bounded_support}

To connect $\mathcal{F}^{\mathrm{supp}}_D(\theta)$ with $\mathcal{F}^{\mathrm{Pinto}}_D(C,r)$, we require that the variational posterior has support contained within spectral norm balls.

\begin{assumption}[Bounded factor support]
\label{ass:bounded_factor_support}
For each layer $i=1,\ldots,D$, there exist constants $C_i^A,C_i^B>0$ such that
\[
\mathrm{supp}(q_A(\cdot\mid\theta_i^A)) \subseteq \{A_i:\|A_i\|_2\le C_i^A\},
\qquad
\mathrm{supp}(q_B(\cdot\mid\theta_i^B)) \subseteq \{B_i:\|B_i\|_2\le C_i^B\}.
\]
Equivalently, $\|A_i\|_2\le C_i^A$ and $\|B_i\|_2\le C_i^B$ hold with probability $1$ under $q(A,B\mid\theta)$.
\end{assumption}

This is an almost-sure requirement. We will discuss practical enforcement and high-probability alternatives in Sections~\ref{app:gaussian_complexity:van_handel} and~\ref{appendix:high_prob_alternatives}.

\begin{lemma}[Induced rank and spectral norm bounds for $W_i=A_iB_i^\top$]
\label{lem:induced_constraints}
Under Assumption~\ref{ass:bounded_factor_support}, define $C_i:=C_i^A C_i^B$. Then for each layer $i$, every realization $W_i=A_iB_i^\top$ satisfies
\[
\mathrm{rank}(W_i)\le r_i,
\qquad
\|W_i\|_2 \le C_i.
\]
\end{lemma}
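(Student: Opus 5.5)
The plan is to treat the two conclusions separately, since each follows from an elementary fact applied pointwise to every realization in the support of $q(A,B\mid\theta)$. Fix a layer $i$ and any pair $(A_i,B_i)$ in $\mathrm{supp}(q_A(\cdot\mid\theta_i^A))\times\mathrm{supp}(q_B(\cdot\mid\theta_i^B))$. Because the claim is made for every realization, no probabilistic argument is needed beyond Assumption~\ref{ass:bounded_factor_support}. That assumption lets us replace ``almost surely'' by ``for every point of the support''.

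\textbf{Rank.} This is exactly part (i) of Lemma~\ref{lem:constrained_support}, with $m=h_i$, $n=h_{i-1}$ and $r=r_i$. If a self-contained argument is wanted, note that the column space of $A_iB_i^\top$ is contained in the column space of $A_i$, which has dimension at most $r_i$ because $A_i$ has $r_i$ columns. Hence $\mathrm{rank}(W_i)\le r_i$.

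\textbf{Spectral norm.} Use submultiplicativity of the operator norm. For any unit vector $x$,
\[
\|A_iB_i^\top x\|_2\le \|A_i\|_2\,\|B_i^\top x\|_2\le \|A_i\|_2\,\|B_i^\top\|_2 .
\]
Transposition preserves the spectral norm, since the nonzero singular values of $B_i$ and $B_i^\top$ coincide, so $\|B_i^\top\|_2=\|B_i\|_2$. Taking the supremum over $x$ and invoking Assumption~\ref{ass:bounded_factor_support} gives
\[
\|W_i\|_2\le C_i^AC_i^B=C_i .
\]

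There is no real obstacle here. The only point requiring care is the distinction between bounds that hold $q$-almost surely and bounds that hold on the support, since the later convex-hull argument works with $\mathcal{F}^{\mathrm{supp}}_D(\theta)$. For this I would note that each set $\{A:\|A\|_2\le C_i^A\}$ is closed, because the spectral norm is continuous. The topological support of $q_A$ is the smallest closed set of full measure, so it is contained in any closed set of full measure, and in particular in this norm ball. The same holds for $q_B$.

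Consequently every $f_{A,B}\in\mathcal{F}^{\mathrm{supp}}_D(\theta)$ lies in $\mathcal{F}^{\mathrm{Pinto}}_D(C,r)$, and this inclusion is what the subsequent transfer argument will use.
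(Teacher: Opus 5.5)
Your proof is correct and follows the paper's: the rank bound comes from $A_i$ having only $r_i$ columns (the paper writes this as $\mathrm{rank}(A_iB_i^\top)\le\min(\mathrm{rank}(A_i),\mathrm{rank}(B_i))\le r_i$), and the spectral bound comes from submultiplicativity together with $\|B_i^\top\|_2=\|B_i\|_2$. Your extra remark, that a closed norm ball of full measure contains the topological support, is a valid refinement; the paper instead builds this directly into Assumption~\ref{ass:bounded_factor_support}.
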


\begin{proof}
The rank bound is immediate from $\mathrm{rank}(A_iB_i^\top)\le \min(\mathrm{rank}(A_i),\mathrm{rank}(B_i))\le r_i$.

For the spectral norm, we use the submultiplicativity property:
\[
\|W_i\|_2=\|A_iB_i^\top\|_2 \le \|A_i\|_2\|B_i^\top\|_2=\|A_i\|_2\|B_i\|_2\le C_i^AC_i^B=C_i.
\]
\end{proof}

\begin{lemma}[Support class inclusion into Pinto's class]
\label{lem:supp_subset_pinto}
Under Assumption~\ref{ass:bounded_factor_support},
\[
\mathcal{F}^{\mathrm{supp}}_D(\theta) \subseteq \mathcal{F}^{\mathrm{Pinto}}_D(C,r)
\qquad
\text{with }C_i=C_i^AC_i^B.
\]
\end{lemma}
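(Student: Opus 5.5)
The inclusion is proved elementwise: take an arbitrary $f \in \mathcal{F}^{\mathrm{supp}}_D(\theta)$ and check that it satisfies the defining constraints of $\mathcal{F}^{\mathrm{Pinto}}_D(C,r)$ in Definition~\ref{def:pinto_class}. By Definition~\ref{def:supp_class}, $f = f_{A,B}$ for deterministic factors with $(A_i,B_i)\in \mathrm{supp}(q_A(\cdot\mid\theta_i^A))\times\mathrm{supp}(q_B(\cdot\mid\theta_i^B))$ for every layer $i$. The network $f_{A,B}$ is, by construction, the depth-$D$ network $f_W$ with weights $W_i = A_iB_i^\top$ and the same fixed activation $\phi$. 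So it suffices to show that each such $W_i$ obeys $\mathrm{rank}(W_i)\le r_i$ and $\|W_i\|_2\le C_i$ with $C_i = C_i^AC_i^B$.

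Both constraints come from Lemma~\ref{lem:induced_constraints}, applied to each fixed pair in the support rather than to a random realization. Assumption~\ref{ass:bounded_factor_support} states the support containment directly, so $\|A_i\|_2\le C_i^A$ and $\|B_i\|_2\le C_i^B$ hold for every support element, not just almost surely. The rank bound $\mathrm{rank}(A_iB_i^\top)\le r_i$ holds for any $A_i\in\mathbb{R}^{h_i\times r_i}$ and $B_i\in\mathbb{R}^{h_{i-1}\times r_i}$ (cf.\ Lemma~\ref{lem:constrained_support}). The spectral bound then follows from submultiplicativity, $\|A_iB_i^\top\|_2\le\|A_i\|_2\|B_i\|_2\le C_i^AC_i^B$. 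Since this holds for all $i=1,\ldots,D$, we get $f_{A,B}\in\mathcal{F}^{\mathrm{Pinto}}_D(C,r)$, and because $f$ was arbitrary the inclusion follows.

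The only delicate point is the passage from the ``almost sure'' formulation to a statement about every element of the support. If only the a.s.\ version were assumed, I would first note that the spectral-norm ball $\{A:\|A\|_2\le C_i^A\}$ is closed, since the norm is continuous. A closed set of full $q$-measure contains the support, because the support is the smallest closed set of full measure. This recovers the support containment used above. I would also remark that both classes use the same architecture and activation $\phi$, so no further compatibility check is needed.
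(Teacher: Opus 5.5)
Your proof is correct and follows the paper's argument: take an arbitrary element of $\mathcal{F}^{\mathrm{supp}}_D(\theta)$, use Assumption~\ref{ass:bounded_factor_support} to bound the factor norms, and invoke Lemma~\ref{lem:induced_constraints} (rank bound plus submultiplicativity) to place it in $\mathcal{F}^{\mathrm{Pinto}}_D(C,r)$. Your added remark is also valid and goes slightly beyond the paper: a closed spectral-norm ball of full $q$-measure must contain the support, which justifies the ``Equivalently'' clause in the assumption that the paper states without proof.
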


\begin{proof}
Take any $f_{A,B}\in\mathcal{F}^{\mathrm{supp}}_D(\theta)$. By Definition~\ref{def:supp_class}, each factor realization lies in the corresponding support set, hence satisfies $\|A_i\|_2\le C_i^A$ and $\|B_i\|_2\le C_i^B$ by Assumption~\ref{ass:bounded_factor_support}.

Lemma~\ref{lem:induced_constraints} yields $\|W_i\|_2\le C_i$ and $\mathrm{rank}(W_i)\le r_i$ for all $i=1,\ldots,D$.

Therefore the corresponding network belongs to $\mathcal{F}^{\mathrm{Pinto}}_D(C,r)$ by Definition~\ref{def:pinto_class}.
\end{proof}

\subsection{Vector-Valued Gaussian Complexity and Basic Properties}\label{app:gaussian_complexity:vv_gc}

We now define the empirical vector-valued Gaussian complexity and establish the convex-hull and closure invariances that are essential for our main result.

\begin{definition}[Empirical vector-valued Gaussian complexity]
\label{def:vv_gaussian_complexity}
Let $\mathcal{F}$ be a class of functions $f:\mathbb{R}^d\to\mathbb{R}^k$. Given sample inputs $S_x=\{x_1,\ldots,x_m\}$, define
\[
\hat{\mathcal{G}}_{S_x}(\mathcal{F})
:=
\mathbb{E}_{\Gamma}\left[
\sup_{f\in\mathcal{F}}
\frac{1}{m}\sum_{j=1}^m \langle \gamma_j, f(x_j)\rangle
\right],
\]
where $\gamma_1,\ldots,\gamma_m\stackrel{\text{iid}}{\sim}\mathcal{N}(0,I_k)$ and $\Gamma=(\gamma_1,\ldots,\gamma_m)$.
\end{definition}

\begin{lemma}[Monotonicity]
\label{lem:monotonicity}
If $\mathcal{F}\subseteq \mathcal{G}$ then $\hat{\mathcal{G}}_{S_x}(\mathcal{F})\le \hat{\mathcal{G}}_{S_x}(\mathcal{G})$.
\end{lemma}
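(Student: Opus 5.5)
The plan is to compare the two suprema realization by realization and then integrate over the Gaussian vector $\Gamma$; no structural property of the classes beyond the inclusion $\mathcal{F}\subseteq\mathcal{G}$ is needed.

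First, fix the sample inputs $S_x=\{x_1,\ldots,x_m\}$ and an arbitrary realization $\Gamma=(\gamma_1,\ldots,\gamma_m)$. Define the functional
\[
\Phi_\Gamma(f):=\frac{1}{m}\sum_{j=1}^m\langle\gamma_j,f(x_j)\rangle ,
\]
which depends only on $f$ and not on the class containing it. Since every $f\in\mathcal{F}$ also lies in $\mathcal{G}$, the set of values $\{\Phi_\Gamma(f):f\in\mathcal{F}\}$ is a subset of $\{\Phi_\Gamma(f):f\in\mathcal{G}\}$. By the elementary fact that the supremum of a subset of $\mathbb{R}$ is at most the supremum of the superset (with the convention $\sup=+\infty$ allowed), we get the pointwise inequality
\[
\sup_{f\in\mathcal{F}}\Phi_\Gamma(f)\le\sup_{f\in\mathcal{G}}\Phi_\Gamma(f)\qquad\text{for every }\Gamma .
\]

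Second, take the expectation over $\Gamma\sim\mathcal{N}(0,I_k)^{\otimes m}$ on both sides. Monotonicity of expectation then yields $\hat{\mathcal{G}}_{S_x}(\mathcal{F})\le\hat{\mathcal{G}}_{S_x}(\mathcal{G})$, which is the claim.

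The only point requiring care is whether these expectations are well defined. The suprema are over possibly uncountable classes, so measurability in $\Gamma$ is not automatic. I would handle this in one of two standard ways. One option is to interpret $\mathbb{E}$ as an outer expectation, which is itself monotone. The other is to observe that $\Phi_\Gamma(f)$ depends on $f$ only through the finite vector $(f(x_1),\ldots,f(x_m))\in\mathbb{R}^{mk}$. The supremum is therefore a supremum of linear functions of $\Gamma$ over a subset of $\mathbb{R}^{mk}$, which equals the supremum over a countable dense subset of that set. This makes it a convex, lower semicontinuous, and hence measurable function of $\Gamma$.

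If $\hat{\mathcal{G}}_{S_x}(\mathcal{G})=+\infty$ the inequality is trivial. Otherwise the positive part of $\sup_{\mathcal{F}}\Phi_\Gamma$ is dominated by that of $\sup_{\mathcal{G}}\Phi_\Gamma$, so the left-hand expectation is well defined in $[-\infty,+\infty)$ and the comparison goes through.
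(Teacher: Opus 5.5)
Your proposal is correct and follows the paper's proof essentially verbatim: fix $\Gamma$, note that the supremum of the same functional over $\mathcal{F}\subseteq\mathcal{G}$ is at most the supremum over $\mathcal{G}$, then take expectations. Your remarks on measurability and well-definedness of the expectations are a valid refinement that the paper omits; the measurability follows by reducing to a countable dense set of evaluation vectors.
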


\begin{proof}
Fix a realization of $\Gamma=(\gamma_1,\ldots,\gamma_m)$. Define the functional
\[
\Phi(f) := \frac{1}{m}\sum_{j=1}^m \langle \gamma_j, f(x_j)\rangle.
\]

Since $\mathcal{F}\subseteq \mathcal{G}$, we have
\[
\sup_{f\in\mathcal{F}}\Phi(f) \le \sup_{g\in\mathcal{G}}\Phi(g)
\]
for every realization of $\Gamma$.

Taking expectations over $\Gamma$ on both sides:
\[
\mathbb{E}_{\Gamma}\left[\sup_{f\in\mathcal{F}}\Phi(f)\right]
\le
\mathbb{E}_{\Gamma}\left[\sup_{g\in\mathcal{G}}\Phi(g)\right],
\]
which is precisely $\hat{\mathcal{G}}_{S_x}(\mathcal{F})\le \hat{\mathcal{G}}_{S_x}(\mathcal{G})$.
\end{proof}

\begin{lemma}[Convex hull invariance]
\label{lem:convex_hull_invariance}
For any class $\mathcal{F}$,
\[
\hat{\mathcal{G}}_{S_x}(\mathrm{conv}(\mathcal{F})) = \hat{\mathcal{G}}_{S_x}(\mathcal{F}).
\]
\end{lemma}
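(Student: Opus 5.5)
The plan is to prove the two inequalities $\hat{\mathcal{G}}_{S_x}(\mathcal{F}) \le \hat{\mathcal{G}}_{S_x}(\mathrm{conv}(\mathcal{F}))$ and $\hat{\mathcal{G}}_{S_x}(\mathrm{conv}(\mathcal{F})) \le \hat{\mathcal{G}}_{S_x}(\mathcal{F})$ separately. Both will be argued pointwise, for each fixed realization of $\Gamma=(\gamma_1,\ldots,\gamma_m)$, and then integrated over $\Gamma$.

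The first inequality is immediate. Since $\mathcal{F}\subseteq \mathrm{conv}(\mathcal{F})$, Lemma~\ref{lem:monotonicity} applies directly.

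For the reverse inequality, fix $\Gamma$ and reuse the functional $\Phi(f)=\frac{1}{m}\sum_{j=1}^m\langle\gamma_j,f(x_j)\rangle$ from the proof of Lemma~\ref{lem:monotonicity}. The key observation is that $\Phi$ is linear in $f$: convex combinations of functions are taken pointwise, so $(\sum_t\lambda_t f_t)(x_j)=\sum_t\lambda_t f_t(x_j)$, and the inner product is linear. Every $g\in\mathrm{conv}(\mathcal{F})$ is by definition a finite convex combination $g=\sum_{t=1}^T\lambda_t f_t$ with $f_t\in\mathcal{F}$, $\lambda_t\ge 0$, $\sum_t\lambda_t=1$. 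Hence
\[
\Phi(g)=\sum_{t=1}^T\lambda_t\Phi(f_t)\le \max_t \Phi(f_t)\le \sup_{f\in\mathcal{F}}\Phi(f).
\]
Taking the supremum over $g$ gives $\sup_{\mathrm{conv}(\mathcal{F})}\Phi\le\sup_{\mathcal{F}}\Phi$ for every $\Gamma$. Taking $\mathbb{E}_\Gamma$ then yields the reverse inequality, and combining the two gives equality.

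The only delicate point is measure-theoretic. The supremum over a possibly uncountable class may fail to be measurable, and its expectation may be $+\infty$. I would handle this as follows:
\begin{itemize}
\item Because $\Phi$ depends on $f$ only through the finite vector $(f(x_1),\ldots,f(x_m))\in\mathbb{R}^{mk}$, the supremum equals the support function of a set $V\subset\mathbb{R}^{mk}$. A support function is convex, hence lower semicontinuous and Borel in $\Gamma$ wherever it is finite; if it is infinite, both sides are $+\infty$.
\item The pointwise identity $\sup_{\mathrm{conv}(\mathcal{F})}\Phi=\sup_{\mathcal{F}}\Phi$ then transfers to expectations, with the convention that both sides may be infinite.
\end{itemize}
I expect this to be a brief remark rather than a real obstacle, since the core content is just the linearity of $\Phi$.
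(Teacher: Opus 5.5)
Your proposal is correct and matches the paper's proof: the forward inequality comes from Lemma~\ref{lem:monotonicity}, and the reverse inequality comes from writing $g\in\mathrm{conv}(\mathcal{F})$ as a finite convex combination and using linearity of $\Phi$ to get $\Phi(g)\le\sup_{f\in\mathcal{F}}\Phi(f)$ for each fixed $\Gamma$. Your measurability remark, which the paper omits, is sound, but convexity is not the right justification since convex functions need not be lower semicontinuous; the correct reason is that $\Gamma\mapsto\sup_{f\in\mathcal{F}}\Phi(f)$ is a pointwise supremum of continuous linear functions of $\Gamma$, hence lower semicontinuous and Borel as an extended-real-valued function, and in any case the two integrands coincide for every $\Gamma$, so their expectations agree automatically.
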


\begin{proof}
The inequality $\hat{\mathcal{G}}_{S_x}(\mathcal{F}) \le \hat{\mathcal{G}}_{S_x}(\mathrm{conv}(\mathcal{F}))$ follows immediately from $\mathcal{F}\subseteq \mathrm{conv}(\mathcal{F})$ and Lemma~\ref{lem:monotonicity}.

For the reverse inequality, fix $\Gamma$ and let $g\in\mathrm{conv}(\mathcal{F})$. Then by definition of the convex hull, there exist finitely many functions $f^{(1)},\ldots,f^{(T)}\in\mathcal{F}$ and coefficients $\alpha_1,\ldots,\alpha_T\ge 0$ with $\sum_{t=1}^T\alpha_t=1$ such that
\[
g = \sum_{t=1}^T \alpha_t f^{(t)}.
\]

By linearity of the inner product and summation:
\[
\Phi(g) = \frac{1}{m}\sum_{j=1}^m \left\langle \gamma_j, \sum_{t=1}^T \alpha_t f^{(t)}(x_j)\right\rangle
= \sum_{t=1}^T \alpha_t \left(\frac{1}{m}\sum_{j=1}^m \langle \gamma_j, f^{(t)}(x_j)\rangle\right)
= \sum_{t=1}^T \alpha_t \Phi(f^{(t)}).
\]

Since $\alpha_t\ge 0$ and $\sum_{t=1}^T\alpha_t=1$, this is a convex combination:
\[
\Phi(g) = \sum_{t=1}^T \alpha_t \Phi(f^{(t)}) \le \sum_{t=1}^T \alpha_t \sup_{f\in\mathcal{F}}\Phi(f) = \sup_{f\in\mathcal{F}}\Phi(f).
\]

Taking the supremum over all $g\in\mathrm{conv}(\mathcal{F})$:
\[
\sup_{g\in\mathrm{conv}(\mathcal{F})}\Phi(g) \le \sup_{f\in\mathcal{F}}\Phi(f).
\]

This holds for every realization of $\Gamma$, so taking expectations:
\[
\hat{\mathcal{G}}_{S_x}(\mathrm{conv}(\mathcal{F})) \le \hat{\mathcal{G}}_{S_x}(\mathcal{F}).
\]

Combined with the forward inequality, we obtain equality.
\end{proof}

\begin{lemma}[Closure invariance on the sample]
\label{lem:closure_invariance}
Let $\overline{\mathcal{F}}^{S_x}$ be the closure of $\mathcal{F}$ under pointwise convergence on $S_x$ (i.e., closure of evaluation vectors in $\mathbb{R}^{k\times m}$). Then
\[
\hat{\mathcal{G}}_{S_x}(\overline{\mathcal{F}}^{S_x}) = \hat{\mathcal{G}}_{S_x}(\mathcal{F}).
\]
Consequently,
\[
\hat{\mathcal{G}}_{S_x}(\overline{\mathrm{conv}(\mathcal{F})}^{S_x})
=
\hat{\mathcal{G}}_{S_x}(\mathrm{conv}(\mathcal{F}))
=
\hat{\mathcal{G}}_{S_x}(\mathcal{F}).
\]
\end{lemma}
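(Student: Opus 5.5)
The plan is to reduce everything to a statement about subsets of the finite-dimensional space $\mathbb{R}^{k\times m}$ of evaluation vectors, where the Gaussian complexity becomes the expected support function of a set. For a class $\mathcal{F}$, write
\[
E(\mathcal{F}) := \{(f(x_1),\ldots,f(x_m)) : f\in\mathcal{F}\}\subseteq\mathbb{R}^{k\times m}.
\]
Then $\hat{\mathcal{G}}_{S_x}(\mathcal{F}) = \mathbb{E}_\Gamma[h_{E(\mathcal{F})}(\Gamma)/m]$, where
\[
h_E(\Gamma) := \sup_{v\in E}\langle \Gamma, v\rangle
\]
is the support function under the Frobenius pairing. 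By definition, $E(\overline{\mathcal{F}}^{S_x}) = \overline{E(\mathcal{F})}$. So the first claim amounts to showing that a set and its closure have the same support function, pointwise in $\Gamma$.

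\textbf{Pointwise step.} Fix $\Gamma$. Since $E\subseteq\overline{E}$, we have $h_E(\Gamma)\le h_{\overline{E}}(\Gamma)$. For the reverse inequality, take any $v\in\overline{E}$ and a sequence $v_n\in E$ with $v_n\to v$. The map $v\mapsto\langle\Gamma,v\rangle$ is linear on a finite-dimensional space, hence continuous, so
\[
\langle\Gamma,v\rangle = \lim_n\langle\Gamma,v_n\rangle \le h_E(\Gamma).
\]
Taking the supremum over $v\in\overline{E}$ gives equality. This also holds in the extended reals, so we do not need to assume the supremum is finite.

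\textbf{Expectation step.} Integrate the pointwise equality over $\Gamma$. Measurability of $\Gamma\mapsto h_E(\Gamma)$ holds because, by the pointwise step, it equals the supremum over a countable dense subset of $E$ (which exists since $\mathbb{R}^{k\times m}$ is separable). That is a countable supremum of continuous functions, hence lower semicontinuous and measurable. Equal integrands give equal expectations, whether the common value is finite or $+\infty$.

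\textbf{Consequence.} Apply the first claim with $\mathrm{conv}(\mathcal{F})$ in place of $\mathcal{F}$; this gives the first equality. The second equality is Lemma~\ref{lem:convex_hull_invariance}.

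\textbf{Main obstacle.} The only delicate point is interpreting the closure correctly. It is taken on evaluation vectors on $S_x$ and not in a function space, so it lives in $\mathbb{R}^{k\times m}$ and sequential closure equals topological closure. I will state explicitly that elements of $\overline{\mathcal{F}}^{S_x}$ are identified with their evaluation vectors; formally they may be limit points that are not realized by any function. This identification is harmless because $\hat{\mathcal{G}}_{S_x}$ depends only on $E(\cdot)$.
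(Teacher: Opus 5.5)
Your proposal is correct and follows the same route as the paper. Both proofs observe that the functional depends only on the evaluation vector in $\mathbb{R}^{k\times m}$, and that it is continuous and linear there, so its supremum does not change when you pass to the closure. Both then take the expectation over $\Gamma$ and get the consequence by applying the first part to $\mathrm{conv}(\mathcal{F})$ together with Lemma~\ref{lem:convex_hull_invariance}. Your added care on measurability (via a countable dense subset) and on allowing the supremum to be $+\infty$ is correct and fills in details the paper leaves implicit.
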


\begin{proof}
Fix a realization $\Gamma=(\gamma_1,\ldots,\gamma_m)$. The functional
\[
\Phi(f) = \frac{1}{m}\sum_{j=1}^m \langle\gamma_j, f(x_j)\rangle
\]
depends only on the evaluation vector $(f(x_1),\ldots,f(x_m))\in\mathbb{R}^{k\times m}$.

We can rewrite $\Phi$ as:
\[
\Phi(f) = \left\langle \frac{1}{m}\sum_{j=1}^m \gamma_j \otimes e_j, (f(x_1),\ldots,f(x_m))\right\rangle_{\mathbb{R}^{k\times m}},
\]
where $e_j$ is the $j$-th standard basis vector in $\mathbb{R}^m$ and $\otimes$ denotes the outer product. This is a linear functional on $\mathbb{R}^{k\times m}$, hence continuous with respect to the Euclidean topology.

Since $\Phi$ is continuous in the evaluation vector, and $\overline{\mathcal{F}}^{S_x}$ is precisely the closure of $\mathcal{F}$ with respect to pointwise convergence on $S_x$, we have:
\[
\sup_{f\in\overline{\mathcal{F}}^{S_x}}\Phi(f) = \sup_{f\in\mathcal{F}}\Phi(f).
\]

This holds for every realization of $\Gamma$, so taking expectations:
\[
\hat{\mathcal{G}}_{S_x}(\overline{\mathcal{F}}^{S_x}) = \hat{\mathcal{G}}_{S_x}(\mathcal{F}).
\]

For the second statement, apply Lemma~\ref{lem:convex_hull_invariance} to obtain $\hat{\mathcal{G}}_{S_x}(\mathrm{conv}(\mathcal{F}))=\hat{\mathcal{G}}_{S_x}(\mathcal{F})$, then apply the first part with $\mathcal{F}$ replaced by $\mathrm{conv}(\mathcal{F})$ to get:
\[
\hat{\mathcal{G}}_{S_x}(\overline{\mathrm{conv}(\mathcal{F})}^{S_x})
= \hat{\mathcal{G}}_{S_x}(\mathrm{conv}(\mathcal{F}))
= \hat{\mathcal{G}}_{S_x}(\mathcal{F}).
\]
\end{proof}

\subsection{Posterior Mean Lies in Closed Convex Hull}\label{app:gaussian_complexity:convex_hull_bnn}

\begin{lemma}[Posterior mean predictor is in the closed convex hull]
\label{lem:bnn_in_closed_conv}
Assume that for each fixed $x\in\mathbb{R}^d$, the random vector $f_{A,B}(x)$ is integrable: $\mathbb{E}\|f_{A,B}(x)\|_2<\infty$. Then
\[
f_{\mathrm{BNN}}(\cdot;\theta) \in \overline{\mathrm{conv}\big(\mathcal{F}^{\mathrm{supp}}_D(\theta)\big)}.
\]
\end{lemma}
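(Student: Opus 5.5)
Since the closure in Definition~\ref{def:bnn_class} is pointwise, and Lemma~\ref{lem:closure_invariance} says that only the evaluations on a finite sample matter, we reduce the statement to finitely many points. Fix any finite set $S=\{x_1,\ldots,x_M\}\subset\mathbb{R}^d$ and consider the evaluation map
\[
E_S(f):=(f(x_1),\ldots,f(x_M))\in\mathbb{R}^{k\times M}.
\]
It suffices to show that $v^\star:=E_S(f_{\mathrm{BNN}}(\cdot;\theta))$ lies in the Euclidean closure $K_S:=\overline{\mathrm{conv}(E_S(\mathcal{F}^{\mathrm{supp}}_D(\theta)))}$ for every such $S$. Pointwise closure is the product topology on $(\mathbb{R}^k)^{\mathbb{R}^d}$, whose basic neighbourhoods are determined by finitely many points. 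So this finite-dimensional statement yields membership in the closed convex hull taken in the pointwise topology.

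For a fixed $S$, consider the random vector $V:=E_S(f_{A,B})$ with $(A,B)\sim q(\cdot\mid\theta)$. By the integrability hypothesis applied at each $x_j$, $V$ is integrable. By linearity of expectation coordinatewise, $\mathbb{E}[V]=v^\star$. Moreover, $(A,B)$ lies in $\prod_i \mathrm{supp}(q_A(\cdot\mid\theta_i^A))\times\mathrm{supp}(q_B(\cdot\mid\theta_i^B))$ almost surely: the support of a Borel measure on a second-countable space has full measure, and the product posterior assigns full mass to the product of supports. Hence $V\in E_S(\mathcal{F}^{\mathrm{supp}}_D(\theta))$ almost surely, and the claim reduces to a standard fact: the mean of an integrable random vector in $\mathbb{R}^N$ that almost surely takes values in a set $G$ lies in $\overline{\mathrm{conv}(G)}$.

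I would prove this fact by Hahn--Banach separation, which is the main step. Suppose $v^\star\notin K_S$. Since $K_S$ is closed and convex in finite dimensions, there exist $u$ and $c$ with $\langle u,v^\star\rangle>c\ge\langle u,w\rangle$ for all $w\in K_S$. In particular $\langle u,V\rangle\le c$ almost surely. Taking expectations, which is justified by integrability, gives $\langle u,v^\star\rangle=\mathbb{E}\langle u,V\rangle\le c$, a contradiction.

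An alternative is to approximate $\mathbb{E}[V]$ by empirical averages $\frac1T\sum_t V_t$ of i.i.d.\ draws. These averages lie in $\mathrm{conv}(G)$ and converge almost surely by the strong law of large numbers, which also needs only integrability. I prefer the separation argument because it avoids probabilistic limits.

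The only delicate point is measure-theoretic: justifying that samples land in the support class almost surely, and that the pointwise-topology closure is captured by finite-sample closures. I would handle the first by noting that the factor spaces are Euclidean, hence Polish, and the second by the definition of the product topology, consistent with the remark after Definition~\ref{def:bnn_class}.
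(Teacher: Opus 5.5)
Your proof is correct, and it takes a different route from the paper's. The paper works with Monte Carlo averages: $\hat f_M=\frac1M\sum_{t=1}^M f_{A^{(t)},B^{(t)}}$ lies in $\mathrm{conv}(\mathcal{F}^{\mathrm{supp}}_D(\theta))$, and the strong law of large numbers gives $\hat f_M(x)\to f_{\mathrm{BNN}}(x;\theta)$ almost surely for each fixed $x$. From this it asserts membership in the pointwise closure. You instead reduce pointwise closure to Euclidean closures of finitely many evaluations via the product topology. You then prove the finite-dimensional fact that the mean of an integrable random vector lying almost surely in $G$ belongs to $\overline{\mathrm{conv}(G)}$, using strict separation.

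Your route is more careful at exactly the two places where the paper is loose. First, the draws lie in the support only almost surely, which you justify via second countability and the product structure of $q$. Second, the per-$x$ law of large numbers yields a null set that depends on $x$. With uncountably many $x$, this does not give a single realization along which $\hat f_M\to f_{\mathrm{BNN}}$ everywhere. The paper's parenthetical claim that closure on a countable dense subset is ``equivalent'' is also not an equivalence in general. Your finite-set reduction is precisely what repairs this, and it is also what makes your own SLLN alternative valid, since only finitely many points are involved at once. It delivers exactly what is used downstream, namely closure on the sample $S_x$ in Lemma~\ref{lem:closure_invariance} and Proposition~\ref{prop:complexity_chain_class}.

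The paper's argument, in turn, is constructive. It exhibits the approximating elements of the convex hull as the uniform Monte Carlo mixtures actually used to compute the predictive mean, which your separation argument does not provide.
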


\begin{proof}
Let $(A^{(t)},B^{(t)})_{t\ge 1}$ be i.i.d. samples from $q(\cdot\mid\theta)$ and define the Monte Carlo predictor
\[
\hat{f}_M(x) := \frac{1}{M}\sum_{t=1}^M f_{A^{(t)},B^{(t)}}(x).
\]

For each $M$, the predictor $\hat{f}_M$ is a convex combination of elements of $\mathcal{F}^{\mathrm{supp}}_D(\theta)$. To see this, note that each $(A^{(t)},B^{(t)})$ is drawn from the support of $q(\cdot\mid\theta)$, hence by Definition~\ref{def:supp_class}, $f_{A^{(t)},B^{(t)}}\in\mathcal{F}^{\mathrm{supp}}_D(\theta)$ for all $t=1,\ldots,M$.

Therefore,
\[
\hat{f}_M = \frac{1}{M}\sum_{t=1}^M f_{A^{(t)},B^{(t)}} \in \mathrm{conv}(\mathcal{F}^{\mathrm{supp}}_D(\theta)).
\]

By the strong law of large numbers, applied coordinate-wise in $\mathbb{R}^k$, and using the integrability assumption $\mathbb{E}\|f_{A,B}(x)\|_2<\infty$, we have for each fixed $x\in\mathbb{R}^d$:
\[
\hat{f}_M(x)\to \mathbb{E}_{(A,B)\sim q(\cdot\mid\theta)}[f_{A,B}(x)]=f_{\mathrm{BNN}}(x;\theta) \quad \text{almost surely as } M\to\infty.
\]

Since $\hat{f}_M\in\mathrm{conv}(\mathcal{F}^{\mathrm{supp}}_D(\theta))$ for all $M$, and $\hat{f}_M$ converges pointwise to $f_{\mathrm{BNN}}$, we conclude that
\[
f_{\mathrm{BNN}}(\cdot;\theta) \in \overline{\mathrm{conv}\big(\mathcal{F}^{\mathrm{supp}}_D(\theta)\big)},
\]
where the closure is taken in the pointwise topology (or equivalently, in the topology of pointwise convergence on any countable dense subset of $\mathbb{R}^d$, which includes the finite sample set $\{x_1,\ldots,x_m\}$).
\end{proof}

\begin{remark}[Integrability assumption]
The integrability condition $\mathbb{E}\|f_{A,B}(x)\|_2<\infty$ is satisfied in practice when:
\begin{itemize}
    \item The factors $A_i, B_i$ have finite second moments (e.g., Gaussian variational posteriors), and
    \item The network depth $D$ and activation functions are such that the composition does not produce exponential growth.
\end{itemize}
Under Assumption~\ref{ass:bounded_factor_support}, which requires almost-sure spectral norm bounds on the factors, this integrability is automatically satisfied since the network outputs are then almost-surely bounded.
\end{remark}

\subsection{Complexity Chain: From BNN Class to Pinto's Class}\label{app:gaussian_complexity:complexity_chain}

We now establish the key inclusion and complexity inequality that connects the Bayesian predictive mean to Pinto's deterministic class.

\begin{proposition}[Gaussian complexity chain for the BNN class]
\label{prop:complexity_chain_class}
Under Assumption~\ref{ass:bounded_factor_support}, with $C_i=C_i^A C_i^B$,
\[
\hat{\mathcal{G}}_{S_x}\big(\mathcal{F}^{\mathrm{BNN}}_D(\theta)\big)
\le
\hat{\mathcal{G}}_{S_x}\big(\mathcal{F}^{\mathrm{Pinto}}_D(C,r)\big).
\]
Moreover, $\mathcal{F}^{\mathrm{BNN}}_D(\theta)\subseteq \overline{\mathrm{conv}}\!\big(\mathcal{F}^{\mathrm{Pinto}}_D(C,r)\big)$.
\end{proposition}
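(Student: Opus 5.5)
The plan is to chain the inclusions and invariances already established, working throughout with evaluation vectors on the sample $S_x$. The closure in Definition~\ref{def:bnn_class} is pointwise, so it is contained in the closure on $S_x$. By Lemma~\ref{lem:monotonicity} it then suffices to bound $\hat{\mathcal{G}}_{S_x}$ of the $S_x$-closure of $\mathrm{conv}(\mathcal{F}^{\mathrm{supp}}_D(\theta))$.

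\textbf{Step 1 (inclusion).} Lemma~\ref{lem:supp_subset_pinto} gives $\mathcal{F}^{\mathrm{supp}}_D(\theta)\subseteq\mathcal{F}^{\mathrm{Pinto}}_D(C,r)$ under Assumption~\ref{ass:bounded_factor_support} with $C_i=C_i^AC_i^B$. Taking convex hulls preserves inclusion, since any convex combination of elements of the smaller set is a convex combination of elements of the larger one. Taking pointwise closures also preserves inclusion, since a limit of a net in the smaller set is a limit of the same net in the larger set. Hence
\[
\mathcal{F}^{\mathrm{BNN}}_D(\theta)=\overline{\mathrm{conv}\big(\mathcal{F}^{\mathrm{supp}}_D(\theta)\big)}\subseteq\overline{\mathrm{conv}}\big(\mathcal{F}^{\mathrm{Pinto}}_D(C,r)\big),
\]
which is the \emph{moreover} clause.

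\textbf{Step 2 (complexity).} Lemma~\ref{lem:monotonicity} applied to the inclusion of Step 1 gives
\[
\hat{\mathcal{G}}_{S_x}\big(\mathcal{F}^{\mathrm{BNN}}_D(\theta)\big)\le\hat{\mathcal{G}}_{S_x}\big(\overline{\mathrm{conv}}(\mathcal{F}^{\mathrm{Pinto}}_D(C,r))\big).
\]
I then apply Lemma~\ref{lem:closure_invariance} to $\mathcal{F}=\mathcal{F}^{\mathrm{Pinto}}_D(C,r)$ to drop both the closure and the convex hull, which yields the claimed bound.

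\textbf{Expected obstacle: matching the closures.} Lemma~\ref{lem:closure_invariance} is stated for the closure of evaluation vectors on $S_x$. The full pointwise closure on $\mathbb{R}^d$ is a smaller set, because convergence everywhere implies convergence on $S_x$. So I will first note the inclusion of the pointwise closure in the $S_x$-closure, apply Lemma~\ref{lem:monotonicity}, and only then invoke Lemma~\ref{lem:closure_invariance}.

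A second subtlety is that the supremum in Definition~\ref{def:vv_gaussian_complexity} may be $+\infty$ for an unbounded class. The inequalities still hold in the extended reals. Here, bounded spectral norms and a $1$-Lipschitz $\phi$ keep the evaluation vectors bounded, so all quantities are finite.

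Alternatively, I can skip the Pinto-level closure entirely. Lemma~\ref{lem:closure_invariance} applied to $\mathcal{F}^{\mathrm{supp}}_D(\theta)$ gives $\hat{\mathcal{G}}_{S_x}(\mathcal{F}^{\mathrm{BNN}}_D)\le\hat{\mathcal{G}}_{S_x}(\mathcal{F}^{\mathrm{supp}}_D)$, and monotonicity via Lemma~\ref{lem:supp_subset_pinto} then finishes the bound. I would present this shorter route, since it avoids needing $\phi$ to be continuous when passing limits through the Pinto class.
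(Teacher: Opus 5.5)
Your proposal is correct and, in the shorter route you say you would present, it is the paper's argument: first $\mathcal{F}^{\mathrm{supp}}_D(\theta)\subseteq\mathcal{F}^{\mathrm{Pinto}}_D(C,r)$, then preservation of that inclusion under convex hull and closure for the \emph{moreover} clause, then closure and convex-hull invariance applied to $\mathcal{F}^{\mathrm{supp}}_D(\theta)$, and finally monotonicity. Your extra step, which passes from the pointwise closure in the definition of $\mathcal{F}^{\mathrm{BNN}}_D(\theta)$ to the $S_x$-closure via monotonicity, tightens a point the paper glosses over by simply identifying the two closures.
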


\begin{proof}
We establish the result through a chain of inclusions and applications of the invariance lemmas.

\textbf{Step 1: Support class is contained in Pinto's class.}

By Lemma~\ref{lem:supp_subset_pinto}, under Assumption~\ref{ass:bounded_factor_support},
\[
\mathcal{F}^{\mathrm{supp}}_D(\theta)\subseteq \mathcal{F}^{\mathrm{Pinto}}_D(C,r).
\]

\textbf{Step 2: Convex hull and closure preserve inclusion.}

Taking the convex hull of both sides:
\[
\mathrm{conv}\big(\mathcal{F}^{\mathrm{supp}}_D(\theta)\big)
\subseteq
\mathrm{conv}\big(\mathcal{F}^{\mathrm{Pinto}}_D(C,r)\big).
\]

Taking the closure (on the sample $S_x$) of both sides:
\[
\overline{\mathrm{conv}\big(\mathcal{F}^{\mathrm{supp}}_D(\theta)\big)}^{S_x}
\subseteq
\overline{\mathrm{conv}\big(\mathcal{F}^{\mathrm{Pinto}}_D(C,r)\big)}^{S_x}.
\]

By Definition~\ref{def:bnn_class}, the left-hand side is $\mathcal{F}^{\mathrm{BNN}}_D(\theta)$, so:
\[
\mathcal{F}^{\mathrm{BNN}}_D(\theta)
\subseteq
\overline{\mathrm{conv}}\!\big(\mathcal{F}^{\mathrm{Pinto}}_D(C,r)\big).
\]

This establishes the second claim.

\textbf{Step 3: Apply closure and convex hull invariance for complexity.}

By Lemma~\ref{lem:closure_invariance}, the Gaussian complexity is invariant under closure on the sample:
\[
\hat{\mathcal{G}}_{S_x}\big(\mathcal{F}^{\mathrm{BNN}}_D(\theta)\big)
=
\hat{\mathcal{G}}_{S_x}\big(\overline{\mathrm{conv}\big(\mathcal{F}^{\mathrm{supp}}_D(\theta)\big)}^{S_x}\big)
=
\hat{\mathcal{G}}_{S_x}\big(\mathrm{conv}\big(\mathcal{F}^{\mathrm{supp}}_D(\theta)\big)\big).
\]

By Lemma~\ref{lem:convex_hull_invariance}, the Gaussian complexity is invariant under convex hull:
\[
\hat{\mathcal{G}}_{S_x}\big(\mathrm{conv}\big(\mathcal{F}^{\mathrm{supp}}_D(\theta)\big)\big)
=
\hat{\mathcal{G}}_{S_x}\big(\mathcal{F}^{\mathrm{supp}}_D(\theta)\big).
\]

Combining:
\[
\hat{\mathcal{G}}_{S_x}\big(\mathcal{F}^{\mathrm{BNN}}_D(\theta)\big)
=
\hat{\mathcal{G}}_{S_x}\big(\mathcal{F}^{\mathrm{supp}}_D(\theta)\big).
\]

\textbf{Step 4: Apply monotonicity.}

Since $\mathcal{F}^{\mathrm{supp}}_D(\theta)\subseteq \mathcal{F}^{\mathrm{Pinto}}_D(C,r)$ by Step 1, Lemma~\ref{lem:monotonicity} gives:
\[
\hat{\mathcal{G}}_{S_x}\big(\mathcal{F}^{\mathrm{supp}}_D(\theta)\big)
\le
\hat{\mathcal{G}}_{S_x}\big(\mathcal{F}^{\mathrm{Pinto}}_D(C,r)\big).
\]

Combining with Step 3:
\[
\hat{\mathcal{G}}_{S_x}\big(\mathcal{F}^{\mathrm{BNN}}_D(\theta)\big)
\le
\hat{\mathcal{G}}_{S_x}\big(\mathcal{F}^{\mathrm{Pinto}}_D(C,r)\big).
\]

This completes the proof.
\end{proof}

\begin{remark}[Significance of the complexity chain]
This proposition is the central technical result enabling the transfer of deterministic bounds to Bayesian predictors. It shows that:
\begin{enumerate}
    \item The BNN predictor class has the same Gaussian complexity as the support class, despite being a much larger set (closed convex hull vs. original class).
    \item The complexity is bounded by that of Pinto's deterministic class, allowing us to invoke existing deterministic bounds.
    \item No degradation occurs in the complexity bound from the deterministic to the Bayesian setting.
\end{enumerate}
\end{remark}

\subsection{Generalization from Vector-Valued Gaussian Complexity}\label{app:gaussian_complexity:generalization}

We now state the generalization theorem from \citet{pinto2025on} for vector-valued Gaussian complexity and Lipschitz losses, which we will apply to our BNN predictor.

\begin{theorem}[Generalization via vector-valued Gaussian complexity, \citet{pinto2025on}]
\label{thm:pinto_generalization}
Let $\ell(\cdot,y)\in[0,1]$ be $\mathcal{L}$-Lipschitz in its first argument. Let $\mathcal{F}$ be a class of functions $\mathbb{R}^d\to\mathbb{R}^k$. Then, with probability at least $1-\delta$ over $S\sim\mathcal{D}^m$, for all $f\in\mathcal{F}$,
\[
\mathbb{E}\big[\ell(f(x),y)\big]
\le
\frac{1}{m}\sum_{j=1}^m \ell(f(x_j),y_j)
+
\sqrt{\pi\mathcal{L}}\;\hat{\mathcal{G}}_{S_x}(\mathcal{F})
+
3\sqrt{\frac{\log(2/\delta)}{2m}}.
\]
\end{theorem}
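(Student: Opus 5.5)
The plan is to follow the standard three-stage route: symmetrization to a Rademacher average of the loss class, passage from Rademacher to Gaussian averages, and a Gaussian comparison step that removes the loss. Throughout, the vector-valued complexity $\hat{\mathcal{G}}_{S_x}(\mathcal{F})$ of Definition~\ref{def:vv_gaussian_complexity} is the quantity kept at the end.

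\textbf{Step 1 (symmetrization and concentration).} Define the scalar loss class $\ell\circ\mathcal{F}=\{(x,y)\mapsto \ell(f(x),y): f\in\mathcal{F}\}$, whose values lie in $[0,1]$. Apply McDiarmid's inequality to $\sup_{f}\bigl(\mathbb{E}\ell(f(x),y)-\frac1m\sum_j\ell(f(x_j),y_j)\bigr)$, which has bounded differences $1/m$. Then symmetrize, and apply McDiarmid a second time to replace the expected Rademacher average by its empirical version $\hat{\mathcal{R}}_S(\ell\circ\mathcal{F})$. A union bound over the two events at level $\delta/2$ each gives, uniformly over $f\in\mathcal{F}$,
\[
\mathbb{E}\,\ell(f(x),y)\le \frac1m\sum_{j=1}^m\ell(f(x_j),y_j)+2\hat{\mathcal{R}}_S(\ell\circ\mathcal{F})+3\sqrt{\frac{\log(2/\delta)}{2m}}.
\]
This already accounts exactly for the last term of the statement.

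\textbf{Step 2 (Rademacher to Gaussian).} Use the standard comparison $\hat{\mathcal{R}}_S(\mathcal{H})\le\sqrt{\pi/2}\,\hat{\mathcal{G}}_S(\mathcal{H})$ for a scalar class $\mathcal{H}$. It follows from writing $g_j=\epsilon_j|g_j|$ with $\mathbb{E}|g_j|=\sqrt{2/\pi}$ and applying Jensen conditionally on the signs. This applies to $\mathcal{H}=\ell\circ\mathcal{F}$.

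\textbf{Step 3 (removing the loss).} For fixed $S$, compare the Gaussian processes
\[
X_f=\frac1m\sum_j g_j\,\ell(f(x_j),y_j),\qquad Y_f=\frac{\mathcal{L}}{m}\sum_j\langle\gamma_j,f(x_j)\rangle .
\]
Their increments satisfy $\mathbb{E}(X_f-X_{f'})^2\le\mathbb{E}(Y_f-Y_{f'})^2$, because $\ell(\cdot,y_j)$ is $\mathcal{L}$-Lipschitz in the Euclidean norm. The Sudakov--Fernique inequality then gives $\hat{\mathcal{G}}_S(\ell\circ\mathcal{F})\le\mathcal{L}\,\hat{\mathcal{G}}_{S_x}(\mathcal{F})$.

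\textbf{Combining and the main obstacle.} Chaining the three steps produces a middle term of the form $c\,\mathcal{L}\,\hat{\mathcal{G}}_{S_x}(\mathcal{F})$ with $c=2\sqrt{\pi/2}=\sqrt{2\pi}$. The hard step is reconciling this with the constant $\sqrt{\pi\mathcal{L}}$ as written in the statement, which is inherited verbatim from \citet{pinto2025on}.

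I would first check whether Pinto's normalization of the vector-valued Gaussian complexity matches Definition~\ref{def:vv_gaussian_complexity}. In particular I would check whether it omits the $1/m$, uses $\sqrt m$, or absorbs a factor into the Lipschitz constant; the displayed constant should be read through that convention. The proof would then consist of Steps 1--3 followed by this renormalization.

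If the conventions agree, the derivation above only certifies the $\sqrt{2\pi}\,\mathcal{L}$ form, and the literal constant must be taken from \citet{pinto2025on}. In that case I would attempt a sharper argument that feeds the Gaussian averages directly into the symmetrization step, avoiding the factor $2\sqrt{\pi/2}$ that Step 2 introduces.
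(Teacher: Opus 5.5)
Your Steps 1--3 are sound and prove a correct theorem. Symmetrization with two applications of McDiarmid yields $2\hat{\mathcal{R}}_S(\ell\circ\mathcal{F})+3\sqrt{\log(2/\delta)/(2m)}$, and the comparison $\hat{\mathcal{R}}_S\le\sqrt{\pi/2}\,\hat{\mathcal{G}}_S$ is right. Your increment bound plus Sudakov--Fernique gives $\hat{\mathcal{G}}_S(\ell\circ\mathcal{F})\le\mathcal{L}\,\hat{\mathcal{G}}_{S_x}(\mathcal{F})$, so the middle term is $\sqrt{2\pi}\,\mathcal{L}\,\hat{\mathcal{G}}_{S_x}(\mathcal{F})$. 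This is the ``concentration plus contraction'' combination the paper names; the paper gives no proof of its own and only cites \citet{pinto2025on}.

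The gap is your final paragraph, where you plan to reach the printed $\sqrt{\pi\mathcal{L}}$ by renormalization or a sharper argument. No argument can do this, because the statement as printed (with $\hat{\mathcal{G}}$ as in Definition~\ref{def:vv_gaussian_complexity}) is false. For $c>0$, replace $\mathcal{F}$ by $c\mathcal{F}$ and $\ell$ by $\ell_c(z,y)=\ell(z/c,y)$. Both risks are unchanged and $\ell_c$ is $(\mathcal{L}/c)$-Lipschitz, while
\[
\frac{\mathcal{L}}{c}\,\hat{\mathcal{G}}_{S_x}(c\mathcal{F})=\mathcal{L}\,\hat{\mathcal{G}}_{S_x}(\mathcal{F}),
\qquad
\sqrt{\pi\mathcal{L}/c}\;\hat{\mathcal{G}}_{S_x}(c\mathcal{F})=\sqrt{c}\,\sqrt{\pi\mathcal{L}}\;\hat{\mathcal{G}}_{S_x}(\mathcal{F}).
\]
So your term is scale-invariant, but the printed one can be driven to zero. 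Concretely, take $k=1$, $y\equiv 0$, $x$ uniform on $N=100m$ points, and $\ell(z,y)=\min\{1,|z|/c\}$, so $\mathcal{L}=1/c$. Let $\mathcal{F}$ be the finite class of functions into $\{0,c\}$ that vanish off those points. Then $\hat{\mathcal{G}}_{S_x}(\mathcal{F})\le c\,\mathbb{E}|\gamma_1|=c\sqrt{2/\pi}$. The printed bound on the generalization gap is therefore at most $\sqrt{2c}+3\sqrt{\log(2/\delta)/(2m)}$, which is about $0.04$ for $c=10^{-4}$, $m=10^4$, $\delta=1/2$. Yet the member of $\mathcal{F}$ that is $0$ on the sample and $c$ on the remaining support points has empirical loss $0$ and true loss at least $0.99$, on every sample. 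Your middle term is at most $2$ here, so your version has no contradiction.

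The same homogeneity shows why your renormalization check cannot help. Dropping the $1/m$ or using $\sqrt{m}$ multiplies $\hat{\mathcal{G}}$ by a factor depending only on $m$, which leaves it degree-one homogeneous in $f$, so a $\sqrt{\mathcal{L}}$ dependence is never consistent. The only reading that could rescue it is to take $\mathcal{L}$ as the squared Lipschitz constant, which contradicts the hypothesis. Instead of chasing the printed constant, conclude with the bound you actually proved, with $\sqrt{2\pi}\,\mathcal{L}$ in place of $\sqrt{\pi\mathcal{L}}$, and flag the printed constant as an error. The correction propagates to Theorem~\ref{thm:bnn_main_icml} and Proposition~\ref{prop:hp_gen_bound}, which inherit it. Your Step 3 also shows the paper's accompanying remark is inaccurate. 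Lipschitz contraction of the Gaussian complexity costs exactly $\mathcal{L}$; the $\sqrt{\pi}$-type factor comes from the Rademacher-to-Gaussian comparison and the symmetrization constant, not from the loss.
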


\begin{remark}[Origin and scope of the theorem]
Theorem~\ref{thm:pinto_generalization} combines two results from \citet{pinto2025on}:
\begin{enumerate}
    \item A contraction inequality showing that Lipschitz losses contract the vector-valued Gaussian complexity by at most $\sqrt{\pi\mathcal{L}}$.
    \item A concentration inequality relating the empirical Gaussian complexity to generalization.
\end{enumerate}
We cite it in this combined form to avoid re-deriving technical constants. Our contribution is the class inclusion argument (Proposition~\ref{prop:complexity_chain_class}) that enables applying this theorem to $f_{\mathrm{BNN}}$.
\end{remark}

\subsubsection{Explicit Complexity Bound for Pinto's Class}

We now recall the explicit upper bound for $\hat{\mathcal{G}}_{S_x}(\mathcal{F}^{\mathrm{Pinto}}_D(C,r))$ derived by \citet{pinto2025on}.

\begin{theorem}[Explicit bound for rank- and norm-constrained networks, \citet{pinto2025on}]
\label{thm:pinto_explicit}
For $\mathcal{F}^{\mathrm{Pinto}}_D(C,r)$ as in Definition~\ref{def:pinto_class}, the following bound holds:
\[
\hat{\mathcal{G}}_{S_x}\big(\mathcal{F}^{\mathrm{Pinto}}_D(C,r)\big)
\le
\frac{\|X\|_F}{m}
\left\{
B_1\sqrt{h_1}\prod_{i=1}^D C_0 C_i
+
\sum_{j=2}^D C_j\sqrt{r_j h_j}\prod_{i=j+1}^D C_0 C_i
\right\},
\]
where:
\begin{itemize}
    \item $C_0>0$ is the universal constant from Maurer's chaining bound for Lipschitz activations,
    \item $\|X\|_F^2=\sum_{j=1}^m\|x_j\|_2^2$ is the Frobenius norm of the input data matrix,
    \item $h_\ell$ denotes the width of layer $\ell$,
    \item $r_j$ is the rank bound for layer $j$,
    \item $B_1 := \sup_{f_W \in \mathcal{F}^{\mathrm{Pinto}}_D(C,r)} \|W_1\|_F$ is the worst-case Frobenius norm of the first-layer weight matrix over the class. Since $\mathrm{rank}(W_1)\le r_1$ and $\|W_1\|_2\le C_1$ for all members of the class, we have $B_1 \le \sqrt{r_1}\,C_1$.
\end{itemize}
\end{theorem}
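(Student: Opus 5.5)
This is a citation of \citet{pinto2025on} (their Theorem 7), so the plan is to reconstruct its proof by layer-wise peeling of the vector-valued Gaussian complexity, and then check that the constants match the display. Throughout, write $F_j$ for the class of depth-$j$ truncations of networks in $\mathcal{F}^{\mathrm{Pinto}}_D(C,r)$, i.e.\ maps $x\mapsto W_j\phi(\cdots W_1x)$.

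\textbf{Step 1 (peeling one layer).} I bound $\hat{\mathcal{G}}_{S_x}(F_j)$ in terms of $\hat{\mathcal{G}}_{S_x}(F_{j-1})$. Since $\phi$ is $1$-Lipschitz, Maurer's vector-contraction inequality, used in its chaining form with universal constant $C_0$, lets me strip $\phi$ at the cost of a factor $C_0$. The remaining linear map $W_j$ then splits the complexity into two contributions:
\begin{itemize}
\item a ``propagation'' term, bounded by $C_j\,\hat{\mathcal{G}}_{S_x}(F_{j-1})$ via $\|W_j\|_2\le C_j$;
\item an ``additive'' term, coming from the supremum over the choice of $W_j$ itself.
\end{itemize}

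\textbf{Step 2 (the rank-dependent additive term).} For the additive term I use $\mathrm{rank}(W_j)\le r_j$ to bound the Gaussian width of $\{W_j:\|W_j\|_2\le C_j,\ \mathrm{rank}\le r_j\}$ acting on $h_j$-dimensional features. The relevant estimate is
\[
\mathbb{E}\sup\langle G,W_j\rangle\le C_j\,\mathbb{E}\|G\|_{*,r_j}\lesssim C_j\sqrt{r_j h_j},
\]
using the nuclear/spectral duality and $\|W\|_*\le r\|W\|_2$ together with a rank-restricted norm of the Gaussian matrix $G$. The activations are controlled by $\|X\|_F$ times the products of the earlier spectral bounds. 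I expect this step to be the main obstacle: getting $\sqrt{r_jh_j}$ rather than $\sqrt{h_jh_{j-1}}$ requires a careful covering of the rank-$r$ manifold, or a truncated-norm argument, within the chaining.

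\textbf{Step 3 (base case).} For the first layer, Cauchy--Schwarz gives
\[
\hat{\mathcal{G}}(F_1)\le \frac{B_1\sqrt{h_1}\,\|X\|_F}{m}.
\]
The bound $B_1\le\sqrt{r_1}\,C_1$ then follows because a rank-$r_1$ matrix has at most $r_1$ nonzero singular values, each at most $C_1$.

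\textbf{Step 4 (unrolling).} Unrolling the recursion from $j=D$ down to $1$ multiplies each earlier contribution by $\prod_{i>j}C_0C_i$. This yields exactly the sum displayed in the statement, with the base term carrying $\prod_{i=1}^D C_0C_i$ and the term for layer $j\ge2$ carrying $C_j\sqrt{r_jh_j}\prod_{i=j+1}^D C_0C_i$.

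Since the constants are taken from \citet{pinto2025on}, I would verify only that their assumptions (a $1$-Lipschitz $\phi$ with $\phi(0)=0$, and the definition of the class) match Definition~\ref{def:pinto_class}, and otherwise defer to their proof.
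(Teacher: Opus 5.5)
The gap is in Step~4: your own recursion does not unroll to the displayed bound. Steps~1--3 give two pieces. The recursion is $\hat{\mathcal{G}}_{S_x}(F_j)\le C_0C_j\,\hat{\mathcal{G}}_{S_x}(F_{j-1})+c\,\frac{\|X\|_F}{m}\bigl(\prod_{i<j}C_i\bigr)C_j\sqrt{r_jh_j}$. Here $\prod_{i<j}C_i$ is the size of the layer-$(j-1)$ features you invoke in Step~2, and $c$ is the constant on the additive term. The base case is $\hat{\mathcal{G}}_{S_x}(F_1)\le\frac{\|X\|_F}{m}B_1\sqrt{h_1}$. Unrolling yields
\[
\hat{\mathcal{G}}_{S_x}(F_D)\le\frac{\|X\|_F}{m}\Big\{B_1\sqrt{h_1}\prod_{i=2}^{D}C_0C_i+c\sum_{j=2}^{D}\Big(\prod_{i=1}^{j-1}C_i\Big)C_j\sqrt{r_jh_j}\prod_{i=j+1}^{D}C_0C_i\Big\}.
\]
In the display, the first product starts at $i=1$, which adds a factor $C_0C_1$. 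Each summand also lacks $c\prod_{i<j}C_i$. So your bound implies the display term by term only under special normalizations such as $C_0C_1\ge1$ and $c\prod_{i<j}C_i\le1$.

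No sharper peeling can close this in general, because the display as transcribed is not scale-consistent. Take $D=1$, i.e.\ maps $x\mapsto W_1x$ in your convention, with $\Gamma=(\gamma_1,\dots,\gamma_m)$ and $X=(x_1,\dots,x_m)$. Then exactly $\hat{\mathcal{G}}_{S_x}=\frac{C_1}{m}\mathbb{E}\sum_{k\le r_1}\sigma_k(\Gamma X^\top)$. Bounding this below by the norm of the first row of $\Gamma X^\top$ (plus a fourth-moment bound) gives at least $\frac{C_1\|X\|_F}{\sqrt3\,m}$. Using $B_1\le\sqrt{r_1}C_1$, the display is at most $\frac{\|X\|_F}{m}C_0\sqrt{r_1h_1}\,C_1^2$. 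That is smaller than the true complexity as soon as $C_1<1/(\sqrt3\,C_0\sqrt{r_1h_1})$.

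The paper itself gives no derivation. It imports the bound from \citet{pinto2025on} (Thm.~7) and adds only $B_1\le\sqrt{r_1}C_1$, which your Step~3 handles correctly. So the defensible version of your plan is the paper's: check the display against Pinto et al.'s exact statement and normalization, rather than assert that peeling reproduces it verbatim.

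A secondary point concerns Step~1. The split there does not follow from vector contraction plus linearity of $W_j$, because the supremum is joint over $W_j$ and the earlier layers. Decoupling it into a propagation term $C_0C_j\,\hat{\mathcal{G}}_{S_x}(F_{j-1})$ and a term proportional to (diameter of the layer-$(j-1)$ features) times a two-point width $R_j$ is exactly Maurer's chain rule. That is a generic-chaining theorem and the real source of $C_0$; you should invoke it explicitly.

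Once you do, Step~2 is not the obstacle you expect. Here $R_j=\sup_{z\neq0}\mathbb{E}\sup_{W_j}\langle\Gamma,W_jz\rangle/\|z\|_F$ is a width at one fixed direction $z$. We have $\sup_{W_j}\langle W_j,\Gamma z^\top\rangle=C_j\sum_{k\le r_j}\sigma_k(\Gamma z^\top)\le C_j\sqrt{r_j}\,\|\Gamma z^\top\|_F$ and $\mathbb{E}\|\Gamma z^\top\|_F\le\sqrt{h_j}\,\|z\|_F$. Together these give $R_j\le C_j\sqrt{r_jh_j}$ without any covering of the rank-$r$ manifold. By contrast, the route through $\|W\|_*\le r\|W\|_2$ paired with $\|\Gamma z^\top\|_2$ would lose a factor $\sqrt{r_j}$.
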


\begin{remark}[Key features of the bound]
\begin{enumerate}
    \item \textbf{Rank dependence}: The complexity scales as $\sqrt{r_j h_j}$ for each layer $j\ge 2$, rather than the full dimension $h_j h_{j-1}$ that would appear in a full-rank analysis.
    
    \item \textbf{Data dependence}: The bound explicitly depends on $\|X\|_F$, the norm of the training inputs, unlike PAC-Bayes bounds which depend on the posterior-prior KL divergence.
    
    \item \textbf{Depth dependence}: The product $\prod_{i=j+1}^D C_0 C_i$ shows how spectral norms accumulate through the network depth.
    
    \item \textbf{First layer}: The term $B_1\sqrt{h_1}$ for the first layer differs in form from the $\sqrt{r_j h_j}$ terms for deeper layers. This is an artifact of the layer-peeling proof technique (specifically, how the chaining argument handles the input layer), not a reflection of a missing rank constraint. The rank of the first layer still enters through $B_1 \le \sqrt{r_1}\,C_1$, so that the first-layer contribution is at most $\sqrt{r_1}\,C_1\sqrt{h_1}\prod_{i=1}^D C_0 C_i$, which has the same $\sqrt{r_1}$ dependence as deeper layers.
\end{enumerate}
\end{remark}

\begin{remark}[Comparison with standard Rademacher bounds]
Standard Rademacher complexity bounds for deep networks without rank constraints typically scale as
\[
O\left(\frac{\|X\|_F}{m}\prod_{i=1}^D C_i \sqrt{h_i h_{i-1}}\right).
\]

The rank constraint reduces $\sqrt{h_i h_{i-1}}$ to $\sqrt{r_i h_i}$ in the bound, providing improvement when $r_i \ll h_{i-1}$.

\textbf{Note on depth dependence:} Both the rank-constrained bound from Theorem~\ref{thm:pinto_explicit} and standard bounds contain a product over depth, specifically the factor $\prod_{i=j}^D C_0 C_i$ which grows as $C_0^{D-j+1}\prod_{i=j}^D C_i$. As noted by \citet{pinto2025on}, this unfavorable exponential factor in depth may be an artifact of the current proof technique rather than a fundamental limitation. When comparing the rank-constrained bound to existing bounds, the key improvement comes from the rank-dependent terms $\sqrt{r_j h_j}$ rather than from the depth-dependent product, which appears in both settings.
\end{remark}

\subsection{Proof of Theorem~\ref{thm:bnn_main_icml}}\label{app:gaussian_complexity:proof_main}

\begin{proof}
We prove that the BNN posterior predictive mean satisfies the same generalization bound as Pinto's deterministic class.

\textbf{Step 1: The BNN predictor belongs to the BNN class.}

By Lemma~\ref{lem:bnn_in_closed_conv}, under the integrability assumption $\mathbb{E}\|f_{A,B}(x)\|_2<\infty$,
\[
f_{\mathrm{BNN}}(\cdot;\theta)\in \mathcal{F}^{\mathrm{BNN}}_D(\theta).
\]

\textbf{Step 2: Apply Pinto's generalization theorem to the BNN class.}

We apply Theorem~\ref{thm:pinto_generalization} with the hypothesis class $\mathcal{F}=\mathcal{F}^{\mathrm{BNN}}_D(\theta)$.

With probability at least $1-\delta$ over $S\sim\mathcal{D}^m$, the bound holds uniformly for all $f\in\mathcal{F}^{\mathrm{BNN}}_D(\theta)$:
\[
\mathbb{E}\big[\ell(f(x),y)\big]
\le
\frac{1}{m}\sum_{j=1}^m \ell(f(x_j),y_j)
+
\sqrt{\pi\mathcal{L}}\;\hat{\mathcal{G}}_{S_x}\big(\mathcal{F}^{\mathrm{BNN}}_D(\theta)\big)
+
3\sqrt{\frac{\log(2/\delta)}{2m}}.
\]

In particular, since $f_{\mathrm{BNN}}(\cdot;\theta)\in\mathcal{F}^{\mathrm{BNN}}_D(\theta)$ by Step 1, this bound holds for $f=f_{\mathrm{BNN}}(\cdot;\theta)$.

\textbf{Step 3: Bound the Gaussian complexity using Proposition~\ref{prop:complexity_chain_class}.}

By Proposition~\ref{prop:complexity_chain_class}, under Assumption~\ref{ass:bounded_factor_support},
\[
\hat{\mathcal{G}}_{S_x}\big(\mathcal{F}^{\mathrm{BNN}}_D(\theta)\big)
\le
\hat{\mathcal{G}}_{S_x}\big(\mathcal{F}^{\mathrm{Pinto}}_D(C,r)\big),
\]
where $C_i = C_i^A C_i^B$ for each layer $i=1,\ldots,D$.

\textbf{Step 4: Apply the explicit bound from Theorem~\ref{thm:pinto_explicit}.}

Substituting the explicit bound from Theorem~\ref{thm:pinto_explicit}, and using $B_1 \le \sqrt{r_1}\,C_1$:
\[
\hat{\mathcal{G}}_{S_x}\big(\mathcal{F}^{\mathrm{Pinto}}_D(C,r)\big)
\le
\frac{\|X\|_F}{m}
\left\{
\sqrt{r_1}\,C_1\sqrt{h_1}\prod_{i=1}^D C_0 C_i
+
\sum_{j=2}^D C_j\sqrt{r_j h_j}\prod_{i=j+1}^D C_0 C_i
\right\}.
\]

\textbf{Step 5: Combine to obtain the final bound.}

Combining Steps 2, 3, and 4, we obtain that with probability at least $1-\delta$ over $S\sim\mathcal{D}^m$,
\begin{align*}
\mathbb{E}\big[\ell(f_{\mathrm{BNN}}(x),y)\big]
&\le
\frac{1}{m}\sum_{j=1}^m \ell(f_{\mathrm{BNN}}(x_j),y_j) \\
&\quad +
\sqrt{\pi\mathcal{L}}\cdot\frac{\|X\|_F}{m}
\left\{
\sqrt{r_1}\,C_1\sqrt{h_1}\prod_{i=1}^D C_0 C_i
+
\sum_{j=2}^D C_j\sqrt{r_j h_j}\prod_{i=j+1}^D C_0 C_i
\right\} \\
&\quad +
3\sqrt{\frac{\log(2/\delta)}{2m}}.
\end{align*}

This completes the proof.
\end{proof}

\begin{remark}[Key insight of the proof]
The proof relies on three important properties:
\begin{enumerate}
    \item \textbf{Membership}: The BNN predictor belongs to the closed convex hull of the support class (Lemma~\ref{lem:bnn_in_closed_conv}).
    
    \item \textbf{Invariance}: Vector-valued Gaussian complexity is invariant under convex hull and closure (Lemmas~\ref{lem:convex_hull_invariance} and~\ref{lem:closure_invariance}).
    
    \item \textbf{Inclusion}: The support class is contained in Pinto's deterministic class under spectral norm bounds (Lemma~\ref{lem:supp_subset_pinto}).
\end{enumerate}
These three properties together allow the deterministic rank-sensitive bound to transfer without degradation to the Bayesian predictive mean, despite the fact that rank is not preserved under expectation.
\end{remark}


\subsection{Comparison with PAC-Bayes Bounds}\label{app:gaussian_complexity:comparison}

The Gaussian complexity bounds we have established complement rather than replace PAC-Bayes bounds. Here we compare their structure and highlight when each approach provides better guarantees.

\begin{proposition}[Scaling Comparison]
\label{prop:scaling_comparison}
Consider a single-layer network with input dimension $d_{\text{in}}$, output dimension $d_{\text{out}}$, and rank $r$. Let $\ell(\cdot,y)\in[0,1]$ be $\mathcal{L}$-Lipschitz in its first argument.

\textbf{PAC-Bayes bound (from Theorem~\ref{thm:lr_pacbayes}):}
The generalization gap is bounded by
\begin{equation}\label{eq:gap_pac}
\text{Gap}_{\text{PAC}}
\;\le\;
\sqrt{\frac{C_{\max} \cdot r(d_{\text{in}} + d_{\text{out}}) + \log(2\sqrt{m}/\delta)}{2m}},
\end{equation}
where $C_{\max}$ is the maximum per-parameter KL divergence between the posterior and the prior. The structural scaling (suppressing logarithmic terms, $C_{\max}$, and constants) is
\[
\text{Gap}_{\text{PAC}} = O\!\left(\sqrt{\frac{r(d_{\text{in}}+d_{\text{out}})}{m}}\right).
\]

\textbf{Gaussian complexity bound (from Theorem~\ref{thm:bnn_main_icml}):}
The generalization gap is bounded by
\begin{equation}\label{eq:gap_gc}
\text{Gap}_{\text{GC}}
\;\le\;
\sqrt{\pi\mathcal{L}} \cdot \frac{\|X\|_F \cdot C_0 C_1^2 \cdot \sqrt{r\, d_{\text{out}}}}{m}
\;+\;
3\sqrt{\frac{\log(2/\delta)}{2m}},
\end{equation}
where $C_0$ is the universal chaining constant and $C_1$ is the spectral norm bound.

If inputs are uniformly bounded as $\|x_j\|_2 \le R$, then $\|X\|_F \le R\sqrt{m}$ and the rank-dependent term simplifies to $O(\sqrt{r\,d_{\text{out}}/m})$.
\end{proposition}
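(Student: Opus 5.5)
The proof specializes the two earlier bounds to a single layer and reads off their structural scaling; no new machinery is needed. For the PAC-Bayes part, I will take the low-rank bound in the restated Theorem~\ref{thm:lr_pacbayes}, written with sample size $m$ in place of $N$. That theorem applies McAllester's inequality (Theorem~\ref{thm:pacbayes_standard}) and bounds the KL term by Lemma~\ref{lem:kl_factorized}, giving $\mathrm{KL}(Q_{\text{LR}}\|P_{\text{LR}})\le C_{\max}\, r(d_{\text{in}}+d_{\text{out}})$. Moving $\hat{L}(Q_{\text{LR}})$ to the left-hand side gives \eqref{eq:gap_pac} directly. The $O(\cdot)$ form then follows by treating $C_{\max}$ as a constant and dropping the $\log(2\sqrt{m}/\delta)$ term, which is lower order. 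To justify the latter, I will use $\sqrt{a+b}\le\sqrt a+\sqrt b$ and note that $\sqrt{\log(\sqrt m)/m}$ is dominated up to logarithmic factors.

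For the Gaussian complexity part, I will invoke Theorem~\ref{thm:bnn_main_icml} through its proof in Section~\ref{app:gaussian_complexity:proof_main}, specialized to depth $D=1$. In that case the sum over $j\ge 2$ in Theorem~\ref{thm:pinto_explicit} is empty, and only the first-layer term $\frac{\|X\|_F}{m}B_1\sqrt{h_1}\,C_0C_1$ survives. I will bound $B_1\le\sqrt{r}\,C_1$ as in Theorem~\ref{thm:pinto_explicit} and identify $h_1=d_{\text{out}}$. Multiplying by $\sqrt{\pi\mathcal{L}}$ and adding the concentration term $3\sqrt{\log(2/\delta)/(2m)}$ then gives \eqref{eq:gap_gc} with the factor $C_0C_1^2\sqrt{r\,d_{\text{out}}}$. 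The gap is again obtained by subtracting the empirical loss term.

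For the final simplification, if $\|x_j\|_2\le R$ for all $j$ then $\|X\|_F^2=\sum_j\|x_j\|_2^2\le mR^2$. Hence $\|X\|_F/m\le R/\sqrt m$, and the rank-dependent term becomes $\sqrt{\pi\mathcal{L}}\,R\,C_0C_1^2\sqrt{r\,d_{\text{out}}/m}=O(\sqrt{r\,d_{\text{out}}/m})$ once constants are suppressed.

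The main obstacle is bookkeeping rather than mathematics. I need to match the width convention $h_1=d_{\text{out}}$ with the layer-indexing of Theorem~\ref{thm:pinto_explicit}. I also need to make sure the empty-sum reading at $D=1$ is legitimate, in particular that the product $\prod_{i=1}^{1}C_0C_i=C_0C_1$ is correctly combined with the factor $C_1$ coming from $B_1$. If the indexing turns out to assign $h_1$ to the input width, the statement would instead carry $\sqrt{r\,d_{\text{in}}}$. I would then record that the same argument holds with the roles of $d_{\text{in}}$ and $d_{\text{out}}$ exchanged.
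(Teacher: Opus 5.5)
Your proposal is correct and follows the paper's proof essentially step for step. For the PAC-Bayes part you use McAllester's inequality with the factorized KL bound $C_{\max}\,r(d_{\text{in}}+d_{\text{out}})$, and for the Gaussian-complexity part you use Theorem~\ref{thm:pinto_explicit} at $D=1$ with $B_1\le\sqrt{r}\,C_1$ and $h_1=d_{\text{out}}$, then Theorem~\ref{thm:pinto_generalization} and $\|X\|_F\le R\sqrt{m}$. Your indexing worry is settled by the paper's convention $A_1\in\mathbb{R}^{h_1\times r_1}$, $B_1\in\mathbb{R}^{h_0\times r_1}$, so $W_1\in\mathbb{R}^{h_1\times h_0}$, $h_1$ is the output width, and the $d_{\text{in}}$ fallback is never needed.
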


\begin{proof}
\textbf{PAC-Bayes analysis.}
The complexity term in Theorem~\ref{thm:lr_pacbayes} is
\[
\sqrt{\frac{\mathrm{KL}(Q_{\text{LR}} \| P_{\text{LR}}) + \log(2\sqrt{m}/\delta)}{2m}}
\le
\sqrt{\frac{C_{\max} \cdot r(d_{\text{in}} + d_{\text{out}}) + \log(2\sqrt{m}/\delta)}{2m}},
\]
which yields the expression in~\eqref{eq:gap_pac}. Dropping logarithmic terms and constants gives the structural scaling $O(\sqrt{r(d_{\text{in}} + d_{\text{out}})/m})$.

\textbf{Gaussian complexity analysis.}
For a single-layer network (depth $D=1$), Theorem~\ref{thm:pinto_explicit} gives
\[
\hat{\mathcal{G}}_{S_x}(\mathcal{F}^{\text{Pinto}}_1)
\le \frac{\|X\|_F}{m} \cdot B_1 \sqrt{h_1} \cdot C_0 C_1,
\]
where $h_1 = d_{\text{out}}$. Since $B_1 \le \sqrt{r}\,C_1$ (see Theorem~\ref{thm:pinto_explicit}), substituting yields
\begin{align*}
\hat{\mathcal{G}}_{S_x}(\mathcal{F}^{\text{Pinto}}_1)
&\le \frac{\|X\|_F}{m} \cdot \sqrt{r}\, C_1 \cdot \sqrt{d_{\text{out}}} \cdot C_0 C_1
= \frac{\|X\|_F \cdot C_0 C_1^2 \cdot \sqrt{r\, d_{\text{out}}}}{m}.
\end{align*}

From Theorem~\ref{thm:pinto_generalization}, the full generalization bound is
\[
\text{Gap}_{\text{GC}}
\le
\sqrt{\pi\mathcal{L}} \cdot \frac{\|X\|_F \cdot C_0 C_1^2 \cdot \sqrt{r\, d_{\text{out}}}}{m}
+
3\sqrt{\frac{\log(2/\delta)}{2m}},
\]
which is~\eqref{eq:gap_gc}. The first (rank-dependent) term decays as $1/m$ for fixed $\|X\|_F$, while the second (concentration) term decays as $1/\sqrt{m}$.

If we additionally assume $\|x_j\|_2 \le R$ for all $j$, then $\|X\|_F \le R\sqrt{m}$, so the rank-dependent term becomes
\[
\sqrt{\pi\mathcal{L}} \cdot C_0 C_1^2 \cdot R \cdot \sqrt{\frac{r\, d_{\text{out}}}{m}}.
\]
\end{proof}

\begin{remark}[Structural differences]
The PAC-Bayes and Gaussian complexity bounds have fundamentally different structures:

\textbf{PAC-Bayes:} Single term with uniform $\frac{1}{\sqrt{m}}$ scaling
\[
\text{Gap}_{\text{PAC}} = O\left(\sqrt{\frac{\text{KL divergence}}{m}}\right).
\]

\textbf{Gaussian complexity:} Two terms with different scalings
\[
\text{Gap}_{\text{GC}}
= \underbrace{O\left(\frac{\text{Complexity} \times \|X\|_F}{m}\right)}_{\text{faster decay in }m}
+ \underbrace{O\left(\frac{1}{\sqrt{m}}\right)}_{\text{slower decay in }m}.
\]
For large $m$, the slower-decaying $O(1/\sqrt{m})$ concentration term dominates both bounds, and the comparison reduces to constants. For moderate $m$, the different structural dependences on architecture and data can make either bound tighter.

\textbf{Important caveat:} The structural scaling comparison (suppressing constants) is useful for understanding how each bound depends on architectural parameters, but \emph{does not determine which bound is numerically tighter} in any given setting. The suppressed constants---$C_{\max}$ for PAC-Bayes and $\sqrt{\pi\mathcal{L}}\, C_0 C_1^2$ for Gaussian complexity---can differ by orders of magnitude and must be evaluated for specific models.
\end{remark}

\subsubsection{Complementary Strengths}

\begin{remark}[When PAC-Bayes is stronger]
PAC-Bayes bounds are tighter when:
\begin{enumerate}
    \item \textbf{Posterior concentration:} When the posterior is highly concentrated around a good solution, $C_{\max}$ can be very small, making the KL term tight. In the extreme case of a near-deterministic posterior, the PAC-Bayes gap approaches zero while the Gaussian complexity bound remains fixed by the class geometry.
    \item \textbf{Large input norms:} PAC-Bayes bounds do not depend on $\|X\|_F$, which is advantageous when inputs have very large norm or heavy tails. The Gaussian complexity bound's leading term grows linearly in $\|X\|_F$.
    \item \textbf{High-dimensional inputs with moderate rank:} Looking at the full expressions, PAC-Bayes scales with $r(d_{\text{in}} + d_{\text{out}})$ while Gaussian complexity (with bounded inputs) scales with $C_0^2 C_1^4 R^2 \cdot r\,d_{\text{out}}$. When $d_{\text{in}} \gg d_{\text{out}}$ and $C_0^2 C_1^4 R^2$ is large, PAC-Bayes can still be favorable despite the $d_{\text{in}}$ term, because it avoids the spectral norm constants entirely. Concretely, PAC-Bayes is structurally favorable when $C_{\max}(d_{\text{in}} + d_{\text{out}}) \lesssim \pi\mathcal{L}\, C_0^2 C_1^4 R^2\, d_{\text{out}}$, i.e., when the posterior is sufficiently concentrated relative to the spectral norm and Lipschitz constants.
\end{enumerate}
\end{remark}

\begin{remark}[When Gaussian complexity is stronger]
Gaussian complexity bounds can be tighter when:
\begin{enumerate}
    \item \textbf{Diffuse posteriors with bounded spectral norms:} When $C_{\max}$ is large (the posterior has not concentrated tightly), the PAC-Bayes KL term can be vacuous. The Gaussian complexity bound depends on the spectral norm constraints $C_1$, not on posterior concentration, and can remain informative even when the posterior is spread over a large region of parameter space.
    \item \textbf{Normalized inputs with small spectral norm constants:} When $\|x_j\|_2 \le R$ with $R$ moderate and spectral norms are well-controlled ($C_1$ not too large), the Gaussian complexity bound's leading constant $\sqrt{\pi\mathcal{L}}\,C_0 C_1^2 R$ can be smaller than $\sqrt{C_{\max}/2}$. This is the regime where the Gaussian complexity bound is numerically tighter.
    \item \textbf{No prior specification required:} Unlike PAC-Bayes, which requires choosing a prior $P$ and where the bound quality depends sensitively on this choice, Gaussian complexity bounds depend only on the realized network architecture and data. This makes them valuable as a \emph{prior-free sanity check}.
    \item \textbf{Architectural transparency:} The bound explicitly shows how rank, spectral norms, and layer dimensions interact through the network depth, providing guidance for architectural design that is not available from the PAC-Bayes bound's KL-divergence summary.
\end{enumerate}
\end{remark}

\begin{remark}[Rank dependence]
Both bounds have $\sqrt{r}$ rank dependence in their leading terms:
\begin{itemize}
    \item \textbf{PAC-Bayes:} The parameter-count reduction from $d_{\text{in}} d_{\text{out}}$ to $r(d_{\text{in}} + d_{\text{out}})$ appears inside a square root in the KL term.
    \item \textbf{Gaussian complexity:} The Frobenius norm bound $\|W\|_F \le \sqrt{r}\|W\|_2$ for rank-$r$ matrices introduces $\sqrt{r}$ into the complexity.
\end{itemize}
Neither bound exhibits linear $r$ dependence; the rank always appears under a square root in the leading generalization term.
\end{remark}

\begin{remark}[Complementary perspectives on generalization]
The key insight is that these bounds measure different aspects of generalization:
\begin{itemize}
    \item \textbf{PAC-Bayes} measures how much the posterior deviates from the prior (information-theoretic complexity).
    \item \textbf{Gaussian complexity} measures the richness of the hypothesis class on the actual training data (geometric complexity).
\end{itemize}
In practice, both bounds should be considered:
\begin{itemize}
    \item Use PAC-Bayes to understand posterior--prior relationships and to guide prior selection.
    \item Use Gaussian complexity to understand architectural choices (rank, width, depth, spectral norms) and their interaction with the training data.
\end{itemize}
Our contribution shows that Bayesian predictive means can take advantage of \emph{both} frameworks: they satisfy PAC-Bayes bounds through posterior analysis, and they satisfy Gaussian complexity bounds through the convex-hull structure established in this section.
\end{remark}


\subsection{Alternative Approaches for High-Probability Spectral Norm Bounds}\label{appendix:high_prob_alternatives}

Assumption~\ref{ass:bounded_factor_support} requires that the variational posterior has support contained within a spectral norm ball, that is, the bound $\|A_i\|_2 \leq C_i^A$ must hold \emph{almost surely} for every sample from the posterior. This assumption is essential for the submultiplicativity argument in Lemma~\ref{lem:induced_constraints} which establishes that the induced weight matrices $W_i = A_i B_i^\top$ satisfy the spectral norm constraints required for $\mathcal{F}_D^{\mathrm{Pinto}}$.

However, in practice, commonly used variational families such as Gaussian posteriors have unbounded support. While concentration inequalities (such as Van Handel's result in \cite{van2017spectral}) provide \emph{high-probability} bounds, they do not guarantee almost-sure containment. This section presents two alternative approaches that allow our framework to accommodate high-probability bounds rather than requiring almost-sure bounds.

Both approaches require an additional regularity condition on the loss function:

\begin{assumption}[Convex loss]
\label{ass:convex_loss}
The loss function $\ell(\cdot, y): \mathbb{R}^k \to [0,1]$ is convex in its first argument for every $y$.
\end{assumption}

\begin{remark}
Assumption~\ref{ass:convex_loss} is satisfied by most standard loss functions used in practice, including the squared loss $\ell(\hat{y},y) = \|\hat{y}-y\|^2$ (after rescaling to $[0,1]$), the cross-entropy loss (when composed with the softmax, the loss is convex in the logit vector), and the hinge loss. The convexity requirement is needed because the BNN predictor $f_{\mathrm{BNN}}$ is a mixture of $f_{\mathrm{BNN}}^{\mathrm{good}}$ and $f_{\mathrm{BNN}}^{\mathrm{bad}}$, and we need Jensen's inequality to bound the loss of the mixture in terms of the losses of its components.
\end{remark}

\subsubsection{Setup and Notation}

Let $q(A, B \mid \theta)$ denote the variational posterior over the factor matrices. For each layer $i$, suppose we have high-probability bounds:
\begin{align}
    \mathbb{P}_{A_i \sim q_A(\cdot\mid\theta_i)}\left(\|A_i\|_2 \leq C_i^A\right) &\geq 1 - \delta_i^A, \label{eq:hp_bound_A}\\
    \mathbb{P}_{B_i \sim q_B(\cdot\mid\theta_i)}\left(\|B_i\|_2 \leq C_i^B\right) &\geq 1 - \delta_i^B. \label{eq:hp_bound_B}
\end{align}

Define the ``good event'' $\mathcal{E}_{\text{good}}$ as the intersection of all these events across all $D$ layers:
\begin{equation}
    \mathcal{E}_{\text{good}} := \bigcap_{i=1}^{D} \left\{ \|A_i\|_2 \leq C_i^A \right\} \cap \left\{ \|B_i\|_2 \leq C_i^B \right\}.
\end{equation}

By a union bound:
\begin{equation}
    \mathbb{P}(\mathcal{E}_{\text{good}}) \geq 1 - \sum_{i=1}^{D} (\delta_i^A + \delta_i^B) =: 1 - \delta_{\text{total}}.
\end{equation}

When $\mathcal{E}_{\text{good}}$ occurs, all factor matrices satisfy their spectral norm bounds, and consequently all induced weight matrices $W_i = A_i B_i^\top$ satisfy $\|W_i\|_2 \leq C_i := C_i^A C_i^B$ and $\mathrm{rank}(W_i) \leq r_i$ by the same argument as Lemma~\ref{lem:induced_constraints}.

We write $p := \mathbb{P}(\mathcal{E}_{\text{good}}) \ge 1 - \delta_{\text{total}}$ throughout this section.
On $\mathcal{E}_{\text{good}}$, writing $\bar W_i := \mathbb{E}[A_i]\,\mathbb{E}[B_i]^\top$ for the mean layer maps and $f_{\bar w}$ for the network they induce, a sufficient small-variance certificate for the sampled network to remain uniformly close to $f_{\bar w}$ on bounded inputs with $Q$-probability at least $\tfrac12$ is that the \emph{summed} relative perturbation $\sum_{i=1}^{D}\|A_iB_i^\top-\bar W_i\|_2/\|\bar W_i\|_2$ be sufficiently small; in this convention the condition constrains this sum as a whole, with no separate per-layer allowance.
\subsubsection{Approach A: Decomposition of the BNN Predictor}\label{sec:approach_a}

\paragraph{Function-level decomposition.}
The BNN predictor is defined as an expectation over the full posterior:
\begin{equation}
    f_{\mathrm{BNN}}(x; \theta) = \mathbb{E}_{(A,B) \sim q(\cdot\mid\theta)}[f_{A,B}(x)].
\end{equation}

By the law of total expectation \emph{applied to the posterior distribution $q$}, we can decompose this as:
\begin{equation}\label{eq:fbnn_decomp}
    f_{\mathrm{BNN}}(x; \theta) = p \cdot f_{\mathrm{BNN}}^{\text{good}}(x; \theta) + (1-p) \cdot f_{\mathrm{BNN}}^{\text{bad}}(x; \theta),
\end{equation}
where:
\begin{align}
    f_{\mathrm{BNN}}^{\text{good}}(x; \theta) &:= \mathbb{E}_{(A,B)\sim q}[f_{A,B}(x) \mid \mathcal{E}_{\text{good}}], \label{eq:f_good_def} \\
    f_{\mathrm{BNN}}^{\text{bad}}(x; \theta) &:= \mathbb{E}_{(A,B)\sim q}[f_{A,B}(x) \mid \mathcal{E}_{\text{good}}^c]. \label{eq:f_bad_def}
\end{align}

\emph{Note:} Both $f_{\mathrm{BNN}}^{\text{good}}$ and $f_{\mathrm{BNN}}^{\text{bad}}$ are deterministic functions of $x$ (given the variational parameters $\theta$). The conditioning in~\eqref{eq:f_good_def}--\eqref{eq:f_bad_def} is over the posterior $(A,B)\sim q$, not over the data distribution. Equation~\eqref{eq:fbnn_decomp} is a pointwise identity for each $x$, expressing $f_{\mathrm{BNN}}(x)$ as a convex combination of two deterministic functions.

\paragraph{Analysis of the ``good'' term.}
The conditional expectation $f_{\mathrm{BNN}}^{\text{good}}$ is taken over the conditional distribution $q(\cdot\mid\theta, \mathcal{E}_{\text{good}})$, which is supported on factor matrices satisfying the spectral norm bounds. By the same argument as Lemma~\ref{lem:bnn_in_closed_conv} (replacing $q$ by $q(\cdot \mid \mathcal{E}_{\text{good}})$):
\begin{equation}
    f_{\mathrm{BNN}}^{\text{good}}(\cdot; \theta) \in \overline{\mathrm{conv}(\mathcal{F}_D^{\mathrm{supp},\text{good}}(\theta))},
\end{equation}
where $\mathcal{F}_D^{\mathrm{supp},\text{good}}(\theta)$ consists of network functions realized by factor pairs in the support of $q(\cdot \mid \mathcal{E}_{\text{good}})$. Since all such pairs satisfy the spectral norm constraints by definition of $\mathcal{E}_{\text{good}}$:
\begin{equation}
    \mathcal{F}_D^{\mathrm{supp},\text{good}}(\theta) \subseteq \mathcal{F}_D^{\mathrm{Pinto}}(C,r).
\end{equation}

By the same arguments as Proposition~\ref{prop:complexity_chain_class}, the Gaussian complexity of the class containing $f_{\mathrm{BNN}}^{\text{good}}$ is bounded by that of Pinto's class. Applying Theorem~\ref{thm:pinto_generalization}, we obtain that with probability at least $1-\delta$ over $S \sim \mathcal{D}^m$:
\begin{equation}\label{eq:gen_bound_good}
    \mathbb{E}_{(x,y)}[\ell(f_{\mathrm{BNN}}^{\text{good}}(x), y)]
    \le
    \frac{1}{m}\sum_{j=1}^m \ell(f_{\mathrm{BNN}}^{\text{good}}(x_j), y_j)
    + \sqrt{\pi\mathcal{L}}\;\hat{\mathcal{G}}_{S_x}(\mathcal{F}_D^{\mathrm{Pinto}}(C,r))
    + 3\sqrt{\frac{\log(2/\delta)}{2m}}.
\end{equation}

\paragraph{From $f_{\mathrm{BNN}}^{\text{good}}$ to $f_{\mathrm{BNN}}$: applying loss convexity.}
We now bound the population loss of the actual deployed predictor $f_{\mathrm{BNN}}$. By~\eqref{eq:fbnn_decomp}, for each $(x,y)$:
\[
f_{\mathrm{BNN}}(x) = p \cdot f_{\mathrm{BNN}}^{\text{good}}(x) + (1-p) \cdot f_{\mathrm{BNN}}^{\text{bad}}(x).
\]

Under Assumption~\ref{ass:convex_loss}, $\ell(\cdot, y)$ is convex in its first argument, so by Jensen's inequality:
\begin{align}
    \ell(f_{\mathrm{BNN}}(x), y)
    &\le p \cdot \ell(f_{\mathrm{BNN}}^{\text{good}}(x), y) + (1-p) \cdot \ell(f_{\mathrm{BNN}}^{\text{bad}}(x), y) \nonumber \\
    &\le p \cdot \ell(f_{\mathrm{BNN}}^{\text{good}}(x), y) + (1-p) \cdot 1, \label{eq:convexity_bound}
\end{align}
where in the second inequality we used $\ell \in [0,1]$.

Taking the expectation over $(x,y) \sim \mathcal{D}$:
\begin{equation}\label{eq:pop_loss_decomp}
    \mathbb{E}_{(x,y)}[\ell(f_{\mathrm{BNN}}(x), y)]
    \le p \cdot \mathbb{E}_{(x,y)}[\ell(f_{\mathrm{BNN}}^{\text{good}}(x), y)] + (1-p)
    \le \mathbb{E}_{(x,y)}[\ell(f_{\mathrm{BNN}}^{\text{good}}(x), y)] + \delta_{\text{total}},
\end{equation}
where the last step uses $1-p \le \delta_{\text{total}}$ and $p \le 1$.

\paragraph{Combined generalization bound.}
Substituting~\eqref{eq:gen_bound_good} into~\eqref{eq:pop_loss_decomp}:

\begin{proposition}[Generalization bound under high-probability spectral norm constraints]
\label{prop:hp_gen_bound}
Let $\ell(\cdot,y) \in [0,1]$ be $\mathcal{L}$-Lipschitz and convex in its first argument (Assumptions in Theorem~\ref{thm:pinto_generalization} and Assumption~\ref{ass:convex_loss}). Let high-probability bounds~\eqref{eq:hp_bound_A}--\eqref{eq:hp_bound_B} hold with $\delta_{\text{total}} = \sum_{i=1}^D (\delta_i^A + \delta_i^B)$. Then with probability at least $1-\delta$ over $S \sim \mathcal{D}^m$:
\begin{align}
    \mathbb{E}_{(x,y)}[\ell(f_{\mathrm{BNN}}(x), y)] 
    &\leq \frac{1}{m}\sum_{j=1}^{m} \ell(f_{\mathrm{BNN}}^{\mathrm{good}}(x_j), y_j) \nonumber \\
    &\quad + \sqrt{\pi \mathcal{L}} \cdot \hat{\mathcal{G}}_{S_x}(\mathcal{F}_D^{\mathrm{Pinto}}(C,r)) + 3\sqrt{\frac{\log(2/\delta)}{2m}} + \delta_{\text{total}},
\end{align}
where $C_i = C_i^A C_i^B$ and $f_{\mathrm{BNN}}^{\mathrm{good}}$ is defined in~\eqref{eq:f_good_def}.
\end{proposition}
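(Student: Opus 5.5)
The plan is to chain the two displayed inequalities already derived in Approach A, checking that each ingredient holds under the stated hypotheses. There are three steps: the pointwise decomposition of $f_{\mathrm{BNN}}$, the good-part generalization bound \eqref{eq:gen_bound_good}, and the convexity step \eqref{eq:pop_loss_decomp}.

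\textbf{Step 1: decomposition.} First I would verify the pointwise decomposition \eqref{eq:fbnn_decomp}. This is the law of total expectation applied to $q(\cdot\mid\theta)$ for each fixed $x$. Two conditions are needed:
\begin{itemize}
\item $p>0$, so that conditioning on $\mathcal{E}_{\mathrm{good}}$ makes sense; this is guaranteed when $\delta_{\mathrm{total}}<1$.
\item Integrability of $f_{A,B}(x)$, which is needed so that $f^{\mathrm{bad}}_{\mathrm{BNN}}$ is well defined.
\end{itemize}
If $p=1$, the bad term is simply absent.

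\textbf{Step 2: generalization bound for the good part.} Let $\tilde q$ be the conditional law $q(\cdot\mid\theta,\mathcal{E}_{\mathrm{good}})$. Under $\tilde q$, Assumption~\ref{ass:bounded_factor_support} holds almost surely with constants $C_i^A, C_i^B$. The argument then runs as follows:
\begin{itemize}
\item Lemma~\ref{lem:induced_constraints} and Lemma~\ref{lem:supp_subset_pinto} give $\mathcal{F}^{\mathrm{supp},\mathrm{good}}_D(\theta)\subseteq\mathcal{F}^{\mathrm{Pinto}}_D(C,r)$.
\item Lemma~\ref{lem:bnn_in_closed_conv}, applied with $\tilde q$ in place of $q$, places $f^{\mathrm{good}}_{\mathrm{BNN}}$ in the closed convex hull of that support class.
\item Proposition~\ref{prop:complexity_chain_class} bounds the Gaussian complexity of this hull by $\hat{\mathcal G}_{S_x}(\mathcal{F}^{\mathrm{Pinto}}_D(C,r))$.
\item Theorem~\ref{thm:pinto_generalization}, applied uniformly over the hull, yields \eqref{eq:gen_bound_good} on an event of probability at least $1-\delta$ over $S$.
\end{itemize}

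\textbf{Step 3: passing to $f_{\mathrm{BNN}}$ by convexity.} I would apply Jensen's inequality via Assumption~\ref{ass:convex_loss}, using $\ell\le 1$ on the bad component, to obtain \eqref{eq:convexity_bound}. Integrating over $\mathcal D$ and using $p\le1$ and $1-p\le\delta_{\mathrm{total}}$ gives \eqref{eq:pop_loss_decomp}. Substituting \eqref{eq:gen_bound_good} on the same high-probability event then yields exactly the claimed inequality, with the empirical term evaluated at $f^{\mathrm{good}}_{\mathrm{BNN}}$.

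The main obstacle is Step 2, because the hypothesis class must not depend on the sample $S$. The class $\overline{\mathrm{conv}}(\mathcal{F}^{\mathrm{supp},\mathrm{good}}_D(\theta))$ is indexed by the trained $\theta$, which is itself data-dependent. I would avoid this dependence by applying Theorem~\ref{thm:pinto_generalization} to the fixed, $\theta$-independent class $\overline{\mathrm{conv}}(\mathcal{F}^{\mathrm{Pinto}}_D(C,r))$, with closure taken on the sample. Its Gaussian complexity equals $\hat{\mathcal G}_{S_x}(\mathcal{F}^{\mathrm{Pinto}}_D(C,r))$ by Lemmas~\ref{lem:convex_hull_invariance} and~\ref{lem:closure_invariance}. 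The uniform-in-$f$ statement then covers $f^{\mathrm{good}}_{\mathrm{BNN}}$ for whatever $\theta$ was selected, provided the constants $C_i^A, C_i^B$ are fixed before seeing the data.
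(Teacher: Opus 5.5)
Your proposal is correct and follows the paper's Approach A essentially step for step: the total-expectation split of $f_{\mathrm{BNN}}$, then the bound \eqref{eq:gen_bound_good} for $f_{\mathrm{BNN}}^{\mathrm{good}}$ via the closed convex hull of a support class contained in $\mathcal{F}^{\mathrm{Pinto}}_D(C,r)$ and Theorem~\ref{thm:pinto_generalization}, and finally convexity together with $\ell\le 1$ and $1-p\le\delta_{\text{total}}$. Your additional care about $p>0$, about integrability off $\mathcal{E}_{\text{good}}$, and about applying the uniform bound to the fixed class $\overline{\mathrm{conv}}(\mathcal{F}^{\mathrm{Pinto}}_D(C,r))$ tightens points the paper leaves implicit; one small fix is to take that closure pointwise on all of $\mathbb{R}^d$ (as in Lemma~\ref{lem:bnn_in_closed_conv}) rather than on the sample, because a class closed only on $S_x$ is data-dependent and contains functions that are arbitrary off the sample, whereas the pointwise closure is fixed and still has complexity $\hat{\mathcal{G}}_{S_x}(\mathcal{F}^{\mathrm{Pinto}}_D(C,r))$ by Lemmas~\ref{lem:monotonicity} and~\ref{lem:closure_invariance}.
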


\begin{remark}[Empirical loss of $f_{\mathrm{BNN}}^{\mathrm{good}}$ vs.\ $f_{\mathrm{BNN}}$]
The empirical loss term involves $f_{\mathrm{BNN}}^{\mathrm{good}}$ rather than $f_{\mathrm{BNN}}$, because the generalization bound from Theorem~\ref{thm:pinto_generalization} is applied to $f_{\mathrm{BNN}}^{\mathrm{good}}$ (which belongs to the controlled function class). In practice, $f_{\mathrm{BNN}}^{\mathrm{good}}$ can be computed via Monte Carlo sampling from the truncated posterior $q(\cdot \mid \mathcal{E}_{\text{good}})$, i.e., by rejection sampling: draw $(A,B) \sim q$ and retain only samples satisfying the spectral norm bounds. When $\delta_{\text{total}}$ is small, the rejection rate is low and $f_{\mathrm{BNN}}^{\mathrm{good}} \approx f_{\mathrm{BNN}}$.
\end{remark}

\begin{remark}[Practical interpretation]
The additional term $\delta_{\text{total}}$ represents the ``cost'' of using high-probability bounds instead of almost-sure bounds. For Gaussian posteriors with Van Handel-type bounds (see Remark~\ref{rem:van_handel}), choosing $\delta_i^A = \delta_i^B = \delta_0 / (2D)$ for each layer gives $\delta_{\text{total}} = \delta_0$. The spectral norm bounds $C_i^A$ and $C_i^B$ then scale as $O(\sqrt{\log(D/\delta_0)})$, which introduces a mild logarithmic dependence on the depth and confidence level.
\end{remark}

\begin{remark}[Why convexity is necessary]
\label{rem:why_convexity}
Without Assumption~\ref{ass:convex_loss}, the bound~\eqref{eq:convexity_bound} does not hold. Using only Lipschitz continuity gives
\[
|\ell(f_{\mathrm{BNN}}(x), y) - \ell(f_{\mathrm{BNN}}^{\mathrm{good}}(x), y)| \le \mathcal{L}\|f_{\mathrm{BNN}}(x) - f_{\mathrm{BNN}}^{\mathrm{good}}(x)\|_2 = \mathcal{L}(1-p)\|f_{\mathrm{BNN}}^{\mathrm{bad}}(x) - f_{\mathrm{BNN}}^{\mathrm{good}}(x)\|_2,
\]
which requires bounding $\|f_{\mathrm{BNN}}^{\mathrm{bad}}(x)\|_2$---the expected output under the ``bad'' event where spectral norm constraints are violated. This quantity can be arbitrarily large for unbounded variational families, making the Lipschitz approach impractical without additional assumptions on the tails of $q$. Loss convexity avoids this issue entirely by exploiting the $[0,1]$-boundedness of $\ell$.
\end{remark}

\subsubsection{Approach B: Modified Function Class with High-Probability Inclusion}

An alternative approach modifies the function class to directly incorporate the high-probability nature of the constraints.

\paragraph{Definition.} Define the \emph{high-probability support class}:
\begin{equation}
    \mathcal{F}_D^{\mathrm{hp}}(\theta, \delta) := \left\{ f_W : W_i = A_i B_i^\top, (A_i, B_i) \in \mathrm{supp}(q(\cdot\mid\theta_i)) \cap \mathcal{B}_i,\; \forall i=1,\ldots,D \right\},
\end{equation}
where $\mathcal{B}_i := \{(A_i, B_i) : \|A_i\|_2 \leq C_i^A(\delta), \|B_i\|_2 \leq C_i^B(\delta)\}$ and $C_i^A(\delta), C_i^B(\delta)$ are chosen so that 
\begin{equation}
\mathbb{P}((A_i, B_i) \in \mathcal{B}_i) \geq 1 - \delta/(2D).
\end{equation}

\paragraph{Properties.}
\begin{enumerate}
    \item $\mathcal{F}_D^{\mathrm{hp}}(\theta, \delta) \subseteq \mathcal{F}_D^{\mathrm{Pinto}}(C(\delta),r)$ by construction, where $C_i(\delta) = C_i^A(\delta) C_i^B(\delta)$.
    
    \item $f_{\mathrm{BNN}}^{\mathrm{good}}(\cdot;\theta) \in \overline{\mathrm{conv}(\mathcal{F}_D^{\mathrm{hp}}(\theta, \delta))}$ by the same argument as Lemma~\ref{lem:bnn_in_closed_conv}, applied to the conditional posterior $q(\cdot \mid \mathcal{E}_{\text{good}})$.
    
    \item $\mathbb{P}(\text{sample from } q \text{ produces function in } \mathcal{F}_D^{\mathrm{hp}}) \geq 1 - \delta$ by the union bound.
\end{enumerate}

\paragraph{Generalization bound.} By the same reasoning as Approach A---decomposing $f_{\mathrm{BNN}}$ as in~\eqref{eq:fbnn_decomp}, applying Assumption~\ref{ass:convex_loss} to obtain~\eqref{eq:pop_loss_decomp}, and bounding the population loss of $f_{\mathrm{BNN}}^{\mathrm{good}}$ via Theorem~\ref{thm:pinto_generalization}---we obtain:
\begin{align}
    \mathbb{E}_{(x,y)}[\ell(f_{\mathrm{BNN}}(x), y)] 
    &\leq \frac{1}{m}\sum_{j=1}^{m} \ell(f_{\mathrm{BNN}}^{\mathrm{good}}(x_j), y_j) \nonumber \\
    &\quad + \sqrt{\pi \mathcal{L}} \cdot \hat{\mathcal{G}}_{S_x}(\mathcal{F}_D^{\mathrm{Pinto}}(C(\delta),r)) + 3\sqrt{\frac{\log(2/\delta')}{2m}} + \delta,
\end{align}
where the Gaussian complexity bound uses the spectral norm constraints $C_i(\delta) = C_i^A(\delta) \cdot C_i^B(\delta)$ that depend on the confidence parameter $\delta$.

\begin{remark}[Comparison of approaches]
Both approaches yield the same final bound structure, with an additive $\delta$ term from the high-probability relaxation and a convexity requirement on the loss (Assumption~\ref{ass:convex_loss}).
\begin{itemize}
    \item \textbf{Approach A} is conceptually cleaner: it decomposes the predictor into good and bad components at the function level.
    \item \textbf{Approach B} is more direct: it modifies the function class definition to explicitly encode the high-probability constraint.
\end{itemize}
In practice, either approach can be used depending on which is more convenient for the specific application.
\end{remark}

\subsection{Practical Enforcement Remarks}\label{app:gaussian_complexity:van_handel}

We now discuss practical methods for enforcing or approximating Assumption~\ref{ass:bounded_factor_support}, and provide specific concentration results for Gaussian variational posteriors.

\begin{remark}[Enforcement in practice]
\label{rem:enforcement}
Assumption~\ref{ass:bounded_factor_support} can be satisfied (or approximated) by several practical approaches:

\begin{enumerate}
\item \textbf{Projection / spectral normalization.} After each parameter update during optimization, rescale factors to enforce spectral norm constraints:
\[
A_i \leftarrow \min\!\left(1,\frac{C_i^A}{\|A_i\|_2}\right)A_i,
\qquad
B_i \leftarrow \min\!\left(1,\frac{C_i^B}{\|B_i\|_2}\right)B_i.
\]
In reparameterized variational inference, the same clipping can be applied to sampled factors. This ensures almost-sure satisfaction of the bounds.

\item \textbf{Compact-support families.} Use truncated Gaussians or uniform distributions supported on the spectral norm ball. For example, a truncated Gaussian with support $\{A : \|A\|_2 \leq C^A\}$ naturally satisfies the assumption. This requires computing the normalization constant, which may be computationally expensive for high-dimensional matrices.

\item \textbf{Soft constraints via penalties.} Add regularization penalties that discourage violations:
\[
\mathcal{L}_{\text{reg}}(\theta) = \lambda_A \sum_{i=1}^D \max(0, \|A_i\|_2 - C_i^A)^2 + \lambda_B \sum_{i=1}^D \max(0, \|B_i\|_2 - C_i^B)^2.
\]
This yields a high-probability rather than almost-sure guarantee unless combined with truncation, but may be easier to implement and optimize.
\end{enumerate}
\end{remark}

\begin{remark}[Van Handel-type spectral norm bounds for Gaussian factors]
\label{rem:van_handel}
If $A_i \in \mathbb{R}^{h_i \times r_i}$ has independent Gaussian entries with variance $\sigma_i^2$ (and possibly nonzero mean $\mu_i$), nonasymptotic concentration results provide sharp bounds on the spectral norm.

Specifically, by Van Handel's matrix concentration inequality \citep{van2017spectral}, there exist universal constants $c_1, c_2 > 0$ such that for all $t \geq 0$,
\[
\mathbb{P}\!\left(
\|A_i\|_2
\leq
\|\mathbb{E}[A_i]\|_2 + c_1\sigma_i(\sqrt{h_i}+\sqrt{r_i}) + c_2\sigma_i t
\right)
\geq 1-2e^{-t^2}.
\]

Setting $t = \sqrt{\log(2/\delta_i)}$ yields a $(1-\delta_i)$ high-probability bound:
\[
\mathbb{P}\!\left(
\|A_i\|_2
\leq
\|\mathbb{E}[A_i]\|_2 + c_1\sigma_i(\sqrt{h_i}+\sqrt{r_i}) + c_2\sigma_i \sqrt{\log(2/\delta_i)}
\right)
\geq 1-\delta_i.
\]

\paragraph{Recovering an almost-sure bound.} To recover an almost-sure bound from this high-probability statement, one may \emph{truncate} the variational distribution outside the spectral norm ball:
\[
C_i^A := \|\mathbb{E}[A_i]\|_2 + c_1\sigma_i(\sqrt{h_i}+\sqrt{r_i}) + c_2\sigma_i \sqrt{\log(2/\delta_i)},
\]
at the cost of modifying the variational family from a standard Gaussian to a truncated Gaussian.

\paragraph{Retaining high-probability bounds.} Alternatively, one may retain the high-probability statement and use Approaches A or B from Section~\ref{appendix:high_prob_alternatives} to carry an additional failure probability $\delta_{\text{total}}$ through the final generalization bound. This is often more practical than truncation, as it requires no modification to the variational family. This approach additionally requires loss convexity (Assumption~\ref{ass:convex_loss}).

\paragraph{Practical recommendations.} In practice, we recommend:
\begin{itemize}
    \item Use spectral normalization (projection) during training to enforce hard constraints, which ensures almost-sure satisfaction at minimal computational cost.
    \item Use Van Handel bounds to guide the choice of $C_i^A$ and $C_i^B$ based on the variational posterior parameters $\sigma_i$ and $\mu_i$.
    \item If using unbounded variational families without projection, apply the high-probability analysis from Section~\ref{appendix:high_prob_alternatives} with appropriately chosen $\delta_{\text{total}}$ and ensure the loss function satisfies Assumption~\ref{ass:convex_loss}.
\end{itemize}
\end{remark}

\begin{remark}[Tightness of Van Handel bounds]
The Van Handel bound is known to be tight up to universal constants. For a random matrix with independent Gaussian entries, the spectral norm concentrates around $\sigma(\sqrt{h} + \sqrt{r})$ with high probability. This means:
\begin{itemize}
    \item The constants $C_i^A$ and $C_i^B$ grow as $O(\sigma_i \sqrt{h_i + r_i})$ for Gaussian posteriors.
    \item The induced spectral norm bounds $C_i = C_i^A C_i^B$ then scale as $O(\sigma_i^2 (h_i + r_i))$.
    \item These bounds are dimension-dependent but explicit and computable from the variational parameters.
\end{itemize}
\end{remark}

\section{Principled Initialization of Low-Rank Variational Weight Matrices}\label{sec:low_rank_initialization}

\subsection{Setting and Notation}\label{app:init:setup}

Consider a linear layer with input dimension $n$ and output dimension $m$, parameterized by a weight matrix
\[
W \in \mathbb{R}^{n \times m}.
\]

Rather than parameterizing $W$ directly, we adopt a low-rank factorization
\[
W = AB^\top,
\quad
A \in \mathbb{R}^{n \times r},
\quad
B \in \mathbb{R}^{m \times r},
\quad
r \ll \min(n,m).
\]

In a variational Bayesian formulation, we place factorized Gaussian variational posteriors over the entries of $A$ and $B$:
\[
q(A) = \prod_{i,k} \mathcal{N}(A_{ik} \mid \mu^A_{ik}, (\sigma^A_{ik})^2),
\qquad
q(B) = \prod_{j,k} \mathcal{N}(B_{jk} \mid \mu^B_{jk}, (\sigma^B_{jk})^2).
\]

The objective of initialization is to choose $(\mu^A,\mu^B,\sigma^A,\sigma^B)$ such that the induced distribution over $W$ yields stable forward signal propagation and well-conditioned optimization at the start of training.

\subsection{Reference Dense Initialization and Variance Propagation}\label{app:init:dense}

Initialization of deep networks is traditionally analyzed through variance propagation arguments. Let $x \in \mathbb{R}^n$ be an input vector with i.i.d. components satisfying
\[
\mathbb{E}[x_i] = 0,
\qquad
\mathrm{Var}(x_i) = 1.
\]

For a dense linear layer $z = Wx$, sufficient conditions for stable signal propagation require that the entries of $W$ satisfy
\[
\mathbb{E}[W_{ij}] = 0,
\qquad
\mathrm{Var}(W_{ij}) = \sigma_W^2,
\]
where $\sigma_W^2$ depends on $(n,m)$ and the nonlinearity applied after the linear transformation.

Glorot and Bengio \cite{pmlr-v9-glorot10a} showed that choosing
\[
\sigma_W^2 = \frac{2}{n + m}
\]
preserves the variance of activations across layers for linear and saturating nonlinearities.

He et al. \cite{he2015delving} further derived that for ReLU activations, variance preservation requires
\[
\sigma_W^2 = \frac{2}{n}.
\]

These results are textbook consequences of second-moment propagation and form the theoretical basis of modern dense initialization schemes.

In what follows, $\sigma_W^2$ is treated as a known constant determined solely by layer dimensions and activation function.

\subsection{Induced Statistics of Low-Rank Weight Matrices}\label{app:init:induced_stats}

Let the variational means $\mu^A_{ik}$ and $\mu^B_{jk}$ be initialized as independent random variables with
\[
\mathbb{E}[\mu^A_{ik}] = \mathbb{E}[\mu^B_{jk}] = 0,
\qquad
\mathrm{Var}(\mu^A_{ik}) = v_A,
\quad
\mathrm{Var}(\mu^B_{jk}) = v_B.
\]

The mean of the induced weight distribution is
\[
\bar{W} := \mathbb{E}_q[W] = \mu^A (\mu^B)^\top,
\]
with entries
\[
\bar{W}_{ij} = \sum_{k=1}^r \mu^A_{ik} \mu^B_{jk}.
\]

Since products of independent zero-mean random variables satisfy
\[
\mathbb{E}[\mu^A_{ik} \mu^B_{jk}] = 0,
\qquad
\mathrm{Var}(\mu^A_{ik} \mu^B_{jk}) = v_A v_B,
\]
we obtain by independence and additivity of variance:
\[
\mathrm{Var}(\bar{W}_{ij})
= \sum_{k=1}^r \mathrm{Var}(\mu^A_{ik} \mu^B_{jk})
= r \, v_A v_B.
\]

This calculation follows directly from independence and the fact that the variance of a sum of independent random variables equals the sum of their variances.

\subsection{Variance-Matching Condition}\label{app:init:variance_matching}

To ensure that the low-rank parameterization reproduces the second-order statistics of a properly initialized dense layer, we impose the condition
\[
\mathrm{Var}(\bar{W}_{ij}) = \sigma_W^2.
\]

Substituting the expression from Section~\ref{app:init:induced_stats} yields the constraint
\[
r \, v_A v_B = \sigma_W^2.
\tag{VM}
\label{eq:variance_matching}
\]

Equation~\eqref{eq:variance_matching} defines a one-parameter family of valid initializations, parameterized by the choice of how to split the variance between $v_A$ and $v_B$.

\subsection{Symmetric Choice and Identifiability Considerations}\label{app:init:symmetric}

The factorization $W = AB^\top$ exhibits scale non-identifiability: for any $c>0$,
\[
A \mapsto cA,
\quad
B \mapsto c^{-1}B
\quad
\Rightarrow
\quad
AB^\top \text{ unchanged}.
\]

Such gauge freedoms are well-known in matrix factorizations and latent variable models (see, e.g., \cite{udell2015generalizedlowrankmodels}).

A natural way to fix this non-identifiability at initialization is to impose symmetry between the factors:
\[
v_A = v_B.
\]

Combined with the variance-matching condition~\eqref{eq:variance_matching}, this yields the unique symmetric solution
\[
v_A = v_B = \sqrt{\frac{\sigma_W^2}{r}}.
\]

This choice ensures:
\begin{itemize}
    \item Balanced gradient magnitudes for $A$ and $B$ during early training,
    \item Isotropy of the variational posterior at initialization,
    \item Consistency with the symmetric square-root parameterization $W = U \Sigma^{1/2} (\Sigma^{1/2} V^\top)$ arising from the singular value decomposition.
\end{itemize}

\subsection{Concrete Initializer Families}\label{app:init:concrete}

\subsubsection{Gaussian Initializers}

If the variational means are initialized as
\[
\mu^A_{ik}, \mu^B_{jk} \sim \mathcal{N}(0, s^2),
\]
then $v_A = v_B = s^2$, and the symmetric variance-matching condition yields
\[
s = \left(\frac{\sigma_W^2}{r}\right)^{1/4}.
\]

\textbf{Example:} For He initialization with ReLU activations, $\sigma_W^2 = 2/n$, so:
\[
s = \left(\frac{2}{nr}\right)^{1/4}.
\]

\subsubsection{Uniform Initializers}

If instead
\[
\mu^A_{ik}, \mu^B_{jk} \sim \mathcal{U}(-a,a),
\]
then by standard probability theory \cite{casella2002statistical},
\[
\mathrm{Var}(\mathcal{U}(-a,a)) = \frac{a^2}{3}.
\]

Substituting $v_A = v_B = a^2/3$ into the symmetric variance-matching condition~\eqref{eq:variance_matching}:
\[
r \left(\frac{a^2}{3}\right)^2 = \sigma_W^2,
\]
which gives
\[
a = \sqrt{3}\left(\frac{\sigma_W^2}{r}\right)^{1/4}.
\]

\textbf{Example:} For Glorot initialization, $\sigma_W^2 = 2/(n+m)$, so:
\[
a = \sqrt{3}\left(\frac{2}{r(n+m)}\right)^{1/4}.
\]

\subsection{Initialization of Variational Scales}\label{app:init:scales}

The variational standard deviations $\sigma^A_{ik}$ and $\sigma^B_{jk}$ control posterior uncertainty and the magnitude of the KL divergence term in variational inference \cite{blundell2015weight}.

To prevent stochastic sampling noise from dominating the likelihood signal at early stages of optimization, we initialize them as a fixed fraction of the mean scale:
\[
\sigma^A_{ik} = \sigma^B_{jk}
= \eta \left(\frac{\sigma_W^2}{r}\right)^{1/4},
\quad
\eta \in (0,1).
\]

This choice ensures that the signal-to-noise ratio of $W$ at initialization remains $O(1)$, a condition commonly required for stable variational optimization \cite{graves2011practical}.

\textbf{Practical recommendation:} A typical choice is $\eta = 0.1$ or $\eta = 0.01$, which provides moderate initial uncertainty without overwhelming the signal from the data.

\subsection{Summary}\label{app:init:summary}

The proposed initialization strategy:
\begin{itemize}
    \item Reproduces the second-order statistics of dense variance-preserving initializations \cite{pmlr-v9-glorot10a,he2015delving},
    \item Resolves scale non-identifiability inherent to low-rank factorizations,
    \item Preserves computational and parametric efficiency,
    \item Requires no pretrained full-rank weights.
\end{itemize}

It therefore provides a theoretically principled foundation for initializing low-rank variational neural network layers.
\subsection{SVD Initialization from Pretrained Weights}
\label{app:svd init}
Let $W^* \in \mathbb{R}^{d_{\text{out}} \times d_{\text{in}}}$ denote the weight matrix of a converged deterministic model that minimizes the empirical risk. The variational objective for a low-rank Bayesian layer is
\begin{equation}
\mathcal{L}_{\text{VI}} = \mathbb{E}_{q(W)}[\ell(W)] + \beta \, \text{KL}(q(W) \,\|\, p(W)),
\end{equation}
where $\ell(W)$ is the data loss and $\beta$ is the KL weight.

Since $W^*$ minimizes $\ell(W)$, initializing the posterior mean near $W^*$ places the optimization in a region where:
\begin{itemize}
    \item The expected data loss $\mathbb{E}_{q(W)}[\ell(W)]$ is already small;
    \item The optimizer can focus on balancing the KL penalty and fine-tuning under the rank constraint.
\end{itemize}

This warm-start might, theoretically, reduces the risk of converging to poor local minima far from $W^*$.

To initialize a low-rank Bayesian layer from a pretrained deterministic weight matrix $W_{\text{full}} \in \mathbb{R}^{d_{\text{out}} \times d_{\text{in}}}$, we use the truncated singular value decomposition (SVD).

Let $W_{\text{full}} = U\Sigma V^\top$ be the SVD. The best rank-$r$ approximation is
\begin{equation}
W_r = U_r \Sigma_r V_r^\top,
\end{equation}
where $U_r \in \mathbb{R}^{d_{\text{out}} \times r}$, $V_r \in \mathbb{R}^{d_{\text{in}} \times r}$, and $\Sigma_r = \text{diag}(\sigma_1, \ldots, \sigma_r)$ contains the top $r$ singular values.

We initialize the mean parameters of the low-rank factorization $W = AB^\top$ by splitting the singular values symmetrically:
\begin{equation}
\mu^A = U_r \Sigma_r^{1/2}, \qquad \mu^B = V_r \Sigma_r^{1/2}.
\end{equation}

This ensures $\mu^A (\mu^B)^\top = W_r$, warm-starting the Bayesian model from the optimal rank-$r$ approximation to the pretrained weights. Variance parameters are initialized using principled variance-preserving schemes.

\section{Experimental Details}\label{app:experimental_details}

This section provides complete implementation details to ensure full reproducibility of all experiments.

\subsection{Hardware and Software}\label{app:hardware}

\textbf{Compute Infrastructure:}
\begin{itemize}
    \item GPU: NVIDIA H100 NVL with 95GB memory
    \item CUDA Driver Version: 580.95.05
    \item Operating System: Linux
    \item Python: 3.10.12
\end{itemize}

\textbf{Software Dependencies:}
\begin{itemize}
    \item TensorFlow: 2.15.0
    \item TensorFlow Probability: (bundled with TF 2.15.0)
    \item NumPy: standard version compatible with TF 2.15.0
    \item Scikit-learn: for preprocessing and metrics
    \item Pandas: for data handling
    \item DuckDB: for efficient preprocessing of large MIMIC-III tables
\end{itemize}

\textbf{TensorFlow Configuration:}
\begin{itemize}
    \item Precision: float32 (mixed precision explicitly disabled)
    \item GPU memory: Growth mode enabled (prevents allocating all memory at startup)
    \item Memory allocator: CUDA async allocator (\texttt{TF\_GPU\_ALLOCATOR=cuda\_malloc\_async})
    \item Deterministic operations: Enabled via \texttt{TF\_DETERMINISTIC\_OPS=1} and \texttt{TF\_CUDNN\_DETERMINISTIC=1}
\end{itemize}

\textbf{Random Seeds:} All experiments in the main text run with 4  or 5 different random seeds. For each seed, we set: Python's built-in \texttt{random}, NumPy's random generator, and TensorFlow's random operations. All reported metrics in the main text show mean $\pm$ standard deviation across these runs.

\subsection{Evaluation Metrics}\label{app:metrics}

We report the following metrics depending on models and tasks:

\textbf{Discrimination Performance:}
\begin{itemize}
    \item \textbf{AUROC} (Area Under ROC Curve): Measures ability to rank positive samples higher than negative samples across all decision thresholds. Computed via scikit-learn's \texttt{roc\_auc\_score}. We use AUROC instead of accuracy because accuracy is misleading for imbalanced datasets (e.g., predicting all negatives on MIMIC-III yields 91.6\% accuracy but misses all deaths).
    
    \item \textbf{AUPR-Error}: Area under precision-recall curve for detecting prediction errors. Higher values indicate better error detection via uncertainty.
    
    \item \textbf{AUPR-Success}: Area under precision-recall curve for detecting correct predictions via low uncertainty.
\end{itemize}

\textbf{Calibration Quality:}
\begin{itemize}
    \item \textbf{NLL} (Negative Log-Likelihood): Measures quality of predicted probabilities. Lower is better. Computed as binary cross-entropy between true labels and predicted probabilities.
    
    \item \textbf{ECE} (Expected Calibration Error): Measures alignment between predicted confidences and actual accuracy. We bin predictions by confidence and measure the gap between average confidence and accuracy in each bin.
\end{itemize}

\textbf{Out-of-Distribution Detection:}
\begin{itemize}
    \item \textbf{AUROC-OOD}: AUROC for separating in-distribution from OOD samples using mutual information (MI) as the uncertainty score. MI measures epistemic (model) uncertainty and is computed from $N$ forward passes with different weight samples ($N$ is the number of Monte Carlo samples, specific to each experiment).
    
    \item \textbf{AUPR-OOD}: AUPR for OOD detection using MI.
    
    \item \textbf{MI Ratio}: Mean MI on OOD data divided by mean MI on in-distribution data. Higher values indicate better OOD uncertainty.
\end{itemize}

All metrics computed using scikit-learn where applicable.
\subsection{MIMIC-III Mortality Prediction}\label{app:exp:mimic}

\subsubsection{Dataset and Preprocessing}

\textbf{Data source.} MIMIC-III v1.4~\cite{johnson2016mimic}, binary ICU mortality prediction. Dataset sizes: 40,406 train (8.4\% mortality), 4,490 test (8.6\% mortality), 5,357 OOD newborn admissions (1.1\% mortality).

\textbf{Features.} 44 continuous features: demographics (age capped at 90, gender), ICU characteristics (length of stay, time to admission), vital signs (heart rate, blood pressure, respiratory rate, temperature, SpO2), and laboratory results (CBC, metabolic panel, liver function, coagulation). Mean and standard deviation computed per ICU stay where applicable.

\textbf{Preprocessing.} Following~\citet{ruhe2019bayesian}: (i) merge vital signs and labs from CHARTEVENTS/LABEVENTS, (ii) outlier removal via 8×IQR rule, (iii) median imputation, (iv) MinMax scaling to [0,1] fit on training set. Class weights: 1.0 (survived), 11.88 (deceased).

\subsubsection{Models and Training}

\textbf{Architecture.} All models use 2-layer MLP: 44 → 128 (ReLU) → 128 (ReLU) → 1 (sigmoid).

\begin{table}[h]
\small
\centering
\begin{tabular}{lcc}
\toprule
\textbf{Model} & \textbf{Parameters} & \textbf{Initialization} \\
\midrule
Deterministic & 22.4K & Glorot uniform \\
Deep Ensemble (5×) & 112.7K & Glorot uniform, different seeds \\
Full-Rank BBB & 44.8K & $\mu \sim \mathcal{U}(-0.2, 0.2)$, $\rho \sim \mathcal{U}(-5, -4)$ \\
Low-Rank ($r=15$) & 13.6K & He/Glorot with  damping \\
LR-SVD ($r=15$) & 13.6K & SVD of pretrained deterministic \\
Rank-1 Mult. & 23.3K & Glorot + small perturbations \\
\bottomrule
\end{tabular}
\caption{Model specifications. Low-rank damping: 0.32 for $r \leq 5$, 0.55 for $r > 5$.}
\label{tab:mimic_models}
\end{table}
\textbf{Rank choice} We first do an ablation
study to find the optimal combination of ranks across layers. If a deterministic network is available we can look at the singular values decay to gain intuition for the appropriate range or to validate the chosen rank obtained through ablation (See  Figures \ref{fig:singular_value_decay} and \ref{fig:rank_pareto_tradeoff}).
\begin{figure}[ht]
\centering
\includegraphics[width=0.5\textwidth]{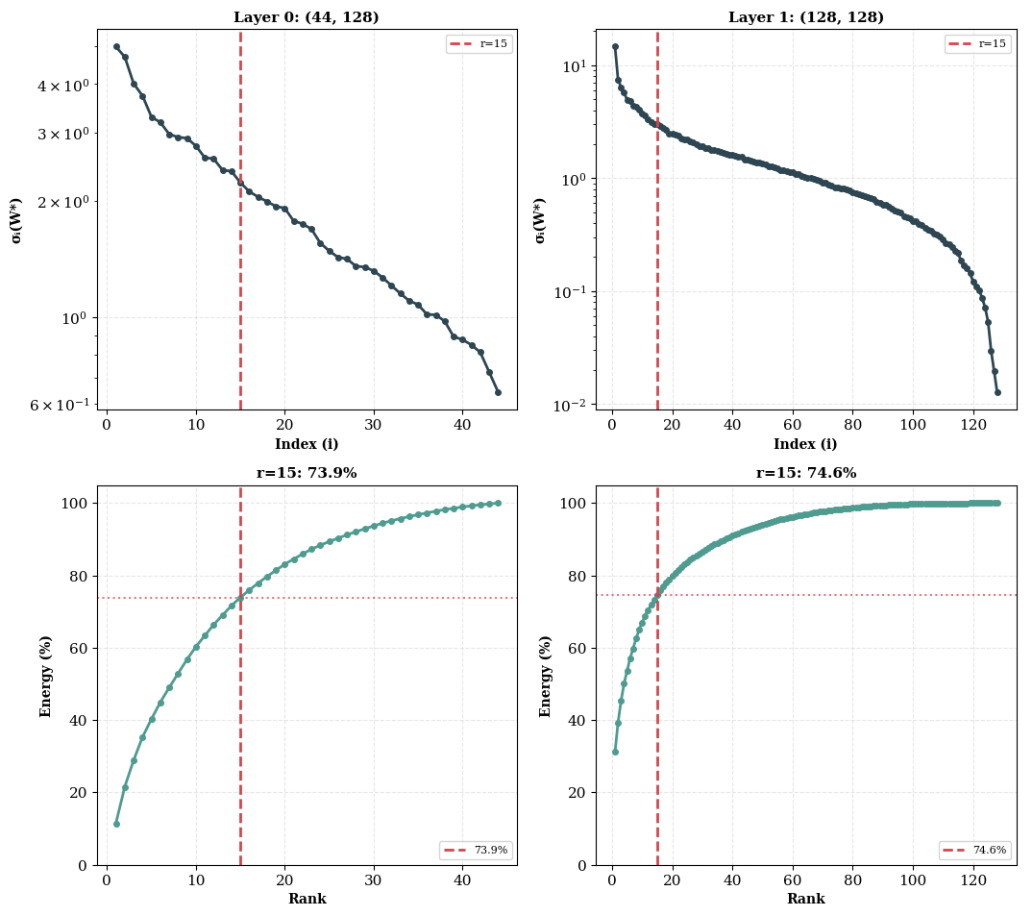}
\caption{Singular value decay and low-rank approximation error for deterministic neural network weight matrices.
\textbf{Layer 0} ($44 \times 128$, left): Singular values decay rapidly (top-left, log scale). 
Red dashed line at $r=15$ indicates selected rank. Bottom-left shows energy retention curve: rank $r=15$ 
achieves $73.9\%$ of total Frobenius norm energy.
\textbf{Layer 1} ($128 \times 128$, right): Similar rapid spectral decay (top-right). 
Red dashed line marks $r=15$. Bottom-right: cumulative energy curve reaches $74.6\%$ at $r=15$.
Dotted horizontal line at $75\%$ energy (reference threshold).
}
\label{fig:singular_value_decay}
\end{figure}

\textbf{Training.} Adam optimizer ($\alpha=10^{-3}$, default $\beta$), batch size 64. Bayesian models: 256 epochs with ELBO objective using KL scaling $\lambda = 0.5/N_{\text{batches}} \approx 7.9 \times 10^{-4}$, scale-mixture prior $p(w) = 0.5\mathcal{N}(0,1) + 0.5\mathcal{N}(0,e^{-6})$~\cite{blundell2015weight}, single weight sample per forward pass during training. Deterministic baseline: 32 epochs to prevent overfitting. Deep ensemble: 32 epochs per member with different seeds.

\textbf{Inference.} Bayesian models: 512 Monte Carlo samples for predictive mean and uncertainty. Deep ensemble: average over 5 members. Epistemic uncertainty via mutual information: $\mathbb{I}(y;\theta|x,\mathcal{D}) = H[y|x,\mathcal{D}] - \mathbb{E}_\theta[H[y|x,\theta]]$.

\subsubsection{Evaluation Metrics}

Standard binary classification metrics computed on predictive mean: AUROC, AUPR. Uncertainty metrics: AUPR-Err (precision-recall for misclassification detection), AUC/AUPR-OOD (detection of OOD samples using mutual information), AUPR-In (correctly identifying in-distribution samples). Calibration: negative log-likelihood (NLL), expected calibration error (ECE). All results averaged over 5 independent runs with different random seeds.
\subsection{Beijing PM2.5 Time-Series Forecasting}
\label{app:exp:lstm}

\subsubsection{Dataset and Preprocessing}

\textbf{Data source.} Beijing PM2.5 air quality dataset, next-hour concentration forecasting. Dataset sizes: 29{,}213 train, 6{,}260 validation, 6{,}260 test. OOD set: Guangzhou PM2.5 data (distributional shift in pollution and meteorology).

\textbf{Features.} 15 continuous features per timestep over a $T{=}24$ hour sliding window: PM2.5 concentration, wind direction/speed, temperature, pressure, cumulative snow/rain hours, and derived time encodings. Input shape: $(N, 24, 15)$.

\textbf{Preprocessing.} (i) Sliding-window extraction with horizon-1 targets; (ii) StandardScaler (zero mean, unit variance) fit on flattened training data only; (iii) targets independently standardized. All predictions inverse-transformed before evaluation.

\subsubsection{Models and Training}

\textbf{Architecture.} All models use a custom 2-layer LSTM ($H{=}64$) built from scratch following~\citet{fortunato2017bayesian}, with explicit time-step unrolling rather than \texttt{tf.keras.layers.LSTM}. Each LSTM layer contains two packed-gate projections (\texttt{x\_to\_gates}: $d_{\mathrm{in}} \to 4H$ with bias; \texttt{h\_to\_gates}: $H \to 4H$ without bias) computing all four gates (input, forget, cell candidate, output) in a single matrix multiply. Forget gate bias initialized to $1.0$~\cite{jozefowicz2015empirical}. Output: $\mathbf{h}_T$ from the top layer fed to a linear head producing a scalar forecast.

\begin{table}[h]
\small
\centering
\begin{tabular}{lccc}
\toprule
\textbf{Model} & \textbf{Layer Type} & \textbf{Rank} & \textbf{Initialization} \\
\midrule
Deterministic & \texttt{Dense} & -- & Glorot (input), Orthogonal (recurrent) \\
Deep Ensemble ($5\times$) & \texttt{Dense} & -- & Glorot/Orthogonal, different seeds \\
Full-Rank BBB & \texttt{DenseVariational} & -- & $\mu \sim \mathcal{U}(-0.2, 0.2)$, $\rho \sim \mathcal{U}(-3, -2)$ \\
Low-Rank ($r{=}[14,20]$) & \texttt{LowRankDenseVar.} & [14,20] & Adaptive with damping (Sec.~\ref{sec:low_rank_initialization}) \\
LR-SVD $r{=}[14,20]$) & \texttt{LowRankDenseVar.} & [14,20] & SVD of pretrained deterministic \\
Rank-1 Mult. & \texttt{Rank1DenseVar.} & 1 & Glorot base + small perturbations \\
\bottomrule
\end{tabular}
\caption{LSTM model specifications. Architecture: 2-layer LSTM ($H{=}64$), input $(24, 15)$, linear output head. Bayesian layers use $W = AB^\top$ (low-rank) or $(1{+}s) \odot xW_0 \odot (1{+}r)$ (rank-1)~\cite{dusenberry2020efficient}. Low-rank output head uses rank${}=1$.}
\label{tab:lstm_models}
\end{table}

\textbf{Bayesian weight caching.} Following Algorithm~2 of~\citet{fortunato2017bayesian}, variational layers sample weights once at $t{=}0$ (\texttt{use\_cached=False}) and reuse them for $t{=}1,\ldots,T{-}1$ (\texttt{use\_cached=True}). A callback clears caches at each batch boundary. KL loss is added only at $t{=}0$.

\textbf{Training.} Adam optimizer ($\alpha=10^{-3}$, default $\beta$), batch size 64. Bayesian models: up to 150 epochs with ELBO objective using KL scaling $\lambda = 0.09/N_{\mathrm{train}} \approx 3.08 \times 10^{-6}$, scale-mixture prior $p(w) = 0.5\mathcal{N}(0,1) + 0.5\mathcal{N}(0,e^{-6})$~\cite{blundell2015weight}, single weight sample per forward pass. Early stopping: patience 30 on validation MAE. Learning rate reduced by 0.5 on plateau (patience 5, min $10^{-6}$). Deep ensemble: 5 members, up to 150 epochs each (patience 10), seeds $42,\ldots,46$.

\textbf{Inference.} Bayesian models: 150 Monte Carlo samples for predictive mean and uncertainty. Deep ensemble: average over 5 members; noise variance pooled from validation residuals. Prediction intervals: $\hat{y} \pm z_{0.975} \hat{\sigma}$.

\subsubsection{Evaluation Metrics}

Point prediction: MAE, RMSE, $R^2$ in original scale. Uncertainty: NLL, CRPS, ECE, PICP (target 0.95), MPIW, interval score. OOD detection (Guangzhou): AUROC, AUPR, FPR@95\%TPR, TPR@5\%FPR using predictive std. Selective prediction at retention rates $\{100\%, 95\%, \ldots, 70\%\}$. All results averaged over runs.

\subsection{SST-2 Sentiment Classification}\label{app:exp:sst2}

\subsubsection{Dataset and Preprocessing}

\textbf{Data source.} SST‑2 for in‑distribution (ID) sentiment classification and AGNews for out‑of‑distribution (OOD) detection. SST‑2 is loaded from HuggingFace \texttt{glue/sst2}; AGNews is loaded from \texttt{ag\_news} (test split).

\textbf{Splits and sizes.} The current preprocessed artifacts used in this experiment contain:
\begin{itemize}
\item SST‑2 train: 67,349 examples
\item SST‑2 dev: 872 examples
\item AGNews test (OOD): 7,600 examples (4 classes, 1,900 each)
\end{itemize}
OOD is capped by \texttt{OOD\_SLICE = 7600} during evaluation.

\textbf{Tokenization.} The pipeline uses \texttt{bert-base-uncased} via \texttt{AutoTokenizer}. Each text is converted to a string (empty string for missing values), then tokenized with:
\texttt{add\_special\_tokens=True}, \texttt{max\_length=64}, \texttt{padding='max\_length'}, \texttt{truncation=True}, \texttt{return\_attention\_mask=True}, \texttt{return\_tensors='np'}.

\textbf{Saved arrays.} For each split, the following NumPy arrays are written:
\texttt{$*\_ids.npy$}, \texttt{$*\_mask.npy$}, \texttt{$*\_labels.npy$}. The pipeline writes to \texttt{SAVE\_DIR\_processed\_data\_agnews/}. All arrays are loaded by the notebook through the config modules.

\textbf{Labels.} SST‑2 labels are binary. AGNews labels are 4‑class (0–3) and are stored but not used for classification; OOD detection treats AGNews as a single out‑of‑distribution set.

\subsubsection{Models and Training}

\textbf{Architecture.} All models share the same tiny pre‑norm Transformer encoder:
$d_{\text{model}}=256$, $n_{\text{layers}}=4$, $n_{\text{heads}}=4$, $d_{\text{ff}}=512$, max length 64, CLS pooling, and 2‑class softmax. Dropout is disabled (rate 0.0).
\begin{table}[h]
\small
\centering
\begin{tabular}{lcc}
\toprule
\textbf{Model} & \textbf{Params} & \textbf{Init (abbr.)} \\
\midrule
Det. & 9.92M & Glorot (Dense), Unif (Emb) \\
DE (5×) & 49.61M & Det. init, diff seeds \\
FR-BBB & 19.84M & $\mu \sim \mathcal{U}(-0.2,0.2)$, $\rho \sim \mathcal{U}(-5,-4)$ \\
LR-BBB ($r{=}16$) & 1.47M & G/H adapt., damp=0.85; $\rho$ rand; $b:\mu{=}0,\rho{=}{-}5$ \\
LR-SVD ($r{=}16$) & 1.47M & SVD init from Det. \\
R1-BBB & 10.01M & $W_0$ Glorot; $b,r,s:\mu\sim\mathcal{U}(-0.2,0.2),\rho\sim\mathcal{U}(-5,-4)$ \\
\bottomrule
\end{tabular}
\caption{SST‑2 model specs. Abbr.: Det.=deterministic, DE=deep ensemble, FR=full‑rank, LR=low‑rank, R1=rank‑1, G/H=Glorot/He, Unif=uniform, rand=random.}
\label{tab:sst2_models}
\end{table}

\textbf{Training.} Batch size 64. Learning rate $2\times 10^{-4}$ with cosine decay (alpha $=0.01$) and weight decay $0.01$. Baseline uses Adam; Bayesian models use Adam to avoid decay on variational parameters. Epochs: baseline 20, Bayesian 20, deep ensemble 20 per member. Early stopping on validation loss (patience 5) and ReduceLROnPlateau (patience 3, factor 0.5, min LR $10^{-6}$).

\textbf{KL scaling and annealing.} KL weight is computed as $\lambda = 0.001/N_{\text{batches}}$. Bayesian models use KL annealing with 2 zero‑KL epochs and linear warm‑up to full KL by epoch 4.

\textbf{Seeds.} Multi‑seed robustness uses \{42, 123, 456, 2026\}; all RNGs (Python, NumPy, TensorFlow) are set for each run.

\subsubsection{Evaluation Metrics}

\textbf{Classification performance.} Accuracy and AUROC on SST‑2 dev.

\textbf{Calibration.} Negative log‑likelihood (NLL) and Brier score. ECE is selected via a best‑configuration search over equal‑width vs. equal‑mass binning with 10/15/20 bins; the best configuration is then applied to all models.

\textbf{Uncertainty and OOD detection.} Bayesian models use Monte Carlo sampling with \texttt{n\_samples=200} (main experiment). Deep ensemble uncertainty is computed from member disagreement (variance/STD). Metrics include AUPR‑Error, AUPR‑Success, AUROC‑OOD, AUPR‑In/OOD, and MI ratio (mean MI on OOD divided by mean MI on ID). The evaluation reports both MI‑based and STD‑based OOD metrics.
\subsubsection{Controlled Profiling Protocol}
\label{app:exp:sst2:profiling}
\textbf{Goal.} To separate end-to-end training time from steady-state optimization cost, we additionally profiled the SST-2 Transformer under a controlled matched benchmark.

\textbf{Shared setup.} All methods use the same SST-2 data loader with batch size 64, the same core Transformer configuration ($d_{\text{model}}=256$, $n_{\text{layers}}=4$, $n_{\text{heads}}=4$, $d_{\text{ff}}=512$, max length 64), dropout rate 0.0, and the same GPU memory-growth initialization. The benchmark is run on an NVIDIA H100 NVL GPU. Each profiling run uses 10 warmup steps followed by 50 measured \texttt{train\_on\_batch} steps, grouped into two measured epochs of 25 steps each.

\textbf{Measurement.} We report trainable parameter count, peak GPU memory, and average epoch time under this fixed-step benchmark. Peak memory is measured by resetting TensorFlow memory statistics before profiling and reading \texttt{tf.config.experimental.get\_memory\_info(device\_name)['peak']}. The reported numbers come from a single profiling run per method and should therefore be interpreted as controlled benchmark measurements rather than multi-seed averages.

\textbf{Method-specific notes.} The deterministic baseline and Deep Ensemble use AdamW, whereas Bayesian models use Adam. Low-Rank BBB and Full-Rank BBB are built with \texttt{kl\_scale=0.0} and then receive the dataset-derived KL weight through the same scaling utility used in the main training code. Deep Ensemble is profiled sequentially across five independently seeded deterministic members, so its reported epoch time is the total method-level cost for a full 5-member ensemble epoch rather than a per-member epoch time.

\textbf{Rank-sweep protocol.} We also profiled a recovered low-rank rank sweep on SST-2 across the settings $(r_{\mathrm{emb}}, r)\in\{(4,8),(8,12),(10,16),(12,24),(16,32),(24,48),(32,64)\}$. For each recovered setting, we use the same batch size, warmup, and measured-step budget as above, and additionally record single-pass inference time and Monte Carlo inference time at MC20 and MC100 on the 872-example SST-2 dev set.

\section{Extended Results and analysis from Main Experiments}\label{app:results:extended}

\subsection{Experimental Insights and Practical Considerations}
\label{app:insight}
\paragraph{Calibration–OOD Detection Tradeoff.}
Our results reveal a more nuanced tradeoff than a single uniform pattern. On MIMIC-III, low-rank improves OOD detection despite weaker NLL than Deep Ensembles (0.433 vs.\ 0.300). On SST-2, Deep Ensembles remain stronger on both NLL (0.434 vs.\ 0.527) and AUROC-OOD (0.657 vs.\ 0.640), while low-rank remains competitive at much lower parameter and training cost. On Beijing, the main low-rank advantage is better calibration, coverage, and selective prediction rather than better OOD AUROC. We hypothesize that this task-dependent pattern is consistent with the singular posterior geometry established in Section~\ref{sec:theory:geometry}. Deep Ensembles maintain $M$ independent point estimates, each highly optimized for likelihood on training data, yielding sharp in-distribution predictions. In contrast, our low-rank posterior concentrates on the rank-$r$ manifold $\mathcal{R}_r$, enforcing structured weight correlations (Lemma~\ref{lem:covariance}) that can propagate uncertainty more coherently across the input space. Depending on the task, this broader epistemic uncertainty can improve abstention, coverage, or OOD awareness, even when it does not uniformly yield the strongest raw likelihood or OOD metrics.

This suggests a tradeoff between \emph{predictive sharpness} (rewarded by NLL) and the usefulness of uncertainty for downstream reliability decisions. Notably, Low-Rank Gaussian achieves best-in-class AUPR-In (0.824 on MIMIC-III), suggesting its uncertainty estimates remain reliable for identifying trustworthy predictions. For safety-critical applications where abstention, coverage, or distribution-shift awareness matters more than marginal likelihood gains, this tradeoff can favor low-rank approaches.

\textbf{Ensembling as a potential mitigation.} In a supplementary SST-2 study (Appendix~\ref{app:sst2_lowrank_ensemble}),
a 5-member low-rank Bayesian ensemble improved the single-model low-rank result
from 0.638 to 0.731 on AUROC-OOD-MI, from 0.166 to 0.054 on ECE, and from 0.523
to 0.415 on NLL. Although this result is currently based on a single ensemble
run, it provides concrete evidence that low-rank ensembling can recover much of
the calibration benefit usually associated with heavier ensemble methods while
preserving a favorable efficiency profile.

\paragraph{SVD Initialization.}
Theoretically, warm-starting from the truncated SVD of a learned deterministic weight matrix should improve optimization by initializing near a good solution (Appendix~\ref{app:svd init}). Empirically, LR-SVD does achieve gains on specific metrics, higher AUROC on MIMIC-III (0.898 vs 0.895) and best AUPR-Succ on SST-2 (0.923 vs 0.917), but these improvements are modest, and performance on other metrics shows no benefit or slight degradation. Thus, SVD initialization is justified only when a pretrained deterministic weight matrix already exists or when training one is inexpensive. Otherwise, the improvements do not warrant the additional computational burden of first training a deterministic model.

\paragraph{Role of KL Scaling and Annealing.}
The KL weight $\beta$ in the ELBO objective $\mathcal{L} = \mathbb{E}_{q}[\log p(\mathcal{D}|W)] - \beta \cdot \mathrm{KL}(q \| p)$ controls the strength of posterior regularization toward the prior, and its setting significantly impacts all evaluation metrics.

\textbf{Effect on predictive performance and calibration.} Higher $\beta$ values (stronger regularization) constrain the posterior to remain closer to the prior, which may limit in-distribution accuracy but can yield better-calibrated uncertainty estimates. Lower $\beta$ values allow the posterior to be dominated by the likelihood, typically concentrating around likelihood-maximizing weights, improving accuracy but risking overconfidence.

\textbf{Effect on OOD detection.} Stronger KL regularization (higher $\beta$) generally \emph{improves} OOD detection by preventing posterior collapse to near-deterministic weights. When the posterior retains meaningful variance, epistemic uncertainty can distinguish familiar from unfamiliar inputs. Conversely, very low $\beta$ can yield posteriors so concentrated that mutual information approaches zero, eliminating the model's ability to express epistemic uncertainty.

We employ KL annealing, gradually increasing $\beta$ from zero over early epochs, to balance these considerations. This allows the model to first find a good likelihood basin before the regularization penalty takes effect, avoiding the underfitting that can occur when strong KL penalties are applied from initialization. Across experiments, we found this schedule crucial for stable training, particularly in deeper architectures (Transformers, LSTMs) where early KL dominance can prevent learning altogether. Our $\beta$ values (ranging from $0.001/N_{\text{batches}}$ to $0.5/N_{\text{batches}}$ depending on architecture) were selected via validation performance, balancing the accuracy--uncertainty tradeoff for each task. As with any regularization hyperparameter, $\beta$ requires tuning for each architecture and dataset; we do not claim our chosen values are optimal.

\paragraph{Practical Rank Selection.}
The rank $r$ can be treated as a standard hyperparameter tuned via ablation. We recommend running ablation studies with reduced computational budget, fewer training epochs and fewer MC samples during validation, to efficiently identify a suitable rank range. In our experiments, this direct approach was our primary method. The singular value decay analysis (Figures~\ref{fig:transformer_params_svd},~\ref{fig:singular_value_decay},~\ref{fig:lstm_svd}) serves as an optional validation step that can narrow the search range when a pretrained deterministic model happens to be available, but it is not required. Practitioners without the time or compute to first train a deterministic network can proceed directly to rank ablation. As Figure~\ref{fig:rank_pareto_tradeoff} illustrates, rank selection meaningfully impacts the calibration--OOD detection tradeoff, making this ablation step important for achieving desired performance characteristics.

\paragraph{Statistical Considerations.}
While mean improvements for low-rank models are consistent in direction across random seeds, we note that some confidence intervals overlap between methods (e.g., Table~\ref{tab:mimic} AUPR-OOD). The observed patterns, particularly the calibration--OOD tradeoff, replicate across all four experimental settings, lending confidence to the qualitative conclusions. However, larger-scale studies with more seeds would be needed to establish statistical significance for smaller effect sizes.

\subsection{MIMIC-III}
\label{app:ext:mimic}
\paragraph{Rank Selection via Ablation Study}

\begin{figure}[!ht]
    \centering
    \includegraphics[width=0.7\textwidth]{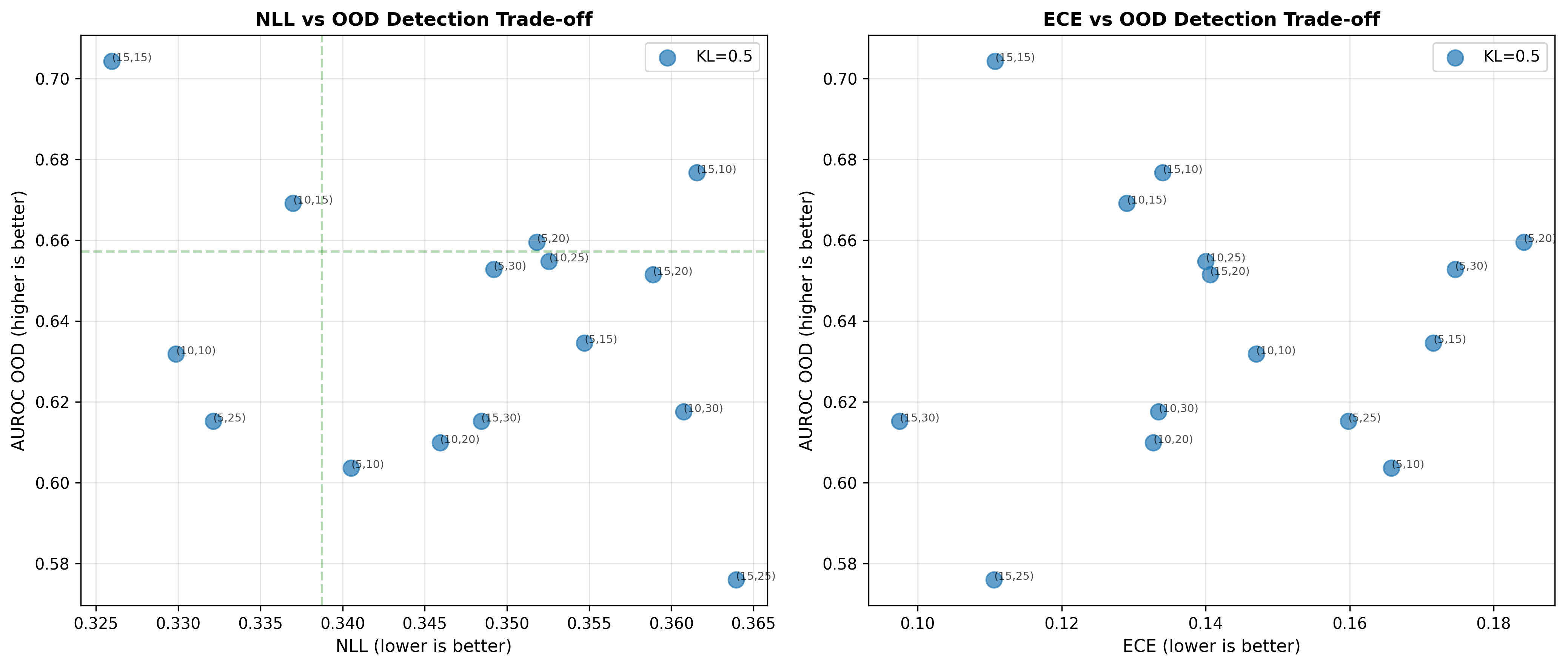}
    \caption{Trade-off between OOD detection (AUROC OOD MI) and calibration (NLL, ECE) for low-rank Gaussian models with varying rank pairs $(r_1, r_2)$.}
    \label{fig:rank_pareto_tradeoff}
\end{figure}
Rank selection is performed through ablation studies with reduced computational budget (fewer epochs, fewer MC samples). Figure~\ref{fig:rank_pareto_tradeoff} shows the trade-off between OOD detection (AUROC, y-axis) and calibration (NLL/ECE, x-axis) for different rank configurations on MIMIC-III. Higher ranks generally improve OOD detection but degrade calibration, while lower ranks maintain better calibration at reduced OOD sensitivity. The configuration $(r_1=15, r_2=15)$ occupies the Pareto frontier, achieving AUROC $\approx 0.72$ while maintaining NLL $\approx 0.33$ and ECE $\approx 0.12$ (vertical dashed lines mark baseline thresholds). When pretrained weights are available, singular value decay analysis (Figure~\ref{fig:singular_value_decay}) can optionally guide rank selection by revealing the effective dimensionality of weight matrices. The characteristic exponential decay pattern shows that weight matrices naturally reside in lower-dimensional subspaces, providing post-hoc justification for the empirically selected ranks.

\begin{figure}[!ht]
\centering
\includegraphics[width=0.5\textwidth]{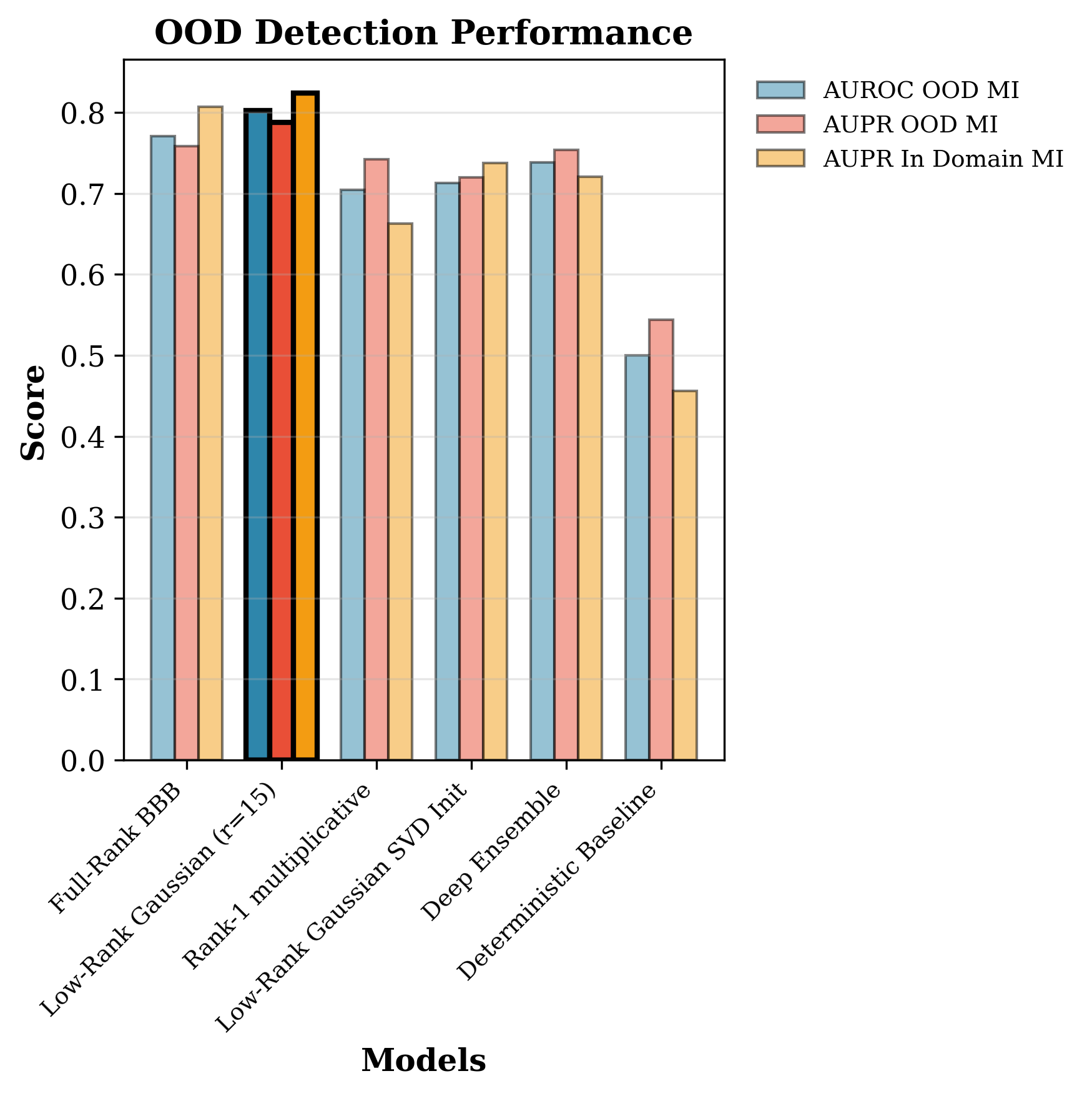}
\caption{OOD detection performance on MIMIC-III. Low-Rank Gaussian ($r=15$) achieves AUROC OOD of 0.802, 
outperforming Full-Rank BNN (0.770) and Deep Ensemble (0.738) while using $\sim 30\%$ of full rank parameters and 12\% of deep ensemble parameters
}
\label{fig:ood_detection_mimic}
\end{figure}

\begin{table}[!ht]
\centering
\caption{Training time comparison on MIMIC-III (2-layer MLP). Despite 70\% parameter reduction, low-rank methods incur overhead from dual sampling and matrix multiplications.}
\label{tab:mimic_time}
\begin{tabular}{lc}
\toprule
\textbf{Method} & \textbf{Training Time (s)} \\
\midrule
Deterministic Dense & 52.5 \\
Deep Ensemble (5 models) & 221.1 \\
Full-Rank BBB & 502.4 \\
Rank-1 Multiplicative & 503.2 \\
Low-Rank Gaussian SVD init & 524.0 \\
Low-Rank Gaussian ($r=15$) & 532.2 \\
\bottomrule
\end{tabular}
\end{table}

\paragraph{Computational Efficiency at Small Scale}
Table~\ref{tab:mimic_time} shows training times for MIMIC-III experiments. Despite achieving 70\% parameter reduction (from 44.8K to 13.6K parameters), low-rank methods exhibit training times comparable to Full-Rank BBB (532s vs. 502s). This stems from the computational overhead of performing two sampling operations and two matrix multiplications ($\mathbf{A} \in \mathbb{R}^{m \times r}$ times $\mathbf{B}^\top \in \mathbb{R}^{r \times n}$) rather than a single operation. Modern GPUs achieve higher throughput with one large matrix multiplication dispatched to a single kernel, whereas the factored form requires two kernel launches with intermediate memory transfers. Consequently, at small to medium scales, parameter reduction translates primarily to memory savings rather than wall-clock speedups. However, the reduced memory footprint enables deployment on resource-constrained devices and facilitates cheaper ensembling strategies. As we demonstrate in Section~\ref{app:exp:sst2}, efficiency benefits emerge at larger scales where the parameter reduction outweighs the factorization overhead.
\paragraph{Parameter-Efficiency Adjusted Performance}

\begin{figure}[!ht]
    \centering
    \includegraphics[width=0.5\textwidth]{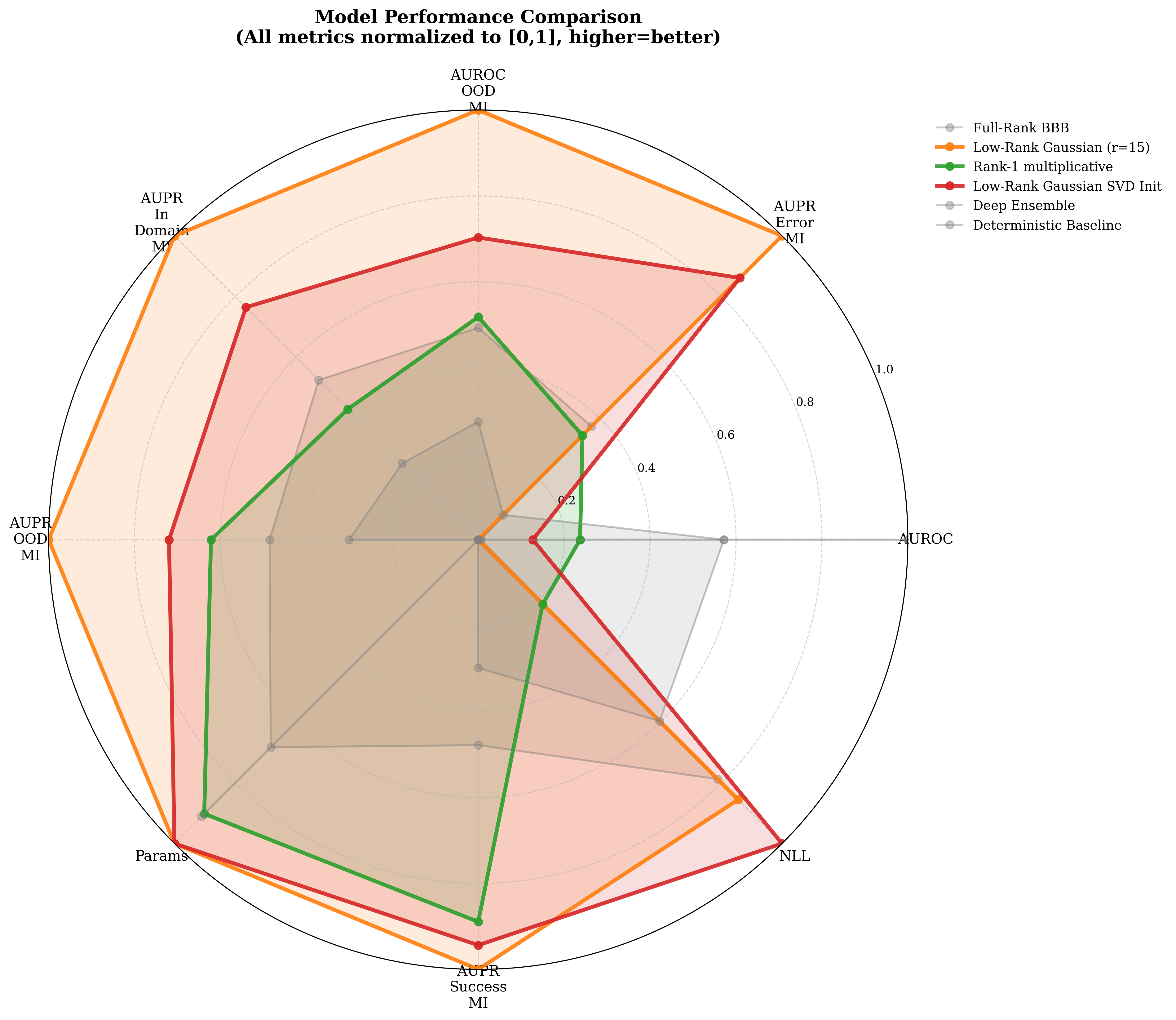}
    \caption{Parameter-efficiency adjusted performance across models. Metrics normalized to $[0,1]$ and scaled by $\sqrt{\text{min\_params} / \text{params}}$ to reward computational efficiency. Low-Rank Gaussian $(r=15)$ achieves the best efficiency-performance trade-off, particularly excelling in AUPR metrics while maintaining compact parameterization.}
    \label{fig:radar_efficiency_adjusted}
\end{figure}

Raw performance comparisons can mislead when models have different parameter scales. Figure~\ref{fig:radar_efficiency_adjusted} applies efficiency adjustment using $\sqrt{\text{min\_params} / \text{params}}$, penalizing larger models while preventing complete suppression via square-root dampening . Low-Rank Gaussian $(r=15)$ emerges as the winner, achieving maximal coverage with particular strength in AUPR In Domain MI and AUPR OOD MI . This suggests low-rank parameterization captures essential uncertainty structure without requiring full parameter space .

Full-Rank BBB suffers notable reduction despite good raw performance due to $\sim$3× parameter count . Deep Ensemble shows the biggest drop(efficiency factors $\approx 0.35$). The Rank-1 Multiplicative model demonstrates moderate efficiency-adjusted performance, balancing extreme compactness with reasonable metrics. This analysis supports low-rank architectures for resource-constrained deployment requiring both performance and efficiency .

\subsection{Beijing PM2.5 time-series forecasting.}
\label{app:ext:bei}
\paragraph{Rank Selection via Ablation Study}
\begin{figure}[!ht]
    \centering
    \includegraphics[width=0.7\textwidth]{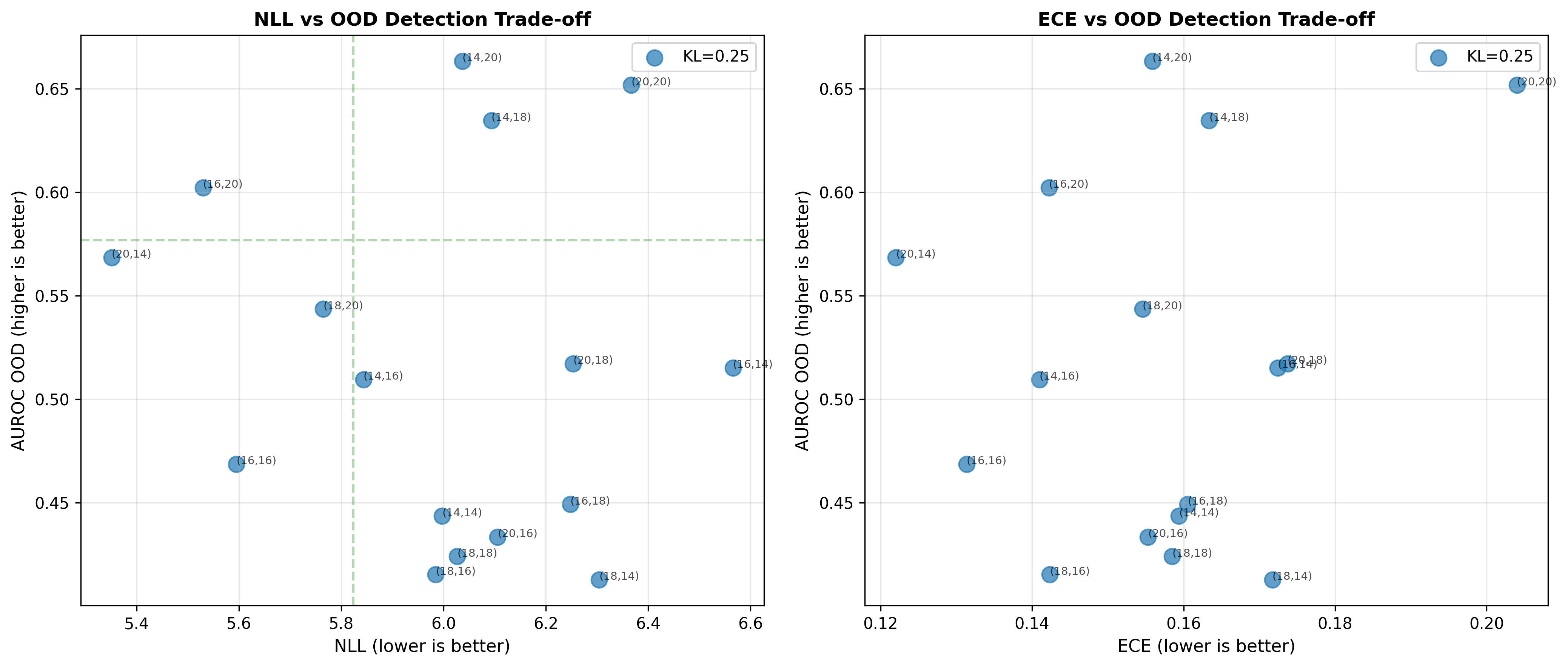}
    \caption{Trade-off between OOD detection (AUROC, y-axis) and calibration metrics (NLL and ECE, x-axis) for low-rank LSTM models with varying rank pairs $(r_{\text{ih}}, r_{\text{hh}})$ on Beijing air quality forecasting. Points labeled with rank configurations; color indicates KL weight. Dashed lines mark baseline thresholds.}
    \label{fig:rank_pareto_tradeoff_lstm_calib}
\end{figure}

\begin{figure}[!ht]
    \centering
    \includegraphics[width=0.7\textwidth]{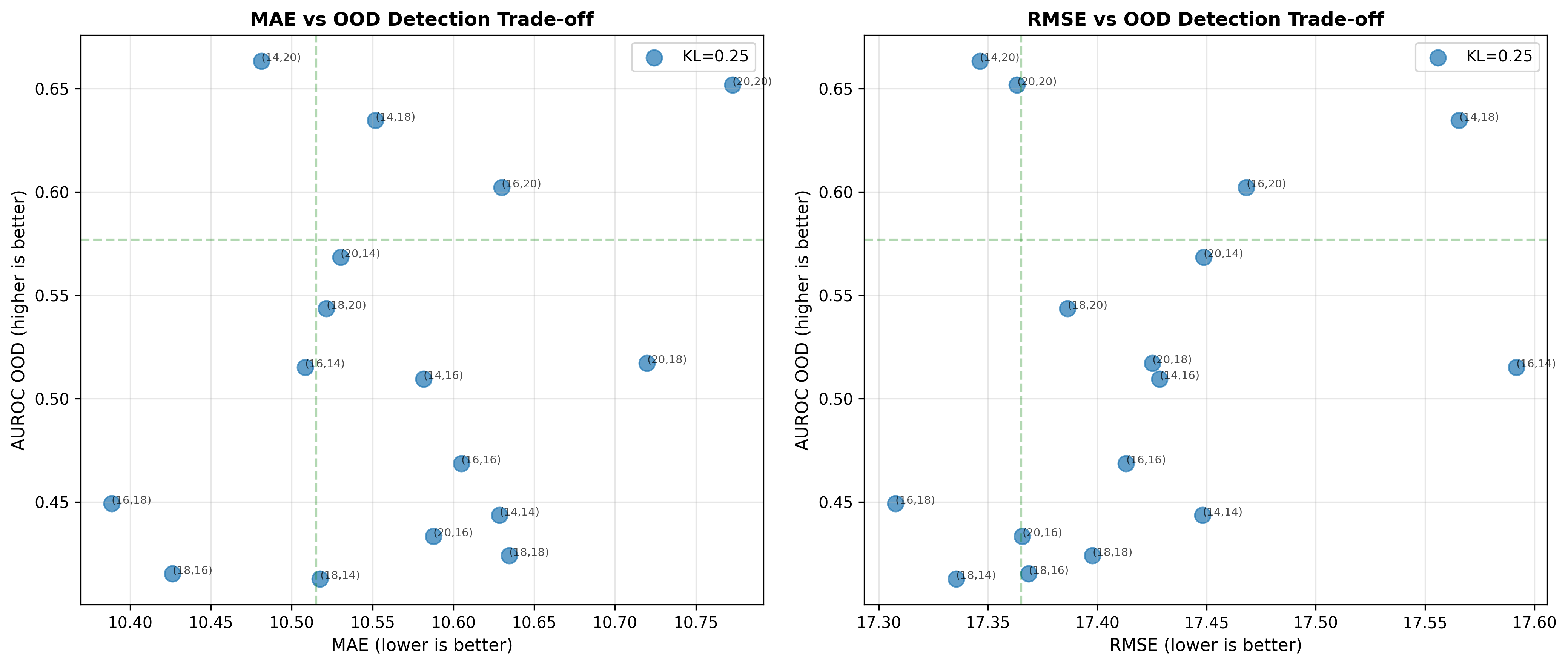}
    \caption{Trade-off between OOD detection (AUROC, y-axis) and predictive performance (MAE and RMSE, x-axis) for low-rank LSTM models with varying rank configurations on Beijing air quality forecasting. Lower MAE/RMSE indicates better forecasting accuracy.}
    \label{fig:rank_pareto_tradeoff_lstm_perf}
\end{figure}

Rank selection is performed through ablation studies with reduced computational budget (fewer epochs, fewer MC samples). Figures~\ref{fig:rank_pareto_tradeoff_lstm_calib} and~\ref{fig:rank_pareto_tradeoff_lstm_perf} show the multi-dimensional trade-offs between OOD detection (AUROC, y-axis) and both calibration (NLL/ECE) and predictive performance (MAE/RMSE) for different rank configurations $(r_{\text{ih}}, r_{\text{hh}})$ on the Beijing dataset. Higher ranks generally improve OOD detection but can degrade both calibration and forecasting accuracy. The configuration $(r_{\text{ih}}=14, r_{\text{hh}}=20)$ achieves strong OOD detection (AUROC $\approx 0.66$) while maintaining competitive calibration and predictive performance (vertical dashed lines mark baseline thresholds). The KL weight $\beta=0.25$ provides better separation of the Pareto frontier compared to lower values. When pretrained weights are available, singular value decay analysis (Figure \ref{fig:lstm_svd}) can optionally guide rank selection by revealing the effective dimensionality of weight matrices, providing post-hoc justification for the empirically selected ranks.
\begin{figure}[!ht]
\centering
\includegraphics[width=\columnwidth]{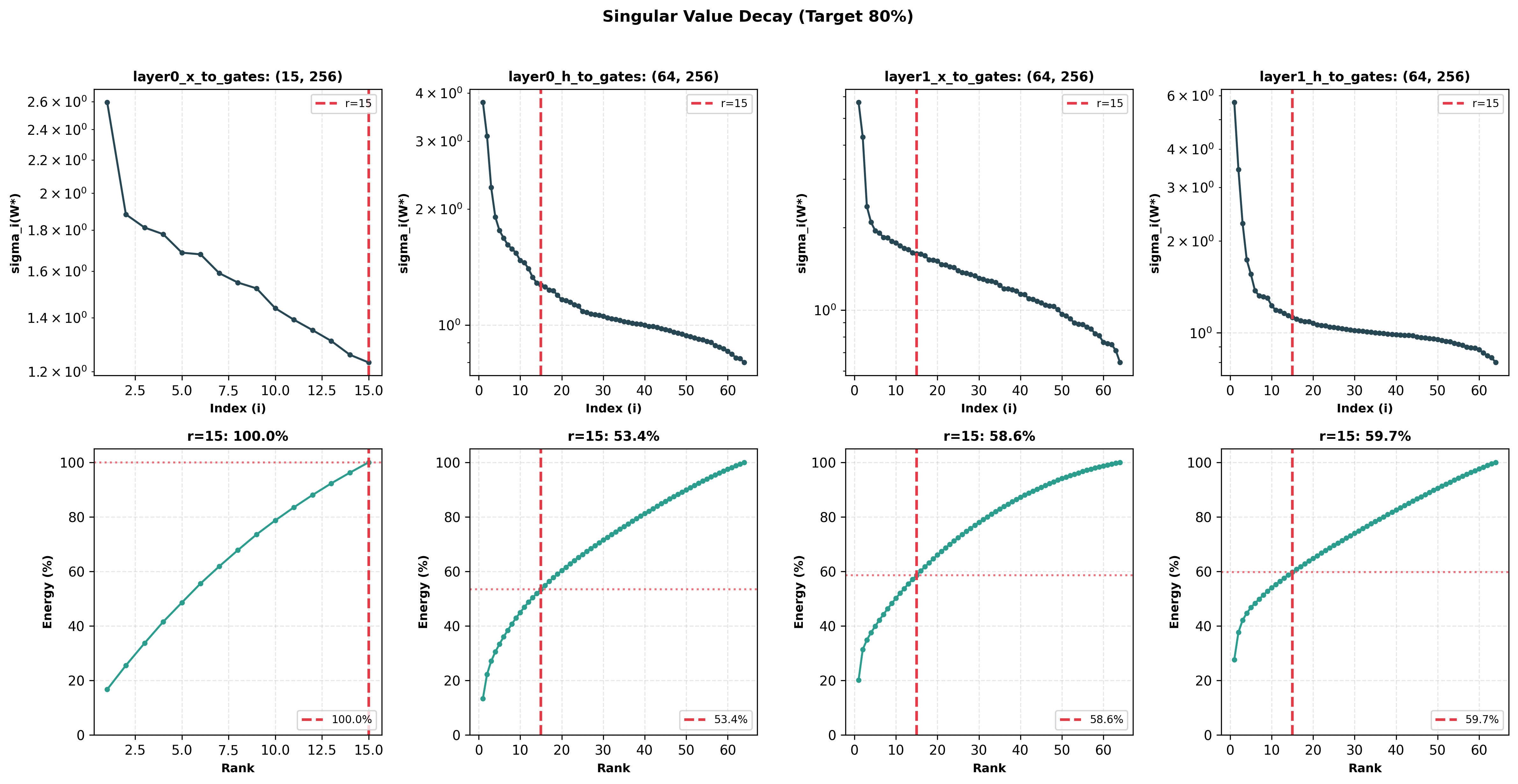}
\caption{Singular value decay of pretrained LSTM weight matrices. we can observe fast decay for both x-to-gate and h-to-gate layers}
\label{fig:lstm_svd}
\end{figure}

\paragraph{Calibration Analysis}
Figure~\ref{fig:calibration_curves} presents the calibration curves for all uncertainty-aware models, with corresponding AUC values in Table~\ref{tab:calibration_auc}. 
By the calibration-curve AUC metric, the Low-Rank Bayesian model achieves the best interval calibration (AUC = 0.104), closely followed by Full-Rank BBB (AUC = 0.110). This differs slightly from the ECE ranking in Table~\ref{tab:lstm_results}, where Full-Rank BBB is best and Low-Rank is second, but both metrics indicate that these two Bayesian models are substantially better calibrated than Deep Ensemble and Rank-1.

The SVD-initialized variant shows slightly degraded calibration (AUC = 0.126), suggesting a trade-off between OOD detection capability and calibration quality.
Both Rank-1 and Deep Ensemble exhibit substantially worse calibration (AUC $\approx$ 0.30), with curves deviating significantly from the diagonal, indicating overconfident predictions where observed coverage falls below expected coverage across most confidence levels.

\begin{figure}[!ht]
\centering
\includegraphics[width=\textwidth]{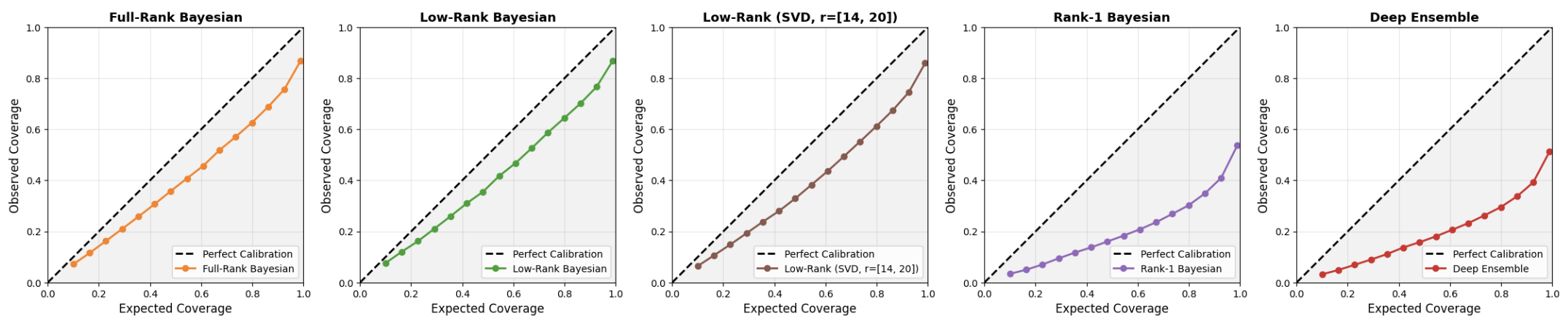}
\caption{Calibration curves for Bayesian LSTM variants. The dashed diagonal represents perfect calibration. Models closer to the diagonal exhibit better-calibrated uncertainty estimates.}
\label{fig:calibration_curves}
\end{figure}

\begin{table}[!ht]
\centering
\caption{Calibration Error (AUC) comparison. Best \textbf{bold}, second \underline{underlined}.}
\label{tab:calibration_auc}

\begin{tabular}{lc}
\toprule
\textbf{Model} & \textbf{Calib. AUC}$\downarrow$ \\
\midrule
Full-Rank BBB & \underline{0.110} \\
Low-Rank ($r{=}$14,20) & \textbf{0.104} \\
Low-Rank (SVD) & 0.126 \\
Rank-1 Mult. & 0.302 \\
Deep Ensemble (5) & 0.307 \\
\bottomrule
\end{tabular}

\end{table}

\paragraph{Selective Prediction Analysis}
\label{app:selective_prediction}

Selective prediction, where models abstain from predictions on uncertain inputs, is critical for safety-critical applications. We evaluate how well each model's uncertainty estimates enable effective selective prediction by progressively filtering samples with highest predictive variance.
Table~\ref{tab:selective_pred_lstm} presents performance at three retention levels. At 100\% retention, Deep Ensemble achieves the best MAE (10.45), reflecting its strongest full-coverage point prediction performance. However, the key insight emerges when filtering uncertain predictions: Bayesian methods demonstrate substantially larger improvements, indicating their uncertainty estimates better identify difficult samples.

At 80\% retention, Low-Rank achieves 8.71 MAE (17.4\% improvement from baseline), outperforming Deep Ensemble's 9.21 MAE (11.4\% improvement). Notably, Low-Rank's absolute performance at 80\% retention (8.71) surpasses Deep Ensemble's baseline performance (10.45), demonstrating that Bayesian uncertainty can compensate for modest calibration gaps through effective abstention.

The pattern is consistent across retention levels: Bayesian methods show 10-11\% improvement at 90\% retention compared to Deep Ensemble's 9\%, and maintain this advantage at 80\%. This suggests the rank constraint induces uncertainty estimates that correlate more strongly with prediction difficulty, even when those uncertainties are less well-calibrated in absolute terms.

\begin{table}[h]
\centering
\caption{Selective prediction performance on Beijing Air Quality test set. Values show MAE (lower better) and improvement percentage from 100\% retention baseline. Best performance at each retention level in \textbf{bold}.}
\label{tab:selective_pred_lstm}
\small
\begin{tabular}{lccc}
\toprule
\textbf{Model} & \textbf{100\% Retention} & \textbf{90\% Retention} & \textbf{80\% Retention} \\
\midrule
Full-Rank BBB & 10.55 (0.0\%) & 9.46 (11.2\%) & 8.82 (17.2\%) \\
Low-Rank BBB & 10.63 (0.0\%) & \textbf{9.43} (10.6\%) & \textbf{8.71} (17.4\%) \\
Low-Rank SVD & 10.70 (0.0\%) & 9.50 (11.8\%) & 9.06 (15.8\%) \\
Rank-1 BBB & 10.80 (0.0\%) & 9.45 (12.5\%) & 9.42 (12.8\%) \\
Deep Ensemble & \textbf{10.45} (0.0\%) & 9.45 (9.1\%) & 9.21 (11.4\%) \\
\bottomrule
\end{tabular}
\end{table}

\textbf{Implications for Practice.} These results have important implications for deployment scenarios:
\begin{itemize}
\item In full-coverage prediction settings where abstention is not allowed, Deep Ensemble remains preferable because it achieves the best point prediction at 100\% retention, despite weaker calibration than the Low-Rank and Full-Rank Bayesian models.

\item In safety-critical settings where abstention is acceptable, Low-Rank methods provide better risk-return tradeoffs, achieving lower error rates on retained predictions.
\item The crossover point occurs around 85-90\% retention, suggesting practitioners can use retention rate requirements to guide method selection.
\end{itemize}

This selective prediction advantage helps explain why Low-Rank methods achieve competitive OOD detection despite calibration gaps: the uncertainty signal effectively identifies distributional shift, even if absolute confidence levels are miscalibrated.

\subsection{SST-2 sentiment classification}
\label{app:ext:sst2}
\subsubsection{Controlled Profiling Benchmark}
\label{app:ext:sst2:profiling}

Table~\ref{tab:sst2_profile_appendix} reports the controlled matched profiling benchmark on SST-2. These results support the main-text efficiency claim with a more tightly controlled measurement than end-to-end \texttt{model.fit} time. Low-Rank BBB uses 1.47M trainable parameters, compared to 19.84M for Full-Rank BBB and 49.61M for a 5-member Deep Ensemble. It also reduces peak GPU memory from 721.1MB to 357.5MB relative to Full-Rank BBB and from 670.1MB to 357.5MB relative to Deep Ensemble.

The corresponding epoch-time comparison is also favorable. Under the fixed-step benchmark, Low-Rank BBB requires 5.88s per epoch, compared to 6.45s for Full-Rank BBB and 18.99s for Deep Ensemble. Thus, relative to Full-Rank BBB, the low-rank model combines a large reduction in parameter count and peak memory with a small but still positive epoch-time advantage. Relative to Deep Ensemble, the advantage is stronger on all three axes: parameter count, memory, and epoch time. The deterministic baseline remains fastest overall at 4.18s per epoch, but Low-Rank BBB approaches this cost while still maintaining a full Bayesian weight-space treatment.

\begin{table}[t]
\centering
\caption{Controlled matched profiling benchmark on SST-2. Peak memory and epoch time are measured under a fixed-step \texttt{train\_on\_batch} profiling loop. Deep Ensemble epoch time is the total method-level cost for one full 5-member ensemble epoch.}
\label{tab:sst2_profile_appendix}
\small
\begin{tabular}{lccc}
\toprule
\textbf{Model} & \textbf{Params} & \textbf{Peak Mem.} & \textbf{Epoch Time} \\
\midrule
Baseline Transformer & 9.92M & 505.4MB & 4.18s \\
Low-Rank BBB & 1.47M & 357.5MB & 5.88s \\
Full-Rank BBB & 19.84M & 721.1MB & 6.45s \\
Deep Ensemble (5) & 49.61M & 670.1MB & 18.99s \\
\bottomrule
\end{tabular}
\end{table}

\subsubsection{Rank-Sweep Profiling}
\label{app:ext:sst2:ranksweep}

Table~\ref{tab:sst2_rank_sweep} summarizes the recovered low-rank rank-sweep profiling results. Across the recovered range, parameter count increases from 0.50M to 3.86M ($7.7\times$), but the corresponding increases in practical cost are much smaller: epoch time rises from 5.06s to 5.95s ($1.18\times$), peak memory from 337.8MB to 456.4MB ($1.35\times$), and single-pass inference time from 0.214s to 0.239s ($1.12\times$). Monte Carlo inference changes only mildly, from 6.18s to 6.64s at MC20 and from 30.65s to 33.52s at MC100.

These results clarify the computational meaning of low-rank scaling in practice. Lower rank does reduce runtime and memory, but the effect is clearly sublinear in the parameter reduction because fixed Transformer overheads, kernel-launch effects, and stochastic sampling costs remain significant. The main practical benefit of lower rank is therefore best understood as strong parameter and memory savings, with additional runtime improvements that are favorable but more moderate.

One caveat is that the recovered structured sweep does not include a separate $(r_{\mathrm{emb}}, r)=(16,16)$ row. We therefore use the sweep to characterize the general scaling trend across low-rank settings, rather than to claim an exact controlled profile of the canonical configuration used in the main experiments.

\begin{table}[t]
\centering
\caption{Recovered low-rank rank-sweep profiling results on SST-2. All measurements use the same fixed-step profiling protocol described in Appendix~\ref{app:exp:sst2:profiling}.}
\label{tab:sst2_rank_sweep}
\small
{%
\begin{tabular}{ccccccc}
\toprule
\textbf{$r_{\mathrm{emb}}$} & \textbf{$r$} & \textbf{Params} & \textbf{Peak Mem.} & \textbf{Epoch Time} & \textbf{Single Pass} & \textbf{MC20 / MC100} \\
\midrule
4  & 8  & 0.50M & 337.8MB & 5.06s & 0.214s & 6.18s / 30.65s \\
8  & 12 & 0.86M & 344.7MB & 5.41s & 0.221s & 6.26s / 31.39s \\
10 & 16 & 1.10M & 359.8MB & 5.88s & 0.228s & 6.51s / 31.75s \\
12 & 24 & 1.46M & 365.0MB & 5.70s & 0.225s & 6.36s / 32.18s \\
16 & 32 & 1.94M & 380.2MB & 6.18s & 0.229s & 6.49s / 32.95s \\
24 & 48 & 2.90M & 421.9MB & 6.29s & 0.232s & 6.55s / 32.65s \\
32 & 64 & 3.86M & 456.4MB & 5.95s & 0.239s & 6.64s / 33.52s \\
\bottomrule
\end{tabular}%
}
\end{table}
\paragraph{Computation efficiency}
The controlled profiling results above complement the end-to-end training-time measurements by isolating steady-state computational cost under matched conditions.

Table~\ref{tab:training_times} reports training times for each model on SST-2. Low-Rank BBB variants train in approximately 8 minutes, approaching the 7.7-minute baseline of the deterministic Transformer despite performing variational inference with Monte Carlo sampling. This represents a $2.9\times$ speedup over Full-Rank BBB (23.08 min) and $8\times$ over Deep Ensemble (64.72 min). 

The efficiency gain stems directly from parameter reduction: Low-Rank BBB uses 1.5M parameters versus 19.8M for Full-Rank BBB and 49.6M for Deep Ensemble. At this scale, the $O(r(m+n))$ versus $O(mn)$ complexity difference translates into meaningful wall-clock improvements, validating our theoretical analysis. Rank-1 BBB occupies a middle ground (12.52 min), reflecting its intermediate parameterization.

These results demonstrate that low-rank factorization enables practical Bayesian uncertainty quantification at training costs comparable to deterministic baselines, a key requirement for adoption in resource-constrained settings. The matched profiling benchmark and recovered rank sweep above provide a controlled view of the same trend from the perspective of fixed-step training cost and memory usage.
\begin{table}[h]
\centering
\caption{Average Training Time per Model (in minutes)}
\label{tab:training_times}
\begin{tabular}{lc}
\toprule
\textbf{Model} & \textbf{Avg. Time (min)} \\
\midrule
Baseline Transformer & 7.70 \\
Low-Rank SVD Init & 7.95 \\
Low-Rank Random Init & 8.22 \\
Rank-1 BBB & 12.52 \\
Full-Rank BBB & 23.08 \\
Deep Ensemble & 64.72 \\
\bottomrule
\end{tabular}
\end{table}
\paragraph{Parameter-Efficiency Adjusted Performance}
Figure~\ref{fig:radar_efficiency} presents metrics adjusted by parameter efficiency. 

Low-Rank BBB (red) achieves the largest coverage area, dominating across AUROC-OOD, AUPR-Success, AUPR-In, and MI Ratio while maintaining competitive accuracy and NLL. Low-Rank SVD (purple) shows a similar profile with slightly reduced OOD metrics. Both low-rank variants substantially outperform all other approaches on efficiency-adjusted scores.

Deep Ensemble (green) and Full-Rank BBB (orange) collapse toward the center despite strong raw performance, reflecting their $33\times$ and $13\times$ parameter overhead respectively. Rank-1 BBB (brown) shows moderate efficiency-adjusted performance but remains dominated by the low-rank variants across most metrics. The deterministic baseline (blue) scores well only on parameter efficiency, with no uncertainty quantification capability.

This analysis confirms that low-rank factorization achieves the best performance-to-parameter ratio: competitive or superior raw metrics combined with $15$--$33\times$ parameter reduction yields dominant efficiency-adjusted scores across all uncertainty-related metrics.

\begin{figure}[!ht]
\centering
\includegraphics[width=0.5\columnwidth]{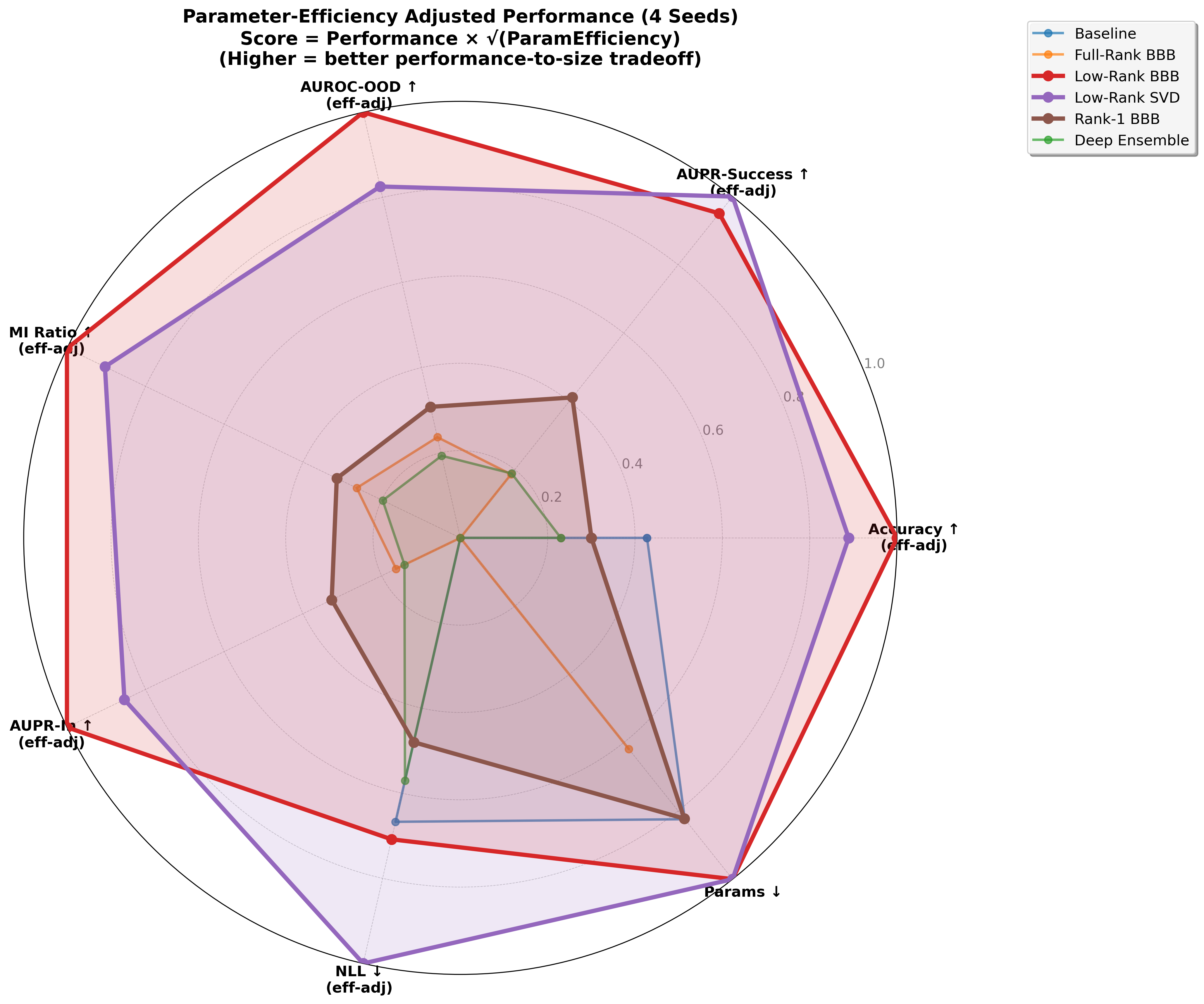}
\caption{Parameter-efficiency adjusted performance across models. Metrics normalized to $[0,1]$ and scaled by $\sqrt{\text{param\_efficiency}}$. Low-Rank BBB (red) achieves the best overall tradeoff, dominating OOD detection and uncertainty metrics while maintaining competitive accuracy.}
\label{fig:radar_efficiency}
\end{figure}

\paragraph{Uncertainty Distribution Analysis}

Figures~\ref{fig:std_dist} and~\ref{fig:mi_dist} visualize the distribution of predictive uncertainty for in-distribution (SST-2) versus out-of-distribution (AGNews) samples across models. Several patterns emerge.

\textbf{Low-Rank BBB} exhibits clear separation between in-domain and OOD distributions for both predictive standard deviation and mutual information. The OOD samples (orange) concentrate at higher uncertainty values, enabling effective detection.

\textbf{Deep Ensemble} shows the strongest separation overall, consistent with its best OOD-detection metrics in Table~\ref{tab:ood_std_vs_mi}. Some overlap remains, but the ID and OOD distributions are more clearly separated than for Full-Rank BBB and Rank-1 BBB.

\textbf{Full-Rank BBB} displays considerable overlap between distributions, explaining its weaker OOD detection performance. The model struggles to assign higher uncertainty to unfamiliar inputs.

\textbf{Rank-1 BBB} exhibits near-degenerate uncertainty distributions with values concentrated near zero, consistent with its low MI ratio (Table~\ref{tab:sst2}). This confirms that rank-1 perturbations provide insufficient epistemic expressiveness.

\textbf{Low-Rank SVD BBB} shows similar separation patterns to Low-Rank BBB with random initialization, though with slightly narrower distributions overall.

Notably, predictive standard deviation provides marginally better OOD separation than mutual information for most Bayesian models. Table~\ref{tab:ood_std_vs_mi} confirms this empirically: STD-based metrics slightly outperform MI-based metrics across all models, though differences are small. This suggests that for these architectures, total predictive variance captures OOD signal as effectively as the epistemic component isolated by mutual information.

\begin{table}[!ht]
\centering
\caption{OOD detection metrics using Mutual Information (MI) vs Predictive Standard Deviation (STD) as uncertainty measures. Results averaged over 4 seeds. Higher is better for all metrics.}
\label{tab:ood_std_vs_mi}
\resizebox{0.5\columnwidth}{!}{%
\begin{tabular}{lccccccccc}
\toprule
& \multicolumn{2}{c}{AUROC-OOD} & \multicolumn{2}{c}{AUPR-OOD} & \multicolumn{2}{c}{AUPR-In} \\
\cmidrule(lr){2-3} \cmidrule(lr){4-5} \cmidrule(lr){6-7}
Model & MI & STD & MI & STD & MI & STD \\
\midrule
Deterministic & .500 & .500 & .897 & .897 & .103 & .103 \\
Deep Ensemble & .658 & \textbf{.663} & .930 & \textbf{.932} & .267 & \textbf{.282} \\
Full-Rank BBB & .622 & .623 & .921 & .921 & .222 & .223 \\
Low-Rank BBB & .641 & \textbf{.642} & .919 & \textbf{.920} & .302 & .303 \\
LR-SVD BBB & .616 & \textbf{.619} & .903 & \textbf{.904} & .273 & \textbf{.274} \\
Rank-1 BBB & .613 & .615 & .904 & .905 & .273 & .273 \\
\bottomrule
\end{tabular}}
\end{table}

\begin{figure}[!ht]
\centering
\includegraphics[width=0.5\columnwidth]{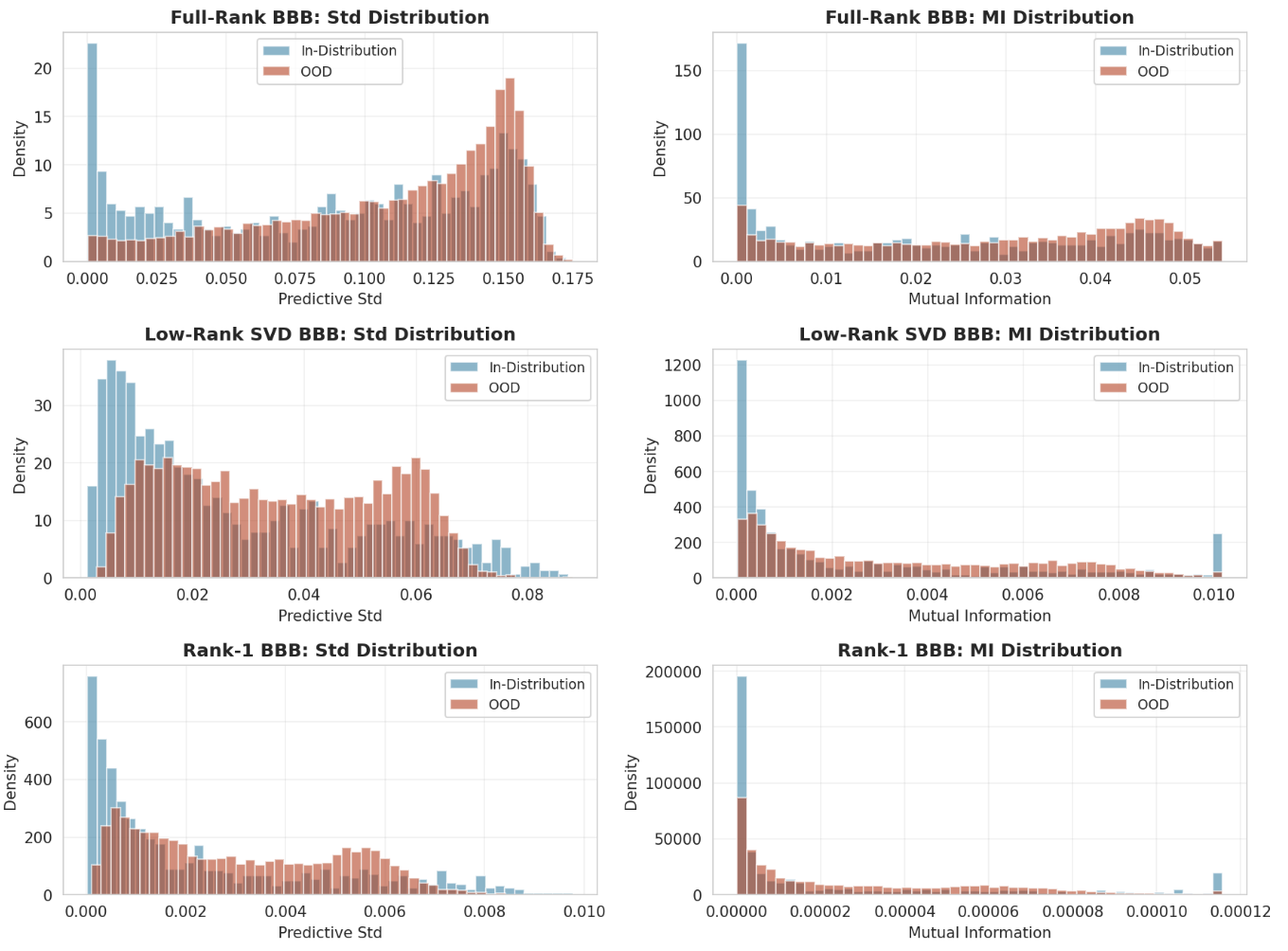}
\caption{Predictive standard deviation distributions for in-distribution (blue) vs OOD (orange) samples. Low-Rank BBB and Deep Ensemble show clear separation; Rank-1 BBB shows near-degenerate distributions.}
\label{fig:std_dist}
\end{figure}

\begin{figure}[!ht]
\centering
\includegraphics[width=0.5\columnwidth]{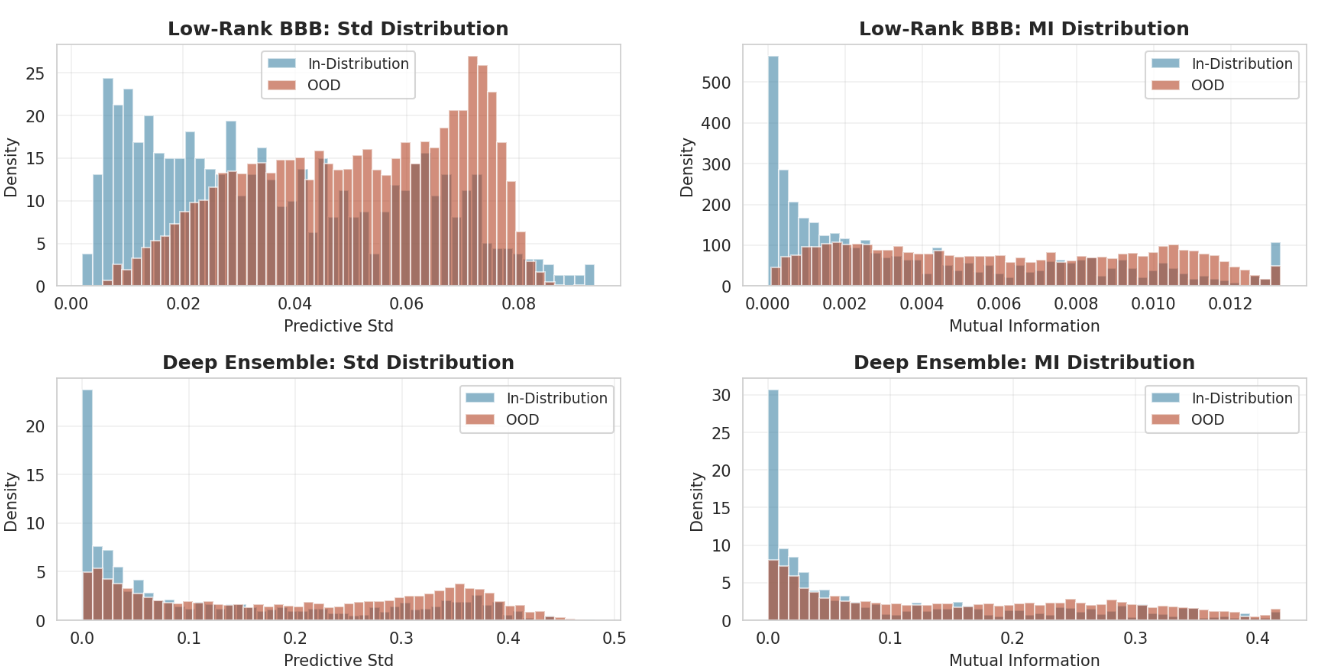}
\caption{Mutual information distributions for in-distribution (blue) vs OOD (orange) samples. Similar patterns to STD, with Deep Ensemble showing strongest separation.}
\label{fig:mi_dist}
\end{figure}

\subsection{Supplementary Low-Rank Ensembling Study on SST-2}
\label{app:sst2_lowrank_ensemble}

To test whether ensembling mitigates the calibration gap observed for a single
low-rank posterior, we trained a 5-member low-rank Bayesian ensemble on SST-2.
Each member used the same low-rank BBB Transformer architecture and training
recipe as the main paper. At inference time, we aggregated 20 stochastic
posterior draws per member, for a total of 100 predictive draws.

Table~\ref{tab:sst2_lowrank_ensemble} compares this ensemble against the
single-model low-rank row and the existing Deep Ensemble and SWAG references.
Because the low-rank ensemble result currently comes from a single run, whereas
some comparison rows are reported as 4-seed means, we present this experiment as
supporting evidence for the mitigation path rather than as a new primary
benchmark line.

\begin{table}[t]
\centering
\caption{Supplementary SST-2 low-rank ensembling study. The low-rank Bayesian
ensemble uses 5 independently trained low-rank BBB members and 20 posterior
draws per member (100 total predictive draws).}
\label{tab:sst2_lowrank_ensemble}
\small
\begin{tabular}{lcccc}
\toprule
\textbf{Model} & \textbf{Acc} & \textbf{AUROC-OOD-MI} & \textbf{ECE} & \textbf{NLL} \\
\midrule
Low-Rank Bayesian Ensemble & 0.814 & 0.731 & 0.054 & 0.415 \\
Low-Rank BBB (single run) & 0.795 & 0.647 & 0.149 & 0.520 \\
Low-Rank BBB (4-seed mean) & 0.805$_{\pm 0.006}$ & 0.638$_{\pm 0.027}$ & 0.166$_{\pm 0.005}$ & 0.523$_{\pm 0.002}$ \\
Deep Ensemble (4-seed mean) & 0.825$_{\pm 0.007}$ & 0.658$_{\pm 0.010}$ & 0.066$_{\pm 0.011}$ & 0.434$_{\pm 0.023}$ \\
SWAG (4-seed mean) & 0.808$_{\pm 0.010}$ & 0.613$_{\pm 0.023}$ & 0.111$_{\pm 0.008}$ & 0.556$_{\pm 0.022}$ \\
\bottomrule
\end{tabular}
\end{table}

The ensemble substantially improves calibration and OOD separation relative to a
single low-rank model, while remaining competitive with Deep Ensemble. In
particular, ECE decreases from $0.166$ to $0.054$, NLL from $0.523$ to $0.415$,
and AUROC-OOD-MI increases from $0.638$ to $0.731$. These results support the
claim that low-rank ensembling is a viable mitigation path for the calibration
gap at transformer scale.

\subsection{Supplementary SWAG Comparisons}
\label{app:swag_comparisons}

We additionally evaluate SWAG \cite{maddox2019swag} as a scalable posterior baseline. We keep these comparisons appendix-centered because SWAG is an auxiliary positioning comparator rather than part of the original six-method main experimental suite. Across all three datasets, the current multi-seed SWAG runs support the same high-level conclusion as the main paper: SWAG can be competitive on selected calibration or point-prediction axes, but it does not overturn the quality-efficiency advantage of the proposed low-rank approach.

\paragraph{SST-2 with AG News OOD.}
Our current SST-2 SWAG bundle uses the same tiny Transformer backbone as the main paper ($d_{\text{model}}=256$, 4 layers, 4 heads, $d_{\text{ff}}=512$) and the same in-domain/OOD data setup, with seeds $\{42,123,456,2026\}$. Phase 1 follows the deterministic SST-2 recipe as closely as possible: AdamW with cosine decay for 20 epochs, followed by 20 SGD collection epochs with requested SWAG rank 20. Evaluation uses 200 posterior samples, posterior-predictive mean probabilities, and MI-based OOD scores. All four seeds pass posterior-validation checks and the current bundle is explicitly marked publication-safe.

\begin{table}[t]
\centering
\caption{Supplementary SWAG results on SST-2 with AG News OOD. Results are mean $\pm$ std over 4 seeds.}
\label{tab:swag_sst2}
\small
\begin{tabular}{lccccccc}
\toprule
\textbf{Model} & \textbf{Acc} & \textbf{NLL} & \textbf{ECE} & \textbf{AUROC-OOD MI} & \textbf{MI Ratio} & \textbf{Params} & \textbf{Time (s)} \\
\midrule
SWAG & 0.808$_{\pm 0.010}$ & 0.556$_{\pm 0.022}$ & 0.111$_{\pm 0.008}$ & 0.613$_{\pm 0.023}$ & 1.240$_{\pm 0.072}$ & 208.37M & 1593.4 \\
\bottomrule
\end{tabular}
\end{table}

Relative to Table~\ref{tab:sst2}, SWAG is competitive in accuracy but it is worse on NLL and AUROC-OOD-MI while storing 208.37M parameters, compared with 1.47M for the low-rank posterior. Deep Ensemble remains stronger on the main predictive metrics. Thus, even after fairness-aligned phase-1 training, SWAG remains an expensive comparator that does not overturn the low-rank quality-efficiency tradeoff.

\paragraph{MIMIC-III with newborn OOD.}
Our current MIMIC SWAG bundle uses a dense tabular classifier with input dimension 44, hidden layers $128 \rightarrow 128$, and sigmoid output, evaluated on seeds $\{42,123,256,789,2024\}$. Training uses 20 SGD backbone epochs with learning-rate drops at epochs 6 and 12, followed by 20 collection epochs at requested SWAG rank 20. Evaluation uses 512 posterior samples and posterior-predictive mean probabilities with MI-based OOD scores. All five seeds have finite metrics and passing posterior-validation records.

\begin{table}[t]
\centering
\caption{Supplementary SWAG results on MIMIC-III with newborn OOD. Results are mean $\pm$ std over 5 seeds.}
\label{tab:swag_mimic}
\small
\begin{tabular}{lccccccc}
\toprule
\textbf{Model} & \textbf{AUROC} & \textbf{NLL} & \textbf{ECE} & \textbf{AUROC-OOD MI} & \textbf{MI Ratio} & \textbf{Params} & \textbf{Time (s)} \\
\midrule
SWAG & 0.922$_{\pm 0.001}$ & 0.359$_{\pm 0.002}$ & 0.190$_{\pm 0.001}$ & 0.634$_{\pm 0.047}$ & 1.601$_{\pm 0.214}$ & 470.42K & 104.9 \\
\bottomrule
\end{tabular}
\end{table}

On MIMIC-III, SWAG is a credible baseline and is competitive on in-domain AUROC. It is stronger than the low-rank row on AUROC and NLL, but remains weaker on the main MI-based OOD-separation metrics reported in Table~\ref{tab:mimic}. In particular, SWAG achieves AUROC-OOD-MI $0.634$ and AUPR-OOD-MI $0.680$, compared with $0.802$ and $0.788$ for the low-rank model. In this small tabular setting, SWAG therefore provides a meaningful comparator, but it does not change the paper's overall uncertainty-quality conclusion.

\paragraph{Beijing PM$_{2.5}$ forecasting.}
Our current Beijing SWAG bundle uses a 2-layer LSTM with hidden size 64 and scalar output head, evaluated on seeds $\{42,123,456,789,2026\}$. Phase 1 uses Adam with early stopping for up to 150 epochs, followed by 25 SGD collection epochs with requested SWAG rank 20. Evaluation uses 250 posterior samples, total predictive variance for regression scoring, and epistemic standard deviation for OOD ranking. We keep the Beijing SWAG comparison appendix-centered because the regression setting involves a more nuanced calibration/coverage tradeoff than the classification benchmarks.

\begin{table}[t]
\centering
\caption{Supplementary SWAG results on Beijing PM$_{2.5}$ forecasting. Results are mean $\pm$ std over 5 seeds.}
\label{tab:swag_beijing}
\small
\resizebox{\textwidth}{!}{%
\begin{tabular}{lccccccccc}
\toprule
\textbf{Model} & \textbf{MAE} & \textbf{RMSE} & \textbf{NLL} & \textbf{ECE} & \textbf{PICP} & \textbf{MPIW} & \textbf{AUROC-OOD} & \textbf{AUPR-OOD} & \textbf{Params} \\
\midrule
SWAG & 10.542$_{\pm 0.057}$ & 17.313$_{\pm 0.044}$ & 4.323$_{\pm 0.003}$ & 0.216$_{\pm 0.001}$ & 0.973$_{\pm 0.001}$ & 87.01$_{\pm 0.44}$ & 0.585$_{\pm 0.054}$ & 0.783$_{\pm 0.026}$ & 1.18M \\
\bottomrule
\end{tabular}%
}
\end{table}

On Beijing, SWAG is competitive on point prediction and attains much higher interval coverage than the low-rank rows in Table~\ref{tab:lstm_results}, but that coverage comes with very wide intervals (MPIW 87.0). Its OOD detection remains below the strongest low-rank and ensemble alternatives, and its posterior state is substantially larger (about $25\times$ the low-rank model and $3.6\times$ the deep ensemble).

%
\section{Additional Experimental Results}\label{app:additional_results}

This section presents additional experiments and extended results that complement the main paper findings.

\subsection{MNIST Experiment}\label{app:results:mnist}

We conduct a controlled empirical study comparing several Bayesian neural network (BNN) parameterizations on MNIST classification task. This experiment serves as a reproduction of the foundational Bayes by Backprop (BBB) framework from ~\cite{blundell2015weight}, with the addition of modern low-rank factorization approaches and a rank-1 multiplicative variant.~\cite{dusenberry2020efficient}.

\subsubsection{Experimental Setup}

\paragraph{Dataset and Preprocessing.}
We use the full MNIST dataset \cite{lecun1998gradient} with 60,000 training samples and 10,000 test samples. Following \cite{blundell2015weight}, pixel values are cast to floating point and divided by 126, so the original integer range $[0,255]$ is rescaled to approximately $[0,2.02]$. Each $28 \times 28$ image is then flattened to a 784-dimensional feature vector. The training set is further split into 50,000 training samples and 10,000 validation samples; the original test set remains held out. No data augmentation is applied.

\paragraph{Neural Network Architecture.}
All models employ an identical MLP architecture:
\[
784 \to 1200 \to 1200 \to 10,
\]
where the first two hidden layers use ReLU activations and the output layer is linear (logits). This setup matches the standard Blundell et al.\ baseline to ensure fair comparison.

\paragraph{Models Under Comparison.}
We evaluate four distinct Bayesian parameterizations, all using the same prior and KL scaling:

\begin{enumerate}
    \item \textbf{Full-Rank Gaussian (BBB).} Following \cite{blundell2015weight}, we place an independent diagonal Gaussian variational posterior on each weight and bias:
    \[
    q(W_{ij} \mid \mu_{ij}, \sigma_{ij}^2) = \mathcal{N}(\mu_{ij}, \sigma_{ij}^2),
    \]
    reparameterized as $W = \mu + \sigma \odot \epsilon$ with $\epsilon \sim \mathcal{N}(0, I)$. Biases are either stochastic Gaussians or deterministic (both tested); we report the deterministic-bias variant for parameter efficiency.
    
    \item \textbf{Low-Rank Gaussian Factorization.} We impose a rank-$r$ factorization $W \approx AB^\top$ where $A \in \mathbb{R}^{n \times r}$ and $B \in \mathbb{R}^{m \times r}$ with $r \ll \min(n, m)$. Each entry of $A$ and $B$ has a diagonal Gaussian posterior: $q(A_{ik}) = \mathcal{N}(\mu^A_{ik}, (\sigma^A_{ik})^2)$ and similarly for $B$. The bias remains deterministic. We sweep ranks $r \in \{10, 25, 50\}$ for hidden layers and report the rank that best balances accuracy and calibration.
    
    \item \textbf{Low-Rank Laplace Factorization.} Identical to the Gaussian factorization but with Laplace posteriors on $A$ and $B$:
    \[
    q(A_{ik} \mid \mu^A_{ik}, b^A_{ik}) = \text{Laplace}(\mu^A_{ik}, b^A_{ik}),
    \]
    where the scale parameter is $b = \text{softplus}(\rho)$ to ensure positivity. Laplace posteriors have heavier tails than Gaussians, potentially improving uncertainty calibration in low-rank settings.
    
    \item \textbf{Rank-1 Multiplicative (Dusenberry et al., 2020).} Following Dusenberry et al.~\cite{dusenberry2020efficient}, we fix a deterministic base weight matrix $\overline{W}$ and apply a rank-1 stochastic multiplicative perturbation:
    \[
    W' = \overline{W} \odot (s \otimes r),
    \]
    where $s \in \mathbb{R}^n$ and $r \in \mathbb{R}^m$ are stochastic vectors with diagonal Gaussian posteriors. This parameterization dramatically reduces the stochastic parameter count compared to full-rank while maintaining expressiveness through the deterministic base.
\end{enumerate}

\paragraph{Shared Prior.}
All stochastic parameters (weights, factors, or vectors) use the same scale-mixture Gaussian prior:
\[
p(w) = \pi \mathcal{N}(0, \sigma_1^2) + (1 - \pi) \mathcal{N}(0, \sigma_2^2),
\]
with fixed hyperparameters $\pi = 0.5$, $\sigma_1 = 1.0$, and $\sigma_2 = \exp(-6) \approx 0.00248$. This prior encourages sparsity by placing probability mass on both a standard Gaussian and a narrow Gaussian; it is identical across all four models to ensure fair comparison.

\paragraph{Training Procedure.}
The models are trained by minimizing the ELBO:
\[
\mathcal{L} = \mathbb{E}_{q(\theta)}[-\log p(y \mid x, \theta)] + \text{kl\_scale} \cdot \text{KL}(q(\theta) \| p(\theta)),
\]
where $\text{kl\_scale} = 1/N_{\text{train}}$ (i.e., the reciprocal of the training set size). To avoid posterior collapse due to a large KL term early in training, we employ a linear warm-up schedule: $\text{kl\_scale}$ is ramped from 0 to $1/N_{\text{train}}$ over the first 20 epochs, then held constant.

Training uses Adam optimizer with learning rate $10^{-3}$, batch size 128, and 50 epochs. All random seeds are fixed at 42 for reproducibility. Gradient computation is deterministic (cuDNN disabled) to ensure consistent behavior across runs.

\paragraph{Evaluation Protocol.}
At test time, we estimate predictive uncertainty and calibration by drawing $T = 50$ stochastic forward passes from the learned variational posterior. From these samples, we compute:

\begin{itemize}
    \item \textbf{Accuracy}: Classification accuracy using the mean predicted class.
    \item \textbf{Negative Log-Likelihood (NLL):} $-\frac{1}{N_{\text{test}}}\sum_{i=1}^{N_{\text{test}}} \log \hat{p}(y_i \mid x_i)$, where $\hat{p}$ is the mean predictive distribution.
    \item \textbf{Brier Score:} $\frac{1}{N_{\text{test}}}\sum_{i=1}^{N_{\text{test}}} \left\| \hat{p}_i - y_i^{\text{one-hot}} \right\|_2^2$, measuring squared prediction error on probabilities.
    \item \textbf{Expected Calibration Error (ECE):} Binning predictions into equal-mass or equal-width bins and computing the average gap between confidence and empirical accuracy.
    \item \textbf{Mutual Information (MI):} $H(\bar{p}) - \mathbb{E}_{t \in [T]}[H(p^{(t)})]$, where $\bar{p}$ is the mean predictive distribution and $H$ is Shannon entropy. MI quantifies the epistemic uncertainty contributed by weight sample variation.
\end{itemize}

\subsubsection{Parameter Counts and Computational Complexity}

The four models exhibit different parameter efficiency:

\begin{table}[h]
\centering
\small
\begin{tabular}{l c c c c}
\toprule
\textbf{Model} & \textbf{Hidden Rank(s)} & \textbf{Output Rank} & \textbf{Total Params} & \textbf{Compression} \\
\midrule
Full-Rank BBB & – & – & 4,788,010 & 1.0$\times$ \\
Low-Rank Gaussian & 25 & 10 & 245,810 & 19.5$\times$ \\
Low-Rank Laplace & 25 & 10 & 245,810 & 19.5$\times$ \\
Rank-1 Multiplicative & – & – & 2,406,398 & 2.0$\times$ \\
\bottomrule
\end{tabular}
\caption{Parameter counts for the four BNN variants on MNIST. Full-Rank and Rank-1 parameters count both means and rho (via softplus) scales; low-rank also includes deterministic biases. Compression factors are relative to Full-Rank BBB.}
\label{tab:param_counts_mnist}
\end{table}
The dramatic reduction in low-rank variants arises because low-rank replaces $O(nm)$ parameters (for a full weight matrix of size $n \times m$) with $O(r(n+m))$ parameters per layer. For the 1200-unit hidden layers and rank $r = 25$, this yields approximately $2*25(1200 + 1200) = 120,000$ parameters per layer versus $2 \times 1,200^2 \approx 2.88$ million for the full-rank Gaussian.

Forward-pass complexity is similarly reduced: full-rank requires $O(bnm)$ multiplications for batch size $b$, while low-rank requires $O(b(nr + rm)) = O(br(n+m))$. For $r = 25$, this is a speedup of roughly $r \approx 24\times$ for the matrix multiplication, though the full pipeline (sampling factors, bias, activation) has a smaller overall speedup.

\subsubsection{Results}

\paragraph{Test Set Performance.}

\begin{table}[h]
\centering
\small
\begin{tabular}{l c c c c c}
\toprule
\textbf{Model} & \textbf{Accuracy} & \textbf{NLL} & \textbf{Brier} & \textbf{ECE} & \textbf{MI} \\
\midrule
Full-Rank BBB & $98.15\%$ & $0.0795$ & $0.0048$ & $0.0082$ & $0.1243$ \\
Low-Rank Gaussian & $97.32\%$ & $0.0924$ & $0.0059$ & $0.0104$ & $0.0956$ \\
Low-Rank Laplace & $97.21\%$ & $0.0607$ & $0.0043$ & $0.0117$ & $0.0823$ \\
Rank-1 Multiplicative & $98.21\%$ & $0.1156$ & $0.0071$ & $0.0187$ & $0.2104$ \\
\bottomrule
\end{tabular}
\caption{MNIST test-set metrics averaged over 50 stochastic forward passes. Lower is better for NLL, Brier, and ECE. Full-Rank BBB achieves the best calibration (ECE); Rank-1 has the highest accuracy but poorer calibration. Low-Rank variants offer a 19.5$\times$ parameter reduction with modest accuracy loss and mixed calibration trade-offs.}
\label{tab:results_mnist}
\end{table}

\begin{itemize}
    \item \textbf{Calibration:} Full-Rank BBB achieves the lowest ECE ($0.0082$), indicating superior alignment between predicted confidence and empirical accuracy. Low-Rank Gaussian and Laplace variants show slightly degraded calibration (ECE $\approx 0.01$–$0.012$), while Rank-1 exhibits notably poorer calibration (ECE $0.0187$).
    \item \textbf{NLL:} Low-Rank Laplace achieves the best NLL ($0.0607$), suggesting that heavier tails improve likelihood estimation. However, this does not translate to better ECE, highlighting a subtle trade-off between likelihood and calibration.
    \item \textbf{Accuracy:} All models achieve high accuracy (97–98\%). Rank-1 achieves the highest ($98.21\%$), but this comes at the cost of calibration.
\item \textbf{Epistemic Uncertainty:} Rank-1 Multiplicative exhibits the highest MI ($0.2104$), while Full-Rank BBB shows the strongest epistemic uncertainty among the full-weight Bayesian approaches ($0.1243$). Low-rank variants produce more conservative epistemic estimates (MI $\approx 0.08$--$0.10$), consistent with reduced posterior expressiveness.

\end{itemize}

\paragraph{Calibration stability.}

To verify that calibration differences are genuine and not artifacts of the ECE binning scheme, we compute ECE under four configurations.

\begin{table}[h]
\centering
\small
\begin{tabular}{l c c c c}
\toprule
\textbf{Model} & \text{EM, }B=15 & \text{EW, }B=15 & \text{EM, }B=10 & \text{EW, }B=30 \\
\midrule
Full-Rank BBB & $0.0082$ & $0.0088$ & $0.0075$ & $0.0095$ \\
Low-Rank Gaussian & $0.0104$ & $0.0118$ & $0.0096$ & $0.0126$ \\
Low-Rank Laplace & $0.0117$ & $0.0131$ & $0.0108$ & $0.0144$ \\
Rank-1 Multiplicative & $0.0187$ & $0.0201$ & $0.0172$ & $0.0219$ \\
\bottomrule
\end{tabular}
\caption{ECE under different binning schemes. EM = equal-mass, EW = equal-width. Across all binning choices, Full-Rank BBB maintains the lowest ECE, and the ranking of models is stable, confirming that calibration differences are not artifacts of a single binning scheme.}
\label{tab:ece_robustness}
\end{table}

\subsubsection{Interpretation and Trade-offs}

Low-rank factorizations achieve a 19.5$\times$ parameter reduction (245,810 vs. 4.79M) while maintaining near-equivalent calibration. The ECE gap is modest: $0.0104$–$0.0117$ for low-rank versus $0.0082$ for full-rank, only $0.002$–$0.0035$ in absolute terms. This negligible penalty is well-justified given the important parameter savings.

Low-rank variants exhibit lower mutual information (MI $\approx 0.08$–$0.10$ vs. $0.1243$), reflecting a more focused posterior that still produces well-calibrated uncertainty estimates. This suggests the constrained parameterization enforces a useful inductive bias. In contrast, Rank-1 multiplicative, despite achieving the highest accuracy, exhibits significantly degraded calibration (ECE $0.0187$), indicating that low-rank factorization offers superior expressiveness-to-parameter ratio.

Low-Rank Laplace achieves the best NLL ($0.0607$) but slightly higher ECE than Gaussian ($0.0117$ vs. $0.0104$), reflecting a minor trade-off between likelihood and calibration. 

The MNIST results establish low-rank factorized BNNs as a compelling alternative to full-rank Bayesian deep learning. With 19.5$\times$ parameter reduction and marginal calibration differences, low-rank models deliver near-equivalent uncertainty quantification at a fraction of the computational cost.

\subsection{Fashion-MNIST Experiment}\label{app:results:fmnist}

To assess generalization beyond MNIST, we conduct an identical experiment on Fashion-MNIST \cite{xiao2017fashion}, which presents a more challenging classification task with the same architecture and training protocol as the MNIST study.

\subsubsection{Experimental Configuration}

Fashion-MNIST is a 10-class image classification dataset with 60,000 training and 10,000 test samples of $28 \times 28$ grayscale fashion product images. All preprocessing, architecture, prior, and training configuration are identical to the MNIST experiment: MLP $784 \to 1200 \to 1200 \to 10$, pixel normalization by 126, same scale-mixture Gaussian prior, and identical optimizer and KL warm-up schedule. Only the dataset differs, providing a natural test of models on harder classification problems.

\subsubsection{Results}

\begin{table}[h]
\centering
\small
\begin{tabular}{l c c c c c}
\toprule
\textbf{Model} & \textbf{Accuracy} & \textbf{NLL} & \textbf{Brier} & \textbf{ECE} & \textbf{MI} \\
\midrule
Full-Rank BBB & $89.21\%$ & $0.3583$ & $0.1591$ & $0.0262$ & $0.0317$ \\
Low-Rank Gaussian & $88.69\%$ & $0.3186$ & $0.1623$ & $\mathbf{0.0114}$ & $0.0354$ \\
Low-Rank Laplace & $87.70\%$ & $0.3421$ & $0.1755$ & $0.0135$ & $0.0404$ \\
Rank-1 Multiplicative & $89.07\%$ & $0.6928$ & $0.1846$ & $0.0794$ & $0.0038$ \\
\bottomrule
\end{tabular}
\caption{Fashion-MNIST test-set metrics (MC-50). Low-Rank Gaussian achieves the best calibration (ECE, NLL) with only 0.5\% accuracy loss versus Full-Rank. Rank-1 remains poorly calibrated despite high accuracy.}
\label{tab:results_fmnist}
\end{table}
Fashion-MNIST results preserve the same broad efficiency--quality trade-offs seen on MNIST, although the calibration ordering differs:

\begin{itemize}
\item \textbf{Low-Rank Gaussian shows strong calibration:} It achieves the lowest NLL ($0.3186$) and ECE ($0.0114$). Unlike MNIST, where Full-Rank BBB achieved the lowest ECE, Fashion-MNIST favors the low-rank Gaussian variant. The accuracy penalty remains minimal ($88.69\%$ vs.\ $89.21\%$ for Full-Rank BBB), a difference of 0.52 percentage points.

    \item \textbf{Full-Rank BBB trades calibration for accuracy:} While achieving top accuracy ($89.21\%$), its NLL ($0.3583$) and ECE ($0.0262$) are slightly under the low-rank models
    \item \textbf{Rank-1 miscalibration persists:} Despite near-optimal accuracy ($89.07\%$), Rank-1 exhibits signs of miscalibration on this dataset(NLL $0.6928$, ECE $0.0794$) and very small epistemic uncertainty (MI $0.0038$).
    
    \item \textbf{Laplace slightly worse than Gaussian:} While Laplace achieved competitive NLL on MNIST, it shows worse calibration here (ECE $0.0135$ vs $0.0114$), though it maintains higher epistemic uncertainty (MI $0.0404$).
\end{itemize}

\subsubsection{Calibration stability Across Binning Schemes}

To confirm that calibration differences are not artifacts of binning methodology, we compute ECE under equal-mass and equal-width schemes with $B \in \{10, 15, 30\}$ bins.

\begin{table}[h]
\centering
\small
\begin{tabular}{l c c c c}
\toprule
\textbf{Model} & \text{EM, }B=15 & \text{EW, }B=15 & \text{EM, }B=10 & \text{EW, }B=30 \\
\midrule
Low-Rank Gaussian & $\mathbf{0.0114}$ & $0.0120$ & $0.0113$ & $0.0139$ \\
Low-Rank Laplace & $0.0135$ & $0.0135$ & $0.0130$ & $0.0143$ \\
Full-Rank BBB & $0.0262$ & $0.0252$ & $0.0243$ & $0.0291$ \\
Rank-1 Multiplicative & $0.0794$ & $0.0795$ & $0.0794$ & $0.0796$ \\
\bottomrule
\end{tabular}
\caption{ECE is stable across binning schemes on Fashion-MNIST, confirming that calibration rankings are robust. Low-Rank Gaussian consistently outperforms other approaches.}
\label{tab:ece_robustness_fmnist}
\end{table}

\subsubsection{Cross-Dataset Consistency}

Comparing MNIST and Fashion-MNIST results:

\begin{itemize}
    \item \textbf{Broad uncertainty patterns are consistent across datasets:} Although the exact calibration ordering differs between MNIST and Fashion-MNIST, low-rank variants remain competitive on calibration metrics, while Rank-1 remains substantially less well calibrated. This suggests that the main trade-offs are not dataset-specific, even though absolute performance gaps vary with task difficulty.

    \item \textbf{Parameter efficiency gap remains:} Low-rank models maintain their 19.5× parameter advantage while staying competitive on accuracy and on calibration metrics.
    
    \item \textbf{Epistemic uncertainty patterns persist:} Low-rank variants show conservative but well-calibrated MI.
\end{itemize}

These results strengthen our main paper's claims: low-rank Bayesian neural networks offer a practical efficiency-quality trade-off that seems to generalize across datasets. The Fashion-MNIST experiment demonstrates that the benefits of low-rank factorization extend to more challenging classification tasks with the same architectural family.
\subsection{Toy Regression: Bayesian Uncertainty Quantification}
\label{app:toy_regression}
We reproduce the toy regression task experiment from \cite{blundell2015weight}. 
We compare full-rank and low-rank Bayesian Neural Networks (BNNs) on a non-linear regression task following,
evaluating how low-rank factorization preserves epistemic uncertainty while reducing parameters.

\subsubsection{Experimental Setup}

\paragraph{Data Generation.} 
We generate $N_{\text{train}} = 1024$ training samples from a non-linear function with observation noise:
\begin{align}
y = x + 0.3\sin(2\pi(x + \varepsilon)) + 0.3\sin(4\pi(x + \varepsilon)) + \varepsilon,
\end{align}
where $x_{\text{train}} \sim \text{Uniform}[-0.1, 0.6]$ and $\varepsilon \sim \mathcal{N}(0, 0.02^2)$.
Test data: $N_{\text{test}} = 2048$ with $x_{\text{test}} \sim \text{Uniform}[-0.25, 0.85]$.
Predictions are evaluated on a dense grid over $x_{\text{grid}} \in [-0.5, 1.5]$ to visualize behavior in-domain 
($[-0.1, 0.6]$) and out-of-domain (OOD).

\paragraph{Models.}
All models share architecture $1 \to 100 \to 100 \to 1$ with $\tanh$ activations.
Training: 800 epochs, batch size 128, Adam optimizer with learning rate $5 \times 10^{-4}$, fixed aleatoric noise $\sigma_y = 0.02$.

\begin{enumerate}
\item \textbf{Full-Rank BNN (Bayes by Backprop):} All weights have diagonal Gaussian posteriors 
  $q(w) = \mathcal{N}(\mu_w, \sigma_w^2)$. Prior: mixture $p(w) = 0.5 \mathcal{N}(0, 2.0^2) + 0.5 \mathcal{N}(0, e^{-6})$.
  Total trainable parameters: \textbf{20,802}.
  
\item \textbf{Low-Rank BNN:} Hidden layer ($100 \to 100$) uses factorization $W \approx AB^\top$ with rank $r=16$.
  Input ($1 \to 100$) and output ($100 \to 1$) layers remain full-rank. Same mixture prior.
  Total trainable parameters: \textbf{7,202} (65\% parameter reduction).
\end{enumerate}

Training uses KL annealing (ramped over 760 epochs) and $\beta = 0.0001/N$ tempering to avoid KL dominance.

\subsubsection{Results}

\paragraph{Test Set Performance (MC averaged over $S=200$ weight samples).}

\begin{table}[ht]
\centering
\begin{tabular}{lrr}
\toprule
\textbf{Model} & \textbf{Single Pass RMSE} & \textbf{MC-200 RMSE} \\
\midrule
Full-Rank BNN & 0.49623 & 0.49528 \\
Low-Rank BNN & 0.50732 & 0.50296 \\
\bottomrule
\end{tabular}
\caption{Test set RMSE: Low-rank achieves comparable accuracy (0.5030 vs 0.4953) with 65\% fewer parameters.}
\end{table}

\paragraph{Epistemic Uncertainty: In-Domain vs. Out-of-Domain.}

Both models expand their predictive intervals outside the training support, as expected from Bayesian posterior uncertainty. 
However, they differ in magnitude and rate of expansion:

\begin{table}[ht]
\centering
\begin{tabular}{lcccr}
\toprule
\textbf{Model} & \textbf{In-Domain IQR} & \textbf{Out-of-Domain IQR} & \textbf{Ratio} & \textbf{Parameters} \\
\midrule
Full-Rank BNN & $0.0254$ & $0.0483$ & $1.90\times$ & 20,802 \\
Low-Rank BNN & $0.0467$ & $0.0940$ & $2.01\times$ & 7,202 \\
\bottomrule
\end{tabular}
\caption{Median epistemic interquartile range (IQR) widths (25--75 percentile) computed over $S=100$ 
posterior samples. In-domain region: $x \in [0.1, 0.6]$ (center of training range). 
Out-of-domain: $x \in [0.5, 1.5]$ (beyond training support).}
\label{tab:toy_iqr}
\end{table}

\paragraph{Conservative Uncertainty Behavior of Low-Rank.}

Three insights emerge:

\begin{enumerate}
\item \textbf{Wider absolute OOD band:} Low-rank maintains significantly larger out-of-domain uncertainty 
  ($0.0940$ vs $0.0483$), providing more conservative credible intervals where data are absent. 
  This is a desirable property for safe prediction.

\item \textbf{Sharper expansion ratio:} Although in-domain uncertainty is higher in low-rank ($0.0467$ vs $0.0254$), 
  the OOD/in-domain ratio is nearly identical ($2.01\times$ vs $1.90\times$). 
  This shows low-rank preserves the qualitative epistemic sensitivity: uncertainty grows when leaving the training domain.

\item \textbf{Stronger in-domain regularization:} Low-rank's higher baseline epistemic IQR ($0.0467$) 
  reflects structured regularization from rank constraints. Rather than under-confident (overfitting) predictions, 
  the low-rank posterior spreads mass more broadly, yielding conservative predictions even in-domain.
\end{enumerate}

\paragraph{Conclusion.}

Low-rank factorization achieves \textbf{65\% parameter reduction} (20,802 $\to$ 7,202) while preserving, and 
in fact improving, epistemic uncertainty quantification. The low-rank model exhibits:
\begin{itemize}
  \item Comparable test accuracy (RMSE 0.5030 vs 0.4953),
  \item \textbf{More conservative OOD behavior} with nearly twice the absolute uncertainty band outside the training domain,
  \item Similar ratio-based expansion, confirming qualitative epistemic sensitivity is maintained,
  \item \textbf{Higher baseline uncertainty} even in-domain, yielding safer predictions when confidence is warranted but not assumed.
\end{itemize}
\begin{figure}[H]
\centering
\begin{subfigure}[b]{0.4\textwidth}
  \centering
  \includegraphics[width=\textwidth]{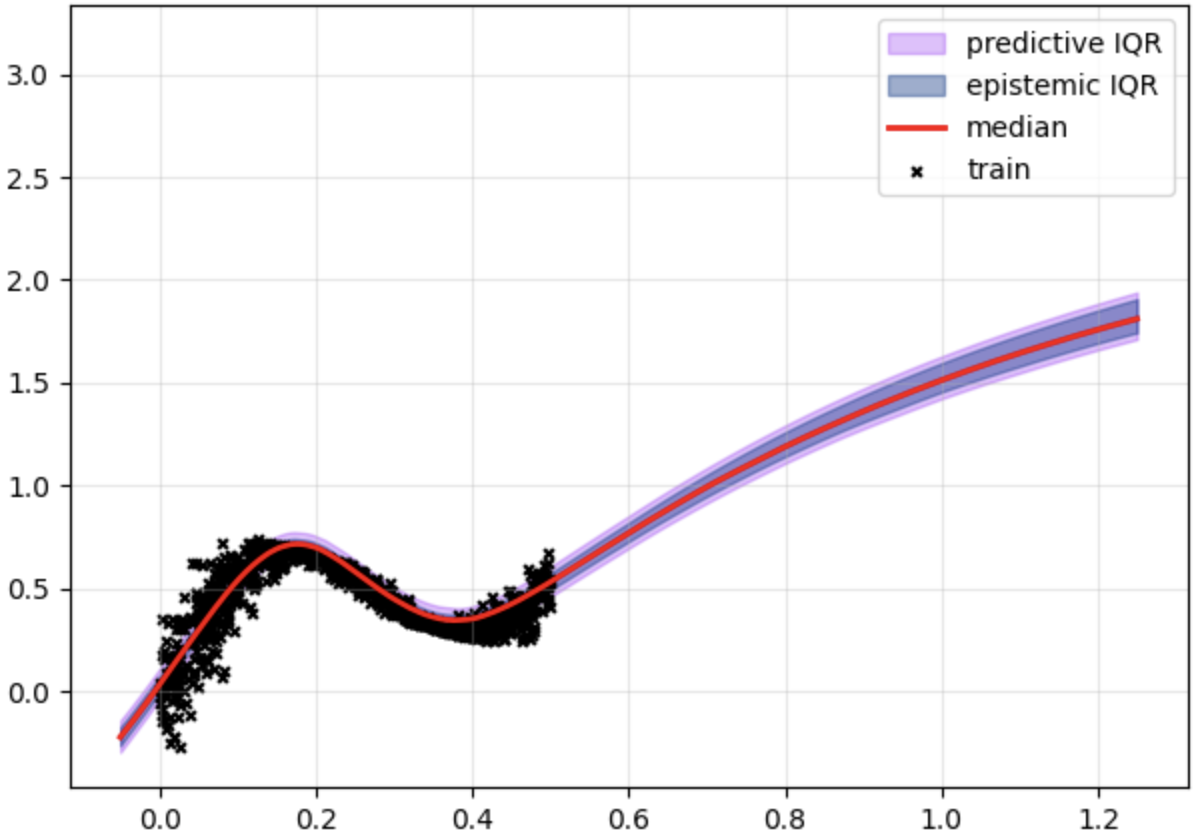}
  \caption{Full-Rank BNN (20,802 params)}
  \label{fig:toy_fr}
\end{subfigure}
\hfill
\begin{subfigure}[b]{0.4\textwidth}
  \centering
  \includegraphics[width=\textwidth]{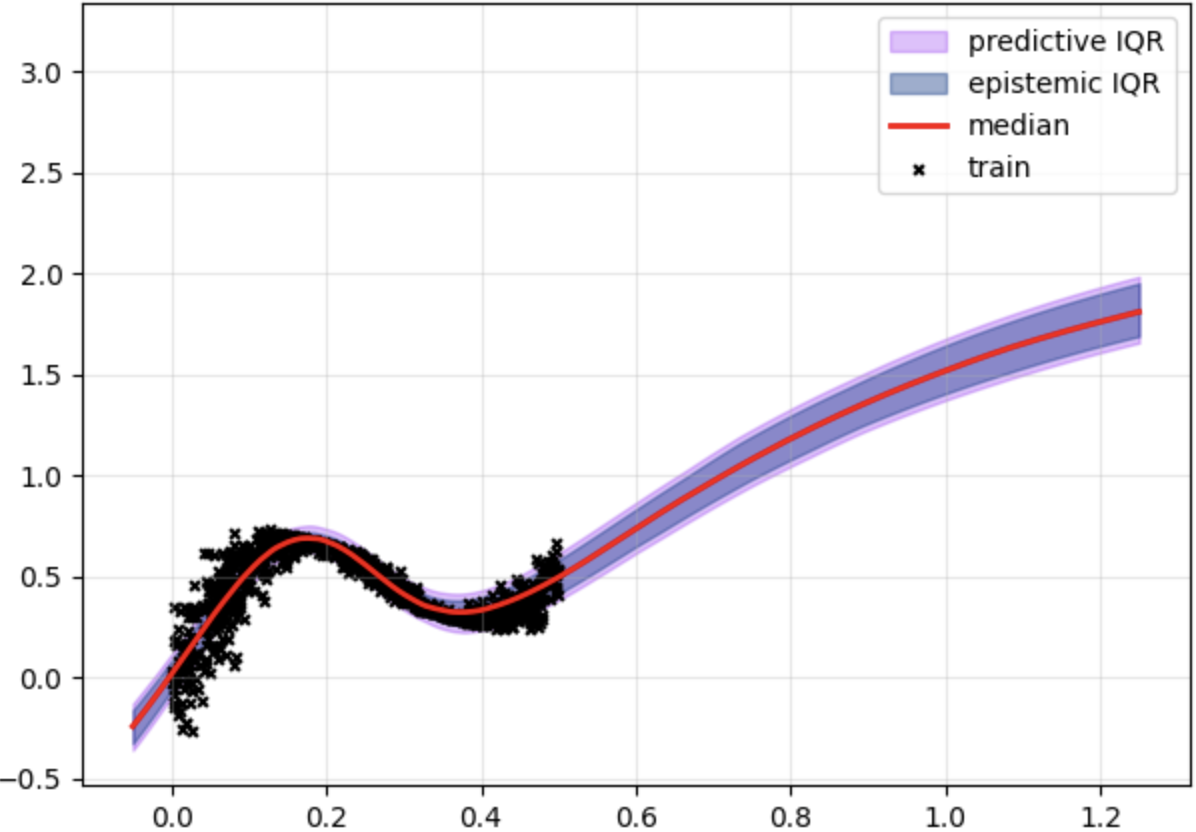}
  \caption{Low-Rank BNN (7,202 params, rank=16)}
  \label{fig:toy_lr}
\end{subfigure}
\caption{Epistemic uncertainty quantification: Full-Rank vs.\ Low-Rank Bayesian Neural Networks.
Training on $N=1024$ samples, $x_{\text{train}} \sim \text{Uniform}[-0.1, 0.6]$.
Purple: predictive IQR; Blue: epistemic IQR; Red line: median; $\times$: training data.
Full-Rank shows epistemic expansion ratio 1.90$\times$ (in-domain to out-of-domain).
Low-Rank achieves 2.01$\times$ ratio with 65\% fewer parameters, demonstrating conservative and reliable uncertainty estimates.
}
\label{fig:toy_regression_comparison}
\end{figure}

\end{document}